\def\tr{{\raise0pt\hbox{$\scriptscriptstyle\top$}}}
\newtheorem{example}{Example}
\newtheorem{proposition}{Proposition}
\newtheorem{theorem}{Theorem}
\newtheorem{corollary}{Corollary}
\newtheorem{definition}{Definition}
\newtheorem{remark}{Remark}
\def\eop{\hfill{$\Box$}\medskip}
\newenvironment{proof}{\textbf{Proof}}{\eop}
\newenvironment{detail}{\smallskip\noindent\textbf{Detail.}~}{\phantom{a}\eop}
\begin{document}

\begin{frontmatter}

\title{General stochastic separation theorems with optimal bounds}

\author[LeicMath]{Bogdan Grechuk}
\ead{bg83@le.ac.uk}
\ead{https://orcid.org/0000-0002-2624-5765}
\author[LeicMath,NNU]{Alexander N. Gorban\corref{cor1}}
\ead{ag153@le.ac.uk}
\ead{https://orcid.org/0000-0001-6224-1430}
\author[LeicMath,NNU]{Ivan Y. Tyukin}
\ead{it37@le.ac.uk}
\ead{https://orcid.org/0000-0002-7359-7966}

\address[LeicMath]{Department of Mathematics, University of Leicester, Leicester, LE1 7RH, UK}
\address[NNU]{Lobachevsky University, Nizhni Novgorod, Russia}

 \cortext[cor1]{Corresponding author}

\date{}

\begin{abstract}

Phenomenon of stochastic separability was revealed and used in machine learning to correct errors of Artificial Intelligence (AI) systems and analyze AI instabilities. In high-dimensional datasets under broad assumptions each point can be separated from the rest of the set by simple and robust Fisher's discriminant (is {\em Fisher separable}). Errors or clusters of errors can be separated from the rest of the data. The ability to correct an AI system also opens up the possibility of an attack on it, and the high dimensionality induces vulnerabilities caused by the same stochastic separability that holds the keys to understanding the fundamentals of robustness and adaptivity in high-dimensional data-driven AI.
To manage errors and analyze vulnerabilities, the stochastic separation theorems should evaluate the probability that the dataset will be Fisher separable in given dimensionality and for a given class of distributions. Explicit and optimal estimates of these separation probabilities are required, and this problem is solved in present work.
The general stochastic separation theorems with optimal probability estimates are obtained for important classes of distributions: log-concave distribution, their convex combinations and product distributions. The standard i.i.d. assumption was significantly relaxed.
These theorems and estimates can be used both for correction of high-dimensional data driven AI systems and for analysis of their vulnerabilities. The third area of application is the emergence of memories in ensembles of neurons, the phenomena of grandmother's cells and sparse coding in the brain, and explanation of unexpected effectiveness of small neural ensembles in high-dimensional brain.
\end{abstract}
\begin{keyword}
AI, blessing of dimensionality, curse of dimensionality, concentration of measure, AI errors, discriminant
\end{keyword}
\end{frontmatter}

\newpage

\section{Introduction: Data mining in post-classical world}

Big data `revolution' and the growth of the data dimension are commonplace. However, some implications of this growth are not so well known. In his `millennium lecture', \citet{Donoho2000} sought to present major 21st century challenges for data analysis. He described the multidimensional post-classical world where the number of attributes $d$ (dimensionality of the dataspace)  exceeds the sample size $N$:
\begin{equation}\label{DonohoPostclass}
d\gg N.
\end{equation}

Of course, there are many practical tricks for handling data when the condition (\ref{DonohoPostclass}) holds. In such a situation, tools of the first choice are Principal Component Analysis with retaining of major components, the correlation transformation, that transforms the data set into its Gram matrix (the matrix of inner products or correlation coefficients between the data vectors), or their combination (for a case study see \citep{MoczkoMirkesGorPil2016}).  These methods return the situation from (\ref{DonohoPostclass}) to $d\leq N$ but this is not the end of the story. For the non-classical effects, the inequality (\ref{DonohoPostclass}) is not necessary. Many such effects arise when
\begin{equation}\label{Postclass2}
 d \gg \log N.
 \end{equation}
 Various examples of these effects are presented by \citet{Kurkova1993,Kainen1997,DonohoTanner2009,GorbTyuProSof2016}. High-dimensional data are very rarefied and have large data-free holes, even if the data sets are exponentially large \citep{Kainen1997}. Two effects are especially important:
 \begin{itemize}
 \item {\em Random vectors are quasiorthogonal}: $N$ random vectors $\boldsymbol{x}_i$ $(i=1,\ldots, N)$ on a unit $d$-dimensional sphere are almost orthogonal: for a given $\varepsilon>0$  and sufficiently large $d$ ($d\gg 1/\varepsilon$),  we can expect with high probability that $|(\boldsymbol{x}_i,\boldsymbol{x}_j)|<\varepsilon$ fo all $i \neq j$ when $d> A \ln N$ and $A$ depends  on $\varepsilon$ only.  (For various versions of exact formulations we refer to \citep{Kurkova1993,GorbTyuProSof2016}. Very recently, \citet{KainenKurkova2020} reviewed the concept of quasiorthogonal dimension and related notions.)
 \item {\em Random points are extreme}: for a given  $\varepsilon>0$ with a probability $p>1-\varepsilon$ all $N$ random points are vertices of their convex hull. It is sufficient that $d> B \ln N$ for some $B$ that depends on $\varepsilon$ only (for exact formulations we refer to \citep{DonohoTanner2009,convhull}).  
  \end{itemize}
Of course, detailing these properties should include clarifying what `random' means. A fairly regular distribution is usually assumed, while \citet{Donoho2000} claimed that such `blessing of dimensionality' effects hold for an unexpectedly wide class of distributions.

One more comment to (\ref{DonohoPostclass}), (\ref{Postclass2}) is necessary: existence of many attributes does not mean large dimensionality of data. 
The na\"{i}ve definition that dimensionality of data refers to  how many attributes a dataset has leads to some confusions. Indeed, in the simplest example, when  data are distributed along a straight line,  data are one-dimensional despite large number of attributes. To distinguish between the number of attributes and the dimensionality of a dataset, the latter is often referred to as the ``intrinsic dimensionality'' of the data. Not the number of attributes but the dimensionality of data should be used in the definition of the post-classical world: 
\begin{equation}\label{Postclass3}
 \dim(Dataset) \gg \log N.
 \end{equation}
Evaluation of the (intrinsic) dimensionality of data is a non-trivial problem discussed by many authors, and many approaches are used, ranged from  classical Principal Component Analysis (PCA) \citep{Joliffe2011} and their generalizations \citep{GorbanKegl2008}, to principal graphs and manifolds \citep{GorZin2010}, and fractal dimension \citep{Camastra2003}. In recent review by \citet{BacZinovyev2020} the typology of these methods is proposed and a new family of methods based on the data separability properties is presented.

In the post-classical world,   classical machine learning theory does not make much sense because it works near the limits of large $N$, when  the law of  large numbers and the central limit theorem can be used. The unlimited appetite of classical approaches for data is often considered as a `curse of dimensionality'. But  the properties  (\ref{DonohoPostclass}), (\ref{Postclass2}), or (\ref{Postclass3}) themselves are neither a curse, nor a blessing, and can be  beneficial. The idea of a `blessing of dimensionality' was formulated by \citet{Kainen1997}, but some properties  of the situations with  (\ref{DonohoPostclass}) were exploited much earlier.  In  general situation, if $d \leq N-1$, then any subsample is linearly separable from the rest of data. Therefore,  \citet[Theorem 1]{Rosenblatt1962} used a non-linear extension of the set of attributes ($A$-elements, Fig.~\ref{Fig:ElementaryPerceptron}) to prove the omnipotence of   elementary peceptrons in solving any classification problem (on a large training set, at least).

\begin{figure*} 
\centering
\includegraphics[width=0.6\textwidth]{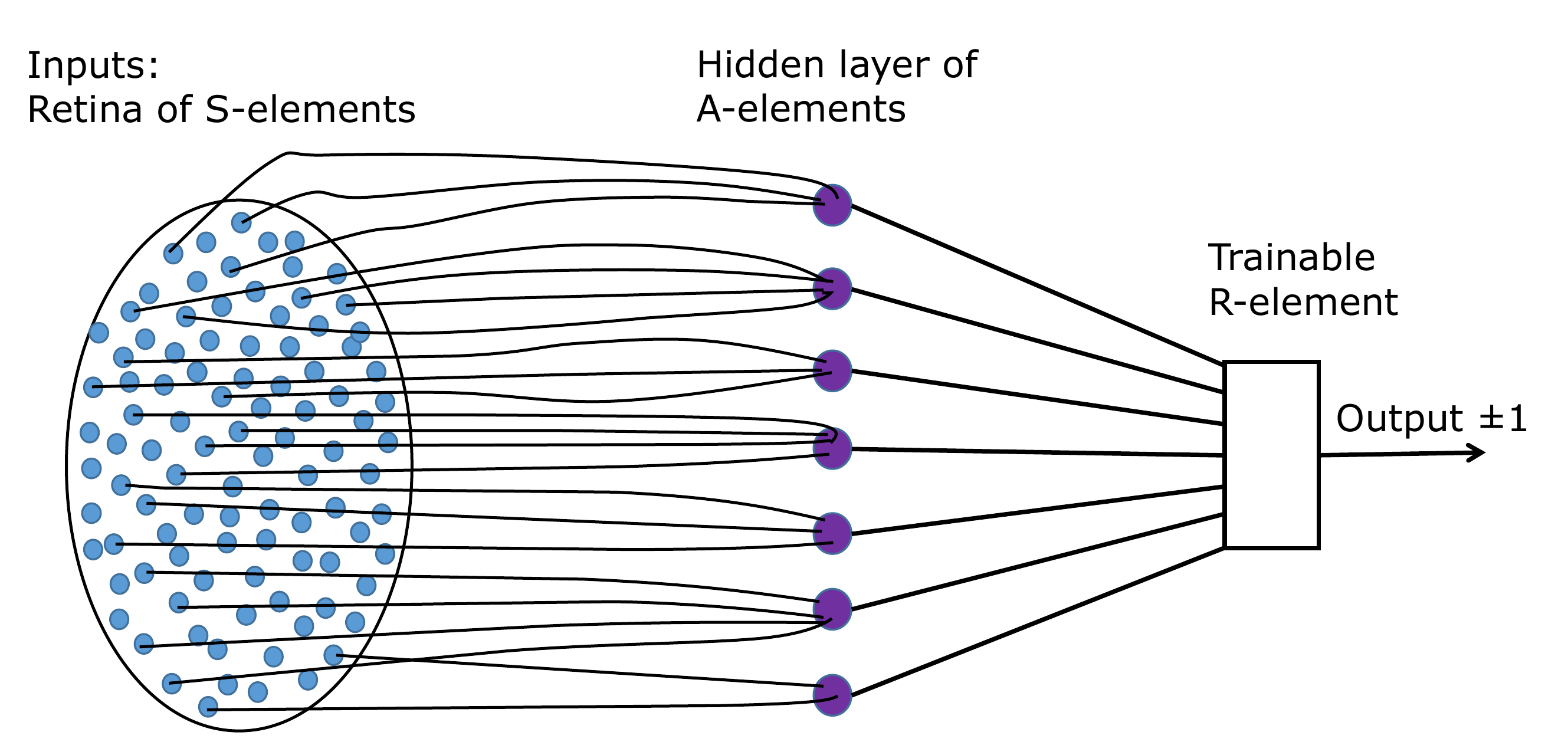}
\caption{Rosenblatt's elementary perceptron \citep{Rosenblatt1962}. $A$-and $R$-element are the  classical threshold neurons. $R$ element is trainable by the Rosenblatt algorithm, while $A$-elements should represent  a sufficient collection of features.
\label{Fig:ElementaryPerceptron}}
\end{figure*}

Other examples of post-classical phenomena are exponentially large sets of quasiorthogonal (almost orthogonal) random vectors we have already mentioned and stochastic separation in exponentially large datasets:  with high probability, any sample point is linearly separable from other points and this separation could be performed by the simple and explicit Fisher discriminant \citep{Gorbetal2018,GorbTyu2017,GorbanTyuRom2016}. This is a strengthening of the  statements \citep{convhull, Donoho2000,DonohoTanner2009}) that random points are extreme ones.  
These properties were proven for sufficiently regular probability distribution or for products of large number of low-dimensional distributions. For other examples we refer to the book by \citet{Vershynin2018}. 

The new characterization of post-classical data (\ref{Postclass3}) captures one of the qualitative characterization of the post-classical world. Fundamental open questions, however, are: 
\begin{enumerate}
\item Are there quantitatively accurate estimates of the boundary between the ``classical'' and the ``post-classical'' cases?  
\item How these boundaries depend on statistical properties of the data? 
\item If the ``post-classical'' limit  always obeys $\log(N)\ll \dim(Dataset)$ or could have different forms such as $\log(N)\ll \dim(Dataset)^p$?   
\end{enumerate}
Answering these would allow us to determine applicability bounds for a host of relevant measure concentration-based algorithms in machine learning, including one-shot error correction and learning, randomized approximation, and prevention of vulnerabilities to attacks.

The present work aims to answer these questions. In Sec.~\ref{Sec:Phenomenon} we introduce the stochastic separation phenomenon in detail and prove Theorem~\ref{Th:prototype} that is a prototype of most stochastic separation theorems. Estimates given in this theorem can be improved for specific classes of distributions but it does not use the i.i.d. assumption at al.  This major departure from the classical i.i.d. assumption in machine learning enables and justifies one-shot learning and AI correction algorithms in presence of concept drifts, sample dependencies, and non-stationarity.

Further in this work, we present such estimates for many practically important classes of probability distributions, in particular, for log-concave distributions and their convex combinations. In contrast to Theorem~\ref{Th:prototype} and Corollary~\ref{Cor:1} of Sec.~\ref{Sec:Phenomenon}, these estimates are in many cases asymptotically sharp.

In Sec.~\ref{Sec:Anal} the previously known results are analyzed, including estimations for  uniform distributions in a ball and a cube. In Sec.~\ref{Sec:StrongLCD} we prove the stochastic separation theorems with estimates of separation probability and sample sizes  for  strongly log-concave distributions using  the logarithmic Sobolev inequality and Poincare inequality. For special classes of distributions stronger results are obtained, for example, for  spherically invariant log-concave distributions including multivariate exponential distribution (Sec.~\ref{Sec:SpherInv}). The known estimates for some distributions like  uniform distribution in a ball and the standard normal distribution are significantly improved and optimal separation theorem for explicitly given distributions are found. Sec.~\ref{sec:product} derives separation theorems for independent data from product distributions, while Sec.~\ref{sec:dependent} generalizes some of these theorems to the case of dependent data relaxing the i.i.d. assumption. Sec.~\ref{Sec:Summ} briefly summarized the results, and in Sec.~\ref{sec:concl} we discuss what  these estimates are for and present the main areas of applications.

 \section{Stochastic separation phenomenon \label{Sec:Phenomenon}}
 
The `post-classical' phenomenon of separability of  random points  from random sets in high dimensionality opens up the possibility for fast and non-iterative correction of errors of data-driven Artificial Intelligence (AI). Each situation of AI functioning is represented by a vector that combines inputs, internal signals and outputs of the AI system. If a situation with error can be separated by an explicit and simple functional (Fisher's discriminant, for example) from the known situations with correct functioning then this error can be corrected forever without destroying the existing skills \citep{GorbanTyuRom2016,GorTyukPhil2018}. The corrector is a combination of the two-class classifier of situations (`AI error' versus `correct functioning') with   a modified decision rule for the `error' class. 
 
 Below in this section, a prototype of most stochastic separation theorems is introduces.
 
Recall that the classical Fisher discriminant between two classes with means $\boldsymbol{\mu}_1$ and $\boldsymbol{\mu}_2$ is separation of the classes by a hyperplane orthogonal to $\boldsymbol{\mu}_1-\boldsymbol{\mu}_2$ in the inner product 
 $$ \langle \boldsymbol{ x},\boldsymbol{ y}\rangle=( \boldsymbol{ x}, \boldsymbol{S}^{-1} \boldsymbol{ y}),$$
 where $( \cdot, \cdot )$ is the standard inner product and $ \boldsymbol{S}$ is the average (or the weighted average) of the sample covariance matrix of these two classes.  The classification rule is:  if $\langle \boldsymbol{\mu}_1-\boldsymbol{\mu}_2, \boldsymbol{x}\rangle \geq \vartheta$ then $\boldsymbol{x}$ belongs to the first class, otherwise it belongs to the second class. The threshold $\vartheta$ should be chosen  in such a way as to maximize  the quality of classification evaluated by a preselected criterion.
 
Applications of stochastic separation theory consider separating a single point (error) or a small cluster of such points from a relatively large data set. Thus,  $ \boldsymbol{S}$ is by default the empiric covariance matrix of a large data set. Further on, assume that the dataset is preprocessed,  this includes centralization (zero mean) and whitening. Whitening uses PCA to remove minor components and transform coordinates, making the empirical covariance matrix the identity matrix. After whitening, we get out of the situation described by the condition (\ref{DonohoPostclass}) but the conditions   (\ref{Postclass2})  or (\ref{Postclass3}) can persist. 
  
 It is necessary to stress that the precise whitening in applications to high-dimensional datasets could be unavailable, and $ \boldsymbol{S}$ may differ from $ \boldsymbol{1}$. If $ \boldsymbol{S}$ remains a well-conditioned matrix then this difference does not change qualitatively the separability properties. Analysis of the quantitative differences that may appear for non-isotropic   $ \boldsymbol{S}$ for some classes of probability distributions  is presented in Sec.~\ref{Sec:general}.

Presuming the described preprocessing with whitening, we  take $ \boldsymbol{S}=\boldsymbol{1}$ and $ \langle \boldsymbol{ x},\boldsymbol{ y}\rangle=( \boldsymbol{ x},   \boldsymbol{ y})$. 
 
\begin{definition}
A point $\boldsymbol{ x}$ is \emph{Fisher separable} from a set $Y \subset {\mathbb R}^n$ with center $\boldsymbol{ c} \in {\mathbb R}^n$ and threshold $\alpha \in (0, 1]$ if inequality
\begin{equation}\label{eq:Fisher}
\alpha(\boldsymbol{ x}-\boldsymbol{ c},\boldsymbol{ x}-\boldsymbol{ c}) > (\boldsymbol{ x}-\boldsymbol{ c},\boldsymbol{ y}-\boldsymbol{ c}), 
\end{equation}
holds for all $\boldsymbol{ y}\in Y$.  If \eqref{eq:Fisher} does not hold for some $\boldsymbol{ x}$ and $\boldsymbol{ y}$, we   say that $\boldsymbol{ x}$ and $\boldsymbol{ y}$ forms an (ordered) {\em $(\alpha,\boldsymbol{ c})$-inseparable pair} (see Fig.~\ref{Fig:Forbidden}).
\end{definition}
  
 \begin{figure} 
\centering
\includegraphics[width=0.25\textwidth]{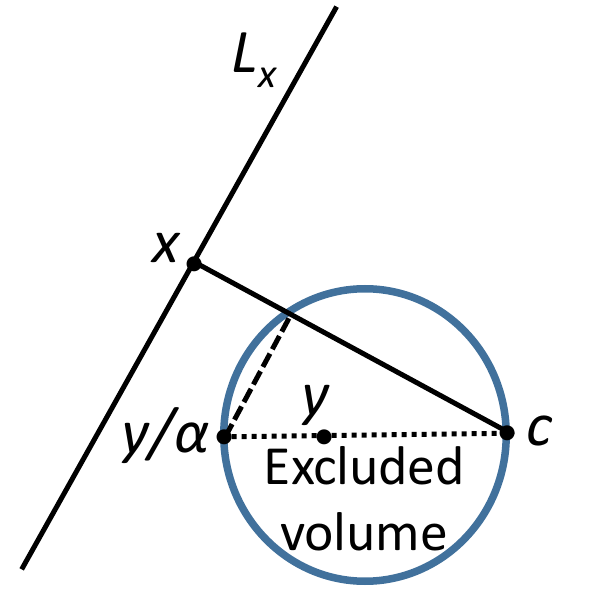}
\caption{Geometry of separation: $\alpha(\boldsymbol{ x},\boldsymbol{ x}) > (\boldsymbol{ x},\boldsymbol{ y})$ for all $\boldsymbol{ x}$ outside the outlined  ball (`excluded volume') with diameter $\|y\|/\alpha$. Here, $c$ is the origin (the data mean), $L_x$ is the hyperplane orthogonal to $x$.  If  $\boldsymbol{ x}$ belongs to the ball of excluded volume then)  $\boldsymbol{ x}$ and $\boldsymbol{ y}$ forms an ordered  $\alpha$-inseparable pair. 
\label{Fig:Forbidden}}
\end{figure} 
 
If $\boldsymbol{ c}=\boldsymbol{ 0}$ is the origin, we will write $(\alpha,\boldsymbol{ 0})$-inseparable pair as just ``$\alpha$-inseparable pair'', to simplify the notation. 
For a given $\boldsymbol{ y}$, the  set of such  $\boldsymbol{ x}$ that $\boldsymbol{ x}, \boldsymbol{ y}$ form an ordered $\alpha$-inseparable pair is a ball given by  inequality
\begin{equation}\label{excludedvolume}
\left\{\boldsymbol{z} \ \left| \ \left\|\boldsymbol{z}-\frac{\boldsymbol{y}}{2\alpha }\right\|< \frac{\|\boldsymbol{y}\|}{2\alpha} \right.  \right\}.
\end{equation}
This is the ball of excluded volume from Fig.~\ref{Fig:Forbidden}.

Two heuristic condition for the probability distribution are used in the stochastic separation theorems:
\begin{itemize}
\item   The probability distribution has no heavy tails;
\item The sets of small volume should not have large probability (what ``small'' and ``large'' mean should be strictly defined for different contexts).
\end{itemize}

In the following Theorem~\ref{Th:prototype} the absence of heavy tails is formalized as the tail cut: the support of the distribution is the $n$-dimensional unit ball $\mathbb{B}_n$.

The absence of the sets of small volume but large probability is formalized in this theorem by the following inequality:
\begin{equation}\label{bounded}
\rho(\boldsymbol{x})<\frac{C}{r^n V_n(\mathbb{B}_n)},
 \end{equation}
where $\rho$ is the distribution density, $C>0$ is an arbitrary constant, $V_n(\mathbb{B}_n)$ is the volume of the ball $\mathbb{B}_n$, and   $1>r>1/(2\alpha)$. This inequality guarantees that the probability measure of each ball with the radius less or equal than $1/(2\alpha)$ exponentially decays for $n\to \infty$. It should be stressed that the constant $C>0$ is arbitrary but  must not  {depend on $n$} in asymptotic analysis for large $n$. Condition   $1>r>1/(2\alpha)$ is possible only if $\alpha > 0.5$. Thus, the interval of possible $\alpha$ for Theorem~\ref{Th:prototype} is $\alpha \in (0.5,1]$.

\begin{theorem}\label{Th:prototype}\citep{Gorbetal2018}
\label{Theorem:ExclVol2}Let $1\geq \alpha >1/2$, $1>r>1/(2\alpha)$,  $1>\delta>0$, $Y\subset \mathbb{B}_n$ be a finite set, $|Y|<\delta (2r\alpha)^n/C$,  and $\boldsymbol{x}$ be a randomly chosen point from a distribution in the unit ball with the bounded probability density $\rho(\boldsymbol{x})$. Assume that $\rho(\boldsymbol{x})$ satisfies inequality (\ref{bounded}). Then with probability $p>1-\delta$ point $\boldsymbol{x}$ is Fisher-separable from $Y$ with threshold $\alpha$ (\ref{eq:Fisher}).
\end{theorem}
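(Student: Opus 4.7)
The plan is to prove this by a straightforward union bound, exploiting the explicit description of the ``excluded volume'' given in \eqref{excludedvolume} together with the density bound \eqref{bounded}.

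First I would fix $\boldsymbol{y} \in Y$ and estimate the probability that the random point $\boldsymbol{x}$ forms an $\alpha$-inseparable pair with $\boldsymbol{y}$. By \eqref{excludedvolume}, such $\boldsymbol{x}$ must lie in the ball $B(\boldsymbol{y}/(2\alpha),\|\boldsymbol{y}\|/(2\alpha))$, whose radius is at most $1/(2\alpha)$ since $\boldsymbol{y}\in \mathbb{B}_n$. Its $n$-dimensional volume is therefore at most $(1/(2\alpha))^n V_n(\mathbb{B}_n)$. Combining this with the pointwise density bound \eqref{bounded} gives
\begin{equation*}
\Pr[\boldsymbol{x},\boldsymbol{y}\text{ are }\alpha\text{-inseparable}] \le \frac{C}{r^n V_n(\mathbb{B}_n)} \cdot \frac{V_n(\mathbb{B}_n)}{(2\alpha)^n} = \frac{C}{(2r\alpha)^n}.
\end{equation*}
Note that the hypothesis $r > 1/(2\alpha)$ is exactly what makes $2r\alpha > 1$, so this per-point probability decays exponentially in $n$.

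Next I would apply the union bound over all $\boldsymbol{y}\in Y$:
\begin{equation*}
\Pr[\boldsymbol{x}\text{ is not }\alpha\text{-separable from }Y] \le |Y|\cdot \frac{C}{(2r\alpha)^n}.
\end{equation*}
Substituting the assumed cardinality bound $|Y|<\delta(2r\alpha)^n/C$ collapses the right-hand side to strictly less than $\delta$, which is the desired conclusion.

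There is no serious obstacle here; the argument is essentially a one-line volume comparison followed by a union bound. The only subtlety worth flagging is that the density upper bound \eqref{bounded} is used only through its global supremum on the excluded ball, so no structural properties of $\rho$ beyond boundedness by $C/(r^n V_n(\mathbb{B}_n))$ enter the proof. This is what allows the theorem to avoid any i.i.d.\ or log-concavity assumption, at the cost of producing non-sharp constants that later sections refine for specific distribution classes.
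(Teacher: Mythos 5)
Your proposal is correct and follows exactly the same route as the paper's own proof: bound the volume of each excluded ball by $(1/(2\alpha))^n V_n(\mathbb{B}_n)$, multiply by the density supremum from \eqref{bounded} to get the per-point probability $C(2r\alpha)^{-n}$, and finish with a union bound over $Y$ and the cardinality hypothesis. Nothing is missing.
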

\begin{proof}
The volume of the ball (\ref{excludedvolume}) does not exceed $V=\left(\frac{1}{2\alpha}\right)^nV_n(\mathbb{B}_n)$ for each $\boldsymbol{y}\in Y$. The probability that point  $\boldsymbol{x}$ belongs to such a ball does not exceed
$$V \sup_{z\in \mathbb{B}_n }\rho(z)\leq C \left(\frac{1}{2r \alpha}\right)^n.$$
The probability that $\boldsymbol{x}$   belongs to the union of $|Y|$ such balls does not exceed $|Y| C\left(\frac{1}{2r\alpha}\right)^n$. For $|Y|<\delta (2r\alpha)^n/C$ this probability is smaller than $\delta$ and $p>1-\delta$.
\end{proof}

\begin{remark}Note that:
\begin{itemize}
\item The finite set $Y$ in Theorem~\ref{Th:prototype} is  just a finite subset of the ball $ \mathbb{B}_n$ without any assumption of its randomness.  We only used the assumption about distribution of  $\boldsymbol{x}$.
\item The distribution of  $\boldsymbol{x}$ may deviate significantly from the uniform distribution in the ball $ \mathbb{B}_n$. Moreover, this deviation may grow with dimension $n$ as a geometric progression:
$$\rho(\boldsymbol{x})/\rho_{\rm uniform}\leq {C}/{r^n},$$
where $\rho_{\rm uniform}=1/ V_n(\mathbb{B}_n)$ is the density of uniform distribution and $1/(2\alpha)<r<1$ (assuming that $1/2<\alpha  \leq 1$).
\end{itemize}
\end{remark}

\begin{example}\label{ex:th1}
Let $\alpha = 0.8$, $r=0.75$, $C=1$, $\delta=0.01$.   Table~\ref{Table1} shows the upper bounds on  $|Y|$ given by Theorem~\ref{Th:prototype} in various dimensions $n$ if the ratio $\rho(\boldsymbol{x})/\rho_{\rm uniform}$ is bounded by the geometric progression $1/r^n$.   
\begin{table}[h]  
\begin{center}
\caption{\label{Table1} The upper bound on $|Y|$ that guarantees separation of $\boldsymbol{x}$ from $Y$  by Fisher's discriminant with probability 0.99 according to Theorem~\ref{Th:prototype} for  $\alpha = 0.8$, $r=0.75$, $C=1$ in various dimensions.}
\begin{tabular}{ |c|c|c| } 
 \hline
 $n$ & $\rho(\boldsymbol{x})/\rho_{\rm uniform}\leq$ & $|Y|\leq$ \\ 
 \hline
 $10$ & $17.7$ & $0.06$ \\ 
 $50$ & $1.7\cdot 10^6$ & $91$ \\ 
 $100$ & $3.1 \cdot 10^{12}$ & $828,180$ \\ 
 $200$ & $9.7 \cdot 10^{24}$ & $6.8 \cdot 10^{13}$ \\ 
 $500$ & $2.9 \cdot 10^{62}$ & $3.9 \cdot 10^{37}$ \\ 
 $1000$ & $8.6 \cdot 10^{124}$ & $1.5 \cdot 10^{77}$ \\ 
 \hline
\end{tabular}
\end{center}
\end{table}

For example, for $n=100$, we see that for any set with $|Y|<828,180$ points in the unit ball, and any distribution whose density $\rho$ deviates from the uniform one by a factor at most $3.1 \cdot 10^{12}$, a random point from this distribution is Fisher-separable from all points in $Y$ with $99\%$ probability.
\end{example}

In the following Definition we consider separation of each points of a set from all other points by Fisher discriminant.
\begin{definition}\label{def:Fisher}
A finite set $Y \subset {\mathbb R}^n$ is \emph{Fisher separable} with center $\boldsymbol{ c} \in {\mathbb R}^n$ and threshold $\alpha \in (0, 1]$, or $(\alpha, \boldsymbol{ c})$-Fisher separable in short, if inequality 
$$
\alpha(\boldsymbol{ x}-\boldsymbol{ c},\boldsymbol{ x}-\boldsymbol{ c}) > (\boldsymbol{ x}-\boldsymbol{ c},\boldsymbol{ y}-\boldsymbol{ c}), 
$$
holds for all $\boldsymbol{ x}, \boldsymbol{ y} \in Y$ such that $\boldsymbol{ x}\neq  \boldsymbol{ y}$. 
\end{definition}

If $\boldsymbol{ c}=\boldsymbol{ 0}$ is the origin, we will write $(\alpha, \boldsymbol{ 0})$-Fisher separable set as just ``$\alpha$-Fisher separable'', to simplify the notation. 
From Theorem~\ref{Th:prototype} we obtain the following corollary.
\begin{corollary}\label{Cor:1}
If  $Y \subset \mathbb{B}_n$ is a random set $Y= \{\boldsymbol{y}_1, \ldots , \boldsymbol{y}_{|Y|}\}$ and for each $j$ the conditional distributions of vector $\boldsymbol{y}_j$ for any given positions of the other  $\boldsymbol{y}_k$ in $\mathbb{B}_n$    satisfy the same conditions as the distribution of $\boldsymbol{x}$ in  Theorem~\ref{Th:prototype}, then the probability of the random set $Y$ to be $\alpha$-Fisher separable can be easily estimated: 
$$p\geq 1- |Y|^2 C\left(\frac{1}{2r\alpha}\right)^n.$$
So, $p>0.99$ if $|Y|<(1/10)\,C^{-1/2}(2r\alpha)^{n/2}$. 
\end{corollary}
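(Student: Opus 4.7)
The plan is to reduce the statement to a pairwise estimate via the conditioning hypothesis, apply the excluded-volume bound from the proof of Theorem~\ref{Th:prototype}, and then take a union bound over all ordered pairs of indices.

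First, fix an ordered pair of distinct indices $(i,j)$. Condition on the positions of all $\boldsymbol{y}_k$ with $k\neq i$; by assumption, the conditional distribution of $\boldsymbol{y}_i$ satisfies the hypotheses of Theorem~\ref{Th:prototype}. Applying the proof of that theorem with $\boldsymbol{x}=\boldsymbol{y}_i$ and the singleton ``forbidden'' set $\{\boldsymbol{y}_j\}$ (whose position is now fixed), the probability that $\boldsymbol{y}_i$ lands in the excluded ball (\ref{excludedvolume}) associated with $\boldsymbol{y}_j$ is at most $C\bigl(\tfrac{1}{2r\alpha}\bigr)^n$. Since this bound is uniform over the conditioning, integrating over the positions of the other $\boldsymbol{y}_k$ yields the unconditional estimate
$$
\Pr\bigl[(\boldsymbol{y}_i,\boldsymbol{y}_j)\text{ is }\alpha\text{-inseparable}\bigr] \le C\left(\frac{1}{2r\alpha}\right)^n.
$$

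Next, observe that $Y$ is $\alpha$-Fisher separable (Definition~\ref{def:Fisher} with $\boldsymbol{c}=\boldsymbol{0}$) precisely when no ordered pair $(\boldsymbol{y}_i,\boldsymbol{y}_j)$ with $i\neq j$ is $\alpha$-inseparable. There are $|Y|(|Y|-1)<|Y|^2$ such ordered pairs, so a union bound gives
$$
\Pr[Y\text{ is not }\alpha\text{-Fisher separable}] \le |Y|^2\,C\left(\frac{1}{2r\alpha}\right)^n,
$$
which is the claimed lower bound on $p$.

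Finally, to deduce the quantitative $p>0.99$ sufficient condition, require $|Y|^2 C(2r\alpha)^{-n}\le 0.01$; solving for $|Y|$ yields $|Y|\le (1/10)\,C^{-1/2}(2r\alpha)^{n/2}$. The only step that requires any care is the conditioning argument in the first paragraph: one must verify that applying Theorem~\ref{Th:prototype} to the conditional law of $\boldsymbol{y}_i$ with a \emph{fixed} (rather than random) point $\boldsymbol{y}_j$ is legitimate, but this is exactly the form of Theorem~\ref{Th:prototype}, whose finite set $Y$ is assumed deterministic. No further subtlety arises.
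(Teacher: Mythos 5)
Your proof is correct and follows essentially the same route as the paper's (implicit) argument: condition on the other points, apply the excluded-volume bound of Theorem~\ref{Th:prototype} to each point against the rest, and take a union bound over the fewer than $|Y|^2$ ordered pairs, after which the quantitative condition $|Y|^2 C(2r\alpha)^{-n}<0.01$ gives the stated bound. The only cosmetic difference is that you organize the union bound pair-by-pair rather than point-by-point (applying Theorem~\ref{Th:prototype} with the remaining $|Y|-1$ points as the deterministic set), which yields the identical estimate.
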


For this estimate, elements of $Y$ should not be i.i.d. random vectors and each of them can have its own distribution but with the same restrictions (with support in a ball and inequality  (\ref{bounded})). The flight from i.i.d assumption in machine learning is recognized as an important problem \citep{KurkovaComm2019}. The measure concentration phenomena can provide an instrument for avoiding this assumption \citep{Gorbetal2018,KurkovaSang2019}.

\begin{example}\label{ex:cor1}
Let $\alpha = 0.8$, $r=0.75$, $C=1$. Table~\ref{Table2} shows the upper bounds on  $|Y|$ given by Corollary \ref{Cor:1} in various dimensions $n$ if the ratio $\rho(\boldsymbol{x})/\rho_{\rm uniform}$ is bounded by the geometric progression $1/r^n$.  
\begin{table}[h]  
\begin{center}
\caption{\label{Table2} The upper bound on $|Y|$ that guarantees $\alpha$-Fisher separability of $Y$ with probability 0.99, according to Corollary \ref{Cor:1} for  $\alpha = 0.8$, $r=0.75$, $C=1$ in various dimensions.}
\begin{tabular}{ |c|c|c| } 
 \hline
 $n$ & $\rho(\boldsymbol{x})/\rho_{\rm uniform}\leq$ & $|Y|\leq$ \\ 
 \hline
 $10$ & $17.7$ & $0.25$ \\ 
 $50$ & $1.7\cdot 10^6$ & $9.54$ \\ 
 $100$ & $3.1 \cdot 10^{12}$ & $910$ \\ 
 $200$ & $9.7 \cdot 10^{24}$ & $8.2 \cdot 10^6$ \\ 
 $500$ & $2.9 \cdot 10^{62}$ & $6.2 \cdot 10^{18}$ \\ 
 $1000$ & $8.6 \cdot 10^{124}$ & $3.9 \cdot 10^{38}$ \\ 
 \hline
\end{tabular}
\end{center}
\end{table}

For example, for $n=100$, we see that for any distribution whose density $\rho$ deviates from the uniform one by a factor at most $3.1 \cdot 10^{12}$, any set with $|Y|<910$ points from this distribution is Fisher-separable with $99\%$ probability. In dimension $n=200$, we may deviate from the uniform density by a factor $9.7 \cdot 10^{24}$ and still separate over $8$ millions points.
\end{example}

In the post-classical world correction of AI errors is possible by separation of situations with errors from the situations of correct functioning. This can be done because the intrinsically high-dimensional data are very `rarefied'. At the same time, the possibility of repairing AI is closely related to the possibility of its attack. The specific post-classical vulnerabilities and new types of attacks were identified recently \cite{TyukinEtAl2020}. The exact line between the classic world of `condensed' data  and the post-classic world of rarefied data is important for both analyzing AI fixes and fixing AI vulnerability to attacks.

Theorem~\ref{Th:prototype} and Corollary~\ref{Cor:1} ensure us that if the probability distributions have no heavy tails and sets of relatively  small volume cannot have high probability, then the exponentially large sets are Fisher separable. Nevertheless, the presented estimates are far  from being optimal, and sharp estimations of probabilities and sample sizes are very desirable.

The formalization of the idea `no heavy tails' does not require a bounded distribution support. Below, we show that the exponential asymptotics at infinity will be fast enough to constructively describe the phenomenon of stochastic separation.  For this purpose, we use the class {\em log-concave distributions}, but already the first estimate shows that the asymptotics of $|Y|$ guaranteeing Fisher separability for such a general class of distributions are nonexponential: the boundary $|Y|$  that guarantees separability with a fixed probability, grows with dimension $ n $ as $ a \exp(b\sqrt{n}) $  (Theorem~\ref{th:logconcave}). It is demonstrated that this estimate cannot be significantly improved (Example~\ref{ex:logprod}). Exponential asymptotic is proved for a narrower class,  {\em strongly log-concave distributions} (Sec.~\ref{Sec:StrongLCD}). The most prominent member of this family is the normal distribution. The separability properties for the normal distribution are studied in detail in Sec.~\ref{Sec:Normal}.    

The general stochastic separation theorems are proven for convex combinations of  strongly log-concave distributions (Theorem~\ref{th:mixture}). The conditions of Theorem~\ref{th:mixture} formalize both no heavy tails condition  (through strong log-concavity) and no small sets with high probability condition. In some sense, the generality of this theorem is sufficient for most of practical purposes, but in specific cases, for narrower classes and selected distributions the estimates can be much better than for a wide general class. Therefore, we explore additional classes like product distributions (data with independent attributes) (Sec.~\ref{sec:product}), spherically invariant distributions (Sec.~\ref{Sec:SpherInv}) and some special examples: uniform distributions in a ball or in a cube and normal distribution. For data with independent attributes, the dependent samples are studied (Sec.~\ref{sec:dependent}). A  short guide on  proven theorems is presented in Sec.~\ref{Sec:Summ},  and in Sec.~\ref{sec:concl} we briefly discuss the application of the stochastic separation theorems in machine learning and neuroscience.

\section{Analysis of known stochastic separation theorems \label{Sec:Anal}}

Let us focus on Fisher separability because Fisher discriminants are robust and can be created by simple, explicit and one-shot rule. The   results of \citet{convhull} and \citet{DonohoTanner2009} about linear separability remain beyond the scope of this analysis.
  
 \citet{GorbTyu2017} proved that if $M$ points are selected independently uniformly at random in the unit ball in ${\mathbb R}^n$, then they are $1$-Fisher separable with high probability, provided that $M$ is bounded by some exponential function of $n$. A simple version of this result was later proved\footnote{The proof in \citep{Gorbetal2018} is presented for $\alpha=1$, but the argument works for general $\alpha$.} in \citep{Gorbetal2018}. 

\begin{theorem}\label{th:ballknown} \citep{Gorbetal2018}
Let points $\boldsymbol{ x}_1, \dots, \boldsymbol{ x}_M$ be i.i.d points from uniform distribution in a ball. For any $\delta>0$, if
\begin{equation}\label{eq:ballknown}
M < \sqrt{2\delta} (2\alpha)^{n/2} = \sqrt{2\delta}\exp\left(\frac{1}{2}\log(2\alpha) n \right),
\end{equation}
then set $F=\{\boldsymbol{ x}_1, \dots, \boldsymbol{ x}_M\}$ is $\alpha$-Fisher separable with probability greater than $1-\delta$.
\end{theorem}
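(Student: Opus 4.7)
The plan is to bound the probability that a fixed ordered pair $(\boldsymbol{x}_i,\boldsymbol{x}_j)$ from the sample is $\alpha$-inseparable, then take a union bound over all $M(M-1)$ ordered pairs, much in the spirit of the proof of Theorem~\ref{Th:prototype}. The bound on $M$ in (\ref{eq:ballknown}) has the square-root shape characteristic of such a pairwise union bound applied to a per-pair failure probability of order $(2\alpha)^{-n}$.

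First, I would fix $\boldsymbol{y}$ and use the excluded-volume geometry (\ref{excludedvolume}): the set of $\boldsymbol{x}$ which forms an $\alpha$-inseparable pair with $\boldsymbol{y}$ is the ball of radius $\|\boldsymbol{y}\|/(2\alpha)$ centred at $\boldsymbol{y}/(2\alpha)$, whose $n$-volume is $(\|\boldsymbol{y}\|/(2\alpha))^n V_n(\mathbb{B}_n)$. Since $\boldsymbol{x}$ has density $1/V_n(\mathbb{B}_n)$ on $\mathbb{B}_n$ (and is zero outside), the conditional probability, given $\boldsymbol{y}$, that $(\boldsymbol{x},\boldsymbol{y})$ is an $\alpha$-inseparable pair is at most $\|\boldsymbol{y}\|^{n}/(2\alpha)^{n}$ (we do not need the exact intersection volume, the crude bound suffices).

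Next I would compute the expectation of $\|\boldsymbol{y}\|^{n}$ when $\boldsymbol{y}$ is uniform in $\mathbb{B}_n$. The radial density of $\|\boldsymbol{y}\|$ is $n r^{n-1}$ on $[0,1]$, so
\[
\mathbb{E}\bigl[\|\boldsymbol{y}\|^{n}\bigr]=n\int_0^1 r^{2n-1}\,dr=\tfrac{1}{2}.
\]
Combining with the previous step, the probability that a particular ordered pair $(\boldsymbol{x}_i,\boldsymbol{x}_j)$ is $\alpha$-inseparable is at most $\frac{1}{2(2\alpha)^n}$.

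Finally, by Definition~\ref{def:Fisher} the set $F$ is $\alpha$-Fisher separable iff every ordered pair of distinct points is $\alpha$-separable. A union bound over the $M(M-1)<M^2$ ordered pairs gives
\[
\mathbb{P}[F\text{ is not }\alpha\text{-Fisher separable}]\le \frac{M^{2}}{2(2\alpha)^{n}}.
\]
Requiring the right-hand side to be less than $\delta$ rearranges exactly to (\ref{eq:ballknown}). I do not expect any real obstacle here: all ingredients (the excluded-ball volume, the radial density on $\mathbb{B}_n$, and the union bound) are elementary; the only slightly delicate point is to ensure that working with the unrestricted volume of the excluded ball (rather than its intersection with $\mathbb{B}_n$) is permissible, which it is because we multiply by the density, not by $1/\mathrm{vol}(\mathrm{intersection})$.
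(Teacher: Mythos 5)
Your proposal is correct and follows essentially the same route as the paper: the paper obtains Theorem~\ref{th:ballknown} by combining the two-point bound \eqref{eq:2pointsphere}, ${\mathbb P}[\alpha(\boldsymbol{x},\boldsymbol{x})\leq(\boldsymbol{x},\boldsymbol{y})]\leq\frac{1}{2}(2\alpha)^{-n}$, with the union-bound machinery of Theorem~\ref{th:principle} and \eqref{eq:MBound}, and your excluded-volume computation (conditional probability at most $\|\boldsymbol{y}\|^n/(2\alpha)^n$, then $\mathbb{E}[\|\boldsymbol{y}\|^n]=1/2$) is exactly the standard derivation of that two-point bound. The only difference is that you derive \eqref{eq:2pointsphere} explicitly where the paper cites it, so there is nothing to correct.
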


The estimate \eqref{eq:ballknown} grows exponentially fast in $n$ provided that $\alpha>1/2$. 

\begin{example}\label{ex:ballknown}
Let $\delta=0.01$. Table~\ref{Table3} shows the upper bounds on $M$ in Theorem~\ref{th:ballknown} for $\alpha = 0.6, 0.8$ and $1$ in various dimensions $n$.
\begin{table}[h]
\begin{center}
\caption{\label{Table3} The upper bounds on $M$ that guarantees Fisher separability of $M$ i.i.d. points from uniform distribution in an $n$-dimensional ball with probability $p>0.99$  for $\alpha = 0.6, 0.8$ and $1$, according to  Theorem~\ref{th:ballknown}.} 
\begin{tabular}{ |c|c|c|c| } 
 \hline
  & $\alpha=0.6$ & $\alpha=0.8$ & $\alpha=1$\\ 
 \hline
 $n=10$ & $0.35$ & $1.48$ & $4.52$ \\ 
 $n=50$ & $13.5$ & $17,927$ & $4.7 \cdot 10^6$\\ 
 $n=100$ & $1287$ & $2.2 \cdot 10^9$ & $1.6 \cdot 10^{14}$\\ 
 $n=200$ & $1.1 \cdot 10^7$ & $3.6 \cdot 10^{19}$ & $1.8 \cdot 10^{29}$\\ 
 $n=500$ & $8.8 \cdot 10^{18}$ & $1.5 \cdot 10^{50}$ & $2.5 \cdot 10^{74}$\\ 
 $n=1000$ & $5.5 \cdot 10^{38}$ & $1.6 \cdot 10^{101}$ & $4.6 \cdot 10^{149}$\\ 
 \hline
\end{tabular}
\end{center}
\end{table}

For example, for $n=100$, we see that over $2$ billions points from the uniform distribution in the unit ball are Fisher-separable at level $\alpha=0.8$ with probability greater than $99\%$.
\end{example}

Of course, uniform distribution in a ball is a very special case, and separation theorems have been proved for various other families of distributions. We say that density $\rho:{\mathbb R}^n \to [0,\infty)$ of random vector $\boldsymbol{x}$ (and the corresponding probability distribution) is \emph{log-concave}, if set $D=\{z\in{\mathbb R}^n \,|\, \rho(z)>0\}$ is convex and $g(z)=-\log(\rho(z))$ is a convex function on $D$. We say that $\rho$ is whitened, or \emph{isotropic}, if ${\mathbb E}[\boldsymbol{x}]=\boldsymbol{ 0}$, and
\begin{equation}\label{eq:isot}
{\mathbb E}[(\boldsymbol{x},\theta)^2)]=1\quad\quad \forall \theta \in \mathbb{S}^{n-1},
\end{equation}
where $\mathbb{S}^{n-1}$ is the unit sphere in ${\mathbb R}^n$. Equation \eqref{eq:isot} is equivalent to the statment that the variance-covariance matrix for the components of $\boldsymbol{x}$ is the identity matrix. This can be achieved by linear transformation of the data during the pre-processing step, therefore this assumption is not restrictive.

\begin{theorem}\label{th:logconcave} \citep[Corollary 2]{Gorbetal2018}
Let $\{\boldsymbol{x}_1, \ldots , \boldsymbol{x}_M\}$ be a set of $M$ i.i.d. random points from an isotropic log-concave distribution in ${\mathbb R}^n$. Then set $\{\boldsymbol{x}_1, \ldots , \boldsymbol{x}_M\}$ is $1$-Fisher separable with probability greater than $1-\delta$, $\delta>0$, provided that
\begin{equation}\label{eq:logknown}
M \leq a e^{b\sqrt{n}},
\end{equation}
where $a>0$ and $b>0$ are constants, depending only on $\delta$.
\end{theorem}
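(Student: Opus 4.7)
The plan is to reduce Fisher separability of the whole sample to a pairwise tail estimate and then apply a union bound. By Definition~\ref{def:Fisher} the set $\{\boldsymbol{x}_1,\ldots,\boldsymbol{x}_M\}$ fails to be $1$-Fisher separable if and only if some ordered pair $(\boldsymbol{x}_i,\boldsymbol{x}_j)$ with $i\ne j$ is $1$-inseparable, i.e.\ $(\boldsymbol{x}_i,\boldsymbol{x}_i)\le(\boldsymbol{x}_i,\boldsymbol{x}_j)$. There are fewer than $M^2$ such pairs, so by i.i.d.\ it suffices to show that, for two independent copies $\boldsymbol{x},\boldsymbol{y}$ of the distribution,
\begin{equation*}
q_n := P\bigl((\boldsymbol{x},\boldsymbol{x})\le(\boldsymbol{x},\boldsymbol{y})\bigr) \le K\, e^{-\kappa\sqrt{n}}
\end{equation*}
for absolute constants $K,\kappa>0$; then $M\le a\,e^{b\sqrt{n}}$ with $b=\kappa/2$ and $a=\sqrt{\delta/K}$ forces $M^2 q_n\le\delta$.

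To estimate $q_n$, I would condition on $\boldsymbol{x}$, set $R=\|\boldsymbol{x}\|$ and $\theta=\boldsymbol{x}/R$, and rewrite the inseparability condition as $(\boldsymbol{y},\theta)\ge R$. The marginal $\xi=(\boldsymbol{y},\theta)$ is a one-dimensional isotropic log-concave random variable, since log-concavity is preserved under linear projections and isotropy gives mean $0$ and variance $1$. For such variables the density at the origin is bounded by an absolute constant, and combined with log-concavity this gives the classical exponential tail bound $P(\xi\ge t)\le C_1 e^{-c_1 t}$ with $C_1,c_1$ absolute. Hence
\begin{equation*}
P\bigl((\boldsymbol{x},\boldsymbol{x})\le(\boldsymbol{x},\boldsymbol{y})\,\big|\,\boldsymbol{x}\bigr)\le C_1 e^{-c_1 R}.
\end{equation*}

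Integrating over the law of $\boldsymbol{x}$ and splitting according to whether $R\ge \lambda\sqrt{n}$ for a small absolute constant $\lambda>0$ yields
\begin{equation*}
q_n \le C_1 e^{-c_1\lambda\sqrt{n}} + P\bigl(\|\boldsymbol{x}\|<\lambda\sqrt{n}\bigr).
\end{equation*}
The second term is controlled by Paouris's small-ball inequality for isotropic log-concave measures, which, for $\lambda$ small enough, gives $P(\|\boldsymbol{x}\|\le\lambda\sqrt{n})\le C_2 e^{-c_2\sqrt{n}}$. Combining the two bounds delivers $q_n\le K e^{-\kappa\sqrt{n}}$, completing the proof. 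The main obstacle along this route is precisely the small-$R$ regime: the conditional estimate $e^{-c_1 R}$ degenerates exactly when $\boldsymbol{x}$ is atypically close to the origin, so without the nontrivial small-ball inequality the short-vector region would dominate the integral and one would only recover a polynomial, rather than an $e^{b\sqrt n}$, bound on $M$. Every other step is a routine consequence of one-dimensional log-concavity together with a union bound.
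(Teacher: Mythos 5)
Your proposal is correct, and its skeleton coincides with the paper's: the paper does not prove Theorem~\ref{th:logconcave} directly but cites it from \citet{Gorbetal2018} and explains that it follows from the general principle of Theorem~\ref{th:principle}, i.e.\ a two-point bound ${\mathbb P}[(\boldsymbol{x},\boldsymbol{x})\le(\boldsymbol{x},\boldsymbol{y})]\le f(n,1)$ of order $e^{-\kappa\sqrt{n}}$ combined with a union bound over the fewer than $M^2$ ordered pairs — exactly your reduction. Where you differ is in how the two-point bound is obtained. The cited source derives it from thin-shell/norm-concentration estimates for isotropic log-concave measures, whereas you split on the event $\{\|\boldsymbol{x}\|<\lambda\sqrt{n}\}$ and control it with Paouris's small-ball inequality, using the one-dimensional exponential tail of the marginal $(\boldsymbol{y},\theta)$ on the complement. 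Both routes import a genuinely nontrivial theorem from asymptotic convex geometry at precisely the point you identify as the obstacle (the small-$\|\boldsymbol{x}\|$ regime), and both deliver the same $e^{-\kappa\sqrt{n}}$ rate; your version is arguably cleaner to state, since the small-ball inequality is exactly the estimate needed, while thin-shell concentration gives slightly more than required. One cosmetic point: since the distribution is isotropic, the center is $\boldsymbol{c}=\boldsymbol{0}$, so your formulation of inseparability as $(\boldsymbol{x}_i,\boldsymbol{x}_i)\le(\boldsymbol{x}_i,\boldsymbol{x}_j)$ matches Definition~\ref{def:Fisher}; no gap there.
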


The following Example demonstrates that $\sqrt{n}$ in \eqref{eq:logknown} cannot be replaced by $n^{0.5+\epsilon}$ for any $\epsilon>0$, even if points are selected from a product distribution with identical log-concave components. 

\begin{example}\label{ex:logprod}
Let $||{\boldsymbol x}||_1 = \sum_{i=1}^n|x_i|$ denotes the $l_1$ norm of ${\boldsymbol x}=(x_1,x_2,\dots,x_n) \in {\mathbb R}^n$. Let points $\boldsymbol{ x}_1, \dots, \boldsymbol{ x}_M$ be i.i.d points from (isotropic log-concave) distribution in ${\mathbb R}^n$ with density
$$
\rho({\boldsymbol x})=2^{-n/2}e^{-\sqrt{2}\cdot||{\boldsymbol x}||_1}.
$$
For any $\alpha\in(0,1]$, $a>0$, $b>0$, and $\epsilon>0$, if
\begin{equation}\label{eq:mbig}
M \geq a \exp\left(b\cdot n^{0.5+\epsilon}\right),
\end{equation}
then set $\{\boldsymbol{x}_1, \ldots , \boldsymbol{x}_M\}$ is \emph{not} $\alpha$-Fisher separable with probability tending to $1$ as $n\to\infty$.
\end{example}
\begin{detail}
The probability that any two i.i.d. points ${\boldsymbol x}=(x_1,x_2,\dots,x_n)$ and ${\boldsymbol y}=(y_1,y_2,\dots,y_n)$ from the given distribution are \emph{not} $\alpha$-Fisher separable is bounded by
\begin{equation*}
\begin{split}
&{\mathbb P}\left[\alpha(\boldsymbol{ x},\boldsymbol{ x}) \leq (\boldsymbol{ x},\boldsymbol{ y})\right] \geq {\mathbb P}\left[(\boldsymbol{ x},\boldsymbol{ x}) \leq (\boldsymbol{ x},\boldsymbol{ y})\right] = \\
 &\;\;\;\;\; {\mathbb P}\left[\sum_{i=1}^n (x_iy_i-x_i^2) \geq 0\right] = {\mathbb P}\left[\sum_{i=1}^n z_i \geq n\right],
\end{split}
\end{equation*}
where $z_i=x_iy_i-x_i^2+1, i=1,\dots,n$ are i.i.d. random variables with zero mean. Next,
\begin{equation*}
\begin{split}
{\mathbb P}\left[\sum_{i=1}^n z_i \geq n\right] \geq & {\mathbb P}\left[z_1 \geq n\right]\cdot {\mathbb P}\left[\sum_{i=2}^n z_i \geq 0\right] =
 \\ & {\mathbb P}\left[z_1 \geq n\right]\left(\frac{1}{2}+o(1)\right),
\end{split}
\end{equation*}
where the last equality follows from central limit theorem, and $o(1)$ is the quantity which goes to $0$ as $n\to\infty$. Further,
\begin{equation*}
\begin{split}
\frac{1}{2}{\mathbb P}\left[z_1 \geq n\right] \geq & \frac{1}{2}{\mathbb P}\left[\sqrt{n} \leq x_1 \leq   2\sqrt{n}\right] \times  {\mathbb P}\left[3\sqrt{n} \leq y_1\right]= \\ &\frac{1}{8}e^{-4\sqrt{2}\sqrt{n}}\left(1+o(1)\right) . 
\end{split}
\end{equation*}
Because $M$ points can be divided into $M/2$ independent pairs, the probability that all these pairs are $\alpha$-Fisher separable is at most
$$
\left(1-\frac{1}{8}e^{-4\sqrt{2}\sqrt{n}}\left(1+o(1)\right)\right)^{M/2},
$$
and the last expression vanishes as $n\to\infty$ if \eqref{eq:mbig} holds.
\end{detail}

Example \ref{ex:logprod} demonstrates that, to recover exponential dependence of $M$ from $n$, one must consider subclasses of log-concave distributions. 

We say that density $\rho:{\mathbb R}^n \to [0,\infty)$ is strongly log-concave with constant $\gamma>0$, or $\gamma$-SLC in short, if $g(z)=-\log(\rho(z))$ is strongly convex, that is, $g(z)-\frac{\gamma}{2}||z||$ is a convex function on $D$.

\begin{theorem}\label{th:strlogconc} \citep[Corollary 4]{Gorbetal2018}
Let $\{\boldsymbol{x}_1, \ldots , \boldsymbol{x}_M\}$ be a set of $M$ i.i.d. random points from an isotropic $\gamma$-SLC distribution in ${\mathbb R}^n$. Then set $\{\boldsymbol{x}_1, \ldots , \boldsymbol{x}_M\}$ is $1$-Fisher separable with probability greater than $1-\delta$, $\delta>0$, provided that
$$
M \leq a e^{bn},
$$
where $a>0$ and $b>0$ are some constants, which depends on $\delta$ and $\gamma$.
\end{theorem}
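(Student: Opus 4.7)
The plan is to follow the same union-bound strategy as in Corollary~\ref{Cor:1}: reduce to estimating the probability that a single i.i.d.\ pair $(\boldsymbol{x}, \boldsymbol{y})$ is $1$-inseparable, i.e.\ that $(\boldsymbol{x},\boldsymbol{x}) \leq (\boldsymbol{x},\boldsymbol{y})$. If this single-pair probability can be bounded by $e^{-cn}$ for some $c = c(\gamma) > 0$, then a union bound over the $M(M-1)$ ordered pairs gives $M^2 e^{-cn} < \delta$ provided $M < \sqrt{\delta}\, e^{cn/2}$, which is the claimed form $M \leq a e^{bn}$ with $a = \sqrt{\delta}$ and $b = c/2$.

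To bound the single-pair probability I would invoke the Bakry--\'Emery criterion: any $\gamma$-SLC density satisfies the logarithmic Sobolev inequality with constant $\gamma$, which yields sub-Gaussian concentration for Lipschitz functions,
$$\mathbb{P}\bigl[\,|f(\boldsymbol{x}) - \mathbb{E} f(\boldsymbol{x})| \geq t\,\bigr] \leq 2\exp\bigl(-\gamma t^2/(2L^2)\bigr)$$
for any $L$-Lipschitz $f$. Applied to $f(\boldsymbol{x}) = \|\boldsymbol{x}\|$ (for which $L=1$), together with isotropy $\mathbb{E}\|\boldsymbol{x}\|^2 = n$ and Poincar\'e-type control of $\mathbb{E}\|\boldsymbol{x}\|$, this produces constants $c_1, c_2 > 0$ depending only on $\gamma$ such that $\mathbb{P}[\|\boldsymbol{x}\|^2 < n/2] \leq e^{-c_1 n}$ and $\mathbb{P}[\|\boldsymbol{x}\|^2 > 2n] \leq e^{-c_2 n}$.

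Next, I would apply the same concentration inequality conditionally on $\boldsymbol{x}$. Since $\boldsymbol{y} \mapsto (\boldsymbol{x},\boldsymbol{y})$ is $\|\boldsymbol{x}\|$-Lipschitz and has conditional mean zero,
$$\mathbb{P}\bigl[\,(\boldsymbol{x},\boldsymbol{y}) \geq n/2 \,\bigm|\, \boldsymbol{x}\,\bigr] \leq \exp\bigl(-\gamma n^2/(8\|\boldsymbol{x}\|^2)\bigr),$$
which on the event $\|\boldsymbol{x}\|^2 \leq 2n$ is at most $e^{-\gamma n/16}$. Splitting the bad event $\{(\boldsymbol{x},\boldsymbol{x}) \leq (\boldsymbol{x},\boldsymbol{y})\}$ according to whether $\|\boldsymbol{x}\|^2$ lies in $[n/2, 2n]$ or not, and combining with the two tail estimates from the previous paragraph, one concludes $\mathbb{P}[(\boldsymbol{x},\boldsymbol{x}) \leq (\boldsymbol{x},\boldsymbol{y})] \leq 3 e^{-cn}$ for some $c = c(\gamma) > 0$, which gives the required single-pair bound.

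The main obstacle, and the point where strong log-concavity (as opposed to plain log-concavity of Theorem~\ref{th:logconcave}) is indispensable, is the dimension-free concentration radius $1/\sqrt{\gamma}$ for Lipschitz functions. Under plain log-concavity the analogous concentration only holds at the weaker scale $\sqrt{n}$, so $(\boldsymbol{x},\boldsymbol{y})$ cannot be controlled at the order-$n$ scale needed to compare it with $\|\boldsymbol{x}\|^2$; this is precisely the obstruction witnessed by Example~\ref{ex:logprod}. The $\gamma$-SLC hypothesis is exactly what promotes the $e^{-b\sqrt{n}}$ bound of Theorem~\ref{th:logconcave} to the exponential bound $e^{-bn}$, and the constants $a, b$ come out explicitly in terms of $\delta$ and $\gamma$ by tracking the estimates above.
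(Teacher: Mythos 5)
Your proposal is correct and follows essentially the same route as the paper: the pairwise union bound is exactly Theorem~\ref{th:principle}, and the two-point estimate via the logarithmic Sobolev inequality for $\gamma$-SLC densities, applied conditionally to the $\|\boldsymbol{x}\|$-Lipschitz functional $\boldsymbol{y}\mapsto(\boldsymbol{x},\boldsymbol{y})$ together with concentration of $\|\boldsymbol{x}\|$ about its mean and the Poincar\'e bound $\mu^2\geq n-1/\gamma$, is precisely the argument of Propositions in Sec.~\ref{Sec:StrongLCD} leading to Theorems~\ref{th:explstrong} and \ref{th:explstrong2}. The only cosmetic difference is that you truncate the event $\|\boldsymbol{x}\|^2\in[n/2,2n]$ and compare $(\boldsymbol{x},\boldsymbol{y})$ against $n/2$, whereas the paper integrates $e^{-\gamma\alpha^2\|\boldsymbol{x}\|^2/2}$ directly and splits at a single threshold, which yields the sharper explicit constants.
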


Separation theorems have also be proved for various families of distributions which are not log-concave. As an example, consider ``randomly perturbed data'' model (Example 2 in \cite{Gorbetal2018}). For a fixed $\epsilon \in (0,1)$, let $\boldsymbol{ y}_1, \boldsymbol{ y}_2, \dots, \boldsymbol{ y}_M$ be the set of $M$ arbitrary (non-random) points inside the ball with radius $1-\epsilon$ in ${\mathbb R}^n$. Let
$\boldsymbol{ x}_i, i=1,2,\dots,M$ be a point, selected uniformly at random from a ball with center $\boldsymbol{ y}_i$ and radius $\epsilon$. We think about $\boldsymbol{ x}_i$ as ``perturbed'' version of $\boldsymbol{ y}_i$.
In this model, we say that set $F=\{\boldsymbol{x}_1, \ldots , \boldsymbol{x}_M\}$ is \emph{$\alpha$-Fisher separable} if
$$
\alpha(\boldsymbol{ x}_i-\boldsymbol{ y}_i,\boldsymbol{ x}_i-\boldsymbol{ y}_i) > (\boldsymbol{ x}_i-\boldsymbol{ y}_i,\boldsymbol{ x}_j-\boldsymbol{ y}_i), 
$$
holds for all $i=1,2,\dots, M$ and $j=1,2,\dots, M$ such that $i\neq j$. 

\begin{theorem}\label{th:noisy}  \cite[Theorem 7]{Gorbetal2018}
Let $\{\boldsymbol{x}_1, \ldots , \boldsymbol{x}_M\}$ be a set of $M$ random points in the {``randomly perturbed''} model with parameter $\epsilon>0$.
For any $\vartheta$ such that $\frac{1}{\sqrt{n}} < \vartheta < 1$, set $\{\boldsymbol{x}_1, \ldots , \boldsymbol{x}_M\}$ is $1$-Fisher separable with probability at least
\begin{equation}\label{eq:probnoicy}
1 - \frac{2M^2}{\vartheta \sqrt{n}}\left(\sqrt{1-\vartheta^2}\right)^{n+1}-M\left(\frac{2\vartheta}{\epsilon}\right)^n.
\end{equation}
In particular, set $\{\boldsymbol{x}_1, \ldots , \boldsymbol{x}_M\}$ is $1$-Fisher separable with probability at least $1-\delta$, $\delta>0$, provided that $M<a b^n$, where $a,b$ are constants depending only on $\delta$ and $\epsilon$.
\end{theorem}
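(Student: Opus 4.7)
The plan is to let $\boldsymbol{u}_i := \boldsymbol{x}_i - \boldsymbol{y}_i$, so each $\boldsymbol{u}_i$ is uniform in the ball $\epsilon\mathbb{B}_n$ and the $\boldsymbol{u}_i$ are independent; setting $\boldsymbol{w}_{ij} := \boldsymbol{u}_j + (\boldsymbol{y}_j - \boldsymbol{y}_i)$ and $\hat{\boldsymbol{u}}_i := \boldsymbol{u}_i/\|\boldsymbol{u}_i\|$, the $1$-Fisher separability condition becomes $\|\boldsymbol{u}_i\| > (\hat{\boldsymbol{u}}_i, \boldsymbol{w}_{ij})$ for all $i \ne j$ (after dividing by $\|\boldsymbol{u}_i\|$, which is nonzero almost surely). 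Two structural facts drive the argument: $\|\boldsymbol{w}_{ij}\| \le \|\boldsymbol{u}_j\|+\|\boldsymbol{y}_j-\boldsymbol{y}_i\| \le \epsilon + 2(1-\epsilon) \le 2$ deterministically, independent of the positions of the centres; and, because a uniform random point in a ball decomposes as an independent (direction, radius) pair with direction uniform on the sphere, $\hat{\boldsymbol{u}}_i$ is uniform on $\mathbb{S}^{n-1}$ and independent of $\|\boldsymbol{u}_i\|$ and of all the other $\boldsymbol{u}_k$.

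With these observations in hand I would decouple the inequality into a norm lower bound and an angle upper bound. The norm piece is elementary: $\mathbb{P}(\|\boldsymbol{u}_i\| \le 2\vartheta)=(2\vartheta/\epsilon)^n$ as a ratio of ball volumes, so a union bound over $i$ contributes at most $M(2\vartheta/\epsilon)^n$, which is the final term in \eqref{eq:probnoicy}. For the angle piece, I would fix $i \ne j$ and condition on $\boldsymbol{u}_j$; this freezes $\boldsymbol{w}_{ij}$ while $\hat{\boldsymbol{u}}_i$ remains uniformly distributed on $\mathbb{S}^{n-1}$, so the standard spherical-cap estimate yields
$$\mathbb{P}\bigl(|(\hat{\boldsymbol{u}}_i, \hat{\boldsymbol{w}}_{ij})| > \vartheta \bigm| \boldsymbol{u}_j\bigr) \le \frac{2(\sqrt{1-\vartheta^2})^{n+1}}{\vartheta\sqrt{n}}.$$
Unconditioning and union-bounding over the $M(M-1)<M^2$ ordered pairs supplies the first term. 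Combining the complements of both bad events one gets $(\hat{\boldsymbol{u}}_i, \boldsymbol{w}_{ij}) \le \vartheta\|\boldsymbol{w}_{ij}\| \le 2\vartheta < \|\boldsymbol{u}_i\|$, which is precisely $1$-Fisher separability of the pair $(i,j)$. The closing assertion $M < a b^n$ then follows by choosing $\vartheta$ as a small constant multiple of $\epsilon$ so that both bounds decay as fixed geometric series in $n$; suitable $a,b$ depend only on $\delta,\epsilon$.

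The only technical step is the spherical-cap bound, which comes from a routine estimate of the Beta-type integral $\int_\vartheta^1 (1-t^2)^{(n-3)/2}\,dt$ representing the cap surface measure on $\mathbb{S}^{n-1}$; I do not anticipate a conceptual obstacle there. The real content of the argument is the decoupling of norm and direction, which is available precisely because of the rotational symmetry of uniform-in-ball noise --- without the independence of $\hat{\boldsymbol{u}}_i$ from $\|\boldsymbol{u}_i\|$ and from the other $\boldsymbol{u}_k$, the conditioning step used in the angle bound would not be valid, and the bound \eqref{eq:probnoicy} would have to be weakened or the model restricted to noise distributions enjoying the same direction/magnitude splitting.
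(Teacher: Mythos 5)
This theorem is quoted from \citet{Gorbetal2018} and the present paper does not reprove it, but your reconstruction is correct and follows essentially the same standard argument: split the failure event into ``noise magnitude too small'' (the $M(2\vartheta/\epsilon)^n$ term, from the volume ratio of concentric balls) and ``noise direction too aligned with the displacement to another point'' (the cap term over at most $M^2$ ordered pairs), then union-bound, using that uniform-in-ball noise factors into an independent radius and a uniform direction. The only piece you leave implicit is the precise spherical/ball-cap estimate producing the exponent $n+1$ and the prefactor $1/(\vartheta\sqrt{n})$, which is the routine beta-integral bound used throughout this literature and poses no obstacle.
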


In Theorem \ref{th:noisy} we can, for any fixed $\epsilon, n$ and $M$, select $\vartheta \in \left(n^{-1/2},1\right)$ to maximize the lower bound \eqref{eq:probnoicy} for the probability. The optimal $\vartheta$ can be easily found numerically. 

\begin{example}\label{ex:noisy}
Let $\vartheta=\vartheta(\epsilon, n, M)$ be such that \eqref{eq:probnoicy} is maximized. Table~\ref{Table4} shows the lower bound for the probability that $M=100,000$ points in the ``randomly perturbed data'' model are $1$-Fisher separable, for various values of $n$ and $\epsilon$.
\begin{table}[h]
\begin{center}
\caption{\label{Table4}  The lower bound for probability that  100,000 points in the ``randomly perturbed data'' (Theorem~\ref{th:noisy}) are Fisher separable for various dimension $n$ and noise bound $\epsilon$.}
\begin{tabular}{ |c|c|c|c| } 
 \hline
  & $\epsilon=1/10$ & $\epsilon=1/5$ & $\epsilon=1/2$\\ 
 \hline
 $n=500$ & $<0$ & $<0$ & $<0$\\ 
 $n=1000$ & $<0$ & $<0$ & $0.9998$\\ 
 $n=2000$ & $<0$ & $<0$ & $1-5.8 \cdot 10^{-18}$\\ 
 $n=5000$ & $<0$ & $0.95$ & $1-1.2 \cdot 10^{-57}$\\ 
 $n=10000$ & $<0$ & $1-5\cdot 10^{-13}$ & $1-1.3 \cdot 10^{-123}$\\ 
 $n=20000$ & $0.96$ & $1-8\cdot 10^{-35}$ & $1 - 2.2 \cdot 10^{-255}$\\ 
 \hline
\end{tabular}
\end{center}
\end{table}

We see that Theorem \ref{th:noisy} is starting to give meaningful results only if the dimension $n$ is rather large, and the smaller $\epsilon$ the large dimension we need. This is not much surprising taking into account that the bounds in Table~\ref{Table4} are valid for an arbitrary set of $M$ points in the  $n$-dimensional ball with  radius $1-\epsilon$ and the perturbations make this random finite set closer to the i.i.d. sample from the uniform distribution in the unit ball. In the limit $\epsilon \to 1$ this randomly perturbed set turns into  such an i.i.d. sample.
\end{example}

Our final example concerns i.i.d. random points from a product distribution in a unit cube $U_n = [0,1]^n$.

\begin{theorem}\label{th:produnit} \cite[Corollary 2]{GorbTyu2017}
Let $\{\boldsymbol{x}_1, \ldots , \boldsymbol{x}_M\}$ be a set of $M$ i.i.d. random points from a product distribution in a unit cube. Let $\boldsymbol{ c} \in U_n$ be an arbitrary (non-random) point. Then set $\{\boldsymbol{x}_1, \ldots , \boldsymbol{x}_M\}$ is $(1, \boldsymbol{ c})$-Fisher separable with probability greater than $1-\delta$, $\delta>0$, provided that
\begin{equation}\label{eq:Mboundold}
(M+1)^2 < \frac{\delta}{3} \exp(0.5 n \sigma_0^4),
\end{equation}
where $\sigma_0$ is the minimal standard deviation of a component distribution.
\end{theorem}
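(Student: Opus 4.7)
The condition of $(1,\boldsymbol{c})$-Fisher separability rewrites pair-by-pair as $(\boldsymbol{x}_i-\boldsymbol{c},\boldsymbol{x}_i-\boldsymbol{x}_j)>0$ for every ordered pair $i\neq j$, so by a union bound over the fewer than $(M+1)^2$ such pairs it is enough to show that, for a single pair of independent draws $\boldsymbol{x},\boldsymbol{y}$ from the product distribution,
\[
\mathbb{P}\bigl[(\boldsymbol{x}-\boldsymbol{c},\boldsymbol{x}-\boldsymbol{y})\leq 0\bigr]\leq \tfrac{1}{3}\exp\bigl(-\tfrac{1}{2}n\sigma_0^4\bigr).
\]
The factor $1/3$ is cosmetic and can be absorbed either into a slightly sharper concentration bound or into the (very loose) pair count.

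For the per-pair tail I would decompose
\[
(\boldsymbol{x}-\boldsymbol{c},\boldsymbol{x}-\boldsymbol{y})=\sum_{i=1}^n Z_i,\qquad Z_i=(x_i-c_i)(x_i-y_i).
\]
Here the product-distribution hypothesis is crucial: it makes the $Z_i$ \emph{mutually independent}, and the cube constraint $x_i,y_i,c_i\in[0,1]$ confines each $Z_i$ to $[-1,1]$. A direct expansion using $x_i\perp y_i$ and $\mathbb{E}[x_i]=\mathbb{E}[y_i]$ gives
\[
\mathbb{E}[Z_i]=\mathbb{E}[x_i^2]-(\mathbb{E}[x_i])^2=\sigma_i^2\geq\sigma_0^2,
\]
so $\mathbb{E}\bigl[\sum_i Z_i\bigr]\geq n\sigma_0^2>0$ and the failure event $\{\sum_i Z_i\leq 0\}$ is a one-sided large deviation below the mean of magnitude at least $n\sigma_0^2$.

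Applying Hoeffding's inequality to the sum of $n$ independent $[-1,1]$-valued terms (hence with range-squared sum $4n$) then yields
\[
\mathbb{P}\Bigl[\sum_{i=1}^n Z_i\leq 0\Bigr]\leq \exp\!\left(-\frac{2(n\sigma_0^2)^2}{4n}\right)=\exp\bigl(-\tfrac{1}{2}n\sigma_0^4\bigr),
\]
and combining with the union bound over pairs delivers \eqref{eq:Mboundold}. The only step that requires care is matching the exponent: the proof must exploit the independence of coordinates to upgrade a linear-in-$n$ mean $n\sigma_0^2$ into an exponent that is \emph{quadratic} in $\sigma_0$, producing the genuinely $\sqrt n$-scale concentration needed to outpace exponentially many pairs. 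Hoeffding's bound with the correct $[-1,1]$ range is just sharp enough to give the exponent $\tfrac{1}{2}n\sigma_0^4$ appearing in the theorem; the rest of the argument is essentially bookkeeping.
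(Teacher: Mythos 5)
Your proof is correct, and it is essentially the approach the paper itself uses in Section~\ref{sec:product}: write the pairwise inseparability event as a sum of independent coordinate-wise terms (the paper's $z_i$ in \eqref{eq:zidef} is exactly $-Z_i$ when $\alpha=1$), compute the mean $\sum_i\sigma_i^2\geq n\sigma_0^2$, apply Hoeffding, and union-bound over pairs via Theorem~\ref{th:principle}; your argument even shows the factor $\delta/3$ in \eqref{eq:Mboundold} is not needed. The only place the paper does better is Proposition~\ref{prop:zibounds}, which tightens your crude range $Z_i\in[-1,1]$ to an interval of length $5/4$, upgrading the exponent from $\tfrac{1}{2}n\sigma_0^4$ to $\tfrac{32}{25}n\sigma_0^4$ in Theorem~\ref{th:proddistr}.
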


Theorems \ref{th:ballknown}, \ref{th:logconcave}, \ref{th:strlogconc}, \ref{th:noisy} and  \ref{th:produnit} are proved in works by \citet{GorbTyu2017, Gorbetal2018} based on the following general principle, which, however, was not formulated explicitly. The high-dimensional stochastic separation theorems are formulated for the classes of distributions in  ${\mathbb R}^n$ for all sufficiently large $n$. For these classes, the probability that two random points are  $(\alpha,\boldsymbol{ c})$-Fisher inseparable is estimated from above by some function  $f(n, \alpha)$. After that, further estimates of $M$ and probabilities of separability of sets are constructed from this function,  $f(n, \alpha)$. Let us formulate this principle explicitly.

\begin{theorem}\label{th:principle}  
Let ${\cal F}$ be a family of $M$-point distributions in ${\mathbb R}^n$, $F \subset {\mathbb R}^n$ be a random $M$-point set chosen according to some distribution in ${\cal F}$, $\boldsymbol{ c} \in {\mathbb R}^n$, $\delta\in(0,1)$, and $I \subset (0,1]$. Assume that there exists a function $f(n,\alpha)$ such that for any two points $\boldsymbol{ x} \in F$ and $\boldsymbol{ y} \in F$
\begin{equation}\label{eq:2pointBound}
{\mathbb P}[\alpha(\boldsymbol{ x}-\boldsymbol{ c},\boldsymbol{ x}-\boldsymbol{ c}) \leq (\boldsymbol{ x}-\boldsymbol{ c},\boldsymbol{ y}-\boldsymbol{ c})] \leq f(n, \alpha), \quad \alpha \in I, \, n=1,2,\dots
\end{equation}
and
\begin{equation}\label{eq:MBoundexact}
M < \frac{1}{2}+\sqrt{\frac{1}{4}+\frac{\delta}{f(n, \alpha)}}.
\end{equation}
Then, for all $n$ and $\alpha \in I$, the expected number of $(\alpha,\boldsymbol{ c})$-inseparable pairs in $F$ is less than $\delta$. In particular, set $F$ is $(\alpha,\boldsymbol{ c})$-Fisher separable with probability greater than $1-\delta$.
\end{theorem}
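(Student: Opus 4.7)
The plan is to use a first-moment argument on ordered pairs of distinct points in $F$. Let $X$ denote the random variable counting the number of ordered pairs $(\boldsymbol{x},\boldsymbol{y})$ with $\boldsymbol{x},\boldsymbol{y}\in F$, $\boldsymbol{x}\neq\boldsymbol{y}$, that form an $(\alpha,\boldsymbol{c})$-inseparable pair. First I would write $X$ as a sum of indicator variables over the $M(M-1)$ such ordered pairs and apply linearity of expectation, using the hypothesis \eqref{eq:2pointBound} to bound each indicator's expectation by $f(n,\alpha)$. This immediately gives
$$
\mathbb{E}[X]\ \leq\ M(M-1)\,f(n,\alpha).
$$
Crucially, linearity of expectation requires no joint independence of the points of $F$, which is exactly what allows the conclusion to be stated for the entire family $\mathcal{F}$ and makes the principle flexible enough to handle the non-i.i.d.\ settings developed later in the paper.

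Next I would convert the bound \eqref{eq:MBoundexact} on $M$ into the cleaner inequality $M(M-1)\,f(n,\alpha)<\delta$. Rewriting \eqref{eq:MBoundexact} as $M-\tfrac{1}{2}<\sqrt{\tfrac{1}{4}+\delta/f(n,\alpha)}$ and squaring (valid because the right-hand side is nonnegative and for $M\geq 1$ the left-hand side is nonnegative; the cases $M\in\{0,1\}$ are trivial since then $M(M-1)=0$) yields $M^2-M<\delta/f(n,\alpha)$, i.e.\ $M(M-1)\,f(n,\alpha)<\delta$. Combining with the previous display gives $\mathbb{E}[X]<\delta$, which is the claim about the expected number of inseparable pairs.

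Finally, for the probability statement, I would observe that $F$ is $(\alpha,\boldsymbol{c})$-Fisher separable precisely when $X=0$. Since $X$ is nonnegative and integer-valued, Markov's inequality gives
$$
\mathbb{P}[F\text{ is not }(\alpha,\boldsymbol{c})\text{-Fisher separable}]\ =\ \mathbb{P}[X\geq 1]\ \leq\ \mathbb{E}[X]\ <\ \delta,
$$
so $F$ is $(\alpha,\boldsymbol{c})$-Fisher separable with probability greater than $1-\delta$. There is no substantive obstacle; the only thing to be careful about is the elementary algebra relating the quadratic bound \eqref{eq:MBoundexact} to $M(M-1)f(n,\alpha)<\delta$, and the conceptual point that the pointwise pairwise estimate \eqref{eq:2pointBound} already contains all the probabilistic content because the rest is just linearity of expectation plus Markov.
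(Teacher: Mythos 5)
Your proposal is correct and follows essentially the same route as the paper: a first-moment (linearity of expectation) bound $\mathbb{E}[X]\leq M(M-1)f(n,\alpha)$, the algebraic equivalence of \eqref{eq:MBoundexact} with $M(M-1)f(n,\alpha)<\delta$, and then Markov's inequality (which the paper phrases as a contradiction argument using $E\geq (1-p)\cdot 1$). You are slightly more explicit about the quadratic algebra, but the substance is identical.
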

\begin{proof}
If $I(i,j)$ is the indicator function for the event that pair $\boldsymbol{ x}_i$, $\boldsymbol{ x}_j$ is $(\alpha,\boldsymbol{ c})$-inseparable. Then the expected number of $(\alpha,\boldsymbol{ c})$-inseparable pairs is
\begin{equation*}
\begin{split}
{\mathbb E}\left[\sum_{i\neq j}I(i,j)\right] = & \sum_{i\neq j}{\mathbb E}[I(i,j)] \leq \sum_{i\neq j} f(n,\alpha) = \\
 & M(M-1)f(n,\alpha) < \delta,
\end{split}
\end{equation*}
where the last inequality follows from \eqref{eq:MBoundexact}.

If set $F$ would be $(\alpha,\boldsymbol{ c})$-Fisher separable with probability $p \leq 1-\delta$, then the expected number $E$ of $(\alpha,\boldsymbol{ c})$-inseparable pairs would be 
$$
E \geq p \cdot 0 + (1-p)\cdot 1 \geq \delta,
$$ 
which is a contradiction. Here, the first inequality follows from the fact that the number of $(\alpha,c)$-inseparable pairs is integer hence it is at least $1$. 
\end{proof}

If $\boldsymbol{ c}=\boldsymbol{ 0}$ is the origin, inequality \eqref{eq:2pointBound} simplifies to
\begin{equation}\label{eq:2pointBound0}
{\mathbb P}[\alpha(\boldsymbol{ x},\boldsymbol{ x}) \leq (\boldsymbol{ x},\boldsymbol{ y})] \leq f(n, \alpha), \quad \alpha \in I, \, n=1,2,\dots
\end{equation}
A sufficient condition for \eqref{eq:MBoundexact} is the simpler estimate
\begin{equation}\label{eq:MBound}
M \leq \sqrt{\frac{\delta}{f(n, \alpha)}}.
\end{equation}
We will always use \eqref{eq:MBound} in place of \eqref{eq:MBoundexact} unless we aim for the exact (necessary and sufficient) bound for $M$.
In particular, Theorem \ref{th:ballknown} follows from Theorem \ref{th:principle} with \eqref{eq:MBound}
and inequality
\begin{equation}\label{eq:2pointsphere}
{\mathbb P}[\alpha(\boldsymbol{ x},\boldsymbol{ x}) \leq (\boldsymbol{ x},\boldsymbol{ y})] \leq \frac{1}{2}(2\alpha)^{-n}, \quad \alpha \in (0,1], \, n=1,2,\dots
\end{equation}
which holds as equality for $\alpha=1$, see \citep{Gorbetal2018}. Theorems \ref{th:logconcave} and \ref{th:strlogconc} are proved in the same way. 
This implies the following corollary.

\begin{corollary}\label{cor:strongerconcl}
The conclusion ``set $\{\boldsymbol{x}_1, \ldots , \boldsymbol{x}_M\}$ is $1$-Fisher separable with probability greater than $1-\delta$'' in Theorems \ref{th:ballknown}, \ref{th:logconcave}, \ref{th:strlogconc}, \ref{th:noisy} and  \ref{th:produnit} can be replaced by a stronger conclusion that the expected number of inseparable pairs in this set is less than $\delta$.
\end{corollary}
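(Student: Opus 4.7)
The plan is to observe that the explicit conclusion of Theorem~\ref{th:principle}, as stated, is already the stronger claim about the expected number of $(\alpha,\boldsymbol{c})$-inseparable pairs, whereas the probability-of-separability statement is only a Markov-type corollary of it. It therefore suffices, for each of the five theorems in question, to verify that its hypotheses imply those of Theorem~\ref{th:principle}, namely a uniform pairwise bound (\ref{eq:2pointBound}) together with the $M$-condition (\ref{eq:MBoundexact}) (or its cleaner sufficient form (\ref{eq:MBound})).

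For Theorems~\ref{th:ballknown}, \ref{th:logconcave}, and \ref{th:strlogconc} this is essentially already spelt out in the paragraph preceding the corollary: each is derived from Theorem~\ref{th:principle} together with a pairwise estimate $f(n,\alpha)$, namely (\ref{eq:2pointsphere}) for Theorem~\ref{th:ballknown}, and analogous exponentially small pairwise bounds of the form $a\exp(-b\sqrt{n})$ and $a\exp(-bn)$ for Theorems~\ref{th:logconcave} and \ref{th:strlogconc}. In each case the hypothesis on $M$ is exactly $M\le\sqrt{\delta/f(n,\alpha)}$ up to a harmless factor, so $M(M-1)f(n,\alpha)<\delta$ and Theorem~\ref{th:principle} delivers the expectation bound directly. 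For Theorem~\ref{th:produnit} the argument is the same once one reads off from \cite{GorbTyu2017} the underlying pairwise estimate, which is of the form $f(n)=3\exp(-0.5n\sigma_0^4)$; the stated hypothesis $(M+1)^2<\tfrac{\delta}{3}\exp(0.5n\sigma_0^4)$ is precisely $M(M+1)f(n)<\delta$, which dominates $M(M-1)f(n)<\delta$.

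The genuinely delicate case is Theorem~\ref{th:noisy}: its probability bound (\ref{eq:probnoicy}) contains the term $M(2\vartheta/\epsilon)^n$ that is only linear in $M$ and hence does not arise from a pairwise estimate alone. My plan here is to revisit the original proof and, for each index $i$, isolate a per-point ``bad'' event $B_i$ (the perturbation $\boldsymbol{x}_i-\boldsymbol{y}_i$ failing a length condition) of probability at most $(2\vartheta/\epsilon)^n$, together with a pairwise estimate conditional on $B_i^c$ of size at most $\tfrac{2}{\vartheta\sqrt{n}}(\sqrt{1-\vartheta^2})^{n+1}$. For any ordered pair $(i,j)$ this gives
\begin{equation*}
{\mathbb P}\bigl[(i,j)\text{ inseparable}\bigr] \leq (2\vartheta/\epsilon)^n + \tfrac{2}{\vartheta\sqrt{n}}\bigl(\sqrt{1-\vartheta^2}\bigr)^{n+1},
\end{equation*}
and linearity of expectation bounds the expected number of inseparable pairs by $M(M-1)$ times the right-hand side.

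The main obstacle is that forcing this expectation below $\delta$ requires $M^2(2\vartheta/\epsilon)^n$ to be small, whereas the original Theorem~\ref{th:noisy} only needed $M(2\vartheta/\epsilon)^n$ small. Consequently the constants $a,b$ in the hypothesis ``$M<ab^n$'' may need to shrink (for the optimally chosen $\vartheta$ one essentially requires $b<\sqrt{\epsilon/(2\vartheta)}$), but the \emph{form} of the hypothesis is preserved, which is exactly what the corollary asserts. This closes the remaining case.
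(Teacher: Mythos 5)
Your proposal is correct and, for Theorems \ref{th:ballknown}, \ref{th:logconcave}, \ref{th:strlogconc} and \ref{th:produnit}, coincides with the paper's (largely implicit) argument: each of these theorems is obtained from a two-point bound of the form \eqref{eq:2pointBound} fed into Theorem \ref{th:principle}, whose stated conclusion is already the expectation bound, so the corollary is immediate there. Where you go beyond the paper is Theorem \ref{th:noisy}. The paper simply asserts that all five theorems rest on a pairwise estimate and lets the corollary follow; it never addresses the fact that the bound \eqref{eq:probnoicy} contains the term $M(2\vartheta/\epsilon)^n$, which is linear in $M$ and therefore cannot arise from a per-pair bound plugged into Theorem \ref{th:principle}. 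Your repair --- absorbing the per-point bad event $B_i$ into every ordered pair containing $i$, so that each pair obeys ${\mathbb P}[(i,j)\text{ inseparable}]\leq(2\vartheta/\epsilon)^n+\frac{2}{\vartheta\sqrt{n}}\bigl(\sqrt{1-\vartheta^2}\bigr)^{n+1}$, followed by linearity of expectation --- is sound, and your observation that this forces the constants $a,b$ in the hypothesis $M<ab^n$ to shrink while preserving the form of the statement is exactly the right way to close the case, since those constants are only existentially quantified and required to depend on $\delta$ and $\epsilon$ alone. In short: you take the same route as the paper wherever the paper actually has one, and you supply a necessary patch for Theorem \ref{th:noisy} that the paper omits.
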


This stronger conclusion is important for practical purposes because it prevents a scenario when we have many (maybe exponentially many in $n$) inseparable pairs with probability $\delta$.

The proof of Theorem \ref{th:principle} implies that the bound \eqref{eq:MBoundexact} is in fact \emph{necessary and sufficient} condition in the i.i.d case.

\begin{corollary}\label{cor:iidcase}
Let $\boldsymbol{ c}\in{\mathbb R}^n$ be fixed, $F=\{\boldsymbol{x}_1, \ldots , \boldsymbol{x}_M\}$ be a set of $M$ i.i.d. random points from an arbitrary distribution in ${\mathbb R}^n$. Let $f(n, \alpha) := {\mathbb P}[\alpha(\boldsymbol{ x}-\boldsymbol{ c},\boldsymbol{ x}-\boldsymbol{ c}) \leq (\boldsymbol{ x}-\boldsymbol{ c},\boldsymbol{ y}-\boldsymbol{ c})]$, where the probability does not depend on the choice of $\boldsymbol{ x} \in F$ and $\boldsymbol{ y} \in F$. Then the expected number of $\alpha$-inseparable pairs in $F$ is less than $\delta$ if and only if inequality \eqref{eq:MBoundexact} holds.
\end{corollary}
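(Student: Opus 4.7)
The plan is to upgrade the inequality $\mathbb{E}[I(i,j)] \leq f(n,\alpha)$ used inside the proof of Theorem~\ref{th:principle} to an equality, which is exactly what the i.i.d.\ hypothesis provides. Once this is done, both directions of the ``if and only if'' follow from a one-line algebraic manipulation of the resulting quadratic in $M$.

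First I would define, as in the proof of Theorem~\ref{th:principle}, the indicator $I(i,j)$ for the event that the ordered pair $(\boldsymbol{x}_i,\boldsymbol{x}_j)$ with $i \neq j$ is $(\alpha,\boldsymbol{c})$-inseparable. Because the $\boldsymbol{x}_k$ are i.i.d., the joint distribution of $(\boldsymbol{x}_i,\boldsymbol{x}_j)$ is the same for every ordered choice $i \neq j$, so by the definition of $f(n,\alpha)$ one has $\mathbb{E}[I(i,j)] = f(n,\alpha)$, not merely $\leq f(n,\alpha)$. Linearity of expectation then yields the \emph{equality}
$$
E_M \;:=\; \sum_{i \neq j} \mathbb{E}[I(i,j)] \;=\; M(M-1)\,f(n,\alpha),
$$
where $E_M$ is the expected number of ordered $(\alpha,\boldsymbol{c})$-inseparable pairs in $F$.

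Second, I would solve the inequality $E_M < \delta$ for $M$. The degenerate case $f(n,\alpha)=0$ is trivial, since then $E_M = 0 < \delta$ for every $M$, in agreement with the convention that the right-hand side of \eqref{eq:MBoundexact} is $+\infty$. For $f(n,\alpha)>0$, the condition $E_M < \delta$ rewrites as $M^2 - M < \delta/f(n,\alpha)$, and completing the square gives
$$
\left(M-\frac{1}{2}\right)^{2} \;<\; \frac{1}{4} + \frac{\delta}{f(n,\alpha)}.
$$
Since $M$ is a non-negative integer, this is equivalent to
$$
M \;<\; \frac{1}{2} + \sqrt{\frac{1}{4} + \frac{\delta}{f(n,\alpha)}},
$$
which is precisely \eqref{eq:MBoundexact}. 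The forward implication (\eqref{eq:MBoundexact} $\Rightarrow$ $E_M<\delta$) is just Theorem~\ref{th:principle} applied under the stronger i.i.d.\ hypothesis, while the converse ($E_M<\delta$ $\Rightarrow$ \eqref{eq:MBoundexact}) uses the newly obtained equality crucially.

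There is essentially no hard step; the only subtleties worth flagging are that the counting is over \emph{ordered} pairs, giving a factor $M(M-1)$ rather than $\binom{M}{2}$ (consistent with the ordered definition of $(\alpha,\boldsymbol{c})$-inseparable pair from Section~\ref{Sec:Phenomenon}), and that the boundary case $f(n,\alpha)=0$ must be excluded before dividing, so that \eqref{eq:MBoundexact} is interpreted with the natural $+\infty$ convention.
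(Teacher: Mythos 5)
Your proposal is correct and follows exactly the route the paper intends: the i.i.d.\ hypothesis turns the bound $\mathbb{E}[I(i,j)] \leq f(n,\alpha)$ from the proof of Theorem~\ref{th:principle} into an equality, so the expected number of ordered inseparable pairs equals $M(M-1)f(n,\alpha)$, and solving the quadratic gives the equivalence with \eqref{eq:MBoundexact}. The paper leaves this as an immediate consequence of the proof of Theorem~\ref{th:principle}, and your write-up (including the ordered-pair count and the degenerate case $f(n,\alpha)=0$) supplies precisely the missing details.
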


For example, the fact that inequality \eqref{eq:2pointsphere} is an equality for $\alpha=1$ implies the following optimal separation result.

\begin{corollary}\label{cor:ball1} 
Let $\alpha=1$, and let $F=\left\{\boldsymbol{ x}_1, \dots, \boldsymbol{ x}_M\right\}$ be the set of i.i.d points from uniform distribution in a ball. For any $\delta>0$, the expected number of $1$-inseparable pairs from $F$ is less than $\delta$ if and only if
\begin{equation}\label{eq:ball1}
M < \frac{1}{2} + \sqrt{\frac{1}{4}+\delta 2^{n+1}}.
\end{equation}
In particular, \eqref{eq:ball1} implies that $F$ is $1$-Fisher separable with probability greater than $1-\delta$.
\end{corollary}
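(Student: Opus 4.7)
The plan is to apply Corollary~\ref{cor:iidcase} directly, with $\boldsymbol{c}=\boldsymbol{0}$ and $\alpha=1$. Because the $\boldsymbol{x}_i$ are i.i.d., the pairwise probability $f(n,1):={\mathbb P}[(\boldsymbol{x},\boldsymbol{x})\leq(\boldsymbol{x},\boldsymbol{y})]$ does not depend on the chosen pair, so the corollary says that the expected number of $1$-inseparable pairs in $F$ is strictly less than $\delta$ if and only if \eqref{eq:MBoundexact} holds with this exact value of $f$. The whole task therefore reduces to evaluating $f(n,1)$ in closed form for two i.i.d.\ points drawn uniformly from $\mathbb{B}_n$.

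The discussion preceding the statement already records that \eqref{eq:2pointsphere} is an equality at $\alpha=1$, so one expects $f(n,1)=2^{-(n+1)}$. I would confirm this by a two-step conditional calculation. Conditional on $\boldsymbol{y}$, the set of $\boldsymbol{x}$ forming an inseparable pair with $\boldsymbol{y}$ is the ball \eqref{excludedvolume} centered at $\boldsymbol{y}/2$ of radius $\|\boldsymbol{y}\|/2$, and its farthest point from the origin has norm $\|\boldsymbol{y}/2\|+\|\boldsymbol{y}\|/2=\|\boldsymbol{y}\|\leq 1$, so it lies entirely inside $\mathbb{B}_n$. Hence the conditional probability equals its relative volume $2^{-n}\|\boldsymbol{y}\|^n$, and since the norm of $\boldsymbol{y}$ has density $nr^{n-1}$ on $[0,1]$,
$$f(n,1) = 2^{-n}\,{\mathbb E}[\|\boldsymbol{y}\|^n] = 2^{-n}\int_0^1 n r^{2n-1}\,dr = 2^{-(n+1)}.$$

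Substituting $f(n,1)=2^{-(n+1)}$ into \eqref{eq:MBoundexact} yields exactly \eqref{eq:ball1}, establishing the iff statement on the expected number of inseparable pairs. The ``in particular'' claim about $1$-Fisher separability with probability greater than $1-\delta$ follows by the integrality argument from the proof of Theorem~\ref{th:principle}: on the event that $F$ is not $1$-Fisher separable, the integer-valued count of inseparable pairs is at least $1$, so a failure probability $\geq\delta$ would force the expectation to be $\geq\delta$, contradicting the previous line. There is no real obstacle here; the only substantive content is the explicit evaluation of $f(n,1)$, which upgrades the sufficient bound of Theorem~\ref{th:principle} to the necessary-and-sufficient threshold supplied by Corollary~\ref{cor:iidcase}.
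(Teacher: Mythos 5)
Your proposal is correct and follows essentially the same route as the paper: the paper obtains Corollary \ref{cor:ball1} by combining Corollary \ref{cor:iidcase} with the fact that \eqref{eq:2pointsphere} holds with equality at $\alpha=1$ (citing \citealp{Gorbetal2018} for that fact), which is exactly your substitution of $f(n,1)=2^{-(n+1)}$ into \eqref{eq:MBoundexact}. The only difference is that you re-derive the equality $f(n,1)=2^{-n}\,{\mathbb E}[\|\boldsymbol{y}\|^n]=2^{-(n+1)}$ from the excluded-volume ball, a computation the paper delegates to the cited reference, and your derivation is correct.
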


In this paper, we prove a version of Corollary \ref{cor:ball1} for arbitrary $\alpha\in(0,1]$.

The disadvantage of Theorems \ref{th:logconcave}, \ref{th:strlogconc} and \ref{th:noisy} is that constants $a$ and $b$ in the bounds for $M$ are not explicitly given. In Theorem \ref{th:produnit}, the upper bound for $M$ is explicit but impractical in the important case if the dimension $n$ is measured in hundreds rather than in thousands. 

\begin{example}\label{ex:oldbound}
For $\delta=0.01$ (which corresponds to $99\%$ confidence), $n=500$, and $\sigma_0=0.5$ (maximal possible standard deviation for distribution with $[0,1]$ support), \eqref{eq:Mboundold} holds provided $M<141.7$.
\end{example}
In practise, however, datasets often have much more than $141$ point, but Fisher separability still holds. This motivates the search for stochastic Fisher separability theorems with better bounds. 

In this paper we obtain separation theorems for various classes of log-concave and product distributions with \emph{explicit} bounds on $M$. Moreover, we will aim to provide as good bounds as possible, ideally the optimal ones. In addition to better bounds, we also relax the i.i.d assumption.

In the i.i.d. case, Corollary \ref{cor:iidcase} implies that, if we can calculate the probability in \eqref{eq:2pointBound} \emph{exactly}, then \eqref{eq:MBoundexact} provides the optimal (necessary and sufficient) bound for $M$. This exact bound, however, is usually quite complicated, based on some integral expressions, and in such cases we will aim for simpler asymptotically tight bounds. 
We will write 
$$
f(n) \sim g(n)
$$
if $\lim\limits_{n\to\infty}\frac{f(n)}{g(n)}=1$. We say that function $g(n)$ is asymptotically tight lower (respectively, upper) bound for $f(n)$ if $f(n) \geq g(n)$ (respectively, $f(n) \leq g(n)$) and $f(n) \sim g(n)$. If $f(n,\alpha)$ in \eqref{eq:2pointBound} is the asymptotically tight upper bound for the probability in question, then \eqref{eq:MBoundexact} and \eqref{eq:MBound} provide asymptotically tight upper bounds for $M$.

If one can prove \eqref{eq:2pointBound} with $f(n,\alpha) = a e^{-2bn}$ for some constants $a$, $b$ depending on $\alpha$, one get \eqref{eq:MBound} with bound $M \leq \sqrt{\frac{\delta}{a}} e^{b n}$. 
If $f(n,\alpha) = a e^{-2bn}$, then $b = -\frac{\log(f(n,\alpha)/a)}{2n}$. In general, the last expression may depend on $n$, and we define
\begin{equation}\label{eq:bdef}
b(\alpha) = b_f(\alpha) := \lim\limits_{n \to \infty} -\frac{\log(f(n,\alpha))}{2n} = -\frac{1}{2}\lim\limits_{n \to \infty} \log\sqrt[n]{f(n,\alpha)}.
\end{equation}

Let ${\cal G}$ be the set of all functions $f(n,\alpha)$ for which \eqref{eq:2pointBound} holds.
We say that separation theorem \ref{th:principle} has optimal exponent if $b_f(\alpha) \geq b_g(\alpha)$ for all $g \in {\cal G}$.
Obviously, if bound in \eqref{eq:2pointBound} is asymptotically tight, it also has optimal exponent, but not vice versa. For non-optimal separation theorems the exponent $b(\alpha)$ is a good way to measure the ``quality'' of the theorem. We show that in all our non-optimal theorems the exponents differ from optimal by a factor less than $2$.
 
\section{Separation theorems for strongly log-concave distributions  \label{Sec:StrongLCD}}

\subsection{Separation of i.i.d. data from isotropic strongly log-concave distribution}

This Section proves the following explicit versions of Theorem \ref{th:strlogconc}.

\begin{theorem}\label{th:explstrong} 
Let $\delta>0$, $\alpha \in (0,1]$, $\gamma>0$, and let $F=\{\boldsymbol{ x}_1, \dots, \boldsymbol{ x}_M\}$ be a set of $M$ i.i.d. random points from an isotropic $\gamma$-SLC distribution in ${\mathbb R}^n$. If 
$$
M < \sqrt{\frac{\delta}{2}} \exp\left(\frac{\alpha^2(\gamma n - 1)}{4(1+\alpha)^2}\right), 
$$
then the expected number of $\alpha$-inseparable pairs in $F$ is less than $\delta$. In particular, set $F$ is $\alpha$-Fisher separable with probability greater than $1-\delta$.
\end{theorem}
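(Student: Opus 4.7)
The plan is to invoke Theorem \ref{th:principle} with the i.i.d. bound \eqref{eq:MBound}, so that the whole task reduces to controlling the two-point probability
$$ q := {\mathbb P}\bigl[\alpha(\boldsymbol{x},\boldsymbol{x}) \le (\boldsymbol{x},\boldsymbol{y})\bigr] $$
for two independent draws from an isotropic $\gamma$-SLC distribution. Concretely, I want to prove $q \le 2\exp\!\bigl(-\alpha^2(\gamma n-1)/(2(1+\alpha)^2)\bigr)$, because then \eqref{eq:MBound} with $f=q$ gives exactly the stated threshold for $M$, and Theorem \ref{th:principle} delivers both conclusions (expected inseparable pairs $<\delta$ and Fisher separability with probability $>1-\delta$).

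The two analytic tools I would use are standard for $\gamma$-SLC measures: the Bakry--Émery/Herbst concentration inequality asserting that every $1$-Lipschitz $\phi$ satisfies ${\mathbb P}(\phi \ge {\mathbb E}\phi + s) \le e^{-\gamma s^2/2}$, and the Poincaré inequality giving $\mathrm{Var}(\phi) \le 1/\gamma$. Applying Poincaré to the $1$-Lipschitz map $\boldsymbol{z}\mapsto \|\boldsymbol{z}\|$ and using isotropy (${\mathbb E}\|\boldsymbol{x}\|^2=n$) yields
$$ m := {\mathbb E}\|\boldsymbol{x}\| \ge \sqrt{n-1/\gamma}, $$
which will be the only place where isotropy enters quantitatively.

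The core estimate is a threshold argument: pick $t\in(0,m)$ and split on $\{\|\boldsymbol{x}\|<t\}$ versus $\{\|\boldsymbol{x}\|\ge t\}$. For the first event I apply Herbst concentration to $\|\boldsymbol{x}\|$, obtaining ${\mathbb P}(\|\boldsymbol{x}\|<t) \le e^{-\gamma(m-t)^2/2}$. For the second event I condition on $\boldsymbol{x}$ and observe that the map $\boldsymbol{y}\mapsto \langle \boldsymbol{x}/\|\boldsymbol{x}\|,\boldsymbol{y}\rangle$ is $1$-Lipschitz with mean zero (by isotropy ${\mathbb E}\boldsymbol{y}=\boldsymbol{0}$), so Herbst gives
$$ {\mathbb P}_{\boldsymbol{y}}\bigl[(\boldsymbol{x},\boldsymbol{y}) \ge \alpha\|\boldsymbol{x}\|^2\bigr] = {\mathbb P}_{\boldsymbol{y}}\bigl[\langle \boldsymbol{x}/\|\boldsymbol{x}\|,\boldsymbol{y}\rangle \ge \alpha\|\boldsymbol{x}\|\bigr] \le e^{-\gamma\alpha^2\|\boldsymbol{x}\|^2/2} \le e^{-\gamma\alpha^2 t^2/2}. $$
Summing the two contributions and balancing the exponents by choosing $t$ so that $m-t=\alpha t$, i.e.\ $t=m/(1+\alpha)$, collapses both terms to the common bound $e^{-\gamma\alpha^2 m^2/(2(1+\alpha)^2)}$, giving $q \le 2 e^{-\gamma\alpha^2 m^2/(2(1+\alpha)^2)}$. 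Inserting $m^2 \ge n-1/\gamma$ yields $q \le 2\exp\!\bigl(-\alpha^2(\gamma n-1)/(2(1+\alpha)^2)\bigr)$, which is precisely the bound needed.

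There is no real obstacle beyond making sure the concentration constants are the ones appropriate to the definition of $\gamma$-SLC used in the paper (Bakry--Émery with $\nabla^2(-\log\rho)\succeq \gamma I$ yielding Gaussian-type tails with exponent $\gamma/2$, and Poincaré constant $\gamma$). The only mildly delicate point is the mean lower bound $m\ge\sqrt{n-1/\gamma}$: if one merely wrote $m\le\sqrt{n}$ one would lose the correct constant, so I would explicitly invoke the Poincaré bound $n-m^2=\mathrm{Var}(\|\boldsymbol{x}\|)\le 1/\gamma$. After that, the balancing of thresholds and the application of Theorem \ref{th:principle} via \eqref{eq:MBound} are routine.
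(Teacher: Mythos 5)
Your proposal is correct and follows essentially the same route as the paper: the paper likewise uses the log-Sobolev/Herbst concentration bound for $1$-Lipschitz functions of a $\gamma$-SLC vector applied to $\boldsymbol{y}\mapsto(\boldsymbol{x},\boldsymbol{y})/\|\boldsymbol{x}\|$ and to $\|\boldsymbol{x}\|$, the same split at the threshold $t=\mu/(1+\alpha)$, the same Poincar\'e-based bound $\mu^2\ge n-1/\gamma$, and then Theorem~\ref{th:principle} with \eqref{eq:MBound}. The only cosmetic difference is that the paper first reduces to ${\mathbb E}[e^{-\gamma\alpha^2\|\boldsymbol{x}\|^2/2}]$ and then splits that expectation, whereas you split the probability directly; the resulting bound $2\exp\bigl(-\gamma\alpha^2\mu^2/(2(1+\alpha)^2)\bigr)$ is identical.
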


\begin{theorem}\label{th:explstrong2} 
Let $\delta>0$, $\alpha \in (0,1]$, $\gamma>0$, and let $F=\{\boldsymbol{ x}_1, \dots, \boldsymbol{ x}_M\}$ be a set of $M$ i.i.d. random points from an isotropic $\gamma$-SLC distribution in ${\mathbb R}^n$. If $n > \frac{1+2\alpha^2}{\gamma \alpha^2}$ and
\begin{equation}\label{eq:betterbound}
\begin{split}
M < & \sqrt{\delta}\left(\frac{\alpha^2}{(1+\alpha^2)^{3/2}}\sqrt{2\pi(\gamma n-1)}\exp\left(-\frac{\alpha^2(\gamma n - 1)}{2(1+\alpha^2)}\right)\right.+ \\
&  \;\;\;\;\;\;\;\;\; 	\left. \exp\left(-\frac{\alpha^2(\gamma n - 1)}{2}\right)\right)^{-1/2}, 
\end{split}
\end{equation}
then the expected number of $\alpha$-inseparable pairs in $F$ is less than $\delta$. In particular, set $F$ is $\alpha$-Fisher separable with probability greater than $1-\delta$.
\end{theorem}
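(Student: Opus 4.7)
The plan is to invoke the general separation principle (Theorem~\ref{th:principle}) together with the estimate \eqref{eq:MBound}, by producing an upper bound $f(n,\alpha)$ on the pairwise inseparability probability ${\mathbb P}[\alpha(\boldsymbol{x},\boldsymbol{x})\leq(\boldsymbol{x},\boldsymbol{y})]$ for two independent draws $\boldsymbol{x},\boldsymbol{y}$ from the isotropic $\gamma$-SLC distribution, designed so that the right-hand side of \eqref{eq:MBound} matches the bound \eqref{eq:betterbound} displayed in the theorem.

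First I would condition on $\boldsymbol{x}$. For each fixed $\boldsymbol{x}$ the map $\boldsymbol{y}\mapsto(\boldsymbol{x},\boldsymbol{y})$ is linear with Lipschitz constant $\|\boldsymbol{x}\|$ and, by isotropy, mean zero under the law of $\boldsymbol{y}$. The Herbst/log-Sobolev concentration estimate available for $\gamma$-SLC measures therefore yields
$$
{\mathbb P}\bigl[(\boldsymbol{x},\boldsymbol{y})\geq\alpha\|\boldsymbol{x}\|^{2}\mid\boldsymbol{x}\bigr]\leq\exp\bigl(-\gamma\alpha^{2}\|\boldsymbol{x}\|^{2}/2\bigr),
$$
so $f(n,\alpha)\leq{\mathbb E}\bigl[\exp(-aR^{2})\bigr]$ with $R=\|\boldsymbol{x}\|$ and $a=\gamma\alpha^{2}/2$.

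Second, I would control this Laplace transform using the concentration of $R$ itself. Because $R$ is $1$-Lipschitz, the Poincar\'e inequality for $\gamma$-SLC distributions gives $\mathrm{Var}(R)\leq 1/\gamma$, and since ${\mathbb E}[R^{2}]=n$ by isotropy, $\mu:={\mathbb E}[R]$ satisfies $\mu^{2}\geq n-1/\gamma$. Herbst's inequality furthermore yields ${\mathbb P}[R\leq r]\leq\exp(-\gamma(\mu-r)^{2}/2)$ for $r\leq\mu$. Splitting the Laplace integral at $r=\mu$, the tail $r\geq\mu$ contributes at most $e^{-a\mu^{2}}\leq\exp(-\alpha^{2}(\gamma n-1)/2)$, which supplies the second exponential term of \eqref{eq:betterbound}. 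For the bulk part $r\leq\mu$ I would integrate by parts against the cumulative distribution function of $R$, substitute the Herbst bound, and complete the square in the combined exponent $-ar^{2}-\gamma(\mu-r)^{2}/2$; the minimum of the quadratic lies at $r^{*}=\mu/(1+\alpha^{2})$ and the stationary value of the exponent is $-a\gamma\mu^{2}/(2a+\gamma)=-\alpha^{2}\gamma\mu^{2}/(2(1+\alpha^{2}))\leq-\alpha^{2}(\gamma n-1)/(2(1+\alpha^{2}))$, while extending the residual Gaussian integral to $(-\infty,\infty)$ produces exactly the prefactor $\tfrac{\alpha^{2}}{(1+\alpha^{2})^{3/2}}\sqrt{2\pi(\gamma n-1)}$. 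Substituting the resulting $f(n,\alpha)$ into \eqref{eq:MBound} yields the stated bound on $M$, and Theorem~\ref{th:principle} then delivers both the bound on the expected number of inseparable pairs and the $1-\delta$ separability conclusion.

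The main technical obstacle is the Gaussian integration in the second step: the integration by parts must be carried out so that the polynomial in front (from the factor $r$ and from the normalization $\sqrt{2\pi/(a+\gamma/2)}$) collapses to the precise constant $\alpha^{2}/(1+\alpha^{2})^{3/2}$, and the exponent must be optimized so it lines up with $-\alpha^{2}(\gamma n-1)/(2(1+\alpha^{2}))$ rather than a worse constant. The hypothesis $n>(1+2\alpha^{2})/(\gamma\alpha^{2})$ should emerge naturally from this bookkeeping: it simultaneously guarantees that $\mu^{2}\geq n-1/\gamma>0$, that the stationary point $r^{*}=\mu/(1+\alpha^{2})$ sits comfortably inside $[0,\mu]$, and that the Gaussian mass dropped when extending the integration range is dominated by the second exponential that has already been retained in \eqref{eq:betterbound}.
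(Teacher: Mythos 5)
Your proposal follows essentially the same route as the paper: the log-Sobolev/Herbst bound applied to the linear functional $\boldsymbol{y}\mapsto(\boldsymbol{x},\boldsymbol{y})$ reduces the pairwise probability to ${\mathbb E}[e^{-\gamma\alpha^2\|\boldsymbol{x}\|^2/2}]$, the Poincar\'e inequality gives $\mu^2\geq n-1/\gamma$, the Laplace transform is split exactly as you describe (the paper uses the layer-cake representation in $z=e^{-ar^2}$ rather than integration by parts in $r$, but these are the same computation), and Theorem~\ref{th:principle} with \eqref{eq:MBound} finishes. The one point your bookkeeping glosses over is that the resulting bound carries a prefactor $\sqrt{2\pi\gamma}\,\mu$, and since you only have a \emph{lower} bound on $\mu$ you cannot simply substitute $\mu=\sqrt{n-1/\gamma}$ there; the paper handles this by checking that the whole expression is decreasing in $\mu$, which is precisely where the hypothesis $n>\frac{1+2\alpha^2}{\gamma\alpha^2}$ is used.
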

 
Theorem \ref{th:explstrong2} provides a less restrictive upper bound for $M$ for large $n$, while the upper bound in Theorem \ref{th:explstrong} is substantially simpler.

\begin{example}\label{ex:explstrong}
Let $\delta=0.01$ and $\alpha=1$. Table~\ref{Table5} shows the upper bounds on $M$ in Theorem~\ref{th:explstrong2} for $\gamma = 0.6, 0.8$ and $1$ in various dimensions $n$.
\begin{table}[h]
\begin{center}
\caption{\label{Table5} The upper bound of $M$ (\ref{eq:betterbound}) that guarantees Fisher separability of a $M$-element i.i.d. sample from an isotropic   $\gamma$-SLC distribution in ${\mathbb R}^n$ for $\gamma = 0.6, 0.8$ and $1$ with probability $p>0.99$ for various dimensions $n$.}
\begin{tabular}{ |c|c|c|c| } 
 \hline
  & $\gamma=0.6$ & $\gamma=0.8$ & $\gamma=1$\\ 
 \hline
 $n=10$ & $0.12$ & $0.15$ & $0.18$ \\ 
 $n=50$ & $1.71$ & $5.56$ & $18$\\ 
 $n=100$ & $61$ & $692$ & $7974$\\ 
 $n=200$ & $92,783$ & $1.2 \cdot 10^7$ & $1.8 \cdot 10^9$\\ 
 $n=500$ & $4.3 \cdot 10^{14}$ & $1.1 \cdot 10^{20}$ & $2.7 \cdot 10^{25}$\\ 
 $n=1000$ & $7 \cdot 10^{30}$ & $4.7 \cdot 10^{41}$ & $3.2 \cdot 10^{52}$\\ 
 \hline
\end{tabular}
\end{center}
\end{table}

For example, for $n=200$, we see that $12$ millions points from a strictly log-concave distribution with $\gamma=0.8$ are $1$-Fisher-separable with probability greater than $99\%$.
\end{example}

The exponent $b(\alpha)$ defined in \eqref{eq:bdef} is
$$
b(\alpha) = \frac{\alpha^2 \gamma}{4(1+\alpha)^2}
$$
for Theorem \ref{th:explstrong}, and
$$
b(\alpha) = \frac{\alpha^2\gamma}{4(1+\alpha^2)}
$$
for Theorem \ref{th:explstrong2}. For example, if $\gamma=1$ (which is the case for normal distribution) and $\alpha=1$, the exponents are $b=\frac{1}{16}=0.0625$ and $b=\frac{1}{8}=0.125$, respectively. The optimal exponent for normal distribution is given in Theorem \ref{th:normalknown} below and is equal to $\frac{1}{4}\log(2)=0.173..$, hence exponent in Theorem \ref{th:explstrong2} cannot be improved more than by a factor $2\log(2) = 1.386...$.

The first step in the proof of Theorems \ref{th:explstrong} and \ref{th:explstrong2} is the following estimate.

\begin{proposition}
Let $\boldsymbol{ x}$ and $\boldsymbol{ y}$ be two i.i.d. points from an isotropic $\gamma$-SLC distribution. Then
\begin{equation}\label{eq:step1}
{\mathbb P}[\alpha(\boldsymbol{ x},\boldsymbol{ x}) \leq (\boldsymbol{ x},\boldsymbol{ y})] \leq {\mathbb E}[e^{-\frac{\gamma \alpha^2}{2}||\boldsymbol{ x}||^2}].
\end{equation}
\end{proposition}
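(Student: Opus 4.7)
The plan is to condition on $\boldsymbol{x}$ and then apply a one-dimensional sub-Gaussian tail bound to the linear functional $\boldsymbol{y} \mapsto (\boldsymbol{x},\boldsymbol{y})$. Discarding the measure-zero event $\{\boldsymbol{x}=\boldsymbol{0}\}$ and setting $u = \boldsymbol{x}/\|\boldsymbol{x}\|$, the event $\alpha(\boldsymbol{x},\boldsymbol{x}) \leq (\boldsymbol{x},\boldsymbol{y})$ rewrites as $(u,\boldsymbol{y}) \geq \alpha\|\boldsymbol{x}\|$. Two observations make this amenable: the map $\boldsymbol{y}\mapsto(u,\boldsymbol{y})$ is $1$-Lipschitz, and by isotropy it has mean $(u,{\mathbb E}[\boldsymbol{y}])=0$.

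The key ingredient is the sub-Gaussian concentration inequality enjoyed by isotropic $\gamma$-SLC distributions: for every $1$-Lipschitz $F$ with ${\mathbb E}[F(\boldsymbol{y})]=0$,
$$
{\mathbb P}[F(\boldsymbol{y}) \geq t] \leq e^{-\gamma t^2/2}, \qquad t>0.
$$
I would derive this from the Bakry--\'Emery criterion (a $\gamma$-SLC measure satisfies a log-Sobolev inequality with constant $1/\gamma$) together with Herbst's argument, which yields the moment generating function bound ${\mathbb E}[e^{\lambda F}]\leq e^{\lambda^2/(2\gamma)}$; the tail estimate then follows from Chernoff optimised at $\lambda=\gamma t$. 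Applying this with $F(\boldsymbol{y})=(u,\boldsymbol{y})$ and $t=\alpha\|\boldsymbol{x}\|$ produces the conditional bound
$$
{\mathbb P}[\,\alpha(\boldsymbol{x},\boldsymbol{x}) \leq (\boldsymbol{x},\boldsymbol{y}) \mid \boldsymbol{x}\,] \leq e^{-\gamma\alpha^2\|\boldsymbol{x}\|^2/2},
$$
and taking the expectation over $\boldsymbol{x}$ via the tower property delivers exactly \eqref{eq:step1}.

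The only substantive step is the sub-Gaussian inequality itself. Since the introduction to Section~\ref{Sec:StrongLCD} announces the use of the log-Sobolev and Poincar\'e inequalities, I expect the authors to follow the LSI/Herbst route. An equivalent alternative is to first note that the one-dimensional marginal $(u,\boldsymbol{y})$ is itself $\gamma$-SLC (marginals of $\gamma$-SLC measures are $\gamma$-SLC by Pr\'ekopa--Leindler / Brascamp--Lieb) and then apply Chernoff directly to this centred 1D $\gamma$-SLC variable; the exponent $\gamma t^{2}/2$ is the same either way, and the remainder of the argument is nothing more than conditioning and averaging.
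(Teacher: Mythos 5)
Your proposal is correct and follows essentially the same route as the paper: condition on $\boldsymbol{x}$, apply the Gaussian-type concentration inequality for $1$-Lipschitz functions of a $\gamma$-SLC vector (which the paper obtains by citing Ledoux's Theorems 5.2 and 5.3, i.e.\ exactly the log-Sobolev/Herbst mechanism you describe) to $g(\boldsymbol{y})=(\boldsymbol{x},\boldsymbol{y})/\|\boldsymbol{x}\|$ with $r=\alpha\|\boldsymbol{x}\|$, and then integrate over $\boldsymbol{x}$ by the tower property. The only cosmetic difference is that you sketch the Herbst derivation yourself rather than quoting it, and you note the equally valid one-dimensional-marginal alternative.
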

\begin{proof}
Theorem 5.2 in the book of \citet{Ledoux2001} states that, if random vector $\boldsymbol{ z}$ follows  a $\gamma$-SLC distribution, then logarithmic Sobolev inequality  
\begin{equation}\label{eq:sobolev}
{\mathbb E}[f^2(\boldsymbol{z})\log f^2(\boldsymbol{z})] - {\mathbb E}[f^2(\boldsymbol{z})]{\mathbb E}[\log f^2(\boldsymbol{z})]  \leq \frac{2}{\gamma} {\mathbb E}[||\nabla f(\boldsymbol{z})||^2]
\end{equation}
holds for every locally Lipschitz function $f$ on ${\mathbb R}^n$. By \citep[Theorem 5.3]{Ledoux2001}, this implies that inequality 
\begin{equation}\label{eq:1Lipschitz}
{\mathbb P}[g(\boldsymbol{z}) \geq E[g(\boldsymbol{z})] + r] \leq e^{-\gamma r^2/2}
\end{equation}
holds for every $r \geq 0$ and every $1$-Lipschitz function $g$ on ${\mathbb R}^n$.

Assuming that $\boldsymbol{ x} \neq \boldsymbol{ 0}$ is fixed, and applying \eqref{eq:1Lipschitz} to $1$-Lipschitz function $g(\boldsymbol{ y}) = \frac{(\boldsymbol{ x},\boldsymbol{ y})}{||\boldsymbol{x}||}$ with ${\mathbb E}[g(\boldsymbol{y})]=\frac{(\boldsymbol{ x},{\mathbb E}[\boldsymbol{ y}])}{||\boldsymbol{x}||} = \frac{(\boldsymbol{ x},\boldsymbol{ 0})}{||\boldsymbol{x}||} = 0$ and $r=\alpha ||\boldsymbol{x}||$, 
we obtain
\begin{equation}\label{eq:fixedx}
{\mathbb P}[\alpha(\boldsymbol{ x},\boldsymbol{ x}) \leq (\boldsymbol{ x},\boldsymbol{ y})] = {\mathbb P}\left[\frac{(\boldsymbol{ x},\boldsymbol{ y})}{||\boldsymbol{x}||} \geq 0 + \alpha ||\boldsymbol{x}|| \right] \leq e^{-\frac{\gamma \alpha^2}{2}||\boldsymbol{ x}||^2}
\end{equation}

Now let $\boldsymbol{ x}$ and $\boldsymbol{ y}$ be both random, and let $I$ be the indicator function of the event $\alpha(\boldsymbol{ x},\boldsymbol{ x}) \leq (\boldsymbol{ x},\boldsymbol{ y})$. Then
$$
{\mathbb P}[\alpha(\boldsymbol{ x},\boldsymbol{ x}) \leq (\boldsymbol{ x},\boldsymbol{ y})] = {\mathbb E}[I] = {\mathbb E}_{\boldsymbol{ x}}[{\mathbb E}_{\boldsymbol{ y}}[I|\boldsymbol{ x}]] \leq {\mathbb E}_{\boldsymbol{ x}}[e^{-\frac{\gamma \alpha^2}{2}||\boldsymbol{ x}||^2}],
$$
where the second equality follows from independence of $\boldsymbol{ x}$ and $\boldsymbol{ y}$, and the inequality follows from \eqref{eq:fixedx}.
 \end{proof}

The next proposition provides an easy estimate for the right-hand side of \eqref{eq:step1}.

\begin{proposition}\label{prop:step2}
Let $\boldsymbol{ x}$ be a points from an isotropic $\gamma$-SLC distribution. Then
\begin{equation}\label{eq:step2}
{\mathbb E}[e^{-\frac{\gamma \alpha^2}{2}||\boldsymbol{ x}||^2}] \leq 2\exp\left(-\frac{\gamma \alpha^2}{2(1+\alpha)^2}\mu^2\right),
\end{equation}
where $\mu = {\mathbb E}[||\boldsymbol{x}||]$.
\end{proposition}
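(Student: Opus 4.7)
The plan is to use concentration of $R := \|\boldsymbol{x}\|$ around its mean $\mu$, together with a pointwise bound on $e^{-\gamma\alpha^2 R^2/2}$ on the event that $R$ is not too small, and optimise the splitting threshold.

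First, I would observe that the map $g(\boldsymbol{x}) = \|\boldsymbol{x}\|$ is $1$-Lipschitz by the triangle inequality, and so is $-g$. Applying the one-sided concentration inequality \eqref{eq:1Lipschitz} (which is already derived in the previous proposition from the log-Sobolev inequality \eqref{eq:sobolev} via Herbst's argument) to $-g$, and using that $\mathbb{E}[-g(\boldsymbol{x})] = -\mu$, yields the \emph{lower}-tail bound
$$
\mathbb{P}[\|\boldsymbol{x}\| \leq \mu - r] \leq e^{-\gamma r^2/2}, \qquad r \geq 0.
$$

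Next, I would split the expectation at a threshold $\tau \in (0,\mu)$:
$$
\mathbb{E}\!\left[e^{-\frac{\gamma\alpha^2}{2}R^2}\right] \leq \mathbb{P}[R < \tau]\cdot 1 + e^{-\frac{\gamma\alpha^2}{2}\tau^2},
$$
where on $\{R \geq \tau\}$ I use the trivial pointwise bound $e^{-\gamma\alpha^2 R^2/2} \leq e^{-\gamma\alpha^2\tau^2/2}$, and on $\{R < \tau\}$ I bound the integrand by $1$ and the probability by the lower-tail estimate above, giving $\mathbb{P}[R<\tau] \leq e^{-\gamma(\mu-\tau)^2/2}$.

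The final step is to choose $\tau$ to balance the two exponents. Setting $\gamma(\mu-\tau)^2/2 = \gamma\alpha^2\tau^2/2$ gives $\mu - \tau = \alpha\tau$, i.e.\ $\tau = \mu/(1+\alpha)$. With this choice, both terms equal $\exp\!\left(-\frac{\gamma\alpha^2\mu^2}{2(1+\alpha)^2}\right)$, and summing them produces the factor of $2$ in the desired bound. There is no serious obstacle here; the only care needed is to justify the two-sided use of \eqref{eq:1Lipschitz} by applying it to $-g$, and to verify that the balancing threshold $\tau = \mu/(1+\alpha)$ lies in $(0,\mu)$, which is immediate for $\alpha \in (0,1]$.
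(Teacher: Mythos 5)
Your proposal is correct and follows essentially the same route as the paper: the paper also splits the expectation at a threshold $t$, bounds the lower tail ${\mathbb P}[\|\boldsymbol{x}\|\leq t]$ by applying \eqref{eq:1Lipschitz} to the $1$-Lipschitz function $\mu-\|\boldsymbol{x}\|$ (equivalent to your use of $-g$), and then takes $t=\mu/(1+\alpha)$, which is exactly your balancing choice. No substantive differences.
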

\begin{proof}
For every $t > 0$,
\begin{equation*}
\begin{split}
{\mathbb E}[e^{-\frac{\gamma \alpha^2}{2}||\boldsymbol{ x}||^2}] =&{\mathbb E}[e^{-\frac{\gamma \alpha^2}{2}||\boldsymbol{ x}||^2}\,|\,||\boldsymbol{ x}|| > t] {\mathbb P}[||\boldsymbol{ x}|| > t] +  \\ 
&\;\;\; {\mathbb E}[e^{-\frac{\gamma \alpha^2}{2}||\boldsymbol{ x}||^2}\,|\,||\boldsymbol{ x}|| \leq t] {\mathbb P}[||\boldsymbol{ x}|| \leq t].
\end{split}
\end{equation*}
Now, 
$$
{\mathbb E}[e^{-\frac{\gamma \alpha^2}{2}||\boldsymbol{ x}||^2}\,|\,||\boldsymbol{ x}|| > t] \leq {\mathbb E}[e^{-\frac{\gamma \alpha^2}{2}t^2}\,|\,||\boldsymbol{ x}|| > t] = e^{-\frac{\gamma \alpha^2}{2}t^2},
$$
$$
{\mathbb P}[||\boldsymbol{ x}|| > t] \leq 1,
$$ 
$$
{\mathbb E}[e^{-\frac{\gamma \alpha^2}{2}||\boldsymbol{ x}||^2}\,|\,||\boldsymbol{ x}|| \leq t] \leq {\mathbb E}[1\,|\,||\boldsymbol{ x}|| \leq t] = 1,
$$
and
$$
{\mathbb P}[||\boldsymbol{ x}|| \leq t] = {\mathbb P}[\mu-||\boldsymbol{ x}|| \geq \mu-t] \leq e^{-\gamma (\mu-t)^2/2},
$$  
where the last inequality is an application of \eqref{eq:1Lipschitz} to $1$-Lipschitz function $g(\boldsymbol{ x}) = \mu-||\boldsymbol{ x}||$. Hence,
$$
{\mathbb E}[e^{-\frac{\gamma \alpha^2}{2}||\boldsymbol{ x}||^2}] \leq e^{-\frac{\gamma \alpha^2}{2}t^2} + e^{-\gamma (\mu-t)^2/2}.
$$
Applying the last inequality with $t = \frac{\mu}{\alpha+1}$, we get the result.
\end{proof}

The next Proposition provides an estimate for $\mu = {\mathbb E}[||\boldsymbol{x}||]$.

\begin{proposition}\label{prop:mest}
Let $\boldsymbol{ x}$ be a points from an isotropic $\gamma$-SLC distribution and let $\mu = {\mathbb E}[||\boldsymbol{x}||]$. Then
\begin{equation}\label{eq:step3}
\mu^2 \geq n - \frac{1}{\gamma}.
\end{equation}
\end{proposition}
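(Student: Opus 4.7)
The plan is to reduce the claim to a standard variance bound via the Poincar\'e inequality. First observe that since $\boldsymbol{x}$ is isotropic, taking $\theta$ to be each standard basis vector in turn in \eqref{eq:isot} yields $\mathbb{E}[x_i^2]=1$, so that
\[
\mathbb{E}[\|\boldsymbol{x}\|^2] \;=\; \sum_{i=1}^n \mathbb{E}[x_i^2] \;=\; n.
\]
Hence the bound $\mu^2 \geq n - 1/\gamma$ is equivalent to the variance estimate $\mathrm{Var}(\|\boldsymbol{x}\|) \leq 1/\gamma$, via $\mu^2 = \mathbb{E}[\|\boldsymbol{x}\|^2] - \mathrm{Var}(\|\boldsymbol{x}\|)$.

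Next I would obtain the variance bound by applying the Poincar\'e inequality to the $1$-Lipschitz function $f(\boldsymbol{x})=\|\boldsymbol{x}\|$. For a $\gamma$-SLC distribution the logarithmic Sobolev inequality \eqref{eq:sobolev} holds, and it is a classical fact (Rothaus) that log-Sobolev with constant $\gamma$ implies the Poincar\'e inequality $\mathrm{Var}(f(\boldsymbol{x})) \leq \frac{1}{\gamma}\mathbb{E}[\|\nabla f(\boldsymbol{x})\|^2]$ for every locally Lipschitz $f$. Since $\|\nabla f(\boldsymbol{x})\|=1$ almost everywhere (the only problematic point is the origin, which has measure zero under any log-concave density with full-dimensional support), we get $\mathrm{Var}(\|\boldsymbol{x}\|) \leq \frac{1}{\gamma}$, and combining with the identity above yields \eqref{eq:step3}.

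If the authors prefer to avoid invoking Poincar\'e as a black box, an alternative is to derive the variance bound directly from the concentration inequality \eqref{eq:1Lipschitz} already established in the proof of Proposition~\ref{prop:step2}. Applying \eqref{eq:1Lipschitz} to both $g(\boldsymbol{x})=\|\boldsymbol{x}\|-\mathbb{E}[\|\boldsymbol{x}\|]$ and $g(\boldsymbol{x})=\mathbb{E}[\|\boldsymbol{x}\|]-\|\boldsymbol{x}\|$ gives a sub-Gaussian tail for $\|\boldsymbol{x}\|-\mu$ with parameter $\gamma$, and integrating the tail via $\mathrm{Var}(\|\boldsymbol{x}\|)=\int_0^\infty 2t\,\mathbb{P}[|\|\boldsymbol{x}\|-\mu|\geq t]\,dt$ produces $\mathrm{Var}(\|\boldsymbol{x}\|)\leq 2/\gamma$, which would only cost a factor of $2$ in the final estimate. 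Either route is short; the only real subtlety is justifying that $f(\boldsymbol{x})=\|\boldsymbol{x}\|$ lies in the class of test functions for which the relevant functional inequality is valid, which is handled by its global $1$-Lipschitz property.
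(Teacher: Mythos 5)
Your proposal is correct and follows essentially the same route as the paper: the authors also note that ${\mathbb E}[\|\boldsymbol{x}\|^2]=n$ by isotropy and then apply the Poincar\'e inequality (deduced from the logarithmic Sobolev inequality, citing Ledoux) to the $1$-Lipschitz function $f(\boldsymbol{x})=\|\boldsymbol{x}\|$ to get $\mathrm{Var}(\|\boldsymbol{x}\|)\leq 1/\gamma$. Your alternative tail-integration route is not used in the paper, but the main argument is identical.
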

\begin{proof}
As remarked by \citet[p. 92]{Ledoux2001}, the logarithmic Sobolev inequality \eqref{eq:sobolev} implies Poincare inequality
$$
Var[f(\boldsymbol{ x})] \leq \frac{1}{\gamma} {\mathbb E}[||\nabla f(\boldsymbol{x})||^2].
$$
Applying it with $f(\boldsymbol{ x})=||\boldsymbol{x}||$, we obtain 
$$
{\mathbb E}[||\boldsymbol{x}||^2] - \mu^2 = Var[||\boldsymbol{x}||] \leq \frac{1}{\gamma}||\nabla (||\boldsymbol{x}||)||=\frac{1}{\gamma}. 
$$
Because ${\mathbb E}[||\boldsymbol{x}||^2] = n$ for isotropic distributions, this implies \eqref{eq:step3}.
\end{proof}

\begin{proof} \textbf{of Theorem \ref{th:explstrong}.}
The combination of \eqref{eq:step1}, \eqref{eq:step2}, and \eqref{eq:step3} implies that \eqref{eq:2pointBound0} holds with 
$$
f_{\gamma}(n,\alpha)=2\exp\left(-\frac{\gamma \alpha^2}{2(1+\alpha)^2}\left(n - \frac{1}{\gamma}\right)\right),
$$
and Theorem \ref{th:explstrong} follows from Theorem \ref{th:principle}.
\end{proof}

To prove Theorem \ref{th:explstrong2}, we need an improved version of Proposition \ref{prop:step2}.

\begin{proposition}
Let $\boldsymbol{ x}$ be a points from an isotropic $\gamma$-SLC distribution. Then
\begin{equation}\label{eq:step2impr}
\begin{split}
&{\mathbb E}[e^{-\frac{\gamma \alpha^2}{2}||\boldsymbol{ x}||^2}] \leq  \\ &  \frac{\alpha^2}{(1+\alpha^2)^{3/2}}\sqrt{2\pi\gamma}\mu\exp\left(-\frac{\gamma\alpha^2\mu^2}{2(1+\alpha^2)}\right)+\exp\left(-\frac{\gamma\alpha^2\mu^2}{2}\right),
\end{split}
\end{equation}
where $\mu = {\mathbb E}[||\boldsymbol{x}||]$.
\end{proposition}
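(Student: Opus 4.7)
The plan is to express the expectation as a Stieltjes integral against the distribution function of the radial variable $Z:=||\boldsymbol{x}||$ and then handle the two regimes $t\le\mu$ and $t\ge\mu$ separately, applying the concentration inequality \eqref{eq:1Lipschitz} on the lower tail. Integration by parts (or equivalently, writing $\phi(Z)=\phi(0)+\int_0^Z\phi'(t)\,dt$) with $\phi(t)=e^{-\gamma\alpha^2 t^2/2}$ yields
$$
{\mathbb E}[e^{-\frac{\gamma\alpha^2}{2}||\boldsymbol{x}||^2}] \;=\; \int_0^\infty \gamma\alpha^2\,t\,e^{-\gamma\alpha^2 t^2/2}\,{\mathbb P}[Z\le t]\,dt,
$$
which I split as $A+B$, with $A$ the integral over $[0,\mu]$ and $B$ the one over $[\mu,\infty)$.

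On $[\mu,\infty)$, the trivial bound ${\mathbb P}[Z\le t]\le 1$ together with explicit antidifferentiation gives $B\le e^{-\gamma\alpha^2\mu^2/2}$, which already matches the second term of \eqref{eq:step2impr}. For $A$, I apply \eqref{eq:1Lipschitz} to the $1$-Lipschitz function $\boldsymbol{x}\mapsto\mu-||\boldsymbol{x}||$ (which has mean zero) with $r=\mu-t\ge 0$ to obtain ${\mathbb P}[Z\le t]\le e^{-\gamma(\mu-t)^2/2}$ on $[0,\mu]$, and then complete the square:
$$
\alpha^2 t^2+(\mu-t)^2 \;=\; (1+\alpha^2)(t-m)^2 + \frac{\alpha^2\mu^2}{1+\alpha^2},\qquad m:=\frac{\mu}{1+\alpha^2}.
$$
Setting $\sigma^2:=1/(\gamma(1+\alpha^2))$, this reduces $A$ to
$$
A \;\le\; e^{-\gamma\alpha^2\mu^2/(2(1+\alpha^2))}\int_0^\mu \gamma\alpha^2\,t\,e^{-(t-m)^2/(2\sigma^2)}\,dt.
$$

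The last step is evaluating this Gaussian-type integral. Substituting $u=t-m$ and splitting the integrand $(u+m)e^{-u^2/(2\sigma^2)}$ into its even and odd parts, the even contribution $m\int_{-m}^{\mu-m}e^{-u^2/(2\sigma^2)}\,du$ is bounded above by the full Gaussian $m\sigma\sqrt{2\pi}$, which after reinstating the prefactor $\gamma\alpha^2 e^{-\gamma\alpha^2\mu^2/(2(1+\alpha^2))}$ produces exactly the first term of \eqref{eq:step2impr}. The odd contribution equals $\sigma^2\bigl[e^{-m^2/(2\sigma^2)}-e^{-(\mu-m)^2/(2\sigma^2)}\bigr]$ and is non-positive, because $m\ge \mu-m$ whenever $\alpha\le 1$, so it can simply be dropped. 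The main technical subtlety---and the easiest place to slip up---is precisely this sign analysis: a naive bound $\int_0^\mu \le \int_{-\infty}^\infty$ on the same integrand would reverse the contribution of the odd piece and yield a weaker estimate, whereas the explicit odd/even split used above delivers \eqref{eq:step2impr} and in fact becomes an equality in the limiting case $\alpha=1$.
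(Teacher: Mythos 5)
Your proof is correct and is essentially the paper's argument after the change of variables $z=e^{-\gamma\alpha^2 t^2/2}$: both split at $\|\boldsymbol{x}\|=\mu$ (the paper's $z_0$), use the trivial bound on one side and \eqref{eq:1Lipschitz} applied to $\mu-\|\boldsymbol{x}\|$ on the other, and the remaining Gaussian-type integral is the paper's $I$. The only real difference is that you evaluate that integral by hand via completion of the square and the even/odd split --- your sign analysis of the odd part plays exactly the role of the paper's observation that $S_1\leq 0$ --- whereas the paper outsources the computation to Mathematica.
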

\begin{proof}
Because $e^{-\frac{\gamma \alpha^2}{2}||\boldsymbol{ x}||^2}$ takes value between $0$ and $1$,
\begin{equation*}
{\mathbb E}[e^{-\frac{\gamma \alpha^2}{2}||\boldsymbol{ x}||^2}] = \int_0^1 {\mathbb P}\left[e^{-\frac{\gamma \alpha^2}{2}||\boldsymbol{ x}||^2} > z\right] dz.
\end{equation*}

We have
\begin{equation*}
\begin{split}
p(z):=&{\mathbb P}\left[e^{-\frac{\gamma \alpha^2}{2}||\boldsymbol{ x}||^2} > z\right] = {\mathbb P}\left[||\boldsymbol{ x}|| < \sqrt{-\frac{2 \log z}{\gamma \alpha^2}}\,\right] =\\ & {\mathbb P}\left[\mu-||\boldsymbol{ x}|| > \mu - \sqrt{-\frac{2 \log z}{\gamma \alpha^2}}\,\right]
\end{split}
\end{equation*}

If $z \geq z_0 := e^{-\frac{1}{2}\gamma \alpha^2 \mu^2}$, then $r := \mu - \sqrt{-\frac{2 \log z}{\gamma \alpha^2}} \geq 0$, and \eqref{eq:1Lipschitz} with $1$-Lipschitz function $g(\boldsymbol{ x}) = \mu-||\boldsymbol{ x}||$ yields
$$
p(z) \leq q(z) := \exp\left(-\frac{\gamma}{2}\left(\mu - \sqrt{-\frac{2 \log z}{\gamma \alpha^2}}\right)^2\right).
$$
We also have a trivial estimate $p(z) \leq 1$ for $z \leq z_0$, which implies
$$
\int_0^1 p(z) dz \leq \int_0^{z_0} 1 \, dz + \int_{z_0}^1 q(z) \, dz = z_0 + I,
$$ 
where $I = \int_{z_0}^1 q(z) \, dz$. Integration in Mathematica returns
$$
I = \frac{\alpha^2}{2(1+\alpha^2)^{3/2}}\exp\left(-\frac{1}{2}\gamma(2+\alpha^2)\mu^2\right)(S_1 + S_2), 
$$
where 
$$
S_1 = -2 \sqrt{1 + \alpha^2} (e^{\gamma \mu^2} - e^{\frac{1}{2} (1 + \alpha^2) \gamma \mu^2}),
$$
and
\begin{equation*}
\begin{split}
S_2 =& \exp\left(\frac{(2 + 2 \alpha^2 + \alpha^4) \gamma \mu^2}{2(1+\alpha^2)}\right) \sqrt{2\pi\gamma}\mu \left(\phi\left(\sqrt{\frac{\gamma}{2+2\alpha^2}} m\right) \right. +\\ &\;\;\;\;\; \left. \phi\left(\alpha^2\sqrt{\frac{\gamma}{2+2\alpha^2}} m\right)\right),
\end{split}
\end{equation*}
where $\phi(y) := \frac{1}{\sqrt{\pi}}\int_{-y}^y e^{-t^2} dt$. Inequality $\alpha \leq 1$ implies that $S_1 \leq 0$. Using this and the fact that $\phi(y) \leq 1$ for all $y$, we get an estimate
\begin{equation*}
\begin{split}
\int_0^1 p(z) dz \leq z_0 + I \leq z_0 + &\frac{\alpha^2}{2(1+\alpha^2)^{3/2}}\exp\left(-\frac{1}{2}\gamma(2+\alpha^2)\mu^2\right) \times \\
&\exp\left(\frac{(2 + 2 \alpha^2 + \alpha^4) \gamma \mu^2}{2(1+\alpha^2)}\right) \sqrt{2\pi\gamma}\mu \cdot 2,
\end{split}
\end{equation*}
which simplifies to \eqref{eq:step2impr}.
\end{proof}

\begin{proof} \textbf{of Theorem \ref{th:explstrong2}.}

Let us consider the left-hand side of \eqref{eq:step2impr} as a function $f(\mu)$ of $\mu$ and show that $f$ is a decreasing function. The second term is clearly decreasing, while the first term is decreasing if the derivative of $\mu\exp\left(-\frac{\gamma\alpha^2\mu^2}{2(1+\alpha^2)}\right)$ is negative, which holds if $\mu^2 > \frac{1+\alpha^2}{\gamma \alpha^2}$. The last inequality follows from condition $n > \frac{1+2\alpha^2}{\gamma \alpha^2}$ and Proposition \ref{prop:mest}.
  
Because $f(\mu)$ is a decreasing function, and $\mu \geq n - \frac{1}{\gamma}$ by Proposition \ref{prop:mest}, we have 
$$
{\mathbb E}[e^{-\frac{\gamma \alpha^2}{2}||\boldsymbol{ x}||^2}] \leq f\left(n - \frac{1}{\gamma}\right).
$$
This together with \eqref{eq:step1} implies that \eqref{eq:2pointBound0} holds with $f_{\gamma}(n,\alpha)=\delta (R_n)^{-2}$, where $R_n$ is the right-hand side of \eqref{eq:betterbound}. Then \eqref{eq:betterbound} follows from Theorem \ref{th:principle}.
\end{proof}

\begin{remark}
In fact, the only place when we have used that the underlying  distribution is $\gamma$-SLC is the assertion that Sobolev inequality \eqref{eq:sobolev} holds. Hence, the condition that the distribution is $\gamma$-SLC in Theorems \ref{th:explstrong} and \ref{th:explstrong2} can be relaxed to the condition that the distribution is isotropic, log-concave, and such that \eqref{eq:sobolev} holds.   
\end{remark}

\subsection{Some generalizations \label{Sec:general}}

This section provides some generalizations of Theorem \ref{th:explstrong}. We first consider the case when the data are independent, but 
\begin{itemize}
\item[-] the data are not identically distributed, and
\item[-] the distributions for the data points are strongly log-concave but not necessarily isotropic.
\end{itemize} 

\begin{theorem}\label{th:indbound} 
Let $\delta>0$, $\alpha \in (0,1]$, and let $F=\{\boldsymbol{ x}_1, \dots, \boldsymbol{ x}_M\}$ be a set of $M$ independent random points in ${\mathbb R}^n$. Let $\boldsymbol{ x}_i$ follow a $\gamma_i$-SLC distribution with $\gamma_i>0$, with expectation $x_i^0=E[\boldsymbol{ x}_i]$ and norm expectation $\mu_i={\mathbb E}[||\boldsymbol{x}_i||]$. Assume that inequality
\begin{equation}\label{eq:mainassump}
||x_j^0||<\alpha \mu_i
\end{equation}
holds for every pair $1 \leq i,j \leq M$. If 
\begin{equation}\label{eq:indbound}
M < \sqrt{\frac{\delta}{2}} \exp\left(\min_{i,j}\left(\frac{\gamma_i \gamma_j(\mu_i\alpha - ||x_j^0||)^2}{2(\sqrt{\gamma_j}\alpha+\sqrt{\gamma_i})^2}\right)\right), 
\end{equation}
then the expected number of $\alpha$-inseparable pairs in $F$ is less than $\delta$. In particular, set $F$ is $\alpha$-Fisher separable with probability greater than $1-\delta$.
\end{theorem}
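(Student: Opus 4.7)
The plan is to mirror the proof of Theorem~\ref{th:explstrong} while accommodating two new features: the $\boldsymbol{x}_i$ are not identically distributed (each is $\gamma_i$-SLC for its own $\gamma_i$), and they need not be centred (each has mean $x_i^0$, possibly nonzero). For each ordered pair $(i,j)$ I would derive an upper bound $f_{ij}$ on ${\mathbb P}[\alpha(\boldsymbol{x}_i, \boldsymbol{x}_i) \leq (\boldsymbol{x}_i, \boldsymbol{x}_j)]$ of the form $2\exp(-E_{ij})$, and then feed $\max_{i\neq j} f_{ij}$ into the argument of Theorem~\ref{th:principle} (whose proof uses only a per-pair tail estimate, so it adapts to non-identical distributions without change). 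Because $\max_{i\neq j} f_{ij}$ corresponds to $\min_{i\neq j} E_{ij}$, this delivers \eqref{eq:indbound} directly.

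For a fixed pair $(i,j)$, I would first condition on $\boldsymbol{x}_i = \boldsymbol{x}$ and apply the Lipschitz concentration \eqref{eq:1Lipschitz} to $\boldsymbol{x}_j$; as noted in the remark after Theorem~\ref{th:explstrong2}, this inequality depends only on $\gamma_j$-SLC and not on isotropy. Using the $1$-Lipschitz function $g(\boldsymbol{y}) = (\boldsymbol{x},\boldsymbol{y})/\|\boldsymbol{x}\|$ (whose mean against $\boldsymbol{x}_j$ is now $(\boldsymbol{x},x_j^0)/\|\boldsymbol{x}\|$) and writing the target event as $g(\boldsymbol{x}_j) \geq {\mathbb E}[g(\boldsymbol{x}_j)] + r$ with $r = \alpha\|\boldsymbol{x}\| - (\boldsymbol{x},x_j^0)/\|\boldsymbol{x}\|$, a Cauchy--Schwarz estimate $r \geq \alpha\|\boldsymbol{x}\| - \|x_j^0\|$ gives the conditional bound $\exp(-\gamma_j(\alpha\|\boldsymbol{x}\| - \|x_j^0\|)^2/2)$ on the region $\|\boldsymbol{x}\| \geq \|x_j^0\|/\alpha$, and the trivial bound $1$ elsewhere.

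The next step is to integrate over $\boldsymbol{x} = \boldsymbol{x}_i$ by splitting at a threshold $t$, as in Proposition~\ref{prop:step2}. For $\|\boldsymbol{x}_i\| \geq t$ the conditional bound contributes $\exp(-\gamma_j(\alpha t - \|x_j^0\|)^2/2)$; for $\|\boldsymbol{x}_i\| < t$ I would bound the integrand by $1$ and estimate the probability via \eqref{eq:1Lipschitz} applied to the $1$-Lipschitz function $\boldsymbol{x}\mapsto \mu_i - \|\boldsymbol{x}\|$, yielding ${\mathbb P}[\|\boldsymbol{x}_i\| < t] \leq \exp(-\gamma_i(\mu_i - t)^2/2)$. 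Summing gives $f_{ij} \leq \exp(-\gamma_j(\alpha t - \|x_j^0\|)^2/2) + \exp(-\gamma_i(\mu_i - t)^2/2)$ for any $t \in [\|x_j^0\|/\alpha, \mu_i]$; this interval is nonempty precisely because of assumption \eqref{eq:mainassump}.

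The final step is to optimise $t$. Equating the two exponents, i.e.\ $\sqrt{\gamma_j}(\alpha t - \|x_j^0\|) = \sqrt{\gamma_i}(\mu_i - t)$, yields $t_* = (\sqrt{\gamma_i}\mu_i + \sqrt{\gamma_j}\|x_j^0\|)/(\sqrt{\gamma_i} + \sqrt{\gamma_j}\alpha)$, and a short computation (using \eqref{eq:mainassump} in each direction) verifies $t_* \in [\|x_j^0\|/\alpha, \mu_i]$. Both exponents then collapse to $E_{ij} = \gamma_i\gamma_j(\alpha\mu_i - \|x_j^0\|)^2/(2(\sqrt{\gamma_i} + \sqrt{\gamma_j}\alpha)^2)$, giving $f_{ij} \leq 2e^{-E_{ij}}$ and, after the union bound inside Theorem~\ref{th:principle}, exactly \eqref{eq:indbound}. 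I expect the main obstacle to be the simultaneous verification that $t_*$ lies in the admissible interval $[\|x_j^0\|/\alpha,\mu_i]$: this is the single place where hypothesis \eqref{eq:mainassump} is actually consumed, and it is what lets both concentration inequalities close with matching strictly positive deviations.
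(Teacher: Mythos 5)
Your proposal is essentially the paper's own proof: the same split of the event at the threshold $t_*=\bigl(\sqrt{\gamma_i}\mu_i+\sqrt{\gamma_j}\|x_j^0\|\bigr)/\bigl(\sqrt{\gamma_j}\alpha+\sqrt{\gamma_i}\bigr)$ (which the paper simply writes down rather than deriving by equating exponents), the same two applications of the Lipschitz concentration inequality \eqref{eq:1Lipschitz} to $\mu_i-\|\boldsymbol{x}_i\|$ and to $(\boldsymbol{x}_i,\boldsymbol{x}_j)/\|\boldsymbol{x}_i\|$ with the Cauchy--Schwarz lower bound on $r$, and the same use of assumption \eqref{eq:mainassump} to guarantee both deviations are positive, followed by Theorem~\ref{th:principle}. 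One bookkeeping remark: feeding the two-point bound $2e^{-E_{ij}}$ into \eqref{eq:MBound} gives $M\le\sqrt{\delta/2}\,\exp(\tfrac12\min E_{ij})$ rather than $\sqrt{\delta/2}\,\exp(\min E_{ij})$, so the last line of your argument (like the paper's own final step from \eqref{eq:finalest} to \eqref{eq:indbound}) silently drops a factor of $\tfrac12$ in the exponent.
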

\begin{proof}
We will use inequality \eqref{eq:1Lipschitz}, which is valid for every $\gamma$-SLC distribution, not necessarily isotropic. Fix some indices $i$ and $j$. Define
\begin{equation}\label{eq:tdef}
t = \frac{\sqrt{\gamma_i}\mu_i + \sqrt{\gamma_j}||x_j^0||}{\sqrt{\gamma_j}\alpha + \sqrt{\gamma_i}}.
\end{equation}
Then
\begin{equation}\label{eq:mumint}
\mu_i - t = \frac{\sqrt{\gamma_j}(\alpha\mu_i - ||x_j^0||)}{\sqrt{\gamma_j}\alpha + \sqrt{\gamma_i}} > 0,
\end{equation}
where the inequality follows from \eqref{eq:mainassump}.

We have
\begin{equation}\label{eq:probsome}
\begin{split}
 &{\mathbb P}[\alpha(\boldsymbol{ x}_i,\boldsymbol{ x}_i) \leq (\boldsymbol{ x}_i,\boldsymbol{ x}_j)] \leq \\ & \;\;\; {\mathbb P}[||\boldsymbol{ x}_i||\leq t] + {\mathbb P}[\alpha(\boldsymbol{ x}_i,\boldsymbol{ x}_i) \leq (\boldsymbol{ x}_i,\boldsymbol{ x}_j)   \text{ conditional to } ||\boldsymbol{ x}_i||\geq t].
\end{split}
\end{equation}

Applying \eqref{eq:1Lipschitz} to $1$-Lipschitz function $g(\boldsymbol{ x}_i) = \mu_i-||\boldsymbol{ x}_i||$, we obtain
\begin{equation}\label{eq:term1}
{\mathbb P}[||\boldsymbol{ x}_i|| \leq t] = {\mathbb P}[\mu_i-||\boldsymbol{ x}_i|| \geq \mu_i-t] \leq e^{-\gamma_i (\mu_i-t)^2/2}.
\end{equation} 
Let us now estimate the second term in \eqref{eq:probsome}. Assume that $\boldsymbol{ x}_i$ such that $||\boldsymbol{ x}_i||\geq t$ is fixed. Applying \eqref{eq:1Lipschitz} to $1$-Lipschitz function $g(\boldsymbol{ x}_j) = \frac{(\boldsymbol{ x}_i,\boldsymbol{ x}_j)}{||\boldsymbol{x}_i||}$ with ${\mathbb E}[g(\boldsymbol{x}_j)]=\frac{(\boldsymbol{ x}_i,x_j^0)}{||\boldsymbol{x}_i||}$ and $r=\alpha ||\boldsymbol{x}_i|| - \frac{(\boldsymbol{ x}_i,x_j^0)}{||\boldsymbol{x}_i||}$, 
we obtain
$$
{\mathbb P}[\alpha(\boldsymbol{ x}_i,\boldsymbol{ x}_i) \leq (\boldsymbol{ x}_i,\boldsymbol{ x}_j)] = {\mathbb P}\left[\frac{(\boldsymbol{ x}_i,\boldsymbol{ x}_j)}{||\boldsymbol{x}_i||} \geq \frac{(\boldsymbol{ x}_i,x_j^0)}{||\boldsymbol{x}_i||} + r \right] \leq e^{-\frac{\gamma_j}{2}r^2},
$$
provided that $r>0$.
In fact,
$$
r=\alpha ||\boldsymbol{x}_i|| - \frac{(\boldsymbol{ x}_i,x_j^0)}{||\boldsymbol{x}_i||} \geq \alpha t - \frac{||\boldsymbol{x}_i||\cdot||x_j^0||}{||\boldsymbol{x}_i||} = \alpha t - ||x_j^0|| = \frac{\sqrt{\gamma_i}}{\sqrt{\gamma_j}}(\mu_i - t),
$$
where the last equality follows from \eqref{eq:tdef}. This implies that $r>0$, and also that
\begin{equation*}
\begin{split}
&{\mathbb P}[\alpha(\boldsymbol{ x}_i,\boldsymbol{ x}_i)\leq (\boldsymbol{ x}_i,\boldsymbol{ x}_j)] \leq \exp\left(-\frac{\gamma_j}{2}r^2\right) \leq \\ 
& \;\;\;\;\; \exp\left(-\frac{\gamma_j}{2}\left(\frac{\sqrt{\gamma_i}}{\sqrt{\gamma_j}}(\mu_i - t)\right)^2\right) = e^{-\gamma_i (\mu_i-t)^2/2}.
\end{split}
\end{equation*}
Because this inequality holds for every fixed $\boldsymbol{ x}_i$ such that $||\boldsymbol{ x}_i||\geq t$, it implies that
$$
{\mathbb P}[\alpha(\boldsymbol{ x}_i,\boldsymbol{ x}_i) \leq (\boldsymbol{ x}_i,\boldsymbol{ x}_j) \text{ conditional to } ||\boldsymbol{ x}_i||\geq t] \leq e^{-\gamma_i (\mu_i-t)^2/2}.
$$
Combining this with \eqref{eq:term1} and \eqref{eq:probsome}, we obtain
\begin{equation}\label{eq:finalest}
\begin{split}
 {\mathbb P}[\alpha(\boldsymbol{ x}_i,\boldsymbol{ x}_i) \leq & (\boldsymbol{ x}_i,\boldsymbol{ x}_j)] \leq  2e^{-\gamma_i (\mu_i-t)^2/2} \leq \\
& 2\exp\left(-\min_{i,j}\left(\frac{\gamma_i \gamma_j(\mu_i\alpha - ||x_j^0||)^2}{2(\sqrt{\gamma_j}\alpha+\sqrt{\gamma_i})^2}\right)\right),
\end{split}
\end{equation}
where we have used \eqref{eq:mumint}. The last bound holds for any pair of indices $i,j$, and application of Theorem \ref{th:principle} finishes the proof.
\end{proof}

Repeating the proof of Proposition \ref{prop:mest}, we get that
$$
\mu_i^2 \geq {\mathbb E}[||\boldsymbol{ x}_i||^2] - \frac{1}{\gamma_i}.
$$
Writing $\boldsymbol{ x}_i=(x_i^1, \dots, x_i^n)$ component-wise, we obtain that
$$
{\mathbb E}[||\boldsymbol{ x}_i||^2] = \sum\limits_{k=1}^n {\mathbb E}[(x_i^k)^2].
$$
Hence, if we assume that
\begin{itemize}
\item[-] All $\gamma_i$ are bounded from below by some constant independent of $n$, and
\item[-] The averages $\frac{1}{n}\sum\limits_{k=1}^n {\mathbb E}[(x_i^k)^2]$ are bounded from below by some constant independent of $n$, and
\item[-] Ratios $\frac{||x_j^0||}{\alpha \mu_i}$ are bounded from above by some constant $\beta<1$ independent from $n$,
\end{itemize}
then the bound in \eqref{eq:indbound} grows exponentially in $n$.

Next we consider the case when the data are i.i.d. but follow the distribution which is a mixture of $\gamma$-SLC distributions.

\begin{theorem}\label{th:mixture} 
Let $\delta>0$, $\alpha \in (0,1]$, and let $F=\{\boldsymbol{ x}_1, \dots, \boldsymbol{ x}_M\}$ be a set of $M$ i.i.d. random points in ${\mathbb R}^n$, which follow the distribution with density
$$
f(x) = \sum\limits_{i=1}^k \beta_i f_i(x),
$$
where $\beta_i\geq 0$ are coefficients such that $\sum\limits_{i=1}^k \beta_i = 1$, and $f_i$ are densities of $\gamma_i$-SLC distributions with $\gamma_i>0$. Let $x_i^0$ and $\mu_i$ be the expectation and norm expectation, respectively, of a random vector following distribution with density $f_i$.
Assume that inequality
$$
||x_j^0||<\alpha \mu_i
$$
holds for every pair $1 \leq i,j \leq k$. If 
$$
M < \sqrt{\frac{\delta}{2}} \exp\left(\min_{i,j}\left(\frac{\gamma_i \gamma_j(\mu_i\alpha - ||x_j^0||)^2}{2(\sqrt{\gamma_j}\alpha+\sqrt{\gamma_i})^2}\right)\right), 
$$
then the expected number of $\alpha$-inseparable pairs in $F$ is less than $\delta$. In particular, set $F$ is $\alpha$-Fisher separable with probability greater than $1-\delta$.
\end{theorem}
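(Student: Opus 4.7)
The plan is to reduce the mixture case to the independent non-identically-distributed case already handled in the proof of Theorem~\ref{th:indbound}. The key observation is that if $\boldsymbol{x}$ is a random vector with density $f = \sum_{i=1}^k \beta_i f_i$, then sampling $\boldsymbol{x}$ can be realized by first drawing a discrete label $I \in \{1,\dots,k\}$ with ${\mathbb P}[I=i]=\beta_i$ and then drawing $\boldsymbol{x}$ from $f_I$. Since the points in $F$ are i.i.d., the labels of any two points $\boldsymbol{x}, \boldsymbol{y} \in F$ are independent, and conditional on these labels the two vectors are independent draws from $\gamma_i$-SLC and $\gamma_j$-SLC distributions respectively.

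First I would fix any pair of distinct indices $a \ne b$ for points $\boldsymbol{x}_a, \boldsymbol{x}_b \in F$, and write
\begin{equation*}
{\mathbb P}[\alpha(\boldsymbol{x}_a,\boldsymbol{x}_a) \leq (\boldsymbol{x}_a,\boldsymbol{x}_b)] = \sum_{i=1}^k \sum_{j=1}^k \beta_i \beta_j\, p_{ij},
\end{equation*}
where $p_{ij} := {\mathbb P}[\alpha(\boldsymbol{x}_a,\boldsymbol{x}_a) \leq (\boldsymbol{x}_a,\boldsymbol{x}_b) \mid \boldsymbol{x}_a \sim f_i,\, \boldsymbol{x}_b \sim f_j]$. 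Under this conditioning, the pair $(\boldsymbol{x}_a,\boldsymbol{x}_b)$ is exactly a pair of independent vectors from a $\gamma_i$-SLC and a $\gamma_j$-SLC distribution, so the estimate \eqref{eq:finalest} from the proof of Theorem~\ref{th:indbound} applies verbatim and yields
\begin{equation*}
p_{ij} \leq 2\exp\left(-\frac{\gamma_i \gamma_j(\mu_i\alpha - \|x_j^0\|)^2}{2(\sqrt{\gamma_j}\alpha+\sqrt{\gamma_i})^2}\right).
\end{equation*}
Note the hypothesis $\|x_j^0\| < \alpha \mu_i$ is needed here (to ensure the quantity $r>0$ in the Lipschitz step of \eqref{eq:finalest}), and it is assumed in the theorem for all pairs $i,j$.

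Next, since $\sum_{i,j} \beta_i \beta_j = 1$, bounding every $p_{ij}$ by the worst case gives
\begin{equation*}
{\mathbb P}[\alpha(\boldsymbol{x}_a,\boldsymbol{x}_a) \leq (\boldsymbol{x}_a,\boldsymbol{x}_b)] \leq 2\exp\left(-\min_{i,j}\frac{\gamma_i \gamma_j(\mu_i\alpha - \|x_j^0\|)^2}{2(\sqrt{\gamma_j}\alpha+\sqrt{\gamma_i})^2}\right) =: f_{\mathrm{mix}}(n,\alpha).
\end{equation*}
This places us exactly in the framework of Theorem~\ref{th:principle} with the two-point bound $f_{\mathrm{mix}}(n,\alpha)$, and the sufficient condition \eqref{eq:MBound} for the expected number of $\alpha$-inseparable pairs to be less than $\delta$ becomes precisely the hypothesis on $M$ in the statement. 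The Fisher-separability conclusion with probability $>1-\delta$ then follows from the integer-valued argument at the end of the proof of Theorem~\ref{th:principle}.

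There is no real obstacle here: the whole content of the proof is the conditioning identity plus the fact that the concentration estimate \eqref{eq:1Lipschitz} used in Theorem~\ref{th:indbound} holds componentwise on each $f_i$ without requiring the mixture itself to be log-concave (a mixture of log-concave densities need not be log-concave, so one cannot apply Theorems~\ref{th:explstrong}/\ref{th:explstrong2} directly). The only subtlety worth flagging is that the argument does not rely on the mixture having any global log-concavity or isotropy property; all the geometric information enters through the per-component parameters $(\gamma_i,\mu_i,x_i^0)$.
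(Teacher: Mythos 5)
Your proposal is correct and is essentially the paper's own argument: the paper expands ${\mathbb P}[\alpha(\boldsymbol{x},\boldsymbol{x})\leq(\boldsymbol{x},\boldsymbol{y})]=\int_\Omega f(x)f(y)\,dxdy$ into the double sum $\sum_{i,j}\beta_i\beta_j\int_\Omega f_i(x)f_j(y)\,dxdy$, which is exactly your conditioning-on-latent-labels decomposition, and then bounds each term by the right-hand side of \eqref{eq:finalest} and invokes Theorem~\ref{th:principle}. Your added remarks (that the mixture need not itself be log-concave and that $\|x_j^0\|<\alpha\mu_i$ is what guarantees $r>0$) are accurate and consistent with the paper.
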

\begin{proof}
Let $\Omega \subset {\mathbb R}^{2n}$ be the set of points $(x_1,\dots,x_n,y_1,\dots,y_n) \in {\mathbb R}^{2n}$ such that $\alpha\sum\limits_{k=1}^n x_k^2 \leq \sum\limits_{k=1}^n x_k y_k$. Then, for any $\boldsymbol{ x}, \boldsymbol{ y} \in F$,
\begin{equation*}
\begin{split}
&{\mathbb P}[\alpha(\boldsymbol{ x},\boldsymbol{ x}) \leq (\boldsymbol{ x},\boldsymbol{ y})]=\int_\Omega f(x)f(y)dxdy = \\ 
&\;\;\;\;\; \int_\Omega \left(\sum\limits_{i=1}^k \beta_i f_i(x)\right)\left(\sum\limits_{j=1}^k \beta_j f_j(y)\right)dxdy = \\
&\;\;\;\;\; \sum\limits_{i=1}^k \sum\limits_{j=1}^k \beta_i \beta_j \int_\Omega f_i(x)f_j(y)dxdy \leq \sum\limits_{i=1}^k \sum\limits_{j=1}^k \beta_i \beta_j B = B,
\end{split}
\end{equation*}
where 
$$
B = 2\exp\left(-\min_{i,j}\left(\frac{\gamma_i \gamma_j(\mu_i\alpha - ||x_j^0||)^2}{2(\sqrt{\gamma_j}\alpha+\sqrt{\gamma_i})^2}\right)\right)
$$
is the right-hand side of \eqref{eq:finalest}. Then application of Theorem \ref{th:principle} finishes the proof.
\end{proof}

A straightforward combination of Theorems \ref{th:indbound} and \ref{th:mixture} allows to treat even more general case when the data are independent but not identically distributed, and the distribution of each data point is a mixture of log-concave ones, but the notation become messy so we omit the details.

\section{Separation theorems for spherically invariant log-concave distributions \label{Sec:SpherInv}}

Assume that points in ${\mathbb R}^n$ are selected from distribution whose density $\hat{\rho}:{\mathbb R}^n \to {\mathbb R}_+$, where ${\mathbb R}_+=[0,\infty)$, is spherically invariant, that is, 
\begin{equation}\label{eq:sphinv}
\hat{\rho}(x)=C_n\rho(||x||), \quad \forall x \in {\mathbb R}^n 
\end{equation} 
for some function $\rho:{\mathbb R}_+ \to {\mathbb R}_+$, where the factor $C_n$ is selected such that the density integrates to $1$. In fact,
$$
C_n = \frac{\Gamma\left(\frac{n}{2}\right)}{2\pi^{n/2}}\left(\int_0^\infty r^{n-1} \rho(r) dr\right)^{-1},
$$
where $\Gamma(z) = \int_0^\infty x^{z-1}e^{-x}dx$ is the gamma function.

This section derives separation theorems for such distributions. We start with optimal separation theorem for the most famous example of spherically invariant distribution, the standard normal one.  

\subsection{Standard normal distribution \label{Sec:Normal}}

For standard normal distribution, the following result is presented in the conference paper \citep[Corollary 6]{Grechuk}.

\begin{theorem}\label{th:normalknown} 
Let points $\boldsymbol{ x}_1, \dots, \boldsymbol{ x}_M$ are i.i.d points from standard normal distribution. For any $\delta>0$, if
\begin{equation}\label{eq:normalknown}
M < \sqrt{\delta}\exp\left(\frac{1}{4}\log(1+\alpha^2) n\right) = \sqrt{\delta} (1+\alpha^2)^{n/4},
\end{equation}
then set $F=\{\boldsymbol{ x}_1, \dots, \boldsymbol{ x}_M\}$ is $\alpha$-Fisher separable with probability greater than $1-\delta$.
\end{theorem}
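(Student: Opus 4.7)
The plan is to apply Theorem \ref{th:principle} with an explicitly computable bound on the two-point inseparability probability $f(n,\alpha) = {\mathbb P}[\alpha(\boldsymbol{x},\boldsymbol{x}) \leq (\boldsymbol{x},\boldsymbol{y})]$, where $\boldsymbol{x}$ and $\boldsymbol{y}$ are i.i.d. standard normal vectors in ${\mathbb R}^n$, and then invoke the sufficient condition \eqref{eq:MBound}. To get the stated bound $M < \sqrt{\delta}(1+\alpha^2)^{n/4}$, it suffices to show $f(n,\alpha) \leq (1+\alpha^2)^{-n/2}$.

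First I would condition on $\boldsymbol{x}$. By the rotational invariance of the standard normal law, conditional on $\boldsymbol{x}$ the inner product $(\boldsymbol{x},\boldsymbol{y}) = \sum_i x_i y_i$ is a linear combination of i.i.d. $N(0,1)$ variables and is therefore $N(0,\|\boldsymbol{x}\|^2)$. Consequently,
\begin{equation*}
{\mathbb P}[\alpha(\boldsymbol{x},\boldsymbol{x}) \leq (\boldsymbol{x},\boldsymbol{y}) \mid \boldsymbol{x}] = {\mathbb P}[N(0,\|\boldsymbol{x}\|^2) \geq \alpha\|\boldsymbol{x}\|^2] = \bar{\Phi}(\alpha\|\boldsymbol{x}\|),
\end{equation*}
where $\bar{\Phi}$ is the standard normal tail. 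Applying the standard Chernoff estimate $\bar{\Phi}(t) \leq e^{-t^2/2}$ for $t \geq 0$ (which follows from optimizing the exponential Markov inequality for $Z \sim N(0,1)$) and taking expectation in $\boldsymbol{x}$ gives
\begin{equation*}
{\mathbb P}[\alpha(\boldsymbol{x},\boldsymbol{x}) \leq (\boldsymbol{x},\boldsymbol{y})] \leq {\mathbb E}\bigl[e^{-\alpha^2\|\boldsymbol{x}\|^2/2}\bigr].
\end{equation*}

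Next I would evaluate the expectation by noting that $\|\boldsymbol{x}\|^2$ follows a $\chi^2_n$ distribution, whose moment generating function at $t = -\alpha^2/2$ is exactly $(1-2t)^{-n/2} = (1+\alpha^2)^{-n/2}$. This identifies $f(n,\alpha) := (1+\alpha^2)^{-n/2}$ as a valid bound in inequality \eqref{eq:2pointBound0}, and substitution into \eqref{eq:MBound} immediately yields $M \leq \sqrt{\delta}\,(1+\alpha^2)^{n/4}$, which is the claimed condition.

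I do not expect any genuine obstacle in this argument: each ingredient (rotation invariance of the isotropic Gaussian, the Chernoff tail bound for $N(0,1)$, and the closed form of the chi-squared MGF) is standard and combines cleanly. The only step requiring a small care is ensuring the Chernoff constant is the one that yields exactly $\sqrt{\delta}$ rather than $\sqrt{2\delta}$ in the final bound; using the crude $\bar{\Phi}(t) \leq e^{-t^2/2}$ (rather than the sharper $\tfrac{1}{2}e^{-t^2/2}$) is what reproduces the constant in \eqref{eq:normalknown}.
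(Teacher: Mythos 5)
Your proof is correct and follows the same skeleton the paper uses: reduce everything to the two-point bound ${\mathbb P}[\alpha(\boldsymbol{x},\boldsymbol{x})\leq(\boldsymbol{x},\boldsymbol{y})]\leq(1+\alpha^2)^{-n/2}$ and feed it into Theorem \ref{th:principle} via \eqref{eq:MBound}. The difference is in how that bound is obtained. The paper simply cites it (from the conference paper of Grechuk) and instead devotes its in-text computation to the \emph{exact} value, $\frac{1}{2}I_{\frac{1}{1+\alpha^2}}(\frac{n}{2},\frac{1}{2})$, via the F-distribution and the regularized incomplete beta function; that exact route is what powers the sharper Theorem \ref{th:normalnew} and the asymptotics in Proposition \ref{prop:iest}. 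Your derivation — condition on $\boldsymbol{x}$, use $(\boldsymbol{x},\boldsymbol{y})\mid\boldsymbol{x}\sim N(0,\|\boldsymbol{x}\|^2)$, apply $\bar{\Phi}(t)\leq e^{-t^2/2}$, and evaluate ${\mathbb E}[e^{-\alpha^2\|\boldsymbol{x}\|^2/2}]$ as the $\chi^2_n$ moment generating function at $-\alpha^2/2$ — is essentially the Chernoff route that the paper later acknowledges when it remarks that Corollary \ref{cor:identcomp} specialised to the standard normal distribution implies Theorem \ref{th:normalknown}; it is self-contained and supplies the derivation the paper leaves implicit, at the cost of losing the factor $\frac{1}{2}$ (and the $n^{-1/2}$ polynomial factor) that the exact computation retains. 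Your closing remark is accurate: the sharper tail bound $\bar{\Phi}(t)\leq\frac{1}{2}e^{-t^2/2}$ would give the stronger admissible range $M<\sqrt{2\delta}\,(1+\alpha^2)^{n/4}$, which still implies the theorem as stated, so nothing is at stake there.
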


Theorem \ref{th:normalknown} follows from Theorem \ref{th:principle} and estimate 
$$
{\mathbb P}[\alpha(\boldsymbol{ x},\boldsymbol{ x}) \leq (\boldsymbol{ x},\boldsymbol{ y})] \leq (1+\alpha^2)^{-n/2}
$$
for i.i.d. points $\boldsymbol{ x}$ and $\boldsymbol{ y}$ from standard normal distribution. Here, we derive the \emph{exact} expression for ${\mathbb P}[\alpha(\boldsymbol{ x},\boldsymbol{ x}) \leq (\boldsymbol{ x},\boldsymbol{ y})]$.

From rotation invariance, we may assume that $\boldsymbol{ x}=(||\boldsymbol{ x}||,0,0,\dots,0)$. Then 
\begin{equation*}
\begin{split}
&{\mathbb P}[\alpha(\boldsymbol{ x},\boldsymbol{ x}) \leq (\boldsymbol{ x},\boldsymbol{ y})] =  {\mathbb P}[\alpha||\boldsymbol{ x}||^2 \leq ||\boldsymbol{ x}||y_1] 
= \\&\;\;\;\;\; {\mathbb P}[0<y_1]{\mathbb P}\left[\frac{||\boldsymbol{ x}||}{y_1} \leq \frac{1}{\alpha}\right] = \frac{1}{2}{\mathbb P}\left[\frac{||\boldsymbol{ x}||^2/n}{y_1^2} \leq \frac{1}{n\alpha^2}\right],
\end{split}
\end{equation*}
where $y_1$ is the first component of $\boldsymbol{ y}$, which follows the standard normal distribution. The sum of squares of $k$ independent standard normal random variables follows the chi-squared distribution $\chi(k)$ with degree $k$. Hence, $\frac{||\boldsymbol{ x}||^2/n}{y_1^2}$ is the ratio of two independent random variables from chi-squared distributions with degrees $n$ and $1$, respectively, scaled by their degrees. This ratio is known to follow so-called F-distribution $F(d_1, d_2)$ with parameters $d_1=n$ and $d_2=1$. The cumulative distribution function of F-distribution is 
$$
F(x; d_1, d_2) = I_{\frac{d_1x}{d_1x+d_2}}\left(\frac{d_1}{2}, \frac{d_2}{2}\right),
$$
where $I_z(a,b)$ is the cumulative distribution function of beta distribution, also known as regularized incomplete beta function. It is given by
\begin{equation}\label{eq:regincbeta}
I_z(a,b) = \frac{B_z(a, b)}{B(a, b)},
\end{equation} 
where $B_z(a, b)=\int_0^z t^{a-1}(1-t)^{b-1}dt$ is the incomplete beta function, and
$$
B(a, b) = \int_0^1 t^{a-1}(1-t)^{b-1}dt = \frac{\Gamma(a)\Gamma(b)}{\Gamma(a+b)}
$$
is the beta function.

Hence,
\begin{equation}\label{eq:normalexact}
{\mathbb P}[\alpha(\boldsymbol{ x},\boldsymbol{ x}) \leq (\boldsymbol{ x},\boldsymbol{ y})] = \frac{1}{2}F\left(\frac{1}{n\alpha^2}; n, 1\right) = \frac{1}{2} I_{\frac{1}{1+\alpha^2}}\left(\frac{n}{2}, \frac{1}{2}\right)
\end{equation}

With Theorem \ref{th:principle}, this implies the following optimal separation result.

\begin{theorem}\label{th:normalnew}
Let points $\boldsymbol{ x}_1, \dots, \boldsymbol{ x}_M$ are i.i.d points from standard normal distribution. For any $\delta>0$, the expected number of $\alpha$-inseparable pairs from set $F=\{\boldsymbol{ x}_1, \dots, \boldsymbol{ x}_M\}$ is less than $\delta$ if and only if
\begin{equation}\label{eq:normalnew}
M < \frac{1}{2}+\sqrt{\frac{1}{4}+\frac{2\delta}{I_{\frac{1}{1+\alpha^2}}\left(\frac{n}{2}, \frac{1}{2}\right)}}.
\end{equation}
 
In particular, \eqref{eq:normalnew} implies that $F$ is $\alpha$-Fisher separable with probability greater than $1-\delta$.
\end{theorem}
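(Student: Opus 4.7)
The plan is to combine the explicit two-point probability computation worked out in the paragraphs just before the theorem with the necessary-and-sufficient condition for the expected number of inseparable pairs provided by Corollary \ref{cor:iidcase}.

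First, I would formalize the chain of equalities sketched in the exposition leading up to the theorem. Using rotational invariance of the standard normal distribution, I may condition on the direction of $\boldsymbol{x}$ and assume without loss of generality that $\boldsymbol{x}$ lies along the first coordinate axis, so that $(\boldsymbol{x},\boldsymbol{y}) = \|\boldsymbol{x}\|\, y_1$. The event $\alpha(\boldsymbol{x},\boldsymbol{x}) \le (\boldsymbol{x},\boldsymbol{y})$ then becomes $\alpha \|\boldsymbol{x}\| \le y_1$, which forces $y_1 > 0$. Since the symmetry of $y_1$ makes $y_1^2$ independent of the sign event $\{y_1 > 0\}$, and since $\|\boldsymbol{x}\|^2 \sim \chi^2(n)$ and $y_1^2 \sim \chi^2(1)$ are independent, the probability reduces after scaling by the degrees of freedom to that of an $F(n,1)$ variable lying below $1/(n\alpha^2)$. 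Expressing the CDF of the $F$-distribution through the regularized incomplete beta function $I_z(a,b)$ gives exactly the identity \eqref{eq:normalexact}:
$$
{\mathbb P}[\alpha(\boldsymbol{x},\boldsymbol{x}) \le (\boldsymbol{x},\boldsymbol{y})] \;=\; \tfrac{1}{2}\, I_{\frac{1}{1+\alpha^2}}\!\left(\tfrac{n}{2},\tfrac{1}{2}\right).
$$

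Second, I would invoke Corollary \ref{cor:iidcase} with $\boldsymbol{c}=\boldsymbol{0}$ and with the function $f(n,\alpha)$ set equal to this exact two-point probability. The corollary asserts that, in the i.i.d.\ case, the expected number of $\alpha$-inseparable pairs in $F$ is strictly less than $\delta$ \emph{if and only if} the bound \eqref{eq:MBoundexact} holds. Plugging $f(n,\alpha) = \tfrac{1}{2} I_{\frac{1}{1+\alpha^2}}(n/2, 1/2)$ into \eqref{eq:MBoundexact} and simplifying $\delta/f(n,\alpha)$ to $2\delta / I_{\frac{1}{1+\alpha^2}}(n/2, 1/2)$ produces precisely \eqref{eq:normalnew}. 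The ``in particular'' clause about $\alpha$-Fisher separability with probability exceeding $1-\delta$ then follows from the standard observation already exploited inside the proof of Theorem \ref{th:principle}: the number of inseparable pairs is integer-valued, so Markov's inequality gives ${\mathbb P}[F \text{ is not } \alpha\text{-Fisher separable}] \le {\mathbb E}[\text{number of inseparable pairs}] < \delta$.

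The entire analytic content of the argument sits in the distributional identity above; the rest is a direct invocation of Corollary \ref{cor:iidcase}. The only point requiring care is that each step in deriving the two-point probability must be an exact equality rather than an estimate, since the ``only if'' direction of the theorem would collapse otherwise. This is ensured because the reduction to the $F(n,1)$ distribution and the identification of its CDF with the regularized incomplete beta function are genuine distributional identities, so sharpness propagates from the two-point probability through Corollary \ref{cor:iidcase} to the final bound \eqref{eq:normalnew}.
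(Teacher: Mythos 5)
Your proposal is correct and follows essentially the same route as the paper: derive the exact two-point probability \eqref{eq:normalexact} via rotational invariance, the $F(n,1)$ distribution, and the regularized incomplete beta function, then feed it into the necessary-and-sufficient bound \eqref{eq:MBoundexact} via Corollary \ref{cor:iidcase}. Your explicit appeal to Corollary \ref{cor:iidcase} (rather than just Theorem \ref{th:principle}) and your remark that every step must be an exact equality for the ``only if'' direction are exactly the points the paper relies on implicitly.
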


\begin{example}\label{ex:normalnew}
Let $\delta=0.01$. Table~\ref{Table6} shows the upper bounds on $M$ in Theorem~\ref{th:normalnew} for $\alpha = 0.6, 0.8$ and $1$ in various dimensions $n$.
\begin{table}[h]
\begin{center}
\caption{\label{Table6} The upper bound of $M$ (\ref{eq:normalnew}) that guarantees $\alpha$-Fisher separability of an $M$-element i.i.d. sample from the standard normal distribution with probability $p>0.99$ for various $\alpha$ and dimensions.}
\begin{tabular}{ |c|c|c|c| } 
 \hline
  & $\alpha=0.6$ & $\alpha=0.8$ & $\alpha=1$\\ 
 \hline
 $n=10$ & $1.19$ & $1.45$ & $1.99$ \\ 
 $n=50$ & $14$ & $164$ & $2075$\\ 
 $n=100$ & $794$ & $93,806$ & $1.4 \cdot 10^7$\\ 
 $n=200$ & $2\cdot 10^6$ & $2.6 \cdot 10^{10}$ & $5.6 \cdot 10^{14}$\\ 
 $n=500$ & $2.6 \cdot 10^{16}$ & $4.2 \cdot 10^{26}$ & $2.6 \cdot 10^{37}$\\ 
 $n=1000$ & $1.5 \cdot 10^{33}$ & $3.6 \cdot 10^{53}$ & $1.3 \cdot 10^{75}$\\ 
 \hline
\end{tabular}
\end{center}
\end{table}

For example, for $n=100$, we see that $14$ millions points from the standard normal distribution are $1$-Fisher-separable with probability greater than $99\%$. In dimension $n=200$, millions of points become Fisher separable even at level $\alpha=0.6$.
\end{example}

The following proposition establishes asymptotic behaviour of \eqref{eq:normalexact} as $n$ goes to infinity.

\begin{proposition}\label{prop:iest}
For every $a>0$, $b\in(0,1)$ and $z\in(0,1)$, we have
\begin{equation}\label{eq:iest}
I_z(a,b) \leq \frac{z^a (1-z)^{b-1} a^{b-1}}{\Gamma(b)},
\end{equation}
and the bound is asymptotically tight if $b$ and $z$ are fixed but $a\to\infty$, in sense that ratio of the right and left sides converges to $1$. In particular,
\begin{equation}\label{eq:iest2}
\frac{1}{2} I_{\frac{1}{1+\alpha^2}}\left(\frac{n}{2}, \frac{1}{2}\right) \leq \sqrt{\frac{1+\alpha^2}{2\pi n \alpha^2}}(1+\alpha^2)^{-n/2},
\end{equation}
and the bound is asymptotically tight if $\alpha$ is fixed and $n\to\infty$.
\end{proposition}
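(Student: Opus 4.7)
The plan is to prove the inequality~\eqref{eq:iest} first, then verify it is asymptotically tight as $a\to\infty$, and finally obtain~\eqref{eq:iest2} by a direct substitution. For the inequality, I start from $I_z(a,b) = B_z(a,b)/B(a,b)$. Since $b\in(0,1)$, the exponent $b-1$ is negative, so $t\mapsto(1-t)^{b-1}$ is increasing on $[0,z]$ and bounded above by $(1-z)^{b-1}$, giving
\[
B_z(a,b) \;\leq\; (1-z)^{b-1}\int_0^z t^{a-1}\,dt \;=\; \frac{z^a(1-z)^{b-1}}{a}.
\]
Dividing by $B(a,b)=\Gamma(a)\Gamma(b)/\Gamma(a+b)$ and using $\Gamma(a+1)=a\Gamma(a)$ reduces the claim to $\Gamma(a+b)/\Gamma(a+1) \leq a^{b-1}$. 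I will deduce this directly from log-convexity of $\Gamma$: writing $a+b$ as the convex combination $(1-b)a + b(a+1)$ yields $\log\Gamma(a+b) \leq (1-b)\log\Gamma(a) + b\log\Gamma(a+1) = \log\Gamma(a) + b\log a$, hence $\Gamma(a+b) \leq a^b\Gamma(a) = a^{b-1}\Gamma(a+1)$. (This is one form of Gautschi's inequality.)

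For asymptotic tightness with $b, z$ fixed and $a\to\infty$, I will use the substitution $t=z(1-s)$ to rewrite
\[
B_z(a,b) \;=\; z^a\int_0^1(1-s)^{a-1}(1-z+zs)^{b-1}\,ds.
\]
The factor $(1-s)^{a-1}$ concentrates near $s=0$, where the continuous factor $(1-z+zs)^{b-1}$ takes value $(1-z)^{b-1}$. Splitting at small $\epsilon>0$ and using that $s\mapsto(1-z+zs)^{b-1}$ is decreasing (since $b-1<0$), the tail on $[\epsilon,1]$ is at most $(1-z)^{b-1}(1-\epsilon)^a/a$, negligible relative to $1/a$; the head on $[0,\epsilon]$ sandwiches $a B_z(a,b)/z^a$ between $(1-z+z\epsilon)^{b-1}(1-(1-\epsilon)^a)$ and $(1-z)^{b-1}(1-(1-\epsilon)^a)$. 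Taking $a\to\infty$ and then $\epsilon\to0$ gives $a B_z(a,b)/z^a \to (1-z)^{b-1}$. Combined with the Stirling asymptotic $\Gamma(a+b)/\Gamma(a+1) \sim a^{b-1}$, this shows the bound in~\eqref{eq:iest} is tight.

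For~\eqref{eq:iest2}, I specialize to $z=1/(1+\alpha^2)$, $a=n/2$, $b=1/2$. A routine computation gives $z^a=(1+\alpha^2)^{-n/2}$, $(1-z)^{b-1}=\sqrt{(1+\alpha^2)/\alpha^2}$, $a^{b-1}=\sqrt{2/n}$, and $\Gamma(1/2)=\sqrt{\pi}$; multiplying these together with the prefactor $1/2$ produces exactly $\sqrt{(1+\alpha^2)/(2\pi n\alpha^2)}(1+\alpha^2)^{-n/2}$, and tightness at fixed $\alpha$ as $n\to\infty$ is inherited since $a=n/2\to\infty$.

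The main obstacle is the tightness step: I need to confirm that the auxiliary factor $(1-z+zs)^{b-1}$, which blows up near $s=1$ when $b<1$, does not interfere with the concentration of $(1-s)^{a-1}$ at $s=0$. Its monotonicity as a function of $s$ is precisely what allows the sandwich above to close cleanly, so the log-convexity step and this monotonicity are the two key ingredients.
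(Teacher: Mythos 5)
Your proof is correct, and it follows the same two-ingredient decomposition as the paper — an upper bound on $B_z(a,b)$ together with the lower bound $B(a,b)\ge\Gamma(b)a^{-b}$ — but where the paper simply cites Wendel (1948) for the Gamma-ratio inequality and Lopez--Sesma (1999) for the incomplete-beta bound and its tightness, you prove both ingredients from scratch. Your log-convexity derivation of $\Gamma(a+b)\le a^{b}\Gamma(a)$ is exactly Wendel's inequality (and Stirling gives its asymptotic sharpness), and your monotonicity bound $B_z(a,b)\le(1-z)^{b-1}z^{a}/a$ plus the substitution $t=z(1-s)$ with the $\epsilon$-sandwich is a clean Laplace-type replacement for the cited asymptotic expansion. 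What your route buys is a fully self-contained, elementary proof; what the paper's route buys is brevity. One small correction to your closing remark: the factor $(1-z+zs)^{b-1}$ does \emph{not} blow up near $s=1$ — there it equals $1^{b-1}=1$, and since $b-1<0$ it is decreasing in $s$ with maximum $(1-z)^{b-1}$ at $s=0$, so it is bounded on all of $[0,1]$; the monotonicity you invoke is still the right tool for the sandwich, but there is no singularity to worry about. Incidentally, your computation for \eqref{eq:iest2} correctly uses $\Gamma(1/2)=\sqrt{\pi}$, whereas the paper's proof contains the typo ``$\Gamma(1/2)=\pi$''.
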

\begin{proof}
\citet{Wendel} proved that for every $a>0$ and $b\in(0,1)$ 
$
\frac{\Gamma(a+b)}{a^b \Gamma(a)} \leq 1, 
$
and $\lim\limits_{a\to\infty} \frac{\Gamma(a+b)}{a^b \Gamma(a)} = 1$. This is equivalent to 
\begin{equation}\label{eq:Wendel}
B(a,b) \geq \Gamma(b)a^{-b} \quad \text{and} \quad \lim\limits_{a\to\infty} \frac{B(a,b)}{\Gamma(b)a^{-b}} = 1.
\end{equation} 
Asymptotic expansion \citep{Lopez} implies that 
$$
B_x(a,b) \leq \frac{x^a(1-x)^{b-1}}{a},
$$ 
and $\lim\limits_{a\to\infty} \frac{B_x(a,b) a}{x^a(1-x)^{b-1}} = 1$. Hence, inequality \eqref{eq:iest} holds and is asymptotically tight. Applying it with $z=\frac{1}{1+\alpha^2}$, $a=n/2$, $b=1/2$, and using the fact that $\Gamma(1/2)=\pi$, we get \eqref{eq:iest2}.
\end{proof}

Theorem \ref{th:normalnew} and Proposition \ref{prop:iest} implies the following corollary, which provide a simple but asymptotically tight estimate for $M$.

\begin{corollary}\label{cor:normalnew}
Let points $\boldsymbol{ x}_1, \dots, \boldsymbol{ x}_M$ are i.i.d points from standard normal distribution. For any $\delta>0$, if
\begin{equation}\label{eq:normalnew2}
M < \sqrt[4]{\frac{2 \pi n \alpha^2 \delta^2}{1+\alpha^2}}(1+\alpha^2)^{n/4},
\end{equation}
then the expected number of $\alpha$-inseparable pairs in set $F=\{\boldsymbol{ x}_1, \dots, \boldsymbol{ x}_M\}$ is less than $\delta$. In particular, \eqref{eq:normalnew2} implies that set $F$ is $\alpha$-Fisher separable with probability greater than $1-\delta$. 
\end{corollary}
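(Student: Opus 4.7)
The plan is to combine the exact two-point estimate \eqref{eq:normalexact}, the asymptotic bound \eqref{eq:iest2} from Proposition \ref{prop:iest}, and the generic sufficient condition \eqref{eq:MBound} of Theorem \ref{th:principle}. The work is essentially algebraic: bound $f(n,\alpha)$ and simplify the resulting expression for $M$.

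First I would record the form of the two-point probability supplied by \eqref{eq:normalexact}, namely
$$
f(n,\alpha) := {\mathbb P}[\alpha(\boldsymbol{x},\boldsymbol{x}) \leq (\boldsymbol{x},\boldsymbol{y})] = \tfrac{1}{2}\, I_{\frac{1}{1+\alpha^2}}\!\left(\tfrac{n}{2}, \tfrac{1}{2}\right),
$$
which is valid for i.i.d.\ standard normal $\boldsymbol{x},\boldsymbol{y}$. Next, applying the inequality \eqref{eq:iest2} of Proposition \ref{prop:iest} yields the explicit upper bound
$$
f(n,\alpha) \leq \sqrt{\tfrac{1+\alpha^2}{2\pi n \alpha^2}}\,(1+\alpha^2)^{-n/2}.
$$
This makes \eqref{eq:2pointBound0} quantitative with an explicit $f$, so the hypotheses of Theorem \ref{th:principle} are satisfied on the set $F=\{\boldsymbol{x}_1,\dots,\boldsymbol{x}_M\}$ with $\boldsymbol{c}=\boldsymbol{0}$.

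Then I would invoke the simpler sufficient condition \eqref{eq:MBound}, $M \leq \sqrt{\delta/f(n,\alpha)}$, with the upper bound just obtained. Direct substitution gives
$$
M \leq \sqrt{\delta}\left(\tfrac{1+\alpha^2}{2\pi n \alpha^2}\right)^{-1/4}(1+\alpha^2)^{n/4} = \sqrt[4]{\tfrac{2\pi n\alpha^2 \delta^2}{1+\alpha^2}}\,(1+\alpha^2)^{n/4},
$$
which is exactly the bound \eqref{eq:normalnew2}. Theorem \ref{th:principle} then delivers the two conclusions: the expected number of $\alpha$-inseparable pairs is less than $\delta$, and, as a Markov-type consequence already explained in the proof of Theorem \ref{th:principle}, $F$ is $\alpha$-Fisher separable with probability exceeding $1-\delta$.

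There is no serious obstacle: the exact probability and the required asymptotically tight bound on the regularized incomplete beta function are both supplied by earlier results (equation \eqref{eq:normalexact} and Proposition \ref{prop:iest}), and Theorem \ref{th:principle} converts a two-point probability bound into a bound on $M$ in one line. The only thing worth double-checking is the exponent bookkeeping in the simplification, so that the factor $\sqrt{\delta}$ ends up as $\delta^{2/4}$ inside the fourth root, giving the displayed $\delta^2$ in \eqref{eq:normalnew2}.
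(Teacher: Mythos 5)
Your proposal is correct and follows exactly the paper's route: the corollary is obtained by bounding the exact two-point probability \eqref{eq:normalexact} via inequality \eqref{eq:iest2} of Proposition \ref{prop:iest} and then applying the sufficient condition \eqref{eq:MBound} of Theorem \ref{th:principle}, and your exponent bookkeeping (absorbing $\sqrt{\delta}$ as $\delta^2$ under the fourth root) checks out. No gaps.
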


The bound in \eqref{eq:normalnew2} is weaker than in \eqref{eq:normalnew}, but the ratio of these bounds converges to $1$ if $\alpha$ is fixed and $n$ goes to infinity. In particular, the optimal exponent \eqref{eq:bdef} for standard normal distribution is
$$
b(\alpha) = \frac{1}{4}\log(1+\alpha^2).
$$
Corollary \ref{cor:normalnew} is an improvement over Theorem \ref{th:normalknown} if $n>\frac{1+\alpha^2}{2\pi\alpha^2}$. 

\begin{example}\label{ex:normal}
If $\delta=0.01$, $\alpha = 0.9$ and $n=100$, then \eqref{eq:normalknown} in Theorem \ref{th:normalknown} reduces to $M<276,671$, \eqref{eq:normalnew2} in Corollary \ref{cor:normalnew} reduces to $M<1,132,950$, while the optimal bound \eqref{eq:normalnew} in Theorem \ref{th:normalnew} is $M<1,141,060$.
\end{example}

\subsection{Optimal separation theorem for explicitly given distribution}

This section establishes optimal separation theorem if the rotation invariant distribution is not necessary standard normal but is explicitly given.

\begin{proposition}\label{prop:sph_exact}
Let ${\boldsymbol x}$ and ${\boldsymbol y}$ be two points selected independently from spherically invariant distributions with the same center. Then
\begin{equation}\label{eq:pest}
\begin{split}
&{\mathbb P}[\alpha(\boldsymbol{ x},\boldsymbol{ x}) \leq (\boldsymbol{ x},\boldsymbol{ y})] = \\
&\;\;\;\;\; \frac{\alpha}{B(\frac{n-1}{2}, \frac{1}{2})}\int_0^{1/\alpha} (1-\alpha^2 t^2)^{\frac{n-3}{2}}{\mathbb P}\left[\frac{||{\boldsymbol x}||}{||{\boldsymbol y}||} \leq t\right] dt,
\end{split}
\end{equation}
where $B(.)$ is the beta function.
\end{proposition}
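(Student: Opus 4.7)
Without loss of generality, the common center can be taken to be the origin (shift both $\boldsymbol{x}$ and $\boldsymbol{y}$). The plan is to exploit the rotational symmetry to reduce the inseparability event to a condition on two independent, one-dimensional random variables: the ratio of magnitudes $S=\|\boldsymbol{x}\|/\|\boldsymbol{y}\|$ and the cosine $U=\cos\theta$ of the angle between $\boldsymbol{x}$ and $\boldsymbol{y}$. First, I would rewrite the inseparability condition $\alpha(\boldsymbol{x},\boldsymbol{x})\le(\boldsymbol{x},\boldsymbol{y})$ as $\alpha\|\boldsymbol{x}\|\le\|\boldsymbol{y}\|\cos\theta$, or equivalently $U\ge\alpha S$. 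In particular the event forces $U\ge 0$.

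Next, because each of $\boldsymbol{x}$, $\boldsymbol{y}$ has a spherically invariant distribution, one can decompose each into an independent radial part and a uniformly distributed direction on $\mathbb{S}^{n-1}$. Hence, conditionally on the magnitudes $\|\boldsymbol{x}\|$ and $\|\boldsymbol{y}\|$, the directions $\boldsymbol{x}/\|\boldsymbol{x}\|$ and $\boldsymbol{y}/\|\boldsymbol{y}\|$ are independent and uniform on the sphere, so $U$ is independent of $(\|\boldsymbol{x}\|,\|\boldsymbol{y}\|)$, and thus independent of $S$. The distribution of the inner product of two independent uniform unit vectors on $\mathbb{S}^{n-1}$ is a classical fact: $U$ has density
\[
f_U(u)=\frac{(1-u^2)^{(n-3)/2}}{B\!\left(\frac{n-1}{2},\frac{1}{2}\right)},\qquad u\in[-1,1],
\]
which by rotation invariance reduces to the marginal of the first coordinate of a uniformly random point on $\mathbb{S}^{n-1}$.

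With $U$ and $S$ independent, the desired probability becomes
\[
\mathbb{P}[U\ge\alpha S]=\int_{-1}^{1}\mathbb{P}[S\le u/\alpha]\,f_U(u)\,du.
\]
Since $S\ge 0$, the integrand vanishes for $u\le 0$, so the lower limit can be replaced by $0$. Substituting $t=u/\alpha$ (so $du=\alpha\,dt$, $1-u^2=1-\alpha^2 t^2$, and the upper limit becomes $1/\alpha$) yields exactly
\[
\mathbb{P}[\alpha(\boldsymbol{x},\boldsymbol{x})\le(\boldsymbol{x},\boldsymbol{y})]=\frac{\alpha}{B(\frac{n-1}{2},\frac{1}{2})}\int_{0}^{1/\alpha}(1-\alpha^2 t^2)^{(n-3)/2}\,\mathbb{P}\!\left[\tfrac{\|\boldsymbol{x}\|}{\|\boldsymbol{y}\|}\le t\right]\,dt.
\]
The only delicate step is the independence of $U$ from $S$ and the explicit form of $f_U$; both follow from the spherical-symmetry decomposition into independent radial and angular factors, and the known density of the first coordinate of a uniform point on $\mathbb{S}^{n-1}$.
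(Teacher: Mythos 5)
Your proof is correct, and it reaches \eqref{eq:pest} by a route that is recognizably the same geometry but a different decomposition than the paper's. The paper also reduces the event to $\alpha t\le\cos\beta$ with $t=\|\boldsymbol{x}\|/\|\boldsymbol{y}\|$, but then conditions on $t$: it expresses ${\mathbb P}[\cos\beta\ge\alpha t\,|\,t]$ as the relative area of a hyperspherical cap, quoted as $\tfrac12 I_{1-\alpha^2t^2}\left(\tfrac{n-1}{2},\tfrac12\right)$ via an external reference, integrates this against the \emph{density} of $t$, and finally differentiates the incomplete beta function and integrates by parts to trade the density of $t$ for its distribution function. You instead condition on the cosine $U$, using its explicit density $(1-u^2)^{(n-3)/2}/B\left(\tfrac{n-1}{2},\tfrac12\right)$ (independence of $U$ from $S$ following from the radial--angular factorization of spherically invariant laws), and a single substitution $u=\alpha t$ lands directly on the target formula. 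Your ordering of the conditioning performs implicitly the Fubini swap that the paper carries out explicitly via integration by parts, so you avoid both the cap-area citation and the differentiation step; what you rely on instead is the standard density of the first coordinate of a uniform point on $\mathbb{S}^{n-1}$, which you state correctly including its normalization. The only points worth making explicit are that $\|\boldsymbol{x}\|,\|\boldsymbol{y}\|>0$ almost surely (so the division defining $S$ and $U$ is licit) and that the boundary value $u=0$ contributes nothing since ${\mathbb P}[S\le 0]=0$; you note the latter, and neither issue is treated more carefully in the paper's own proof.
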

\begin{proof} Note that
\begin{equation*}
\begin{split}
&{\mathbb P}[\alpha(\boldsymbol{ x},\boldsymbol{ x}) \leq (\boldsymbol{ x},\boldsymbol{ y})] = \\
&\;\;\;\;\; {\mathbb P}[\alpha||\boldsymbol{ x}||^2 \leq ||\boldsymbol{ x}|| \cdot ||\boldsymbol{ y}|| \cos\beta ] =
{\mathbb P}\left[\alpha t \leq \cos\beta \right],
\end{split}
\end{equation*}
where $t = \frac{||{\boldsymbol x}||}{||{\boldsymbol y}||}$, and $\beta$ is an angle between ${\boldsymbol x}$ and ${\boldsymbol y}$. By spherical invariance, the last probability is equal to the ratio of the area of the hyperspherical cap with angle $\beta_0=\arccos(\alpha t) = \arcsin(\sqrt{1-\alpha^2 t^2})$
to the area of the whole hypersphere, provided that $t \leq \frac{1}{\alpha}$. By \citep{Li}, this ratio is equal to $\frac{1}{2}I_{\sin^2\beta_0}\left(\frac{n-1}{2}, \frac{1}{2}\right)$, where
$I_z(a,b)$ is given by \eqref{eq:regincbeta}.
 Hence, 
$$
{\mathbb P}\left[\alpha t \leq \cos\beta \right] = \frac{1}{2}\int_0^{1/\alpha} I_{1-\alpha^2 t^2}\left(\frac{n-1}{2}, \frac{1}{2}\right) u(t) dt,
$$  
where $u$ is the density of the distribution of $\frac{||{\boldsymbol x}||}{||{\boldsymbol y}||}$. 

Using the formula for density of beta distribution, we get
\begin{equation*}
\begin{split}
&\frac{d}{dt}I_{1-\alpha^2 t^2}\left(\frac{n-1}{2}, \frac{1}{2}\right) =  \\
&\;\;\;\;\; \frac{(1-\alpha^2 t^2)^{\frac{n-1}{2}-1}(1-(1-\alpha^2t^2))^{1/2-1}}{B(\frac{n-1}{2}, \frac{1}{2})}\frac{d}{dt}(1-\alpha^2t^2) =\\
&\;\;\;\;\; \frac{-2\alpha(1-\alpha^2 t^2)^{\frac{n-3}{2}}}{B(\frac{n-1}{2}, \frac{1}{2})},
\end{split}
\end{equation*}
where $B(a,b) = \int_0^1 z^{a-1}(1-z)^{b-1}dz$ is the beta function. Hence, integration by parts yields \eqref{eq:pest}.
\end{proof}

If ${\boldsymbol x}$ has density given by \eqref{eq:sphinv}, then $||{\boldsymbol x}||$ has density given by $C_n r^{n-1} \rho(r)$, where $C_n=\left(\int_0^\infty r^{n-1} \rho(r) dr\right)^{-1}$ is the normalization constant. 
Hence,
\begin{equation}\label{eq:hdef}
{\mathbb P}\left[\frac{||{\boldsymbol x}||}{||{\boldsymbol y}||} \leq t\right] = h(n,t) := \frac{\int_0^\infty y^{n-1} \rho(y) dy \int_0^{ty} x^{n-1} \rho(x) dx}{\left(\int_0^\infty r^{n-1} \rho(r) dr\right)^2},
\end{equation}
where $\rho$ is defined in \eqref{eq:sphinv}.
Hence, Proposition \ref{prop:sph_exact} in combination Theorem \ref{th:principle} implies the following optimal separation theorem.

\begin{theorem}\label{th:sepfixed} 
Let $\delta>0$, $\alpha \in (0,1]$, and let $F=\{\boldsymbol{x}_1, \ldots , \boldsymbol{x}_M\}$ be a set of $M$ i.i.d. random points from a spherically invariant distribution in ${\mathbb R}^n$. 
Then the expected number of $\alpha$-inseparable pairs from set $F$ is less than $\delta$ if and only if
\begin{equation}\label{eq:sepfixed}
\begin{split}
&M < \frac{1}{2} + \sqrt{\frac{1}{4}+\frac{\delta}{p(n,\alpha)}}, \\
& \mbox{where  } p(n,\alpha) = \frac{\alpha}{B(\frac{n-1}{2}, \frac{1}{2})}\int_0^{1/\alpha} (1-\alpha^2 t^2)^{\frac{n-3}{2}} 
h(n,t) dt,  
\end{split}
\end{equation}
and $h(n,t)$ is defined in \eqref{eq:hdef}.
In particular, \eqref{eq:sepfixed} implies that $F$ is $\alpha$-Fisher separable with probability greater than $1-\delta$.
\end{theorem}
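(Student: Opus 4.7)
The proof will be essentially a direct synthesis of two ingredients already developed in the excerpt: the pairwise probability formula of Proposition \ref{prop:sph_exact}, and the optimal counting bound of Corollary \ref{cor:iidcase}. The plan is to first identify the exact value of the two-point inseparability probability as the function $p(n,\alpha)$ appearing in the theorem, and then feed that value into the necessary-and-sufficient bound \eqref{eq:MBoundexact}.

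First I would fix two i.i.d. points $\boldsymbol{x},\boldsymbol{y}$ drawn from the spherically invariant distribution with density \eqref{eq:sphinv}. Because the distribution is rotation-invariant, the marginal distribution of $\|\boldsymbol{x}\|$ has density $C_n r^{n-1}\rho(r)$ with $C_n=\bigl(\int_0^\infty r^{n-1}\rho(r)\,dr\bigr)^{-1}$, and similarly for $\|\boldsymbol{y}\|$; since $\|\boldsymbol{x}\|$ and $\|\boldsymbol{y}\|$ are independent, the CDF of the ratio $\|\boldsymbol{x}\|/\|\boldsymbol{y}\|$ is obtained by conditioning on $\|\boldsymbol{y}\|=y$ and integrating, which yields exactly the expression $h(n,t)$ in \eqref{eq:hdef}. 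Substituting $h(n,t)$ into the integral representation from Proposition \ref{prop:sph_exact} gives
\[
{\mathbb P}[\alpha(\boldsymbol{x},\boldsymbol{x}) \le (\boldsymbol{x},\boldsymbol{y})] = p(n,\alpha),
\]
with $p(n,\alpha)$ as defined in \eqref{eq:sepfixed}. This is the exact, not merely upper-bound, value of the pairwise inseparability probability.

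Next I would apply Corollary \ref{cor:iidcase} with $\boldsymbol{c}=\boldsymbol{0}$ and $f(n,\alpha):=p(n,\alpha)$. The corollary asserts that, in the i.i.d. case where the pairwise probability is known exactly, the expected number of $\alpha$-inseparable pairs in $F$ is less than $\delta$ if and only if the inequality \eqref{eq:MBoundexact} holds, i.e.,
\[
M < \tfrac{1}{2}+\sqrt{\tfrac{1}{4}+\tfrac{\delta}{p(n,\alpha)}},
\]
which is precisely \eqref{eq:sepfixed}. The final assertion that $F$ is $\alpha$-Fisher separable with probability at least $1-\delta$ is then immediate from Markov's inequality applied to the nonnegative integer-valued count of inseparable pairs, as already noted in the proof of Theorem \ref{th:principle}.

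There is no real obstacle: the only subtlety is making sure that Corollary \ref{cor:iidcase} is applicable, which requires that the two-point probability $f(n,\alpha)$ not depend on the particular choice of $\boldsymbol{x},\boldsymbol{y}\in F$. This is automatic because the points of $F$ are i.i.d. The mild computational point is verifying that the conditioning argument for $h(n,t)$ produces exactly the double integral in \eqref{eq:hdef}; this is a routine change of variable $x=ty$ inside the inner integral, and I would present it in one line rather than expanded calculation.
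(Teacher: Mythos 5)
Your proposal is correct and follows essentially the same route as the paper: substitute the ratio CDF $h(n,t)$ from \eqref{eq:hdef} into the exact pairwise probability of Proposition \ref{prop:sph_exact} to identify $p(n,\alpha)$, then invoke the necessary-and-sufficient bound \eqref{eq:MBoundexact} via Corollary \ref{cor:iidcase}. No gaps.
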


We next apply Theorem \ref{th:sepfixed} to some famous rotation invariant distributions.

\subsection{Uniform distribution in a ball}

We may assume that the ball has radius $1$.
Uniform distibution in the unit ball is given by \eqref{eq:sphinv} with $\rho(r) = 1, 0 \leq r \leq 1$.
Substituting this into \eqref{eq:hdef} and integrating, we get
$$
h(n,t) = 
\begin{cases} 
t^n/2, \quad\quad 0 \leq t \leq 1\\
1-t^{-n}/2, \quad 1\leq t.
\end{cases}
$$
Hence $p(n,\alpha)$ in \eqref{eq:sepfixed} is given by:
\begin{equation}\label{eq:ballnew}
\begin{split}
p(n,\alpha) = & \frac{\alpha}{2 B(\frac{n-1}{2}, \frac{1}{2})}\left(\int_0^1 (1-\alpha^2 t^2)^{\frac{n-3}{2}} 
t^n dt  \right. +  \\ 
&\left. \int_1^{1/\alpha} (1-\alpha^2 t^2)^{\frac{n-3}{2}} 
(2-t^{-n}) dt \right).
\end{split}
\end{equation}
Note that the answer may be written down explicitly using hypergeometric functions, but we find it more convenient to work with the integral expression.

With Theorem \ref{th:sepfixed}, this implies the following result.

\begin{theorem}\label{th:ballnew} 
Let $\alpha\in (0, 1]$, and let points $\boldsymbol{ x}_1, \dots, \boldsymbol{ x}_M$ be i.i.d points from uniform distribution in a ball. For any $\delta>0$, the expected number of $\alpha$-inseparable pairs from set $F=\left\{\boldsymbol{ x}_1, \dots, \boldsymbol{ x}_M\right\}$ is less than $\delta$ if and only if
$
M < \frac{1}{2} + \sqrt{\frac{1}{4}+\frac{\delta}{p(n,\alpha)}},
$
where $p(n,\alpha)$ is given by \eqref{eq:ballnew}.
In particular, in this case $F$ is $\alpha$-Fisher separable with probability greater than $1-\delta$.
\end{theorem}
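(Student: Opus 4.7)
The strategy is to apply Theorem~\ref{th:sepfixed} directly, since the uniform distribution in the unit ball is a spherically invariant distribution with the radial profile $\rho(r)=1$ on $[0,1]$ and $\rho(r)=0$ for $r>1$, centered at the origin (which matches $\boldsymbol{c}=\boldsymbol{0}$ in Definition~\ref{def:Fisher}). The only substantive task is to compute $h(n,t)$ from \eqref{eq:hdef} in closed form and to verify that substitution into $p(n,\alpha)$ from \eqref{eq:sepfixed} yields exactly the expression \eqref{eq:ballnew}.

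First I would observe that $\int_0^\infty r^{n-1}\rho(r)\,dr = \int_0^1 r^{n-1}\,dr = 1/n$, so the marginal density of $\|\boldsymbol{x}\|$ is $n r^{n-1}$ on $[0,1]$ and the corresponding cdf is $F(s)=\min(s,1)^n$ for $s\ge 0$. Conditioning on $\|\boldsymbol{y}\|=y$, the ratio cdf becomes
\[
h(n,t) \;=\; \int_0^1 n y^{n-1} F(ty)\,dy.
\]
The integrand is piecewise because $F$ saturates at $1$ once its argument exceeds $1$. For $t\le 1$, one has $ty\le 1$ throughout $y\in[0,1]$, and a direct integration gives $h(n,t)=t^n/2$. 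For $t>1$, split the $y$-integral at $y=1/t$: the contribution from $[0,1/t]$ is $t^{-n}/2$ and the contribution from $[1/t,1]$ is $1-t^{-n}$, yielding $h(n,t)=1-t^{-n}/2$.

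With this piecewise formula in hand, I would substitute into \eqref{eq:sepfixed}. Since $\alpha\in(0,1]$, the integration domain $[0,1/\alpha]$ contains the breakpoint $t=1$, so the single integral defining $p(n,\alpha)$ splits naturally into $\int_0^1$ and $\int_1^{1/\alpha}$. On the first piece $h(n,t)=t^n/2$ produces $\tfrac12\int_0^1(1-\alpha^2 t^2)^{(n-3)/2}t^n\,dt$; on the second piece the factor $(1-t^{-n}/2)$, pulled through the $1/2$, gives $\tfrac12\int_1^{1/\alpha}(1-\alpha^2 t^2)^{(n-3)/2}(2-t^{-n})\,dt$. Together these reproduce \eqref{eq:ballnew} exactly. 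The stated bound on $M$ and the Fisher separability conclusion then follow immediately from the ``if and only if'' statement in Theorem~\ref{th:sepfixed}.

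The argument is essentially a routine computation; there is no genuine obstacle. The one point requiring care is correctly handling the piecewise structure of $h(n,t)$ at $t=1$ for $\alpha<1$, together with the fact that the optimality (``if and only if'') transfers from Theorem~\ref{th:sepfixed} because Proposition~\ref{prop:sph_exact} provides the two-point inseparability probability as an \emph{equality}, not merely an upper bound.
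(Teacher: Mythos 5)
Your proposal is correct and follows essentially the same route as the paper: compute $h(n,t)$ from \eqref{eq:hdef} for $\rho(r)=1$ on $[0,1]$, obtaining $h(n,t)=t^n/2$ for $t\le 1$ and $1-t^{-n}/2$ for $t\ge 1$, substitute into Theorem~\ref{th:sepfixed}, and split the integral at $t=1$ to recover \eqref{eq:ballnew}. Your closing remark that the ``if and only if'' transfers because Proposition~\ref{prop:sph_exact} gives an equality is exactly the point the paper relies on.
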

 
To find the asymptotic growth of \eqref{eq:ballnew} as $n \to \infty$, we will use the Laplace's method. Informally, it states that, if function $h(t)$ has a unique maximum on $[a,b]$ attained at $t=c$, and $\phi(c) \neq 0$, then, for large $n$, the value of integral
\begin{equation}\label{eq:integral}
I(n) = \int_a^b \phi(t) e^{n h(t)} dt
\end{equation}
depends mainly on $\phi(c)$ and the behaviour of $h(t)$ is the neighbourhood of $c$. We can then replace in \eqref{eq:integral} $\phi(t)$ by $\phi(c)$ and $h(t)$ by its Taylor expansion at $t=c$ up to the first non-zero term, and integrate. We get
\begin{equation}\label{eq:Laplace1}
I(n) \sim \phi(c)e^{n h(c)}\sqrt{\frac{2\pi}{n|h''(c)|}}
\end{equation}
if $a<c<b$, $h'(c)=0$, $h''(c) \neq 0$,
$$
I(n) \sim \phi(c)e^{n h(c)}\sqrt{\frac{\pi}{2n|h''(c)|}}
$$
if $c=a$ or $c=b$, and $h'(c)=0$, $h''(c) \neq 0$, and
\begin{equation}\label{eq:Laplace3}
I(n) \sim \frac{\phi(c)e^{n h(c)}}{n |h'(c)|}
\end{equation}
if $c=a$ or $c=b$, and $h'(c) \neq 0$, we refer to \citet[Theorem 1, p. 58]{Wong} for a formal statement and proof.

Applying this method to \eqref{eq:ballnew}, we get the following estimate.

\begin{proposition}\label{prop:ballest} 
Let $p(n,\alpha)$ be given by \eqref{eq:ballnew} and $n>3$. 
\begin{itemize}
\item[I.] If $0 < \alpha < \frac{\sqrt{2}}{2}$, then
\begin{equation}\label{eq:ballest1}
\begin{split}
&p(n,\alpha) \leq q(n,\alpha) := \sqrt{\frac{1}{2\pi}}\frac{1}{\alpha(1-2\alpha^2)}\frac{n^{3/2}}{(n-3)^2}(1-\alpha^2)^{\frac{n+3}{2}}, \\ & \quad \text{and} \quad p(n,\alpha) \sim q(n,\alpha), 
\end{split}
\end{equation} 
\item[II.] If $\frac{\sqrt{2}}{2} < \alpha \leq 1$, then
\begin{equation}\label{eq:ballest2}
p(n,\alpha) \leq \frac{1}{2}(2\alpha)^{-n}, \quad \text{and} \quad p(n,\alpha) \sim \frac{1}{2}(2\alpha)^{-n} 
\end{equation}
(with equality for $\alpha=1$).
\end{itemize}
\end{proposition}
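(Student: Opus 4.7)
The plan is to apply the Laplace-method estimates \eqref{eq:Laplace1}--\eqref{eq:Laplace3} to the two integrals
$J_1 := \int_0^1 (1-\alpha^2 t^2)^{(n-3)/2} t^n\,dt$ and $J_2 := \int_1^{1/\alpha} (1-\alpha^2 t^2)^{(n-3)/2}(2-t^{-n})\,dt$ appearing in \eqref{eq:ballnew}. Writing the first integrand in the form $\phi_1(t)\,e^{n h_1(t)}$ with $h_1(t) = \log t + \tfrac{1}{2}\log(1-\alpha^2 t^2)$ and $\phi_1(t) = (1-\alpha^2 t^2)^{-3/2}$, the equation $h_1'(t) = (1-2\alpha^2 t^2)/[t(1-\alpha^2 t^2)] = 0$ has the unique positive root $t^\ast = 1/(\alpha\sqrt{2})$, which lies in $(0,1)$ iff $\alpha > \sqrt{2}/2$. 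This dichotomy is precisely what splits the claim into Part~I and Part~II.

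For \textbf{Part~II} ($\sqrt{2}/2<\alpha\leq 1$), $h_1$ has an interior maximum at $t^\ast$, where one computes $h_1(t^\ast) = -\log(2\alpha)$, $h_1''(t^\ast) = -8\alpha^2$, and $\phi_1(t^\ast) = 2\sqrt{2}$. Interior Laplace \eqref{eq:Laplace1} then gives $J_1 \sim (\sqrt{2}/\alpha)(2\alpha)^{-n}\sqrt{\pi/n}$; the second integral is bounded by its value at the left endpoint, $J_2 = O(n(1-\alpha^2)^{(n-3)/2})$, and the elementary identity $(1-\alpha^2)(2\alpha)^2 = 1 - (2\alpha^2-1)^2 < 1$ shows that $J_2$ is exponentially subdominant. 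Combining with $B(\tfrac{n-1}{2},\tfrac{1}{2}) \sim \sqrt{2\pi/n}$ (Wendel's inequality, already used in Proposition~\ref{prop:iest}) yields $p(n,\alpha) \sim \tfrac{1}{2}(2\alpha)^{-n}$, and the non-asymptotic upper bound $p(n,\alpha) \leq \tfrac{1}{2}(2\alpha)^{-n}$ is precisely the pre-existing estimate \eqref{eq:2pointsphere}.

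For \textbf{Part~I} ($0<\alpha<\sqrt{2}/2$), $h_1$ is strictly increasing on $[0,1]$ and both $J_1$ and $J_2$ concentrate at the common endpoint $t=1$. Performing the substitutions $t = 1 - s/n$ in $J_1$ and $t = 1 + s/n$ in $J_2$, and using dominated convergence (equivalently, boundary Laplace \eqref{eq:Laplace3}), one obtains $J_1 \sim (1-\alpha^2)^{(n-1)/2}/[n(1-2\alpha^2)]$ and $J_2 \sim (2-\alpha^2)(1-\alpha^2)^{(n-1)/2}/[n\alpha^2]$. The algebraic identity $\alpha^2 + (2-\alpha^2)(1-2\alpha^2) = 2(1-\alpha^2)^2$ collapses their sum to $2(1-\alpha^2)^{(n+3)/2}/[n\alpha^2(1-2\alpha^2)]$; multiplying by $\alpha/[2B(\tfrac{n-1}{2},\tfrac{1}{2})]$ and inserting the Wendel asymptotic gives exactly the leading behaviour of $q(n,\alpha)$ in \eqref{eq:ballest1}.

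The main obstacle is upgrading the Laplace-method asymptotic equivalences to the honest finite-$n$ upper bound $p(n,\alpha) \leq q(n,\alpha)$. The slightly inflated prefactor $n^{3/2}/(n-3)^2$ (which is $\sim n^{-1/2}$ for large $n$ but strictly larger for every finite $n>3$) must be chosen to absorb three error sources simultaneously: the Taylor-remainder corrections in $h_1, h_2$ away from $t=1$; the mismatch between the exponent $(n-3)/2$ and the $n/2$ natural to the Laplace expansion; and the one-sided deviation of $B(\tfrac{n-1}{2},\tfrac{1}{2})$ from $\sqrt{2\pi/n}$, for which Wendel's inequality $B(a,b)\geq \Gamma(b)a^{-b}$ supplies a clean explicit bound. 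Packaging each error as an explicit multiplicative factor bounded uniformly in $n>3$ by $(n^{3/2}/(n-3)^2)/n^{-1/2}$ will give the claimed inequality without loss of the asymptotic sharpness.
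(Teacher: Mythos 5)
Your decomposition and asymptotic analysis track the paper's own proof closely: the same split of \eqref{eq:ballnew} into the two integrals, the same Laplace-method dichotomy governed by whether $t^\ast=1/(\alpha\sqrt{2})$ lies in $(0,1)$, the same use of Wendel's inequality for the Beta-function prefactor, and the same observation that the finite-$n$ bound in Part~II is just \eqref{eq:2pointsphere}. Your constants check out (e.g.\ $h_1''(t^\ast)=-8\alpha^2$, $\phi_1(t^\ast)=2\sqrt{2}$ in your normalization versus the paper's $\phi(c)=(\sqrt{2}\alpha)^{-3}$ with $n\mapsto n-3$ --- the two decompositions give the same leading term), and the identity $\alpha^2+(2-\alpha^2)(1-2\alpha^2)=2(1-\alpha^2)^2$ that collapses the sum in Part~I is exactly the simplification the paper performs. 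So all of the ``$\sim$'' claims, and the Part~II upper bound, are established by essentially the paper's route.

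The genuine gap is the finite-$n$ inequality $p(n,\alpha)\le q(n,\alpha)$ in Part~I. You correctly flag it as the main obstacle, but your plan --- ``package each error as an explicit multiplicative factor bounded uniformly in $n>3$'' --- is not a proof: the Laplace method as invoked gives only asymptotic equivalence, and turning it into a uniform two-sided ratio bound requires exactly the pointwise estimates you have not supplied. The paper closes this by replacing the (concave) exponent with its tangent line at the endpoint and the prefactor with its maximum: for $I_1$ it uses $h(t)\le h(1)+h'(1)(t-1)$ and $t^3\le 1$ on $(0,1]$, and for $I_2$ it uses $\tfrac12\log(1-\alpha^2t^2)\le\tfrac12\log(1-\alpha^2)-\tfrac{\alpha^2}{1-\alpha^2}(t-1)$ together with $-\log t\ge-(t-1)$; each resulting integral is an exponential that integrates in closed form, producing the explicit factors $\frac{(1-\alpha^2)^{(n-1)/2}}{(1-2\alpha^2)(n-3)}$ and $(1-\alpha^2)^{(n-1)/2}\frac{(2-\alpha^2)n}{\alpha^2(n-3)^2}$, whose sum (after multiplying by the Wendel bound $\frac{\alpha}{2B(\frac{n-1}{2},\frac12)}\le\frac{\alpha\sqrt{n}}{2\sqrt{2\pi}}$ and weakening $\frac{n-3}{n}\le 1$) is exactly $q(n,\alpha)$ with its $n^{3/2}/(n-3)^2$ prefactor. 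Without some such explicit pointwise domination, your argument proves Part~I only up to the asymptotic statement, not the inequality valid for every $n>3$.
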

\begin{proof}
For the coefficient in \eqref{eq:ballnew}, \eqref{eq:Wendel} implies that
\begin{equation}\label{eq:coef}
\frac{\alpha}{2 B(\frac{n-1}{2}, \frac{1}{2})} \leq \frac{\alpha\sqrt{n}}{2\sqrt{2\pi}}, \quad \text{and} \quad \frac{\alpha}{2 B(\frac{n-1}{2}, \frac{1}{2})} \sim \frac{\alpha\sqrt{n}}{2\sqrt{2\pi}}.
\end{equation}
The first integral $I_1(n):=\int_0^1 (1-\alpha^2 t^2)^{\frac{n-3}{2}} 
t^n dt$ in \eqref{eq:ballnew} can be written in the form \eqref{eq:integral} with $h(t)=\frac{1}{2}\log(1-\alpha^2 t^2)+\log(t)$, $n := n-3$, $\phi(t)=t^3$. If $\frac{\sqrt{2}}{2} < \alpha \leq 1$, $h(t)$ attains maximum on $(0,1]$ at point $t=c:=\frac{1}{\sqrt{2}a}$, $h(c)=-\log(2\alpha)$, $h'(c)=0$, $h''(c)=-8\alpha^2$, $\phi(c)=(\sqrt{2}\alpha)^{-3}$, and \eqref{eq:Laplace1} implies that
\begin{equation}\label{eq:auxest1}
I_1(n) \sim (\sqrt{2}\alpha)^{-3}
e^{(n-3) (-\log(2\alpha))}
\sqrt{\frac{2\pi}{8(n-3)\alpha^2}} \sim \sqrt{\frac{2\pi}{n\alpha^2}}(2\alpha)^{-n}.
\end{equation}
If $0 < \alpha < \frac{\sqrt{2}}{2}$, $h(t)$ attains maximum on $(0,1]$ at point $t=1$, $h(1)=\frac{1}{2}\log(1-\alpha^2)$, $h'(1)=\frac{1-2\alpha^2}{1-\alpha^2}$, and inequalities $h(t) \leq h(1)+h'(1)(t-1), \, 0<t\leq 1$,  $\phi(t)=t^3 \leq 1, \, 0<t\leq 1$ imply that
\begin{equation}\label{eq:auxest2}
\begin{split}
I_1(t) \leq & \int_0^1 \exp\left[(n-3)(h(1)+h'(1)(t-1))\right] dt = \\
  & \frac{(1-\alpha^2)^{\frac{n-1}{2}}}{(1-2\alpha^2)(n-3)}\left(1-\exp\left(-\frac{(1-2\alpha^2)(n-3)}{1-\alpha^2}\right)\right) \leq \\ 
  &\frac{(1-\alpha^2)^{\frac{n-1}{2}}}{(1-2\alpha^2)(n-3)},
\end{split}
\end{equation}
and \eqref{eq:Laplace3} implies that
\begin{equation}\label{eq:auxest3}
I_1(t) \sim \frac{(1-\alpha^2)^{\frac{n-1}{2}}}{(1-2\alpha^2)(n-3)}.
\end{equation}
The second integral $I_2(n):=\int_1^{1/\alpha} (1-\alpha^2 t^2)^{\frac{n-3}{2}}(2-t^{-n})$ in \eqref{eq:ballnew} can be estimated similarly. Inequalities $\frac{1}{2}\log(1-\alpha^2t^2) \leq \frac{1}{2}\log(1-\alpha^2) - \frac{\alpha^2}{1-\alpha^2}(t-1)$ and $-\log(t)\geq -(t-1)$ imply that
\begin{equation}\label{eq:auxest4}
\begin{split}
&I_2(t) \leq \\
& \int_1^{1/\alpha}\!\!\! \exp\left[(n\!-\!3)\left(\frac{\log(1-\alpha^2)}{2} - \frac{\alpha^2(t-1)}{1-\alpha^2}\right)\right]\left(2 - e^{-n(t-1)}\right) dt =\\
  & (1\!-\! \alpha^2)^{\frac{n-1}{2}}
\left(\frac{2}{\alpha^2(n\!-\!3)}\!-\!\frac{1}{n\!-\!3\alpha^2}\!+\!\frac{\exp\left(\!-\frac{n-3\alpha^2}{\alpha+\alpha^2}\right)}{n-3\alpha^2}\!-\!\frac{\exp\left(\!-\frac{\alpha(n-3)}{1+\alpha}\right)}{\alpha^2(n-3)}\right)\! \leq \\
 & (1-\alpha^2)^{\frac{n-1}{2}}
\left(\frac{2}{\alpha^2(n-3)}-\frac{1}{n-3\alpha^2}\right) 
\leq (1-\alpha^2)^{\frac{n-1}{2}}\frac{(2-\alpha^2)n}{\alpha^2(n-3)^2},
\end{split}
\end{equation}
where the second inequality follows from the facts that $n-3\alpha^2 \geq \alpha^2(n-3)$ and $\exp\left(-\frac{n-3\alpha^2}{\alpha+\alpha^2}\right) \leq \exp\left(-\frac{\alpha(n-3)}{1+\alpha}\right)$.
Moreover, \eqref{eq:Laplace3} implies that the first inequality in \eqref{eq:auxest4} is asymptotically tight, and the asymptotic tightness of the second and third inequalities in \eqref{eq:auxest4} is straightforward, hence
\begin{equation}\label{eq:auxest5}
I_2(t) \sim (1-\alpha^2)^{\frac{n-1}{2}}\frac{(2-\alpha^2)n}{\alpha^2(n-3)^2}.
\end{equation}
If $0 < \alpha < \frac{\sqrt{2}}{2}$, the combination of \eqref{eq:coef}, \eqref{eq:auxest2}, and \eqref{eq:auxest4} yields
\begin{equation*}
\begin{split}
&p(n,\alpha) \leq \\
&\;\;\; \frac{\alpha\sqrt{n}}{2\sqrt{2\pi}}\frac{(1-\alpha^2)^{\frac{n-1}{2}}n}{(n-3)^2}\left(\frac{1}{1-2\alpha^2}\frac{n-3}{n} + \frac{2-\alpha^2}{\alpha^2}\right) \leq q(n,\alpha), 
\end{split}
\end{equation*}
where the second inequality follows from substituting $1$ instead of $\frac{n-3}{n}$ and simplifying. The $p(n,\alpha) \sim q(n,\alpha)$ part of \eqref{eq:ballest1} follow from \eqref{eq:coef}, \eqref{eq:auxest3}, and \eqref{eq:auxest5}.

The inequality in \eqref{eq:ballest2} follows from \eqref{eq:2pointsphere}. For $\frac{\sqrt{2}}{2} < \alpha \leq 1$, the $\sim$ part of \eqref{eq:ballest2} follows from \eqref{eq:coef}, \eqref{eq:auxest1}, and \eqref{eq:auxest5}.
\end{proof}

We conjecture that factor $\frac{n^{3/2}}{(n-3)^2}$ in \eqref{eq:ballest1} can be improved to a simpler factor $\sqrt{n}$, which would allow to remove the condition $n>3$, but this improvement is negligible for large $n$, and the $\sim$ part of \eqref{eq:ballest1} implies that asymptotically non-negligible improvement is impossible, and bound \eqref{eq:ballest1} is essentially the best possible if $0 < \alpha < \frac{\sqrt{2}}{2}$. Similarly, bound \eqref{eq:ballest2} is essentially the best possible if $\frac{\sqrt{2}}{2} < \alpha \leq 1$. 

Theorem \ref{th:ballnew} and Proposition \ref{prop:ballest} imply the following corollary. 
 
\begin{corollary}\label{cor:ballnewsimple} 
Let $\alpha\in \left(0, \frac{1}{\sqrt{2}}\right)$, $n>3$, and let  $F=\{\boldsymbol{ x}_1, \dots, \boldsymbol{ x}_M\}$ be the set of $M$ i.i.d points from uniform distribution in a ball. For any $\delta>0$, if
\begin{equation}\label{eq:ballnewsimple}
M \leq \sqrt[4]{2\pi}\sqrt{\delta \alpha (1-2\alpha^2)}\frac{n-3}{n^{3/4}}\left(\frac{1}{1-\alpha^2}\right)^{\frac{n+3}{4}},
\end{equation}
then the expected number of $\alpha$-inseparable pairs in $F$ is less than $\delta$. In particular, \eqref{eq:ballnewsimple} implies that $F$ is $\alpha$-Fisher separable with probability greater than $1-\delta$.
\end{corollary}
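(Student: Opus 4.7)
The plan is to combine two already-established results. Theorem \ref{th:ballnew} gives the sharp, necessary-and-sufficient condition $M < \tfrac{1}{2}+\sqrt{\tfrac{1}{4}+\delta/p(n,\alpha)}$ for the expected number of $\alpha$-inseparable pairs to be less than $\delta$. As noted in the discussion following Theorem \ref{th:principle} (compare \eqref{eq:MBound} versus \eqref{eq:MBoundexact}), this is implied by the simpler sufficient condition $M \leq \sqrt{\delta/p(n,\alpha)}$, and the latter is what I would use to get a clean closed-form bound.

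Next, since the hypotheses $\alpha\in(0,1/\sqrt{2})$ and $n>3$ are precisely those of part I of Proposition \ref{prop:ballest}, I would apply the explicit upper estimate \eqref{eq:ballest1}, namely $p(n,\alpha) \leq q(n,\alpha)$ where
\[
q(n,\alpha) = \frac{1}{\sqrt{2\pi}\,\alpha(1-2\alpha^2)}\cdot\frac{n^{3/2}}{(n-3)^2}\,(1-\alpha^2)^{(n+3)/2}.
\]
This replacement only strengthens the sufficient condition, so it is enough to verify $M \leq \sqrt{\delta/q(n,\alpha)}$.

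Finally, a direct algebraic simplification yields
\[
\sqrt{\frac{\delta}{q(n,\alpha)}} = \sqrt[4]{2\pi}\,\sqrt{\delta\,\alpha(1-2\alpha^2)}\,\frac{n-3}{n^{3/4}}\left(\frac{1}{1-\alpha^2}\right)^{(n+3)/4},
\]
which is exactly the right-hand side of \eqref{eq:ballnewsimple}. Thus the hypothesis of the corollary implies the sufficient condition from Theorem \ref{th:ballnew}, giving the bound on the expected number of inseparable pairs. The Fisher-separability statement then follows from Markov's inequality (or equivalently from the final paragraph of the proof of Theorem \ref{th:principle}, where an $\alpha$-inseparable configuration contributes at least $1$ to that expectation).

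There is essentially no obstacle here: all the analytic heavy lifting, in particular the Laplace-method estimates producing \eqref{eq:ballest1}, has already been done in Proposition \ref{prop:ballest}. The only points requiring care are to confirm that the hypotheses $\alpha<1/\sqrt{2}$ and $n>3$ match exactly the case I of that proposition (so that $q(n,\alpha)$ is a valid upper bound and the factor $1-2\alpha^2$ is positive, keeping the square root real), and to carry out the bookkeeping of exponents $1/2$ and $1/4$ when taking the square root of $\delta/q(n,\alpha)$.
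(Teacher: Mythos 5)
Your proposal is correct and is exactly the argument the paper intends: the corollary is stated as an immediate consequence of Theorem \ref{th:ballnew} and Proposition \ref{prop:ballest}, obtained by replacing $p(n,\alpha)$ with the upper bound $q(n,\alpha)$ from \eqref{eq:ballest1} and using the simpler sufficient condition \eqref{eq:MBound}, whose square root simplifies to the right-hand side of \eqref{eq:ballnewsimple} just as you computed. No gaps.
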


Proposition \ref{prop:ballest} implies that the bound for $M$ in Corollary \ref{cor:ballnewsimple} is asymptotically tight, and has the advantage of being a simple explicit formula. For $\alpha \geq \frac{1}{\sqrt{2}}$, \eqref{eq:ballest2} implies that an asymptotically tight bound is given in Theorem \ref{th:ballknown}. 


\begin{example}\label{ex:ballsimple}
Let $\delta=0.01$. Table~\ref{Table7} shows the upper bounds on $M$ in Corollary~\ref{cor:ballnewsimple} for $\alpha = 0.5, 0.6$ and $0.7$ in various dimensions $n$.
\begin{table}[h]
\begin{center}
\caption{\label{Table7} The upper bound of $M$ (\ref{eq:ballnewsimple}) that guarantees $\alpha$-Fisher separability with probability $p>0.99$ of  $M$ i.i.d. points sampled from uniform distribution in a ball, for various $\alpha$ and dimensions.}
\begin{tabular}{ |c|c|c|c| } 
 \hline
  & $\alpha=0.5$ & $\alpha=0.6$ & $\alpha=0.7$\\ 
 \hline
 $n=10$ & $0.25$ & $0.34$ & $0.2$ \\ 
 $n=50$ & $8.9$ & $60$ & $350$\\ 
 $n=100$ & $400$ & $19,491$ & $1.9 \cdot 10^{6}$\\ 
 $n=200$ & $642,645$ & $1.6 \cdot 10^9$ & $4.8 \cdot 10^{13}$\\ 
 $n=500$ & $1.9 \cdot 10^{15}$ & $7.1 \cdot 10^{23}$ & $5.2 \cdot 10^{35}$\\ 
 $n=1000$ & $9.4 \cdot 10^{30}$ & $1.4 \cdot 10^{48}$ & $2.2 \cdot 10^{72}$\\ 
 \hline
\end{tabular}
\end{center}
\end{table}

For example, for $n=200$, we see that $642,645$ points from the uniform distribution in the unit ball are Fisher-separable at level $\alpha=0.5$ with probability greater than $99\%$. For comparison, with the same parameters, the bound \eqref{eq:ballknown} in Theorem \ref{th:ballknown} reduces to $M < 0.14$. The optimal bound in Theorem \ref{th:ballnew} gives $M<661,243$.
\end{example}

The results of this section can be straightforwardly extended to the case when the points in ${\mathbb R}^n$ are selected from the uniform distribution in a spherical layer, that is, from the distribution \eqref{eq:sphinv} with 
$$
\rho(r) = 
\begin{cases} 
\frac{1}{1-R}, \quad R \leq r \leq 1,\\
0, \quad\quad \text{otherwise},
\end{cases}
$$
where $R\in(0,1)$ is a parameter. Then $h(n,t)$ in \eqref{eq:hdef} is given by
\begin{equation}\label{eq:layer}
h(n,t) = 
\begin{cases} 
0, \quad\quad\quad\quad\quad\quad t \leq R\\
\frac{t^{-n}(R^n-t^n)^2}{2(1-R^n)^2}, \quad\quad\,\,\,\,\, R < t \leq 1,\\
1-\frac{t^n(R^n-t^{-n})^2}{2(1-R^n)^2}, \quad\,\, 1 \leq t < 1/R,\\
1, \quad\quad\quad\quad\quad\quad 1/R < t.
\end{cases}
\end{equation}

With Theorem \ref{th:sepfixed}, this implies that if $M$ i.i.d. points are selected from this distribution, then the expected number of $\alpha$-inseparable pairs is less than $\delta$ if and only if
$$
M < \frac{1}{2} + \sqrt{\frac{1}{4}+\frac{\delta}{p(n,\alpha)}},
$$
where $p(n,\alpha)$ is given by \eqref{eq:sepfixed} with $h(n,t)$ given by \eqref{eq:layer}. 

Some (weaker) estimates for spherical layer were received earlier by  \citet{Sidorov2020}.

\subsection{Multivariate exponential distribution}

By multivariate exponential distribution in ${\mathbb R}^n$ we will mean rotation invariant distribution such that $\rho(||{\boldsymbol x}||)$ in \eqref{eq:sphinv} is equal to $\exp(-||{\boldsymbol x}||)$. In this case, the distribution of $||{\boldsymbol x}||$ is the standard Gamma distribution with $n$ degrees of freedom, and, for i.i.d. ${\boldsymbol x}$ and ${\boldsymbol y}$, ratio $\frac{{\boldsymbol x}}{{\boldsymbol y}}$ follows beta prime distribution, that is, 
$$
{\mathbb P}\left[\frac{||{\boldsymbol x}||}{||{\boldsymbol y}||} \leq t\right] = I_{\frac{t}{1+t}}(n,n),
$$
where $I_z(a,b)$ is the regularized incomplete beta function defined in \eqref{eq:regincbeta}. 
Hence, Theorem \ref{th:sepfixed} implies the following result.

\begin{theorem}\label{th:expon} 
Let $\alpha\in (0, 1]$, and let points $\boldsymbol{ x}_1, \dots, \boldsymbol{ x}_M$ be i.i.d points from exponential distribution in ${\mathbb R}^n$. For any $\delta>0$, the expected number of $\alpha$-inseparable pairs from set $F=\left\{\boldsymbol{ x}_1, \dots, \boldsymbol{ x}_M\right\}$ is less than $\delta$ if and only if
\begin{equation}\label{eq:expon}
\begin{split}
&M < \frac{1}{2}+\sqrt{\frac{1}{4}+\frac{\delta}{p(n,\alpha)}}, \\ & \mbox{where   } p(n,\alpha) = \frac{\alpha}{B(\frac{n-1}{2}, \frac{1}{2})}\int_0^{1/\alpha} (1-\alpha^2 t^2)^{\frac{n-3}{2}} 
I_{\frac{t}{1+t}}(n,n) dt. 
\end{split}
\end{equation}
In particular, \eqref{eq:expon} implies that $F$ is $\alpha$-Fisher separable with probability greater than $1-\delta$.
\end{theorem}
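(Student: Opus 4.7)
The plan is to recognize Theorem \ref{th:expon} as a direct specialization of Theorem \ref{th:sepfixed} to the multivariate exponential density $\hat\rho(\boldsymbol{x}) = C_n e^{-\|\boldsymbol{x}\|}$. Since Theorem \ref{th:sepfixed} already provides the necessary and sufficient bound for any spherically invariant distribution once the function $h(n,t) = \mathbb{P}[\|\boldsymbol{x}\|/\|\boldsymbol{y}\| \le t]$ is computed, the whole task reduces to identifying $h(n,t)$ explicitly for this particular $\rho(r)=e^{-r}$, and then substituting it into formula \eqref{eq:sepfixed}.

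First, I would observe that the radial density of $\|\boldsymbol{x}\|$ is proportional to $r^{n-1}\rho(r) = r^{n-1}e^{-r}$, which is the density of the standard Gamma distribution $\Gamma(n,1)$ (shape $n$, rate $1$). Consequently, if $\boldsymbol{x}$ and $\boldsymbol{y}$ are two i.i.d.\ samples from the exponential distribution under consideration, then $\|\boldsymbol{x}\|$ and $\|\boldsymbol{y}\|$ are independent $\Gamma(n,1)$ variables. Next, I would invoke the standard fact that the ratio of two independent Gamma variables with the same rate and shapes $a$, $b$ follows the beta prime distribution $\beta'(a,b)$; in our case $\|\boldsymbol{x}\|/\|\boldsymbol{y}\| \sim \beta'(n,n)$.

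Third, I would use the change of variables $u = t/(1+t)$ (equivalently, $U/(1+U) \sim \mathrm{Beta}(n,n)$ whenever $U \sim \beta'(n,n)$) to write the CDF as
\begin{equation*}
\mathbb{P}\!\left[\frac{\|\boldsymbol{x}\|}{\|\boldsymbol{y}\|} \le t\right] = \mathbb{P}\!\left[\frac{U}{1+U} \le \frac{t}{1+t}\right] = I_{\frac{t}{1+t}}(n,n),
\end{equation*}
which is exactly the required expression for $h(n,t)$ in \eqref{eq:hdef}. This identification could alternatively be obtained by direct substitution of $\rho(r)=e^{-r}$ into \eqref{eq:hdef} and recognizing the resulting double integral as the regularized incomplete beta function, which would serve as a useful sanity check.

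Finally, plugging the identified $h(n,t) = I_{t/(1+t)}(n,n)$ into \eqref{eq:sepfixed} of Theorem \ref{th:sepfixed} yields the announced formula for $p(n,\alpha)$ and hence the bound on $M$. The ``in particular'' statement about $\alpha$-Fisher separability with probability greater than $1-\delta$ follows from Theorem \ref{th:sepfixed} as well, by the same Markov-type reasoning used in the proof of Theorem \ref{th:principle}. There is no substantive obstacle here: the only subtle point is checking that the spherical-invariance hypothesis of Theorem \ref{th:sepfixed} and the independence of the two points are both satisfied, both of which are immediate, so the proof is essentially a book-keeping exercise in applying the already-established machinery.
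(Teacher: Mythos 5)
Your proposal is correct and matches the paper's own argument: the paper likewise notes that $\|\boldsymbol{x}\|$ follows the standard Gamma distribution with $n$ degrees of freedom, that the ratio $\|\boldsymbol{x}\|/\|\boldsymbol{y}\|$ therefore follows the beta prime distribution with CDF $I_{t/(1+t)}(n,n)$, and then applies Theorem \ref{th:sepfixed} directly. Your extra remarks on the change of variables $u=t/(1+t)$ and the sanity check via \eqref{eq:hdef} are just more explicit versions of the same steps.
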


\begin{example}\label{ex:expon}
Let $\delta=0.01$. Table~\ref{Table8} shows the upper bounds on $M$ in Theorem~\ref{th:expon} for $\alpha = 0.6, 0.8$ and $1$ in various dimensions $n$.
\begin{table}[h]
\begin{center}
\caption{\label{Table8} The upper bounds on $M$ (\ref{eq:expon})  that guarantees $\alpha$-Fisher separability of $M$ i.i.d. points from exponential distribution with probability $p>0.99$ for various $\alpha$ and dimensions.}
\begin{tabular}{ |c|c|c|c| } 
 \hline
  & $\alpha=0.6$ & $\alpha=0.8$ & $\alpha=1$\\ 
 \hline
 $n=10$ & $0.65$ & $0.81$ & $1.06$ \\ 
 $n=50$ & $7.6$ & $43$ & $249$\\ 
 $n=100$ & $218$ & $6,662$ & $203,805$\\ 
 $n=200$ & $154,501$ & $1.3 \cdot 10^8$ & $1.1 \cdot 10^{11}$\\ 
 $n=500$ & $4.1 \cdot 10^{13}$ & $7.6 \cdot 10^{20}$ & $1.6 \cdot 10^{28}$\\ 
 $n=1000$ & $3.8 \cdot 10^{27}$ & $1.1 \cdot 10^{42}$ & $4.8 \cdot 10^{56}$\\ 
 \hline
\end{tabular}
\end{center}
\end{table}
For example, for $n=100$, we see that over $200,000$ points from the multivariate exponential distribution are Fisher-separable at level $\alpha=1$ with probability greater than $99\%$. In dimension $n=200$, over $150,000$ points from the same distribution become Fisher-separable at level $\alpha=0.6$.
\end{example}

The growth of factor $I_{\frac{t}{1+t}}(n,n)$ in \eqref{eq:expon} is described by the following proposition.

\begin{proposition}\label{prop:ibound} 
For any $t>0$ and $n\geq 1$,
\begin{equation}\label{eq:ialt}
I_{\frac{t}{1+t}}(n,n) = 
\begin{cases}
\frac{1}{2}I_{\frac{4t}{(1+t)^2}}(n,\frac{1}{2}), \quad 0<t \leq 1,\\
1-\frac{1}{2}I_{\frac{4t}{(1+t)^2}}(n,\frac{1}{2}), 1 \leq t.
\end{cases}
\end{equation}
In particular,
\begin{equation}\label{eq:ibound}
I_{\frac{t}{1+t}}(n,n) \leq \frac{1}{2\sqrt{\pi n}}\frac{1+t}{1-t}\left(\frac{4t}{(1+t)^2}\right)^n, \quad 0 < t < 1,
\end{equation}
and this upper bound is asymptotically tight if $t$ is fixed and $n \to \infty$.
\end{proposition}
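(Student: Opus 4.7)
My plan is to reduce identity \eqref{eq:ialt} to a classical quadratic transformation of the symmetric beta distribution, and then derive the bound \eqref{eq:ibound} by applying Proposition \ref{prop:iest} after this transformation.

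First, I would change variables via $z = t/(1+t)$, so that as $t$ runs over $(0,\infty)$, $z$ runs over $(0,1)$, with $t=1$ corresponding to $z=1/2$. The key algebraic identity is $4z(1-z) = 4t/(1+t)^2$, so \eqref{eq:ialt} is equivalent to the assertion
\[
I_z(n,n) \;=\; \tfrac{1}{2} I_{4z(1-z)}(n,\tfrac{1}{2})\quad (z\le 1/2),
\]
with the complementary equality on $z\ge 1/2$ obtained from the symmetry $I_z(n,n)+I_{1-z}(n,n)=1$. Probabilistically, if $X\sim\mathrm{Beta}(n,n)$, this says $P(X\le z)=\tfrac12 P(Y\le 4z(1-z))$ for $z\le 1/2$, where $Y=4X(1-X)$. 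Since $X$ is symmetric about $1/2$ and the map $x\mapsto 4x(1-x)$ on $(0,1)$ is two-to-one onto $(0,1)$ identifying $x$ with $1-x$, this identity will follow once I show $Y\sim\mathrm{Beta}(n,\tfrac12)$.

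The distributional identity $Y\sim\mathrm{Beta}(n,\tfrac12)$ is a standard change-of-variables computation. Writing $x=\tfrac12\bigl(1\pm\sqrt{1-y}\bigr)$ and combining the two branches yields density proportional to $y^{n-1}(1-y)^{-1/2}$ on $(0,1)$, and the normalizing constant works out correctly precisely because of the Legendre duplication formula
\[
\Gamma(2n) \;=\; \tfrac{2^{2n-1}}{\sqrt{\pi}}\,\Gamma(n)\,\Gamma\!\bigl(n+\tfrac12\bigr),
\]
which gives $B(n,\tfrac12)=2\cdot 4^{n-1}\,B(n,n)$. This is the only nonroutine input and will be the main (though mild) obstacle; the rest is bookkeeping. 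Once \eqref{eq:ialt} is established, the symmetry of $\mathrm{Beta}(n,n)$ about $1/2$ takes care of the $t\ge 1$ branch.

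For \eqref{eq:ibound}, I apply Proposition \ref{prop:iest} with $a=n$, $b=1/2$ and $z=u:=4t/(1+t)^2$, noting $\Gamma(1/2)=\sqrt{\pi}$, to get
\[
I_u(n,\tfrac12) \;\le\; \frac{u^{n}(1-u)^{-1/2}}{\sqrt{\pi n}}.
\]
For $t\in(0,1)$ one has $1-u = (1-t)^2/(1+t)^2$, hence $(1-u)^{-1/2} = (1+t)/(1-t)$, and multiplying by $1/2$ as dictated by \eqref{eq:ialt} yields exactly \eqref{eq:ibound}. The asymptotic tightness as $n\to\infty$ with $t$ fixed follows directly from the asymptotic tightness clause in Proposition \ref{prop:iest}, since all intermediate steps (the identity \eqref{eq:ialt} and the algebraic substitution of $u$) are exact.
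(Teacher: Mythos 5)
Your proposal is correct and follows essentially the same route as the paper: the identity \eqref{eq:ialt} rests on the quadratic change of variables $s=4u(1-u)$ in the incomplete beta integral (which you phrase as the distributional fact that $4X(1-X)\sim\mathrm{Beta}(n,\tfrac12)$ for $X\sim\mathrm{Beta}(n,n)$, fixing the constant via Legendre duplication where the paper instead evaluates $B(n,n)=2B_{1/2}(n,n)=2\cdot 4^{-n}B(n,\tfrac12)$ by the same substitution), and the bound \eqref{eq:ibound} with its asymptotic tightness is obtained exactly as in the paper by applying Proposition \ref{prop:iest} with $a=n$, $b=\tfrac12$, $z=4t/(1+t)^2$ and simplifying $(1-z)^{-1/2}=(1+t)/(1-t)$.
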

\begin{proof}
If $0<t\leq 1$, then $z=\frac{t}{1+t} \leq \frac{1}{2}$, and
\begin{equation*}
\begin{split}
B_z(n, n)=&\int_0^z u^{n-1}(1-u)^{n-1}du = \\ &\int_0^{4z(1-z)} (s/4)^{n-1}\frac{ds}{4\sqrt{1-s}} = 4^{-n}B_{4z(1-z)}\left(n,\frac{1}{2}\right) ,
\end{split}
\end{equation*}
where the second inequality is the change of variables $s=4u(1-u)$. Next,
\begin{equation*}
\begin{split}
B(n, n) = & \int_0^1 u^{n-1}(1-n)^{n-1}dt = \\ & 2\int_0^{1/2} u^{n-1}(1-n)^{n-1}dt = 2 B_{\frac{1}{2}}(n, n) = 4^{-n}B\left(n,\frac{1}{2}\right),
\end{split}
\end{equation*}
hence
$$
I_z(n,n)=\frac{B_z(n, n)}{B(n, n)} = \frac{4^{-n}B_{4z(1-z)}\left(n,\frac{1}{2}\right)}{4^{-n}B\left(n,\frac{1}{2}\right)} = I_{4z(1-z)}\left(n,\frac{1}{2}\right),
$$
which with $z=\frac{t}{1+t}$ implies the first line of \eqref{eq:ialt}. The second line follows from the first one and the identity $I_z(a,b)=1-I_{1-x}(b,a)$.

For $0 < t < 1$, \eqref{eq:iest} with $a=n$, $b=1/2$, and $z=\frac{4t}{(1+t)^2}$ implies that
\begin{equation*}
\begin{split}
&I_{\frac{t}{1+t}}(n,n) = \frac{1}{2}I_{\frac{4t}{(1+t)^2}}(n,\frac{1}{2}) \leq \\
&\;\;\;\;\; \left(\frac{4t}{(1+t)^2}\right)^n \left(1-\frac{4t}{(1+t)^2}\right)^{-1/2} \frac{n^{-1/2}}{2\Gamma(1/2)},
\end{split}
\end{equation*}
which simplifies to the right-hand side of \eqref{eq:ibound}.
\end{proof}

The next proposition establishes asymptotic growth of $p(n,\alpha)$ in \eqref{eq:expon} as $n\to\infty$.

\begin{proposition}\label{prop:expest} 
Let $p(n,\alpha)$ be given by \eqref{eq:expon}. Then
\begin{equation}\label{eq:expest}
\begin{split}
&p(n,\alpha) \sim \\
& \frac{\sqrt{1+5\alpha^2+(1+\alpha^2)\sqrt{1+8\alpha^2}}}{2\alpha\sqrt{\pi n}\sqrt[4]{1+8\alpha^2}}\left(\frac{4\sqrt{2}\alpha(\sqrt{1+8\alpha^2}-1)}{(\sqrt{1+8\alpha^2}+4\alpha^2-1)^{3/2}} \right)^n. 
\end{split}
\end{equation} 
\end{proposition}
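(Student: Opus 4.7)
My plan is to apply Laplace's method to the integral expression \eqref{eq:expon} for $p(n,\alpha)$, after first substituting the asymptotic form of $I_{t/(1+t)}(n,n)$ provided by Proposition~\ref{prop:ibound}. I would split the integral at $t=1$ into $I_1=\int_0^1(\cdot)\,dt$ and $I_2=\int_1^{1/\alpha}(\cdot)\,dt$. Since $I_{t/(1+t)}(n,n)\le 1$ and $1-\alpha^2 t^2\le 1-\alpha^2$ on $[1,1/\alpha]$, we obtain $I_2=O((1-\alpha^2)^{n/2})$ up to polynomial factors, which will be exponentially smaller than the contribution from $I_1$ (since the base $\psi(t_0)$ computed below satisfies $\psi(t_0)^2>1-\alpha^2$, as can be verified by showing $t_0$ is the unique interior maximum of $h$ with $h(t_0)>h(1)=\tfrac12\log(1-\alpha^2)$). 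Together with the estimate $\frac{\alpha}{B((n-1)/2,1/2)}\sim\frac{\alpha\sqrt{n}}{\sqrt{2\pi}}$ from \eqref{eq:coef}, this reduces the problem to finding the asymptotics of $I_1$.

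For $I_1$, I would replace $I_{t/(1+t)}(n,n)$ by its asymptotic $\tfrac{1}{2\sqrt{\pi n}}\tfrac{1+t}{1-t}(4t/(1+t)^2)^n$, rewriting the integrand in the Laplace form $\phi(t)\,e^{nh(t)}$ with
\[
h(t)=\tfrac12\log(1-\alpha^2 t^2)+\log\!\bigl(4t/(1+t)^2\bigr),\qquad \phi(t)=\frac{1+t}{2\sqrt{\pi n}\,(1-t)(1-\alpha^2 t^2)^{3/2}}.
\]
Differentiating, $h'(t)=\tfrac{-\alpha^2 t}{1-\alpha^2 t^2}+\tfrac{1}{t}-\tfrac{2}{1+t}=0$ reduces (after clearing denominators) to the quadratic $2\alpha^2 t^2+t-1=0$, whose relevant root is
$t_0=\tfrac{\sqrt{1+8\alpha^2}-1}{4\alpha^2}\in(0,1).$
At this critical point, the two identities $\alpha^2 t_0^2=(1-t_0)/2$ and $1-\alpha^2 t_0^2=(1+t_0)/2$ let me evaluate
\[
\psi(t_0)=e^{h(t_0)}=\sqrt{(1+t_0)/2}\cdot\frac{4t_0}{(1+t_0)^2}=\frac{2\sqrt 2\,t_0}{(1+t_0)^{3/2}},
\]
and substituting $4\alpha^2 t_0=\sqrt{1+8\alpha^2}-1$ and $4\alpha^2(1+t_0)=\sqrt{1+8\alpha^2}+4\alpha^2-1$ produces exactly the base in \eqref{eq:expest}.

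The prefactor comes from Laplace's formula \eqref{eq:Laplace1}: $I_1\sim\phi(t_0)\sqrt{2\pi/(n|h''(t_0)|)}\,\psi(t_0)^n$, multiplied by $\alpha\sqrt{n}/\sqrt{2\pi}$. After substituting $(1-\alpha^2 t_0^2)^{3/2}=((1+t_0)/2)^{3/2}$ and collecting factors, this gives a term of the form $C(\alpha)/\sqrt{\pi n}$ times $\psi(t_0)^n$, where $C(\alpha)=\alpha\sqrt 2/\sqrt{(1-t_0)^2(1+t_0)|h''(t_0)|}$. The second derivative is
\[
h''(t)=-\frac{\alpha^2(1+\alpha^2 t^2)}{(1-\alpha^2 t^2)^2}-\frac{1}{t^2}+\frac{2}{(1+t)^2},
\]
and at $t_0$ the identities above simplify this to a rational expression in $t_0$ alone. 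The main obstacle will be the final algebraic simplification: writing $(1-t_0)^2(1+t_0)|h''(t_0)|$ as a closed expression in $\alpha$ using $t_0=\tfrac{\sqrt{1+8\alpha^2}-1}{4\alpha^2}$, and verifying that it equals $\tfrac{8\alpha^4\sqrt{1+8\alpha^2}}{1+5\alpha^2+(1+\alpha^2)\sqrt{1+8\alpha^2}}$, which is exactly what is required to match the prefactor in \eqref{eq:expest}. A secondary issue is a uniformity argument: to legitimately substitute the asymptotic of Proposition~\ref{prop:ibound} inside $I_1$, one needs that the ratio of $I_{t/(1+t)}(n,n)$ to its stated asymptotic converges to $1$ uniformly on a neighbourhood of $t_0$; this follows from the proof of Proposition~\ref{prop:ibound} (applied via \eqref{eq:ialt}), since the bounds \eqref{eq:Wendel} and the asymptotic expansion of $B_x(a,b)$ used there are uniform in $x$ on compact subsets of $(0,1)$.
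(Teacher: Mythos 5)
Your proposal is correct and follows essentially the same route as the paper's proof: split the integral at $t=1$, discard the $[1,1/\alpha]$ piece as exponentially negligible, substitute the asymptotic from Proposition~\ref{prop:ibound}, and apply Laplace's method at the interior maximum $t_0=\frac{\sqrt{1+8\alpha^2}-1}{4\alpha^2}$ together with \eqref{eq:coef}; the paper handles the substitution step via a $\min\{\cdot,1\}$ truncation and dominated convergence rather than your uniformity remark, but this is the same technical point. Your identification of the required closed-form identity for $(1-t_0)^2(1+t_0)|h''(t_0)|$ is the right target (it checks out, e.g.\ it equals $2$ at $\alpha=1$), and the paper likewise leaves that final simplification implicit.
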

\begin{proof}
Proposition \ref{prop:ibound} together with obvious bound $I_{\frac{t}{1+t}}(n,n) \leq 1$ implies that the integral in \eqref{eq:expon} is bounded by
\begin{equation}\label{eq:auxstep}
I_n := \int_0^{1/\alpha} (1-\alpha^2 t^2)^{\frac{n-3}{2}} I_{\frac{t}{1+t}}(n,n) dt \leq J_n + L_n,
\end{equation}
where
\begin{equation*}
\begin{split}
&J_n:=\int_0^{1} (1-\alpha^2 t^2)^{\frac{n-3}{2}} \min\left\{\frac{1}{2\sqrt{\pi n}}\frac{1+t}{1-t}\left(\frac{4t}{(1+t)^2}\right)^n,1\right\} dt, \\
& L_n := \int_1^{1/\alpha} (1-\alpha^2 t^2)^{\frac{n-3}{2}} dt.
\end{split}
\end{equation*}
Because for $n \geq 3$ we have $J_n \leq 1$ and $L_n \leq \frac{1}{\alpha}-1$, and \eqref{eq:ibound} is asymptotically tight, \eqref{eq:auxstep} is also asymptotically tight by dominated convergence theorem. 

Integral $J_n$ can be written in the form \eqref{eq:integral} with 
$$
\phi_n(t):=(1-\alpha^2 t^2)^{-3/2}\min\left\{\frac{1}{2\sqrt{\pi n}}\frac{1+t}{1-t},\left(\frac{(1+t)^2}{4t}\right)^n\right\}
$$ 
and 
$$
h(t):=\log\left(\sqrt{1-\alpha^2 t^2}\frac{4t}{(1+t)^2}\right).
$$

Function $h(t)$ attains maximum on $(0,1)$ at 
$$
t_0 = t_0(a) := \frac{\sqrt{1+8a^2}-1}{4a^2}
$$
and by \eqref{eq:Laplace1}
\begin{equation*}
\begin{split}
J_n \sim & 
\phi_n(t_0)\sqrt{\frac{2\pi}{n|h''(t_0)|}}e^{n h(t_0)}
\sim \\
&(1-\alpha^2 t_0^2)^{-3/2}\frac{1}{n}\frac{1+t_0}{1-t_0}\sqrt{\frac{1}{2|h''(t_0)|}}e^{n h(t_0)},
\end{split}
\end{equation*}
where the second $\sim$ follows from the fact that $\frac{1}{2\sqrt{\pi n}}\frac{1+t_0}{1-t_0} <\left(\frac{(1+t_0)^2}{4t_0}\right)^n$ for large $n$.

Similarly, by \eqref{eq:Laplace3}
$$
L_n \sim \frac{1-\alpha^2}{n\alpha^2}\left(\sqrt{1-\alpha^2}\right)^{(n-3)}.
$$
Because $e^{h(t_0)} > \sqrt{1-\alpha^2}$ for all $0<\alpha \leq 1$, $I_n \sim J_n + L_n \sim J_n$ as $n \to \infty$.
This together with \eqref{eq:expon} and \eqref{eq:coef} implies that
$$
p(n,\alpha) \sim \frac{\alpha\sqrt{n}}{\sqrt{2\pi}}(1-\alpha^2 t_0^2)^{-3/2}\frac{1}{n}\frac{1+t_0}{1-t_0}\sqrt{\frac{1}{2|h''(t_0)|}}e^{n h(t_0)},
$$
which simplifies to \eqref{eq:expest}.
\end{proof}

For $\alpha=1$, Theorem \ref{th:expon} and Proposition \ref{prop:expest} imply the following corollary.

\begin{corollary}\label{cor:expon1} 
Let points $\boldsymbol{ x}_1, \dots, \boldsymbol{ x}_M$ are i.i.d points from exponential distribution in ${\mathbb R}^n$. For any $\delta>0$, if
$$
M < \sqrt{\frac{\delta}{p(n,1)}} \sim \sqrt{\delta}\sqrt[4]{\pi n}\left(\frac{\sqrt[4]{27}}{2}\right)^n,
$$
then the expected number of $1$-inseparable pairs in set $F=\{\boldsymbol{ x}_1, \dots, \boldsymbol{ x}_M\}$ is less than $\delta$. In particular, set $F$ is $1$-Fisher separable with probability greater than $1-\delta$.
\end{corollary}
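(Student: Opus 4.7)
The plan is to combine Theorem~\ref{th:expon} with $\alpha=1$ and the asymptotic from Proposition~\ref{prop:expest}. First I would note that the first inequality, $M < \sqrt{\delta/p(n,1)}$, is a sufficient condition implying the exact criterion \eqref{eq:expon}: since $M < \sqrt{\delta/p(n,1)}$ gives $M^2 < \delta/p(n,1) < \tfrac{1}{4} + \delta/p(n,1)$, we immediately obtain $M < \tfrac{1}{2} + \sqrt{\tfrac{1}{4} + \delta/p(n,1)}$, so the conclusion about the expected number of inseparable pairs (and hence $1$-Fisher separability with probability greater than $1-\delta$) follows directly from Theorem~\ref{th:expon}.

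The remaining task is the asymptotic equivalence, which I would extract by specialising \eqref{eq:expest} to $\alpha=1$. At $\alpha=1$ we have $\sqrt{1+8\alpha^2}=3$, so the prefactor in \eqref{eq:expest} reduces to
\[
\frac{\sqrt{1+5+2\cdot 3}}{2\sqrt{\pi n}\,\sqrt[4]{9}} \;=\; \frac{\sqrt{12}}{2\sqrt{\pi n}\,\sqrt{3}} \;=\; \frac{1}{\sqrt{\pi n}},
\]
and the base of the exponential becomes
\[
\frac{4\sqrt{2}(3-1)}{(3+4-1)^{3/2}} \;=\; \frac{8\sqrt{2}}{6^{3/2}} \;=\; \frac{4}{3\sqrt{3}}.
\]
Hence $p(n,1) \sim \frac{1}{\sqrt{\pi n}}\left(\frac{4}{3\sqrt{3}}\right)^{n}$.

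Taking reciprocals and square roots yields
\[
\sqrt{\delta/p(n,1)} \;\sim\; \sqrt{\delta}\,(\pi n)^{1/4}\left(\frac{3\sqrt{3}}{4}\right)^{n/2},
\]
and the final cosmetic step is the identification $\bigl(\tfrac{3\sqrt{3}}{4}\bigr)^{1/2} = \tfrac{3^{3/4}}{2} = \tfrac{\sqrt[4]{27}}{2}$, which produces the stated formula. There is no genuine obstacle here: the hard work was already absorbed by Proposition~\ref{prop:expest}, in particular the Laplace-method analysis of the integral in \eqref{eq:expon}. Once that asymptotic is in hand, Corollary~\ref{cor:expon1} reduces to its $\alpha=1$ specialisation combined with the elementary implication $M^2 < x \Rightarrow M < \tfrac{1}{2} + \sqrt{\tfrac{1}{4}+x}$.
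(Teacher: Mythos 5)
Your proposal is correct and follows exactly the route the paper intends: Corollary~\ref{cor:expon1} is stated as an immediate consequence of Theorem~\ref{th:expon} (via the standard sufficient bound \eqref{eq:MBound} in place of \eqref{eq:MBoundexact}) and the $\alpha=1$ specialisation of Proposition~\ref{prop:expest}, and your arithmetic reducing the prefactor to $1/\sqrt{\pi n}$ and the base to $4/(3\sqrt{3})$, hence $(\sqrt[4]{27}/2)^n$ after the square root, checks out.
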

 
In particular,
$$
b(1) = \log\left(\frac{\sqrt[4]{27}}{2}\right)=0.1308...,
$$
where $b(\alpha)$ is defined in \eqref{eq:bdef}.
For comparison, for uniform distribution in a ball $b(1)=\frac{1}{2}\log 2 = 0.3465...$, while for standard normal distribution $b(1)=\frac{1}{4}\log 2 = 0.1732...$.

\subsection{General log-concave spherically invariant distribution}

This section derives separation theorems for arbitrary spherically invariant distribution. We start with the following easy result.

\begin{theorem}\label{th:rotsimple} 
Let $\delta>0$, $\alpha \in (1/2,1]$, and let $\{\boldsymbol{x}_1, \ldots , \boldsymbol{x}_M\}$ be a set of $M$ i.i.d. random points from a spherically invariant log-concave distribution in ${\mathbb R}^n$. If 
\begin{equation}\label{eq:rotsimple}
M < \sqrt{\frac{\delta}{2}} \exp\left(n\frac{(2\alpha-1)^2}{8(2\alpha+1)^2}\right), 
\end{equation}
then the expected number of $\alpha$-inseparable pairs in set $F=\{\boldsymbol{ x}_1, \dots, \boldsymbol{ x}_M\}$ is less than $\delta$. In particular, set $F$ is $\alpha$-Fisher separable with probability greater than $1-\delta$.
\end{theorem}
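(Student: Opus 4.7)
The plan is to apply Theorem~\ref{th:principle}, which reduces the statement to establishing the two-point estimate
\[
{\mathbb P}[\alpha(\boldsymbol{x},\boldsymbol{x})\leq(\boldsymbol{x},\boldsymbol{y})]\leq 2\exp\!\left(-\frac{n(2\alpha-1)^2}{4(2\alpha+1)^2}\right)
\]
for i.i.d.\ $\boldsymbol{x},\boldsymbol{y}$ from the given distribution; the bound on $M$ then follows from~\eqref{eq:MBound}.

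First I would exploit spherical invariance to separate radial and angular parts. Setting $R_1=\|\boldsymbol{x}\|$, $R_2=\|\boldsymbol{y}\|$ and letting $\theta$ denote the angle between $\boldsymbol{x}$ and $\boldsymbol{y}$, rotation-invariance of the joint law of $(\boldsymbol{x},\boldsymbol{y})$ implies that $\cos\theta$ is distributed as the first coordinate of a uniform random direction on $\mathbb{S}^{n-1}$ and is independent of $(R_1,R_2)$. The inseparability event $\alpha(\boldsymbol{x},\boldsymbol{x})\leq(\boldsymbol{x},\boldsymbol{y})$ becomes $\cos\theta\geq\alpha R_1/R_2$, and for any threshold $t_0\in(0,1)$ I would use
\[
{\mathbb P}[\cos\theta\geq\alpha R_1/R_2]\leq{\mathbb P}[R_1\leq t_0 R_2]+{\mathbb P}[\cos\theta\geq\alpha t_0],
\]
handling the second term via the standard spherical-cap estimate ${\mathbb P}[\cos\theta\geq s]\leq e^{-(n-1)s^2/2}$ for $s\in[0,1]$.

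The heart of the argument is controlling the radial-ratio probability ${\mathbb P}[R_1\leq t_0 R_2]$. Log-concavity of $\hat{\rho}(x)=C_n\rho(\|x\|)$ on ${\mathbb R}^n$ forces $\rho$ to be log-concave and non-increasing on $[0,\infty)$ (both follow from the log-concavity inequality applied to well-chosen pairs, e.g.\ $\pm r e_1$ and $r e_1, r' e_1$), so the radial density $f(r)=C_n\omega_{n-1}r^{n-1}\rho(r)$ is log-concave on $(0,\infty)$. The $r^{n-1}$ factor contributes $-(n-1)/r^2$ to $(\log f)''$, producing Gaussian-type concentration of each $R_i$ about the mode of $f$ at an $n$-dependent rate. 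Since the event $\{R_1\leq t_0 R_2\}$ is scale-invariant, this translates into a distribution-free bound of the form ${\mathbb P}[R_1\leq t_0 R_2]\leq e^{-n\psi(t_0)}$, where $\psi$ is an explicit universal function with $\psi(1)=0$ and $\psi(t)>0$ for $t<1$.

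Finally I would optimise $t_0$ by balancing the two exponentials. Choosing $\alpha t_0$ so that $\tfrac{1}{2}(\alpha t_0)^2$ reproduces the target $(2\alpha-1)^2/(4(2\alpha+1)^2)$, namely $\alpha t_0=(2\alpha-1)/\bigl(\sqrt{2}(2\alpha+1)\bigr)$, yields the claimed spherical-cap exponent; the restriction $\alpha>1/2$ enters precisely because $t_0>0$ requires $2\alpha-1>0$. The main obstacle I anticipate is the radial-ratio step: one must convert log-concavity of $\rho$ combined with the $r^{n-1}$ factor into an explicit Gaussian-in-$n$ tail estimate, universal over all spherically invariant log-concave $\rho$, with constants that, together with the spherical-cap bound, reproduce exactly $(2\alpha-1)^2/(4(2\alpha+1)^2)$ without invoking strong log-concavity (which would otherwise give the sharper bound of Theorem~\ref{th:explstrong}).
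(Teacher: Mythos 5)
Your route is genuinely different from the paper's, but it has a gap at its central step. The paper does not decompose into radius and angle at all: it rewrites the inseparability event as ``$\boldsymbol{x}$ lies in the ball of radius $\|\boldsymbol{y}\|/2\alpha$ centred at $\boldsymbol{y}/2\alpha$'', splits on whether $\|\boldsymbol{y}\|>t$ with $t=\frac{4\alpha}{2\alpha+1}\mu$, $\mu={\mathbb E}[\|\boldsymbol{x}\|]$, bounds both resulting events by the two-sided norm-concentration inequalities ${\mathbb P}[\,\|\boldsymbol{x}\|-\mu\geq h\mu]\leq e^{-nh^2/4}$ and ${\mathbb P}[\,\mu-\|\boldsymbol{x}\|\geq h\mu]\leq e^{-nh^2/4}$ of Bobkov (valid because $\|\boldsymbol{x}\|$ has a log-concave density of order $n$), and finally uses the fact that a spherically invariant log-concave density is non-increasing in $\|x\|$ to argue that the off-centre excluded ball has probability at most that of the centred ball of the same radius. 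Your angular/radial decomposition with a spherical-cap estimate is a legitimate alternative skeleton and, if completed, would give a bound of the same shape.

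The genuine gap is the radial-ratio step, which you yourself flag as the main obstacle. The assertion that log-concavity of $r^{n-1}\rho(r)$ ``produces Gaussian-type concentration of each $R_i$ about the mode at an $n$-dependent rate'', hence a universal bound ${\mathbb P}[R_1\leq t_0R_2]\leq e^{-n\psi(t_0)}$ with an explicit $\psi$, is exactly the non-trivial ingredient and is not established. The heuristic that the $r^{n-1}$ factor contributes $-(n-1)/r^2$ to $(\log f)''$ only yields strong log-concavity with a constant depending on where the mass of $f$ sits, i.e.\ on $\rho$; converting it into a distribution-free \emph{multiplicative} concentration statement (concentration of $R/{\mathbb E}[R]$, which is what the scale-invariant event $\{R_1\leq t_0R_2\}$ requires) is precisely the content of the cited Corollary~3.2 of Bobkov, which the paper invokes rather than re-derives. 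Without that input your $\psi$ has no explicit form and the final matching of constants to $(2\alpha-1)^2/(4(2\alpha+1)^2)$ cannot be checked. Two smaller points: your cap bound with exponent $(n-1)s^2/2$ leaves an $n-1$ where the target exponent has $n$, so you would need the $e^{-ns^2/2}$ version of the cap estimate; and balancing only the angular term is not enough --- you must also verify that with $\alpha t_0=(2\alpha-1)/\bigl(\sqrt{2}(2\alpha+1)\bigr)$ the radial term is likewise at most $e^{-n(2\alpha-1)^2/(4(2\alpha+1)^2)}$, an additional constraint your sketch does not address.
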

\begin{proof}
 Let ${\boldsymbol x}$ and ${\boldsymbol y}$ be two i.i.d points from the given distribution. Inequality \eqref{eq:Fisher} can be rewritten as 
$$
\left\Vert{\boldsymbol x} - \frac{{\boldsymbol y}}{2\alpha}\right\Vert \leq \left\Vert\frac{{\boldsymbol y}}{2\alpha}\right\Vert,
$$
that is, ${\boldsymbol x}$ belongs to a ball of radius $\left\Vert\frac{{\boldsymbol y}}{2\alpha}\right\Vert$. For every fixed $t>0$, this may happen if either 
\begin{itemize}
\item[(i)] $||{\boldsymbol y}||>t$, or 
\item[(ii)] ${\boldsymbol x}$ belongs to a ball of radius at most $\frac{t}{2\alpha}$.
\end{itemize} 
We will prove that, for $t=\frac{4\alpha}{2\alpha+1}\mu$, where $\mu = {\mathbb E}[||{\boldsymbol x}||] = {\mathbb E}[||{\boldsymbol y}||]$, both these possibilities can happen with probability at most $e^{-n\frac{(2\alpha-1)^2}{4(2\alpha+1)^2}}$. With \eqref{eq:MBound}, this will imply \eqref{eq:rotsimple}. 

As observed by \citet[p. 328]{Bobkov}, if random variable ${\boldsymbol x}$ has density given by \eqref{eq:sphinv} with log-concave $\rho$, then $||{\boldsymbol x}||$ has log-concave distribution of order $n$ (that is, has density of the form $q(r) = r^{n-1} \rho(r)$ for log-concave $\rho$). According to  \citet[Corollary 3.2]{Bobkov}, this implies that, for any $h \in [0.1]$,
\begin{equation}\label{eq:normupper}
{\mathbb P}[||{\boldsymbol x}|| - \mu \geq h\mu ] \leq e^{-nh^2/4}
\end{equation}
and
\begin{equation}\label{eq:normlower}
{\mathbb P}[\mu - ||{\boldsymbol x}|| \geq h\mu ] \leq e^{-nh^2/4}.
\end{equation}
With $h=\frac{2\alpha-1}{2\alpha+1}$, \eqref{eq:normupper} implies that probability of (i) is at most $e^{-n\frac{(2\alpha-1)^2}{4(2\alpha+1)^2}}$, while \eqref{eq:normlower} implies that  
$$
{\mathbb P}\left[||{\boldsymbol x}|| \leq \frac{2}{2\alpha+1}\mu \right] \leq e^{-n\frac{(2\alpha-1)^2}{4(2\alpha+1)^2}}.
$$
In other words, the probability that ${\boldsymbol x}$ belongs to a ball $B$ of radius $\frac{t}{2\alpha}$ centred at origin is at most $e^{-n\frac{(2\alpha-1)^2}{4(2\alpha+1)^2}}$. However, because the density $\hat{\rho}$ is spherically invariant and log-concave, we have $\hat{\rho}(x) \geq \hat{\rho}(y)$ for every $x \in B$ and $y \not \in B$, hence shifting the ball cannot increase the probability for a point to belong to it. 
\end{proof}

The bound \eqref{eq:rotsimple} in Theorem \ref{th:rotsimple} is simple and explicit. For example, for $\alpha=1$ it reduces to
\begin{equation}\label{eq:salpha1}
M < \sqrt{\frac{\delta}{2}} \exp\left(\frac{n}{72}\right). 
\end{equation}
However, the bound is far from being optimal, and the Theorem is not applicable for $\alpha \leq \frac{1}{2}$. We next prove a separation theorem with more complicated but better bound. It also applies to a broader class of distributions, because it does not requires for $\rho$ in \eqref{eq:sphinv} to be non-increasing.

\begin{theorem}\label{th:rotgeneral} 
Let $\delta>0$, $\alpha \in (0,1]$, and let $F=\{\boldsymbol{x}_1, \ldots , \boldsymbol{x}_M\}$ be a set of $M$ i.i.d. random points from a distribution in ${\mathbb R}^n$ given by \eqref{eq:sphinv} with log-concave $\rho$. If 
\begin{equation}\label{eq:genbound}
M < \sqrt{\delta} f(n,\alpha)^{-1/2}, 
\end{equation}
where $f(n,\alpha)$ is an explicit function defined in formulas \eqref{eq:psidef}-\eqref{eq:fdef} below,
then the expected number of $\alpha$-inseparable pairs in set $F=\{\boldsymbol{ x}_1, \dots, \boldsymbol{ x}_M\}$ is less than $\delta$. In particular, \eqref{eq:genbound} implies that set $F$ is $\alpha$-Fisher separable with probability greater than $1-\delta$.
\end{theorem}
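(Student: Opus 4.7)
\textbf{Proof plan for Theorem \ref{th:rotgeneral}.}
The plan is to obtain Theorem \ref{th:rotgeneral} by combining the exact integral formula of Proposition \ref{prop:sph_exact} with concentration bounds on the norm $\|\boldsymbol{x}\|$, and then invoking the general principle of Theorem \ref{th:principle}. The advantage over Theorem \ref{th:rotsimple} is that we will no longer need $\alpha>1/2$, because we will not crudely split the event $\alpha(\boldsymbol{x},\boldsymbol{x})\le(\boldsymbol{x},\boldsymbol{y})$ into ``$\|\boldsymbol{y}\|$ large'' versus ``$\boldsymbol{x}$ in a centred ball.'' Instead we will integrate the bound for ${\mathbb P}[\|\boldsymbol{x}\|/\|\boldsymbol{y}\|\le t]$ against the exact kernel $(1-\alpha^2 t^2)^{(n-3)/2}$ of Proposition \ref{prop:sph_exact}.

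The first step is to observe that, exactly as in the proof of Theorem \ref{th:rotsimple}, since $\hat\rho$ has the form \eqref{eq:sphinv} with $\rho$ log-concave, the scalar random variable $\|\boldsymbol{x}\|$ has a log-concave distribution of order $n$, so by \citet[Corollary 3.2]{Bobkov} both
\begin{equation*}
{\mathbb P}[\|\boldsymbol{x}\|-\mu\ge h\mu]\le e^{-nh^2/4},\qquad {\mathbb P}[\mu-\|\boldsymbol{x}\|\ge h\mu]\le e^{-nh^2/4}
\end{equation*}
hold for $h\in[0,1]$, where $\mu={\mathbb E}[\|\boldsymbol{x}\|]={\mathbb E}[\|\boldsymbol{y}\|]$. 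The second step is to estimate
\begin{equation*}
{\mathbb P}\bigl[\|\boldsymbol{x}\|/\|\boldsymbol{y}\|\le t\bigr]\le {\mathbb P}[\|\boldsymbol{x}\|\le ts]+{\mathbb P}[\|\boldsymbol{y}\|>s]
\end{equation*}
for an arbitrary threshold $s>0$. Balancing the two exponents obtained from the concentration inequalities, the optimal choice is $s=2\mu/(t+1)$, which for $t\in(0,1]$ gives
\begin{equation*}
{\mathbb P}\bigl[\|\boldsymbol{x}\|/\|\boldsymbol{y}\|\le t\bigr]\le \psi(n,t):=2\exp\!\left(-\frac{n(1-t)^2}{4(1+t)^2}\right),
\end{equation*}
while for $t>1$ we retain the trivial bound $\psi(n,t):=1$. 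I will denote this piecewise function by $\psi(n,t)$ and refer to it as the defining object in \eqref{eq:psidef}.

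The third step plugs this bound into Proposition \ref{prop:sph_exact}, yielding
\begin{equation*}
{\mathbb P}[\alpha(\boldsymbol{x},\boldsymbol{x})\le(\boldsymbol{x},\boldsymbol{y})]\le f(n,\alpha):=\frac{\alpha}{B(\tfrac{n-1}{2},\tfrac12)}\int_0^{1/\alpha}(1-\alpha^2 t^2)^{(n-3)/2}\psi(n,t)\,dt,
\end{equation*}
which is the promised explicit function $f(n,\alpha)$ referenced as \eqref{eq:fdef}. For $\alpha=1$ the integration range collapses to $(0,1)$ and only the non-trivial branch of $\psi$ contributes; for $\alpha<1$ the tail on $(1,1/\alpha)$ contributes the elementary term $\int_1^{1/\alpha}(1-\alpha^2 t^2)^{(n-3)/2}dt$. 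Finally, applying Theorem \ref{th:principle} (in the simplified form \eqref{eq:MBound}) to the pairwise bound $f(n,\alpha)$ gives the condition \eqref{eq:genbound} and the conclusion on the expected number of inseparable pairs, and hence on $\alpha$-Fisher separability with probability greater than $1-\delta$.

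The main obstacle I expect is purely cosmetic: ensuring that the piecewise definition of $\psi(n,t)$ and the resulting formula for $f(n,\alpha)$ are written in a form both explicit and tight enough to extract a meaningful exponent $b(\alpha)=b_f(\alpha)$ via \eqref{eq:bdef}. The concentration input from Bobkov is already essentially sharp for log-concave distributions, and the balancing argument that yields $s=2\mu/(t+1)$ is forced by symmetry of the two tail bounds. The only genuine technical point is to verify that the restriction $h=(1-t)/(1+t)\in[0,1]$ required by \citet[Corollary 3.2]{Bobkov} is automatic for $t\in[0,1]$, so no extra hypotheses on $n$ or $\alpha$ appear, and that the integrand is integrable up to $t=1/\alpha$ (which follows because $(1-\alpha^2 t^2)^{(n-3)/2}\ge 0$ and the interval is finite for $\alpha>0$).
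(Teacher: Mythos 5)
Your proposal is correct and shares the paper's overall architecture (the exact integral formula of Proposition \ref{prop:sph_exact}, Bobkov's concentration for the norm of a vector with log-concave radial profile, then Theorem \ref{th:principle} via \eqref{eq:MBound}), but it replaces the paper's key intermediate estimate by a simpler one. The paper conditions on $\|\boldsymbol{x}\|=x$, bounds ${\mathbb P}[\|\boldsymbol{y}\|\ge x/t]$ by the three-branch function $\psi_{n,t}(x)$ of \eqref{eq:psidef} (which also invokes the large-deviation bound, eq.\ (3.9) of Bobkov, for $x\ge 2t$), integrates by parts, and then bounds ${\mathbb P}[\|\boldsymbol{x}\|\le x]$ by $g_n(x)$ of \eqref{eq:gdef}; this amounts to a union bound over a continuum of thresholds and yields the $\phi(t,n)$ entering \eqref{eq:fdef}. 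You instead use a single optimized threshold $s=2\mu/(1+t)$ and a plain two-term union bound, which gives the closed form $2\exp\bigl(-n(1-t)^2/(4(1+t)^2)\bigr)$ for $t\le 1$. Both routes are valid, and your balancing of the two tails and the check that $h=(1-t)/(1+t)\in[0,1]$ are correct, so the resulting $f(n,\alpha)$ is an explicit function with the required two-point bound and the theorem follows in the same form. What you give up is sharpness: after the paper's continuum optimization the effective exponent at each $t<1$ is roughly $(1-t)^2/\bigl(4(1+t^2)\bigr)$, strictly better than your $(1-t)^2/\bigl(4(1+t)^2\bigr)$, and this difference propagates to the numerical constant in Corollary \ref{cor:alpha1} (your $f$ would give an exponent for $M$ of about $0.06n$ at $\alpha=1$ rather than the paper's $0.07n$). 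What you gain is a genuinely closed-form kernel in place of the nested integral $\phi(t,n)$, which makes the final $f(n,\alpha)$ easier to state and to evaluate; since the theorem leaves $f$ to be specified in the proof, this is an acceptable trade.
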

\begin{proof}
Let ${\boldsymbol x}$ and ${\boldsymbol y}$ be any i.i.d. points from the given distribution.
Let us derive an upper bound for ${\mathbb P}\left[\frac{||{\boldsymbol x}||}{||{\boldsymbol y}||}\leq t\right]$ for any $t \in (0,1/\alpha)$. Let $q(.)$ be the density for absolute value distribution. We have
\begin{equation*}
\begin{split}
{\mathbb P}\left[\frac{||{\boldsymbol x}||}{||{\boldsymbol y}||} \leq t\right] = & \int_0^{\infty} q(x) dx \int_{x/t}^{\infty} q(y) dy = \\
&\int_0^{\infty} q(x) dx \cdot {\mathbb P}\left[||{\boldsymbol y}||\geq \frac{x}{t}\right].
\end{split}
\end{equation*}
We claim that
\begin{equation}\label{eq:psidef}
{\mathbb P}\left[||{\boldsymbol y}||\geq \frac{x}{t}\right] \leq \psi_{n,t}(x):= \begin{cases}
1, \quad x \leq t,\\
\exp\left(-\frac{n(x-t)^2}{4t^2}\right), \quad t \leq x \leq 2t\\
\exp\left(-\frac{nx}{8t}\right), \quad 2t \leq x.
\end{cases} 
\end{equation}
Indeed, the first line in \eqref{eq:psidef} is trivial. If $t \leq x \leq 2t$, then, applying \eqref{eq:normupper} with $\mu=1$ and $h=\frac{x-t}{t}$, we get the second line in \eqref{eq:psidef}. Further, equation (3.9) in the cited work \citep{Bobkov} states that
$$
{\mathbb P}\left[||{\boldsymbol y}||\geq h \mu\right] \leq \exp\left(-\frac{nh}{8}\right), \quad h \geq 2.
$$
Applying this with $\mu=1$ and $h=\frac{x}{t}$, we get the third line in \eqref{eq:psidef}. With \eqref{eq:psidef},
\begin{equation*}
\begin{split}
{\mathbb P}\left[\frac{||{\boldsymbol x}||}{||{\boldsymbol y}||} \leq t\right] \leq &\int_0^{\infty} q(x) \psi_{n,t}(x) dx = \\ 
& \int_0^{\infty} {\mathbb P}[||{\boldsymbol x}|| \leq x] (-\psi'_{n,t}(x)) dx,
\end{split}
\end{equation*}
where the last equality is integration by parts. Because $\psi_{n,t}(x)$ is non-increasing, $-\psi'_{n,t}(x)$ is non-negative, and ${\mathbb P}[||{\boldsymbol x}|| \leq x]$ can be bounded by
\begin{equation}\label{eq:gdef}
{\mathbb P}[||{\boldsymbol x}|| \leq x] \leq g_n(x) := \begin{cases}
\exp\left(-\frac{n(1-x)^2}{4}\right), \quad x \leq 1,\\
1, \quad x \geq 1,
\end{cases} 
\end{equation}
where the first line in \eqref{eq:gdef} follows from \eqref{eq:normlower} with $\mu=1$ and $h=1-x$. Hence,
$$
{\mathbb P}\left[\frac{||{\boldsymbol x}||}{||{\boldsymbol y}||} \leq t\right] \leq \phi(t,n) := \int_0^{\infty} g_n(x) (-\psi'_{n,t}(x)) dx.
$$

Applying this bound to \eqref{eq:pest}, we get
\begin{equation}\label{eq:fdef}
p \leq f(n,\alpha) := \frac{\alpha}{B(\frac{n-1}{2}, \frac{1}{2})}\int_0^{1/\alpha} (1-\alpha^2 t^2)^{\frac{n-3}{2}}\phi(t,n) dt,
\end{equation}
and \eqref{eq:genbound} follows from \eqref{eq:MBound}.
\end{proof}

The function $f(n,\alpha)$ in \eqref{eq:genbound} is complicated but explicit and, for any specific values of $n$ and $\alpha$, can be easily computed in any package like Mathematica. In particular, we verified in Mathematica that
$$
\frac{-\log f(n,1)}{n} \geq 0.14, \quad 1\ \leq n \leq 4000.
$$
This together with Theorem \ref{th:rotgeneral} implies the following Corollary.

\begin{corollary}\label{cor:alpha1} 
Let $\delta>0$, and let $F=\{\boldsymbol{ x}_1, \dots, \boldsymbol{ x}_M\}$ be a set of $M$ i.i.d. random points from a distribution in ${\mathbb R}^n$ given by \eqref{eq:sphinv} with log-concave $\rho$. If $1 \leq n \leq 4000$ and
\begin{equation}\label{eq:alpha1}
M < \sqrt{\delta} \exp\left(0.07 n\right) 
\end{equation}
then the expected number of $1$-inseparable pairs in set $F$ is less than $\delta$. In particular, \eqref{eq:alpha1} implies that set $F$ is $1$-Fisher separable with probability greater than $1-\delta$.
\end{corollary}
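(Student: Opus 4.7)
The plan is essentially to combine the numerical verification stated immediately before the corollary with Theorem~\ref{th:rotgeneral} applied at $\alpha=1$. Nothing new needs to be proved; one only needs to chain the inequalities in the right order.

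First I would invoke the numerical fact (verified in Mathematica in the text preceding the corollary) that
$$
\frac{-\log f(n,1)}{n}\ \geq\ 0.14\quad\text{for all integers } 1\le n\le 4000,
$$
where $f(n,\alpha)$ is the explicit function constructed in the proof of Theorem~\ref{th:rotgeneral} (formulas \eqref{eq:psidef}--\eqref{eq:fdef}). Exponentiating this inequality yields $f(n,1)\le \exp(-0.14\,n)$, and taking the reciprocal square root gives
$$
f(n,1)^{-1/2}\ \geq\ \exp(0.07\,n).
$$

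Next I would chain this with the hypothesis \eqref{eq:alpha1}: assuming $M<\sqrt{\delta}\exp(0.07\,n)$ and using the displayed inequality above,
$$
M\ <\ \sqrt{\delta}\,\exp(0.07\,n)\ \leq\ \sqrt{\delta}\,f(n,1)^{-1/2},
$$
which is exactly the bound \eqref{eq:genbound} of Theorem~\ref{th:rotgeneral} specialised to $\alpha=1$. Applying that theorem then gives directly that the expected number of $1$-inseparable pairs in $F$ is less than $\delta$, and in particular $F$ is $1$-Fisher separable with probability exceeding $1-\delta$, which is the conclusion.

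The only genuine content here lies outside the written proof: it is the numerical certification that the constant $0.14$ is a valid lower bound for $-\log f(n,1)/n$ on the entire range $1\le n\le 4000$. That step is not an obstacle to writing the proof (since it is asserted as already performed), but it is the sole reason the range is capped at $4000$; extending the corollary would require either a sharper analytic estimate of the integral defining $f(n,1)$ in \eqref{eq:fdef} or a computation over a larger range of $n$. Within the stated range, however, the proof itself is a two-line reduction to Theorem~\ref{th:rotgeneral}.
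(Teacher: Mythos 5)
Your proposal is correct and matches the paper's own argument exactly: the paper derives the corollary by combining the Mathematica-verified bound $-\log f(n,1)/n \geq 0.14$ for $1 \leq n \leq 4000$ with Theorem~\ref{th:rotgeneral} at $\alpha=1$, precisely as you chain the inequalities. Nothing is missing.
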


If $n > 4000$, then we can use \eqref{eq:salpha1} and get the bound much higher than needed for any practical purposes. However, for smaller $n$, Corollary \ref{cor:alpha1} is a significant improvement comparing to \eqref{eq:salpha1}. 

\begin{example}
Let $\alpha=1$ and $\delta=0.01$.
\begin{itemize}
\item[(a)] If $n=4001$, then \eqref{eq:salpha1} reduces to $M<96,158,590,065,160,622,896,817$;
\item[(b)] If $n=400$, then \eqref{eq:salpha1} reduces to $M \leq 18$, while \eqref{eq:alpha1} reduces to $M \leq 144,625,706,429$;
\item[(c)] If $n=200$, \eqref{eq:alpha1} still gives a reasonable bound $M \leq 120,260$.
\end{itemize}
\end{example}

\begin{example}\label{ex:alpha1}
Let $\alpha=1$ and $\delta=0.01$. Table~\ref{Table9} shows the upper bounds on $M$ in Corollary \ref{cor:alpha1} in various dimensions $n$.
\begin{table}[h]
\begin{center}
\caption{\label{Table9}  The upper bounds on $M$ in Corollary \ref{cor:alpha1} in various dimensions $n$ for $\alpha=1$ and $\delta=0.01$.}
\begin{tabular}{ |c|c| } 
 \hline
 $n$ & $M\leq$\\ 
 \hline
 $10$ & $0.2$ \\ 
 $50$ & $3.3$\\ 
 $100$ & $109$\\ 
 $200$ & $120,260$\\ 
 $500$ & $1.5 \cdot 10^{14}$\\ 
 $1000$ & $2.5 \cdot 10^{29}$\\ 
 \hline
\end{tabular}
\end{center}
\end{table}
\end{example}

Corollary \ref{cor:expon1} demonstrates that constant $0.07$ in Corollary \ref{cor:alpha1} is within a factor less than $2$ from being optimal.

\section{Improved bounds for product distributions in the unit cube}\label{sec:product}

\subsection{The general case}\label{sec:gen}

In this section we assume the following.

\begin{itemize}
\item[(a)] all points in a finite set $F$ are chosen independently;
\item[(b)] points in $F$ are not necessary identically distributed, but have the same mean $\boldsymbol{ \mu}=(\mu_1, \dots, \mu_n) \in {\mathbb R}^n$;
\item[(c)] for each $\boldsymbol{ x}=(x_1, \dots, x_n) \in F$, components $x_1, \dots, x_n$ are independent and have $[0,1]$ support;
\item[(d)] there are no point $\boldsymbol{ x} \in F$ such that ${\mathbb P}[\boldsymbol{ x} = \boldsymbol{ \mu}] = 1$.
\end{itemize}

From (c), $F$ is a subset of the unit cube $U_n = [0,1]^n$. 
From (b), $E[x_i]=\mu_i$ for all $i=1,\dots,n$ and for all $\boldsymbol{ x} \in F$. Let
$$
\sigma_0^2 = \min_{\boldsymbol{ x} \in F} \left( \frac{1}{n} \sum_{i=1}^n Var[x_i] \right),
$$
that is, the minimal value of average variance of the components. From (d), $\sigma_0^2 > 0$.

Fix any point $\boldsymbol{ c}=(c_1,\dots,c_n) \in U_n$,
and any pair $\boldsymbol{ x},\boldsymbol{ y} \in F$. Let
\begin{equation}\label{eq:zidef}
z_i = (x_i-c_i)(y_i-c_i) - \alpha (x_i-c_i)^2, \quad i=1,\dots,n.
\end{equation}
Inequality \eqref{eq:2pointBound} reduces to
$$
{\mathbb P}\left[\sum_{i=1}^n z_i \geq 0 \right] \leq f(n,\alpha).
$$
From (a) and (c) it follows that all random variables $z_i$ are independent.
Next,
$$
E[z_i]=E[(x_i-c_i)(y_i-c_i)]-\alpha E[(x_i-c_i)^2].
$$
By independence, $E[(x_i-c_i)(y_i-c_i)]=E[(x_i-c_i)]E[(y_i-c_i)]=(\mu_i-c_i)^2$,
and
$
E[z_i]=(\mu_i-c_i)^2-\alpha E[(x_i-c_i)^2] = (1-\alpha)(\mu_i-c_i)^2 - \alpha Var[x_i-c_i].
$
Hence,
$$
E\left[ \sum_{i=1}^n z_i \right] = (1-\alpha) \sum_{i=1}^n (\mu_i-c_i)^2 - \alpha \sum_{i=1}^n Var[x_i] \leq 
$$
$$
\leq (1-\alpha) \sum_{i=1}^n (\mu_i-c_i)^2 - n \alpha \sigma_0^2 = - nt, 
$$
where
\begin{equation}\label{eq:tDef}
t := \alpha \sigma_0^2 - (1-\alpha) \frac{1}{n}\sum_{i=1}^n (\mu_i-c_i)^2.
\end{equation}
Note that $t$ is guaranteed to be positive if either (i) $\alpha$ is sufficiently close to $1$, or (ii) $\boldsymbol{ c}=\boldsymbol{ \mu}$.

The following Proposition established bounds on $z_i$.

\begin{proposition}\label{prop:zibounds}
Let $c'_i := \max\{c_i, 1-c_i\}$ for all $i$, and $f(c):=-c+c^2(1-\alpha)$.
\begin{itemize}
\item[(i)] if $\alpha\geq 0.5$, then 
$$
-c'_i + (c'_i)^2(1-\alpha) \leq z_i \leq \frac{(c'_i)^2}{4\alpha}. 
$$
In particular, $-\alpha \leq z_i \leq \frac{1}{4\alpha}$ for all $i$;
\item[(ii)] if $\alpha\leq 0.5$, then 
$$
\min\{f(c_i), f(1-c_i)\}\leq z_i \leq (1-\alpha)(c'_i)^2. 
$$
In particular, $-\frac{1}{4(1-\alpha)} \leq z_i \leq 1-\alpha$ for all $i$;
\item[(iii)] if $c_i=\frac{1}{2}$ for all $i$, then $-\frac{1}{2} + \frac{1}{4}(1-\alpha)\leq z_i \leq \frac{1}{16\alpha} $ for all $i$ if $\alpha \geq 0.5$ and $-\frac{1}{2} + \frac{1}{4}(1-\alpha)\leq z_i \leq \frac{1}{4}(1-\alpha)$ for all $i$ if $\alpha \leq 0.5$.  
\end{itemize}
\end{proposition}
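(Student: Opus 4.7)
The plan is to reduce each of the three cases to a small box-constrained optimization and then carry out a short case analysis. To start, I would introduce shifted coordinates $u := x_i - c_i$ and $v := y_i - c_i$. Because $x_i, y_i \in [0,1]$ and $c_i \in [0,1]$, both $u$ and $v$ vary independently over $[-c_i, 1-c_i]$, so $|u|, |v| \le c'_i$, and the quantity to bound is simply $z_i = uv - \alpha u^2$ on this rectangle. This reparametrization already explains why every bound in the statement depends on $c_i$ only through $c'_i$ (or through $f$ evaluated at $c_i$ and $1-c_i$).

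For the upper bound I would optimize in $v$ first. Since $z_i$ is linear in $v$ with coefficient $u$, the maximum over $v$ is $u(1-c_i) - \alpha u^2$ when $u \ge 0$ and $|u|\, c_i - \alpha u^2$ when $u \le 0$. Both are concave quadratics in $|u|$ whose unconstrained vertices sit at $(1-c_i)/(2\alpha)$ and $c_i/(2\alpha)$ respectively. In case (i), $\alpha \ge 1/2$ forces these vertices into the feasible intervals, so each branch attains its quadratic maximum $(1-c_i)^2/(4\alpha)$ or $c_i^2/(4\alpha)$, with overall bound $(c'_i)^2/(4\alpha)$. In case (ii), $\alpha \le 1/2$ pushes each vertex past the endpoint of its interval, so each branch is maximized at the far endpoint, yielding $(1-c_i)^2(1-\alpha)$ and $c_i^2(1-\alpha)$, and overall $(1-\alpha)(c'_i)^2$.

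The lower bound is parallel but simpler. Taking the opposite extremal $v$ in each sign branch gives $-u c_i - \alpha u^2$ for $u \ge 0$ and $-|u|(1-c_i) - \alpha u^2$ for $u \le 0$. Both expressions are strictly decreasing in $|u|$ for every $\alpha > 0$, so the minimum over $|u|$ is attained at the boundary $|u| = 1-c_i$ or $|u| = c_i$, giving the two values $-(1-c_i)c_i - \alpha(1-c_i)^2$ and $-c_i(1-c_i) - \alpha c_i^2$. A direct algebraic rewrite shows these equal $f(1-c_i)$ and $f(c_i)$ respectively, so $z_i \ge \min\{f(c_i), f(1-c_i)\}$. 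This is exactly the bound claimed in case (ii). For case (i), one checks that $f'(c) = -1 + 2c(1-\alpha) \le 0$ on $[0,1]$ whenever $\alpha \ge 1/2$, so $f$ is monotone decreasing and $\min\{f(c_i), f(1-c_i)\} = f(c'_i) = -c'_i + (c'_i)^2(1-\alpha)$, as required.

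The ``in particular'' absolute bounds then follow by optimizing the expressions just obtained over $c'_i \in [1/2, 1]$: in (i), $(c'_i)^2/(4\alpha) \le 1/(4\alpha)$, and $f$ attains its infimum on $[0,1]$ at $c = 1$ with value $-\alpha$; in (ii), $(1-\alpha)(c'_i)^2 \le 1-\alpha$, and $f$ attains its global minimum $-1/(4(1-\alpha))$ at $c = 1/(2(1-\alpha)) \in [1/2,1]$. Case (iii) is immediate: substituting $c_i = 1/2$ forces $c'_i = 1/2$ everywhere. I do not foresee a serious obstacle; the only point requiring care is correctly tracking the dichotomy $\alpha \gtrless 1/2$, which determines whether the parabolic vertex lies inside the feasible interval and hence drives the split between cases (i) and (ii).
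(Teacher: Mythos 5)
Your proposal is correct and follows essentially the same route as the paper's proof: a two-step box-constrained optimization of the bilinear--quadratic form $z_i$, with the dichotomy $\alpha\gtrless 1/2$ determining whether the parabola's vertex lies in the feasible interval, and the lower bound reduced to $\min\{f(c_i),f(1-c_i)\}$ with the monotonicity of $f$ handling case (i). The only cosmetic difference is that you eliminate $y_i$ first (using linearity) where the paper maximizes over $x_i$ first; the content is identical.
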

\begin{proof}
For each fixed $c_i$ and $y_i$, $z_i$ in \eqref{eq:zidef} is maximized if $x_i=\frac{y_i-c_i}{2\alpha}+c_i$, resulting in $z_i = \frac{(y_i-c_i)^2}{4\alpha}$. The last expression is maximized if $y_i$ is either $0$ or $1$, with maximum equal to $\frac{(c'_i)^2}{4\alpha}$. This bound is tight if $\alpha \geq 0.5$. If $\alpha < 0.5$, then critical point $\frac{y_i-c_i}{2\alpha}+c_i$ lies outside of $[0,1]$ and \eqref{eq:zidef} is maximized if $x_i$ is either $0$ or $1$, resulting in bound $(1-\alpha)(c'_i)^2$. 

Similarly, $z_i$ in \eqref{eq:zidef} is minimized when either $x_i=1$ and $y_i=0$ or vice versa, resulting in bound $\min\{f(c_i), f(1-c_i)\}\leq z_i$. Because $f(c) \geq -\frac{1}{4(1-\alpha)}$ for all $c$, bound $-\frac{1}{4(1-\alpha)} \leq z_i$ follows. If $\alpha \geq 0.5$, then $f$ is monotone decreasing on $[0,1]$, hence $\min\{f(c_i), f(1-c_i)\}= f(c'_i) = -c'_i + (c'_i)^2(1-\alpha) \geq f(1) = -\alpha$. 
\end{proof}

Let $S_n = \sum_{i=1}^n z_i$. By Hoeffding's inequality \citep{Hoeffding}, \citep[Theorem 2.8]{Boucheron2013},  
$$
{\mathbb P}[S_n \geq 0] \leq {\mathbb P}[S_n - E[S_n] \geq nt] \leq \exp\left(-\frac{2n^2t^2}{\sum\limits_{i=1}^n(b_i-a_i)^2}\right),
$$
provided that $t>0$, where $[a_i, b_i]$ is the support of random variable $z_i$.
Applying Proposition \ref{prop:zibounds} to bound $b_i-a_i$, we get the following result.

\begin{theorem}\label{th:proddistr}
Assume that (a)-(d) hold. Let $\delta>0$, $\alpha \in (0,1]$, and let $\boldsymbol{ c}$ be an arbitrary point inside unit cube $[0,1]^n$ such that $t$ in \eqref{eq:tDef} is positive. Let $c'_i := \max\{c_i, 1-c_i\}$ and $g(c):=\min\{-c+c^2(1-\alpha),-(1-c)+(1-c)^2(1-\alpha)\}$.
If $\alpha \geq 0.5$ and 
$$
M < \sqrt{\delta}\exp\left(\frac{n^2t^2}{\sum_{i=1}^n\left(c'_i - (c'_i)^2(1-\alpha) +\frac{(c'_i)^2}{4\alpha}\right)^2}\right),
$$
or $\alpha \leq 0.5$ and
$$
M < \sqrt{\delta}\exp\left(\frac{n^2t^2}{\sum_{i=1}^n\left((1-\alpha)(c'_i)^2-g(c_i)\right)^2}\right),
$$
then set $F=\{\boldsymbol{ x}_1, \dots, \boldsymbol{ x}_M\}$ is $(\alpha, \boldsymbol{ c})$-Fisher separable with probability greater than $1-\delta$.
\end{theorem}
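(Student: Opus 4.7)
The plan is to invoke Theorem~\ref{th:principle} in its simpler sufficient form \eqref{eq:MBound}, so the task reduces to producing a good upper bound $f(n,\alpha)$ on ${\mathbb P}[\alpha(\boldsymbol{x}-\boldsymbol{c},\boldsymbol{x}-\boldsymbol{c}) \leq (\boldsymbol{x}-\boldsymbol{c},\boldsymbol{y}-\boldsymbol{c})]$ for any two points $\boldsymbol{x},\boldsymbol{y}\in F$. The excerpt has already done most of the setup: by assumptions (a) and (c), the variables $z_1,\dots,z_n$ from \eqref{eq:zidef} are independent, the inseparability event is exactly $S_n:=\sum_i z_i \geq 0$, and using (b) together with the definition of $\sigma_0^2$ we have $E[S_n]\leq -nt$ with $t>0$ by the hypothesis on $\boldsymbol{c}$.

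The natural next step is to apply Hoeffding's inequality to the independent, bounded summands $z_i$. This yields
\[
{\mathbb P}[S_n \geq 0] \;\leq\; {\mathbb P}[S_n - E[S_n] \geq nt] \;\leq\; \exp\!\left(-\frac{2n^2 t^2}{\sum_{i=1}^n (b_i-a_i)^2}\right),
\]
where $[a_i,b_i]$ denotes the support of $z_i$. Proposition~\ref{prop:zibounds} provides sharp values of $a_i,b_i$ in the two regimes, giving $b_i-a_i = c'_i - (c'_i)^2(1-\alpha) + (c'_i)^2/(4\alpha)$ when $\alpha \geq 1/2$, and $b_i - a_i = (1-\alpha)(c'_i)^2 - g(c_i)$ when $\alpha \leq 1/2$. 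Substituting this choice of $f(n,\alpha)$ into \eqref{eq:MBound} and taking the square root absorbs the factor of $2$ from Hoeffding and produces exactly the two bounds on $M$ in the statement.

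The main obstacle here is not a technical estimate but a bookkeeping one: the support of $z_i$ as a function on $[0,1]^2$ has its extrema at different places depending on whether the critical point $x_i = c_i + (y_i-c_i)/(2\alpha)$ lies inside $[0,1]$ or not, which is why the regimes $\alpha \geq 1/2$ and $\alpha < 1/2$ must be treated separately and why the formulas for $b_i-a_i$ look asymmetric. Once Proposition~\ref{prop:zibounds} is used to handle this case split, and positivity of $t$ (a hypothesis) ensures Hoeffding is applicable in the one-sided direction we need, the remainder of the argument is a direct substitution with no further analytic content.
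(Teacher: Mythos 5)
Your proposal matches the paper's own proof essentially verbatim: the paper likewise bounds ${\mathbb E}[S_n]\leq -nt$, applies Hoeffding's inequality to the independent bounded summands $z_i$ with the support widths $b_i-a_i$ read off from Proposition~\ref{prop:zibounds} in the two regimes $\alpha\geq 1/2$ and $\alpha\leq 1/2$, and substitutes the resulting $f(n,\alpha)$ into \eqref{eq:MBound}. The argument is correct and complete as outlined.
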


For $\alpha=1$, we get the following corollary
\begin{corollary}
Assume that (a)-(d) hold.  Let $\delta>0$, and let $\boldsymbol{ c}$ be an arbitrary point inside unit cube $[0,1]^n$. If
\begin{equation}\label{eq:Mbound1}
M < \sqrt{\delta}\exp\left(\frac{16}{25}n\sigma_0^4\right),
\end{equation}
then set $F=\{\boldsymbol{ x}_1, \dots, \boldsymbol{ x}_M\}$ is $(1, \boldsymbol{ c})$-Fisher separable with probability greater than $1-\delta$. 
\end{corollary}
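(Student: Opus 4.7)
The plan is to apply Theorem \ref{th:proddistr} directly in the case $\alpha = 1$, verify that the positivity hypothesis on $t$ is automatic, and then bound the denominator in the exponent by a dimension-free constant per coordinate.

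First I would check the hypothesis $t > 0$ required by Theorem \ref{th:proddistr}. Specializing \eqref{eq:tDef} at $\alpha = 1$ collapses the $(1-\alpha)$ term and yields $t = \sigma_0^2$. By assumption (d), no point of $F$ is almost surely equal to $\boldsymbol{\mu}$, so for each $\boldsymbol{x} \in F$ at least one coordinate $x_i$ has strictly positive variance; averaging and then minimizing over $\boldsymbol{x} \in F$ gives $\sigma_0^2 > 0$, hence $t > 0$, independently of the choice of $\boldsymbol{c} \in [0,1]^n$.

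Next I would simplify the per-coordinate width appearing in the first bound of Theorem \ref{th:proddistr}. With $\alpha = 1$ the expression inside the square becomes
\[
c'_i - (c'_i)^2(1-\alpha) + \frac{(c'_i)^2}{4\alpha} \;=\; c'_i + \frac{(c'_i)^2}{4}.
\]
Since $c'_i = \max\{c_i, 1 - c_i\} \in [1/2, 1]$, the function $c \mapsto c + c^2/4$ is increasing on this interval and attains its maximum $5/4$ at $c'_i = 1$. Therefore
\[
\sum_{i=1}^n \left(c'_i + \frac{(c'_i)^2}{4}\right)^2 \;\le\; n \cdot \left(\tfrac{5}{4}\right)^2 \;=\; \frac{25 n}{16}.
\]

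Finally I would substitute these two observations into Theorem \ref{th:proddistr}. The exponent
\[
\frac{n^2 t^2}{\sum_{i=1}^n \left(c'_i + (c'_i)^2/4\right)^2}
\]
is at least $\dfrac{n^2 \sigma_0^4}{25n/16} = \dfrac{16}{25} n \sigma_0^4$, so the sufficient condition given by Theorem \ref{th:proddistr} is implied by \eqref{eq:Mbound1}, and $(1,\boldsymbol{c})$-Fisher separability of $F$ with probability at least $1-\delta$ follows. There is no real obstacle here beyond the elementary calculus of the one-variable bound $c + c^2/4 \le 5/4$ on $[1/2, 1]$; the corollary is essentially a direct specialization of the theorem at $\alpha = 1$.
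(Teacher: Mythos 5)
Your proof is correct and matches the paper's intended derivation: the corollary is stated immediately after Theorem \ref{th:proddistr} as the specialization $\alpha=1$, where $t=\sigma_0^2>0$ by assumption (d) and the per-coordinate width $c'_i+(c'_i)^2/4$ is bounded by $5/4$ on $c'_i\in[1/2,1]$, giving the exponent $\frac{16}{25}n\sigma_0^4$. Nothing is missing.
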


\begin{example}\label{ex:newbound1}
With $\delta=0.01$, $n=500$, and $\sigma_0=0.5$, (same values as in Example \ref{ex:oldbound}) \eqref{eq:Mbound1} reduces to $M<48,516,519$.
\end{example}

By selecting $\boldsymbol{ c}$ being the center of the cube, we can improve the bound further.

\begin{corollary}\label{cor:center}
Assume that (a)-(d) hold.  Let $\delta>0$, and let $\boldsymbol{ c}=(\frac{1}{2}, \dots, \frac{1}{2})$ be the center of unit cube $[0,1]^n$. If
\begin{equation}\label{eq:Mboundcenter}
M < \sqrt{\delta}\exp\left(\frac{256}{81}n\sigma_0^4\right),
\end{equation}
then set $F=\{\boldsymbol{ x}_1, \dots, \boldsymbol{ x}_M\}$ is $(1, \boldsymbol{ c})$-Fisher separable with probability greater than $1-\delta$. 
\end{corollary}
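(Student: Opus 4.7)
The plan is to derive Corollary \ref{cor:center} as a direct specialization of Theorem \ref{th:proddistr} to the parameters $\alpha = 1$ and $\boldsymbol{c} = (1/2, \ldots, 1/2)$. The first step is to check that the hypothesis $t > 0$ of Theorem \ref{th:proddistr} is satisfied. From \eqref{eq:tDef}, with $\alpha = 1$ the correction term $(1-\alpha)\frac{1}{n}\sum_i(\mu_i - c_i)^2$ vanishes identically, so $t = \sigma_0^2$, which is strictly positive by assumption (d). A pleasant byproduct of this vanishing is that the conclusion will hold regardless of where the common mean $\boldsymbol{\mu}$ sits inside the cube; no alignment between $\boldsymbol{\mu}$ and the geometric center is required.

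Next I would evaluate the Hoeffding denominator appearing in the $\alpha \geq 1/2$ branch of Theorem \ref{th:proddistr}. Since $c_i = 1/2$, we have $c'_i = 1/2$ for every $i$, and with $\alpha = 1$ each summand reduces to
$$\left(c'_i - (c'_i)^2(1-\alpha) + \frac{(c'_i)^2}{4\alpha}\right)^2 = \left(\tfrac{1}{2} + \tfrac{1}{16}\right)^2 = \tfrac{81}{256},$$
so the total over $i = 1, \ldots, n$ equals $81n/256$. Substituting into the exponent in the conclusion of Theorem \ref{th:proddistr} gives
$$\frac{n^2 t^2}{81n/256} = \frac{256}{81}\, n\, \sigma_0^4,$$
which is exactly the exponent in \eqref{eq:Mboundcenter}.

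As a self-contained alternative that avoids invoking Theorem \ref{th:proddistr} as a black box, one could apply Hoeffding's inequality to $S_n = \sum_{i=1}^n z_i$ using the sharper single-coordinate bounds from Proposition \ref{prop:zibounds}(iii): at $\alpha = 1$ and $c_i = 1/2$ these give $-1/2 \le z_i \le 1/16$, hence $b_i - a_i = 9/16$, while the computation of $\mathbb{E}[S_n]$ from Section \ref{sec:gen} yields $\mathbb{E}[S_n] \le -n\sigma_0^2$. Hoeffding then produces $\mathbb{P}[S_n \ge 0] \le \exp\!\bigl(-\tfrac{512}{81}\, n \sigma_0^4\bigr)$, and combining with Theorem \ref{th:principle} via \eqref{eq:MBound} completes the argument. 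There is no genuine obstacle; the content of the corollary is merely that the symmetric choice $\boldsymbol{c} = (1/2,\ldots,1/2)$ minimizes each $c'_i$, hence minimizes the Hoeffding denominator and maximizes the exponent, improving the constant in \eqref{eq:Mbound1} from $16/25$ to $256/81$ (a factor of roughly $4.94$) at the price of fixing the Fisher center at the cube's geometric center rather than leaving it arbitrary.
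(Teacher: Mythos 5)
Your proposal is correct and follows exactly the route the paper intends: Corollary~\ref{cor:center} is the specialization of Theorem~\ref{th:proddistr} to $\alpha=1$ and $c_i=1/2$, where $c'_i=1/2$ gives the summand $\left(\tfrac{1}{2}+\tfrac{1}{16}\right)^2=\tfrac{81}{256}$ and $t=\sigma_0^2$, yielding the exponent $\tfrac{256}{81}n\sigma_0^4$. Your ``self-contained alternative'' is not genuinely different --- it just unpacks the proof of Theorem~\ref{th:proddistr} (Hoeffding applied to $S_n$ with the bounds of Proposition~\ref{prop:zibounds}) --- and your side remarks (independence from the location of $\boldsymbol{\mu}$ at $\alpha=1$, and the improvement factor $\approx 4.94$ over \eqref{eq:Mbound1}) are accurate.
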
 

In contrast to \eqref{eq:Mbound1}, bound \eqref{eq:Mboundcenter} may be practical in dimension $n=100$.

\begin{example}\label{ex:lowdim}
If $\delta=0.01$ and $n=100$, then, even with maximal possible $\sigma_0=0.5$, 
\eqref{eq:Mbound1} reduces to $M<5.5$. In contrast, \eqref{eq:Mboundcenter} with these parameters gives $M<37,901,503$.
\end{example} 

In lager dimensions, bound \eqref{eq:Mboundcenter} may be practical for (slightly) lower $\sigma_0$.

\begin{example}\label{ex:newbound2}
If $\delta=0.01$, and $\sigma_0=\frac{1}{2\sqrt{3}}$ (standard deviation of uniform distribution on $[0,1]$), then, even with $n=1000$, \eqref{eq:Mbound1} reduces to $M<8.5$. In contrast, \eqref{eq:Mboundcenter} with these parameters gives $M<340,283,178$.
\end{example}

If $\alpha<1$, it is convenient to apply Theorem \ref{th:proddistr} with $\boldsymbol{ c}=\boldsymbol{ \mu}$. In this case $t$ in \eqref{eq:tDef} is guaranteed to be positive, and bounds in Proposition \ref{prop:zibounds} (i),(ii) imply the following result.

\begin{corollary}\label{cor:meancentered}
Assume that (a)-(d) hold.  Let $\delta>0$, $\alpha \in (0,1]$.
If
$$
M < \sqrt{\delta}\exp\left(\frac{16\alpha^4}{(1+4\alpha^2)^2}n \sigma_0^4\right), \quad \alpha \geq 0.5,
$$
or
$$
M < \sqrt{\delta}\exp\left(\frac{16(1-\alpha)^2\alpha^2}{(1+4(1-\alpha)^2)^2}n \sigma_0^4\right), \quad \alpha \leq 0.5,
$$
 then set $F=\{\boldsymbol{ x}_1, \dots, \boldsymbol{ x}_M\}$ is $(\alpha, \boldsymbol{ \mu})$-Fisher separable with probability greater than $1-\delta$.
\end{corollary}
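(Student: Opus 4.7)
The plan is to derive Corollary \ref{cor:meancentered} as a direct specialization of Theorem \ref{th:proddistr} to the case $\boldsymbol{c}=\boldsymbol{\mu}$, combined with the uniform (worst-case in $\boldsymbol{\mu}$) support bounds for the auxiliary variables $z_i$ from Proposition \ref{prop:zibounds}. The computation is essentially a substitution; the routine is to (i) check the positivity hypothesis, (ii) bound the support lengths $b_i-a_i$ uniformly in $\boldsymbol{\mu}\in[0,1]^n$, and (iii) read off the exponent delivered by Hoeffding's inequality via the framework of Theorem \ref{th:principle}.

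First I would check the hypothesis $t>0$ needed to invoke Theorem \ref{th:proddistr}. Setting $c_i=\mu_i$ in \eqref{eq:tDef} kills the second summand entirely, so
$$
t \;=\; \alpha\sigma_0^2 \;-\; (1-\alpha)\cdot 0 \;=\; \alpha\sigma_0^2 \;>\; 0,
$$
where positivity uses condition (d). Next I would bound the ranges of the $z_i$ defined in \eqref{eq:zidef}. Rather than using the $c'_i$-dependent bounds appearing in the statement of Theorem \ref{th:proddistr} (which would leave $\boldsymbol{\mu}$-dependent factors), I would invoke the second (uniform) conclusion in each part of Proposition \ref{prop:zibounds}: for $\alpha\ge 1/2$ one has $-\alpha\le z_i\le \tfrac{1}{4\alpha}$, so
$$
b_i-a_i \;\le\; \alpha+\frac{1}{4\alpha} \;=\; \frac{1+4\alpha^2}{4\alpha},
$$
while for $\alpha\le 1/2$ one has $-\tfrac{1}{4(1-\alpha)}\le z_i\le 1-\alpha$, so
$$
b_i-a_i \;\le\; (1-\alpha)+\frac{1}{4(1-\alpha)} \;=\; \frac{1+4(1-\alpha)^2}{4(1-\alpha)}.
$$

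Then, exactly as in the proof of Theorem \ref{th:proddistr}, Hoeffding's inequality applied to the independent sum $\sum_{i=1}^n z_i$ with mean at most $-nt$ yields, in the case $\alpha\ge 1/2$,
$$
{\mathbb P}\!\left[\sum_{i=1}^n z_i \ge 0\right] \;\le\; \exp\!\left(-\frac{2n^2t^2}{\sum_{i=1}^n (b_i-a_i)^2}\right) \;\le\; \exp\!\left(-\frac{32\,n\,\alpha^4\sigma_0^4}{(1+4\alpha^2)^2}\right),
$$
and analogously in the case $\alpha\le 1/2$,
$$
{\mathbb P}\!\left[\sum_{i=1}^n z_i \ge 0\right] \;\le\; \exp\!\left(-\frac{32\,n\,\alpha^2(1-\alpha)^2\sigma_0^4}{(1+4(1-\alpha)^2)^2}\right).
$$
These are bounds of the form \eqref{eq:2pointBound0} with an explicit $f(n,\alpha)$. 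Finally, inserting them into the simplified bound \eqref{eq:MBound} of Theorem \ref{th:principle} (the square root halves each exponent) produces exactly the two inequalities on $M$ in the statement and hence $(\alpha,\boldsymbol{\mu})$-Fisher separability with probability greater than $1-\delta$.

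There is no substantive obstacle in this argument; the only point requiring a small amount of care is that the global (rather than $c'_i$-refined) bounds of Proposition \ref{prop:zibounds} must be used so that the exponent depends only on $\alpha$, $n$ and $\sigma_0$, not on the unknown mean vector $\boldsymbol{\mu}$. One could of course sharpen the constant slightly by keeping the $\mu_i$-dependent bounds and then taking an infimum, but doing so would ruin the clean closed form that the corollary is designed to advertise.
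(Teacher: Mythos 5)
Your proposal is correct and follows essentially the same route as the paper: the paper derives Corollary \ref{cor:meancentered} precisely by specializing Theorem \ref{th:proddistr} to $\boldsymbol{c}=\boldsymbol{\mu}$ (so that $t=\alpha\sigma_0^2>0$ by condition (d)) and using the uniform bounds $-\alpha\le z_i\le \tfrac{1}{4\alpha}$ (for $\alpha\ge 0.5$) and $-\tfrac{1}{4(1-\alpha)}\le z_i\le 1-\alpha$ (for $\alpha\le 0.5$) from Proposition \ref{prop:zibounds}(i),(ii) inside Hoeffding's inequality, then reading off the bound on $M$ via \eqref{eq:MBound}. Your arithmetic for the resulting exponents checks out against both displayed inequalities.
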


\begin{example}\label{ex:newbound3}
With $\delta=0.01$, $n=500$, and $\sigma_0=0.5$, and $\alpha=0.9$, Corollary \ref{cor:meancentered} is applicable if $M < 8,411,607$.  
\end{example}

\subsection{The mean-centered distributions}\label{sec:meancen}

In this section we consider a special case when $\boldsymbol{ \mu}=(\frac{1}{2}, \dots, \frac{1}{2})$ is the center of the unit cube. In this case, Theorem \ref{th:proddistr} with Proposition \ref{prop:zibounds} (iii) implies that set $F$ is $(\alpha, \boldsymbol{ \mu})$-Fisher separable with probability greater than $1-\delta$ provided that
\begin{equation}\label{eq:largesigma}
M < \sqrt{\delta}\exp\left(\frac{256\alpha^4}{(1+2\alpha)^4}n \sigma_0^4\right), \quad \alpha \geq 0.5,
\end{equation}
or
$$
M < \sqrt{\delta}\exp\left(4\alpha^2 n \sigma_0^4\right), \quad \alpha \leq 0.5.
$$

With $\alpha=1$, \eqref{eq:largesigma} reduces to \eqref{eq:Mboundcenter}. It is practical if $\sigma_0$ is close to its maximal value $0.5$, but, because of factor $\sigma_0^4$, quickly becomes useless if $\sigma_0$ decreases.

\begin{example}\label{ex:smallsigma}
If $\delta=0.01$, and $\sigma_0=0.2$, then, even with $n=1000$, \eqref{eq:Mboundcenter} reduces to $M<15.7$.
\end{example}

The theorem below uses Bernstein inequality to derive an alternative bound with better dependence of $\sigma_0$.

\begin{theorem}\label{th:meancentered}
Assume that (a)-(d) hold, and assume that $\boldsymbol{ \mu}=(\frac{1}{2}, \dots, \frac{1}{2})$ is the center of unit cube $[0,1]^n$. For any $\delta>0$, $\alpha \in (0,1]$, if
$$
M < \sqrt{\delta}\exp\left(\frac{12\alpha^2}{12\alpha^2 + 13}n \sigma^2_0\right), \quad \alpha \geq 0.5,
$$
or
$$
M < \sqrt{\delta}\exp\left(\frac{3\alpha^2}{2\alpha^2 + \alpha + 3}n \sigma^2_0\right), \quad \alpha \leq 0.5,
$$
then set $F=\{\boldsymbol{ x}_1, \dots, \boldsymbol{ x}_M\}$ is $(\alpha, \boldsymbol{ \mu})$-Fisher separable with probability greater than $1-\delta$.
\end{theorem}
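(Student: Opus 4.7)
The plan is to follow the template of Theorem~\ref{th:proddistr} but replace Hoeffding's inequality by Bernstein's inequality, which is precisely what turns the exponent from quartic into linear in $\sigma_0^2$. Fix any pair $\boldsymbol{x},\boldsymbol{y}\in F$ and, with $\boldsymbol{c}=\boldsymbol{\mu}=(\tfrac{1}{2},\dots,\tfrac{1}{2})$, define $z_i:=(x_i-\tfrac12)(y_i-\tfrac12)-\alpha(x_i-\tfrac12)^2$ as in \eqref{eq:zidef}. Writing $u_i=x_i-\tfrac12$, $v_i=y_i-\tfrac12$, assumptions (a)--(d) guarantee that $u_i,v_i$ are independent, centred, and supported on $[-\tfrac12,\tfrac12]$. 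Conditioning on $u_i$ (the law of total variance) immediately yields
\[
E[z_i]=-\alpha\,Var[x_i],\qquad Var[z_i]=Var[x_i]Var[y_i]+\alpha^2\bigl(E[u_i^4]-Var[x_i]^2\bigr).
\]
Using $Var[y_i]\le 1/4$ and $E[u_i^4]\le Var[x_i]/4$ (the latter since $|u_i|\le 1/2$) gives $Var[z_i]\le\tfrac{1+\alpha^2}{4}Var[x_i]$. Summing over $i$ produces $E\bigl[\sum_i z_i\bigr]=-\alpha n\bar\sigma_x^2$ and $\sum_i Var[z_i]\le \tfrac{1+\alpha^2}{4}n\bar\sigma_x^2$, where $\bar\sigma_x^2:=\tfrac{1}{n}\sum_i Var[x_i]\ge\sigma_0^2$.

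Proposition~\ref{prop:zibounds}(iii) supplies the almost sure upper bound $z_i\le \tfrac{1}{16\alpha}$ when $\alpha\ge\tfrac12$ and $z_i\le\tfrac{1-\alpha}{4}$ when $\alpha\le\tfrac12$; combined with $E[z_i]\in[-\alpha/4,0]$ this furnishes, in each regime, an explicit deterministic bound $z_i-E[z_i]\le K(\alpha)$. Applying the one-sided Bernstein inequality to the centred independent summands $z_i-E[z_i]$ with deviation $t=-E\bigl[\sum_i z_i\bigr]=\alpha n\bar\sigma_x^2>0$ yields
\[
\mathbb{P}\!\left[\alpha(\boldsymbol{x}-\boldsymbol{\mu},\boldsymbol{x}-\boldsymbol{\mu})\le(\boldsymbol{x}-\boldsymbol{\mu},\boldsymbol{y}-\boldsymbol{\mu})\right]=\mathbb{P}\!\left[\sum_i z_i\ge 0\right]\le\exp\!\left(-\frac{\alpha^2 n\,\bar\sigma_x^2}{(1+\alpha^2)/2+2K(\alpha)\alpha/3}\right).
\]
Because the denominator does not depend on $\bar\sigma_x^2$, the bound only weakens if $\bar\sigma_x^2$ is replaced by its universal lower bound $\sigma_0^2$, so we obtain a valid two-point estimate $f(n,\alpha)$ uniform over pairs. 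Theorem~\ref{th:principle} together with \eqref{eq:MBound} then delivers the claimed explicit bound on $M$.

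The main obstacle is bookkeeping: extracting the clean denominators $12\alpha^2+13$ (for $\alpha\ge 1/2$) and $2\alpha^2+\alpha+3$ (for $\alpha\le 1/2$) requires choosing $K(\alpha)$ as sharply as possible. A naive choice of the form $K=\sup z_i+\alpha/4$ (absorbing the worst-case $E[z_i]=-\alpha/4$) produces the slightly weaker denominators $16\alpha^2+13$ and $4\alpha^2+\alpha+3$. To recover the stated constants one must exploit the correlation between the worst-case value of $z_i$ and the value of $Var[x_i]$ that attains it---equivalently, either use a data-adaptive Bernstein bound with per-coordinate $K_i$, or split the estimate of $\sum Var[z_i]$ so that the contribution $\alpha^2 Var[x_i]^2$ partially cancels the $\alpha\sigma_i^2$ shift in $E[z_i]$. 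Once this algebraic optimisation is carried out separately in the two regimes, the proof is complete.
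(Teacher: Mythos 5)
Your overall strategy is the paper's: reduce to a two-point bound via \eqref{eq:MBound}, compute $E[z_i]=-\alpha\sigma_i^2$, bound the second-moment proxy by $\frac{1+\alpha^2}{4}\sigma_i^2$, take the a.s.\ upper bound on $z_i$ from Proposition~\ref{prop:zibounds}(iii), and apply Bernstein. The computations you do carry out are correct. But as you yourself concede, the argument you actually complete yields the denominators $16\alpha^2+13$ and $4\alpha^2+\alpha+3$, which is a strictly weaker statement than the theorem claims; the final paragraph, where the stated constants are supposed to be recovered, is a gesture rather than a proof, so the theorem as stated is not established.

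Moreover, the remedy you propose (a data-adaptive per-coordinate $K_i$, or exploiting cancellation between $\alpha^2 Var[x_i]^2$ and the shift $\alpha\sigma_i^2$) is not what closes the gap. The loss comes entirely from applying the \emph{centred} form of Bernstein's inequality, which forces you to inflate the a.s.\ bound to $K(\alpha)=\sup z_i+\alpha/4$. The version of Bernstein's inequality the paper invokes (Boucheron--Lugosi--Massart, as in \eqref{eq:Mcentered}) bounds ${\mathbb P}[S_n-E[S_n]\geq T]$ using $v=\sum_i E[z_i^2]$ together with an a.s.\ upper bound $b$ on the \emph{uncentred} $z_i$; no recentring of the upper bound is needed, and the extra term $E[z_i]^2$ hidden in $E[z_i^2]$ costs nothing because the bound $E[z_i^2]=E[\bar x_i^2]E[\bar y_i^2]+\alpha^2E[\bar x_i^4]\leq\frac{1+\alpha^2}{4}\sigma_i^2$ is exactly the same bound you derived for $Var[z_i]$. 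With $b=\frac{1}{16\alpha}$ (resp.\ $b=\frac{1-\alpha}{4}$) and $T=n\alpha\sigma_x^2$ one gets $v+bT/3=n\sigma_x^2\frac{12\alpha^2+13}{48}$ (resp.\ $n\sigma_x^2\frac{2\alpha^2+\alpha+3}{12}$), and the stated exponents follow directly. So the fix is to cite and use the correct form of the inequality, not to optimise $K(\alpha)$.
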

\begin{proof}
Bernstein inequality \cite[p. 36]{Boucheron2013} states that, if $S_n = \sum_{i=1}^n z_i$ is the sum of independent random variables with finite variance such that $z_i \leq b$ for some $b>0$ with probability $1$ for all $i=1,2,\dots,n$, then, for any $T>0$, 
\begin{equation}\label{eq:Mcentered}
{\mathbb P}[S_n - E[S_n] \geq T] \leq \exp\left(-\frac{T^2}{2(v+bT/3)}\right),
\end{equation}
where $v=\sum_{i=1}^n E[z_i^2]$. 
With $z_i$ given by \eqref{eq:zidef}, $c_i=\mu_i=1/2$, and notation $\bar{x}_i=x_i-1/2$, $\bar{y}_i=y_i-1/2$, 
$$
E[z_i] = E[\bar{x}_i]E[\bar{y}_i] - \alpha E[\bar{x}_i^2] = - \alpha E[\bar{x}_i^2].
$$
Let $\sigma_i^2=E[\bar{x}_i^2]$ be the variance of $x_i$, and $\sigma^2_x = \frac{1}{n} \sum_{i=1}^n \sigma_i^2$ be the average variance of the components of $\boldsymbol{ x}$. Then $E[S_n]=\sum_{i=1}^n E[z_i] = -n \alpha \sigma^2_x$. Also,
$$
E[z_i^2]=E[\bar{x}_i^2]E[\bar{y}_i^2]-2\alpha E[\bar{x}_i^3]E[\bar{y}_i]+\alpha^2 E[\bar{x}_i^4].
$$
 Because $\bar{y}_i$ has support $[-1/2, 1/2]$, $E[\bar{y}_i]=0$, and $f(y)=y^2$ is a convex function, $E[\bar{y}_i^2]$ is maximal if $\bar{y}_i$ takes values $\pm 1/2$ with equal chances, and $E[\bar{y}_i^2] \leq 1/4$. Next, denoting $u_i=\bar{x}_i^2$, we note that $E[u_i]=\sigma_i^2$, support of $u_i$ is $[0,1/4]$, hence $E[u_i^2]$ is maximal if $u_i$ takes values $0$ and $1/4$ with probabilities $1-4\sigma_i^2$ and $4\sigma_i^2$, respectively. Hence, $E[\bar{x}_i^4] = E[u_i^2] \leq (1/4)^2 4\sigma_i^2 = \sigma_i^2/4$. This implies that $E[z_i^2] \leq \sigma_i^2 (1/4)+\alpha^2(\sigma_i^2/4) = (1+\alpha^2)\sigma_i^2/4$. Hence, $v=\sum_{i=1}^n E[z_i^2] \leq \frac{1+\alpha^2}{4} \sum_{i=1}^n \sigma_i^2 = \frac{1+\alpha^2}{4} n \sigma^2_x$.
 
By Proposition \ref{prop:zibounds}(iii), $z_i \leq b$ for all $i$, where $b=\frac{1}{16\alpha}$ if $\alpha \geq 0.5$ and $b=\frac{1}{4}(1-\alpha)$ if $\alpha \leq 0.5$.  
 
Hence, for $\alpha \geq 0.5$, \eqref{eq:Mcentered} implies that
$$
{\mathbb P}[S_n \geq 0] = {\mathbb P}[S_n - E[S_n] \geq n \alpha \sigma^2_x] \leq 
$$
$$
\leq \exp\left(-\frac{(n \alpha \sigma^2_x)^2}{2((1+\alpha^2) n \sigma^2_x/4+n \alpha \sigma^2_x/48\alpha)}\right) = 
$$
$$
= \exp\left(\frac{-24\alpha^2}{12\alpha^2 + 13}n \sigma^2_x \right) \leq \exp\left(\frac{-24\alpha^2}{12\alpha^2 + 13}n \sigma^2_0 \right). 
$$
For $\alpha \geq 0.5$, similar calculation gives
$$
{\mathbb P}[S_n \geq 0] \leq \exp\left(\frac{-6\alpha^2}{2\alpha^2 + \alpha + 3}n \sigma^2_0 \right)
$$
Combining these bounds with \eqref{eq:MBound}, we obtain the desired result.
\end{proof}

For $\alpha=1$, this gives the following corollary.

\begin{corollary}\label{cor:smallsigma}
Assume that (a)-(d) hold, and assume that $\boldsymbol{ \mu}=(\frac{1}{2}, \dots, \frac{1}{2})$ is the center of unit cube $[0,1]^n$. For any $\delta>0$, if
\begin{equation}\label{eq:smallsigma}
M < \sqrt{\delta}\exp\left(\frac{12}{25}n \sigma^2_0\right)
\end{equation}
then set $F=\{\boldsymbol{ x}_1, \dots, \boldsymbol{ x}_M\}$ is $(1, \boldsymbol{ \mu})$-Fisher separable with probability greater than $1-\delta$.
\end{corollary}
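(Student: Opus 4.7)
The plan is extremely short: Corollary~\ref{cor:smallsigma} is simply the specialization of Theorem~\ref{th:meancentered} to the case $\alpha=1$. Since $\alpha=1$ falls into the regime $\alpha \geq 0.5$, I would invoke the first bound in the theorem,
$$
M < \sqrt{\delta}\exp\left(\frac{12\alpha^2}{12\alpha^2 + 13}n \sigma^2_0\right),
$$
and evaluate the coefficient at $\alpha=1$: $\tfrac{12\cdot 1}{12\cdot 1 + 13} = \tfrac{12}{25}$, which yields exactly \eqref{eq:smallsigma}. The assumptions (a)--(d) and the fact that $\boldsymbol{\mu}$ is the center of the unit cube are inherited directly from the hypotheses of the theorem.

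Thus there is no genuine mathematical content beyond the substitution $\alpha=1$, and no obstacle to overcome. If one wanted a self-contained derivation, one would simply re-run the argument of Theorem~\ref{th:meancentered} with $\alpha=1$: apply Bernstein's inequality \eqref{eq:Mcentered} to $S_n = \sum_{i=1}^n z_i$ with $z_i$ given by \eqref{eq:zidef} and $c_i = 1/2$, use $E[S_n] = -n\sigma_x^2$, the variance bound $v \leq \tfrac{1}{2} n \sigma_x^2$ (from $1+\alpha^2 = 2$), and the uniform upper bound $b = 1/16$ from Proposition~\ref{prop:zibounds}(iii), to obtain
$$
{\mathbb P}[S_n \geq 0] \leq \exp\!\left(-\frac{(n\sigma_x^2)^2}{2\left(\tfrac{1}{2} n \sigma_x^2 + \tfrac{n \sigma_x^2}{48}\right)}\right) = \exp\!\left(-\frac{24}{25} n \sigma_x^2\right) \leq \exp\!\left(-\frac{24}{25} n \sigma_0^2\right),
$$
and then conclude by Theorem~\ref{th:principle} together with the simpler sufficient bound \eqref{eq:MBound}, which turns the exponent $-\tfrac{24}{25}n\sigma_0^2$ on $f(n,\alpha)$ into the exponent $\tfrac{12}{25}n\sigma_0^2$ on $M/\sqrt{\delta}$. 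Either route delivers the corollary immediately.
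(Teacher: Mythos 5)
Your proposal is correct and matches the paper exactly: the paper introduces this corollary with the single line ``For $\alpha=1$, this gives the following corollary,'' i.e.\ it is precisely the substitution $\alpha=1$ into the first bound of Theorem~\ref{th:meancentered}, giving the coefficient $\tfrac{12}{25}$. Your optional self-contained re-derivation via Bernstein's inequality also reproduces the theorem's own computation (the exponent $-\tfrac{24}{25}n\sigma_x^2$ and the passage to $\sigma_0^2$) without error.
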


This bound is better than \eqref{eq:Mboundcenter}, provided that $\frac{12}{25}n \sigma^2_0 > \frac{256}{81}n\sigma_0^4$, or $\sigma_0 < \frac{9\sqrt{3}}{40} \approx 0.39$.

\begin{example}\label{ex:smallsigma2}
If $\delta=0.01$, $\sigma_0=0.2$, and $n=1000$ (the same parameters as in Example \ref{ex:smallsigma}) \eqref{eq:smallsigma} reduces to $M<21,799,877$.
\end{example}

How close these bounds to being optimal? If each point in $F$ is distributed uniformly among vertices of the cube, then $2$ points $\boldsymbol{ x}$ and $\boldsymbol{ y}$ are not Fisher separable if and only if they coincide, which may happen with probability $2^{-n}$. Hence, Fisher separability of a set of $M$ points holds with probability $1-\delta$ for
$$
M \approx \sqrt{\frac{\delta}{2^{-n}}} = \sqrt{\delta}\exp\left(\frac{\log 2}{2}n\right) \approx \sqrt{\delta}\exp (0.35 n).
$$
In this example, $\sigma_0=0.5$, and \eqref{eq:Mboundcenter} gives bound $\sqrt{\delta}\exp\left(\frac{16}{81}n\right) \approx \sqrt{\delta}\exp (0.2 n)$.  Note that the coefficients in these estimates differ less than by the factor of $2$.

Corollary \ref{cor:smallsigma} follows from two-point bound \eqref{eq:2pointBound} with $f(n,\alpha)=\exp\left(-\frac{24}{25}n\sigma_0^2\right)$. Can we significantly improve the constant here, or the dependence from $\sigma_0$? Consider two points $\boldsymbol{ x}$ and $\boldsymbol{ y}$, such that all components $x_i$ of $\boldsymbol{ x}$ take values $0, 1/2, 1$ with probabilities $2\sigma_0^2$, $1-4\sigma_0^2$, $2\sigma_0^2$, respectively, and all components $y_i$ of $\boldsymbol{ y}$ take values $0, 1$ with equal chances. Then $z_i$ given by \eqref{eq:zidef} take values $-1/2$ and $0$ with probabilities $2\sigma_0^2$ and $1-2\sigma_0^2$, respectively, hence $\boldsymbol{ x}$ and $\boldsymbol{ y}$ are  not Fisher separable with probability $p_n=(1-2\sigma_0^2)^n$. For small $\sigma_0$, $p_n \approx \exp(-2n\sigma_0^2)$, hence the quadratic dependence on $\sigma_0$ cannot be improved, and the coefficient $\frac{24}{25}$ cannot be improved to any value higher than $2$.

\subsection{Better bounds if the product distribution is known}

The results in Sections \ref{sec:gen} and \ref{sec:meancen} are valid for the whole family of product distributions satisfying certain conditions. This Section studies the case when the data distribution is explicitly given. In this case, we can deduce improved estimates from Chernoff's inequality. Our first result is for general product distributions, not necessary bounded in the unit cube.

\begin{theorem}\label{th:explicit}
Let points $\boldsymbol{ x}_1, \dots, \boldsymbol{ x}_M$ be i.i.d points from an arbitrary but explicitly given product distribution $G$ in ${\mathbb R}^n$. For any $\delta>0$, let
\begin{equation}\label{eq:explicit}
M < \sqrt{\delta}\exp\left(\gamma_n\right),
\end{equation}
where
$$
\gamma_n = \frac{1}{2}\sup\limits_{\lambda \geq 0}\left( -\sum_{i=1}^n\log E\left[e^{\lambda(x_iy_i - \alpha x_i^2)}\right]\right),
$$
where $x_i$ and $y_i$ are independent random variables distributed as $i$-th component of $G$. Then set $F=\{\boldsymbol{ x}_1, \dots, \boldsymbol{ x}_M\}$ is $(\alpha, \boldsymbol{ 0})$-Fisher separable with probability greater than $1-\delta$.
\end{theorem}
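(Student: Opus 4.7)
The plan is to derive the two-point bound \eqref{eq:2pointBound0} via a Chernoff-style argument and then apply Theorem \ref{th:principle} in its simplified form \eqref{eq:MBound}. First I would take two independent points $\boldsymbol{ x},\boldsymbol{ y}$ from the product distribution $G$ and rewrite the inseparability event as
$$
\alpha(\boldsymbol{ x},\boldsymbol{ x}) \leq (\boldsymbol{ x},\boldsymbol{ y}) \quad \Longleftrightarrow \quad S := \sum_{i=1}^n (x_i y_i - \alpha x_i^2) \geq 0.
$$
For any $\lambda \geq 0$, Markov's inequality applied to the non-negative random variable $e^{\lambda S}$ yields the Chernoff-type estimate
$$
{\mathbb P}[S \geq 0] = {\mathbb P}\bigl[e^{\lambda S} \geq 1\bigr] \leq {\mathbb E}\bigl[e^{\lambda S}\bigr].
$$

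Next I would exploit the product structure. Because $G$ is a product distribution, the coordinates of $\boldsymbol{ x}$ are mutually independent and similarly for $\boldsymbol{ y}$, and $\boldsymbol{ x}$ is independent of $\boldsymbol{ y}$ by hypothesis. Consequently the pairs $(x_i,y_i)$ are mutually independent across $i$, and the summands $x_i y_i - \alpha x_i^2$ are independent. The MGF therefore factorises:
$$
{\mathbb E}\bigl[e^{\lambda S}\bigr] = \prod_{i=1}^n {\mathbb E}\bigl[e^{\lambda(x_i y_i - \alpha x_i^2)}\bigr].
$$
Taking logarithms and minimizing over $\lambda \geq 0$ (equivalently, taking the supremum in the definition of $\gamma_n$),
$$
{\mathbb P}[\alpha(\boldsymbol{ x},\boldsymbol{ x}) \leq (\boldsymbol{ x},\boldsymbol{ y})] \leq \inf_{\lambda \geq 0} \exp\!\left(\sum_{i=1}^n \log {\mathbb E}\bigl[e^{\lambda(x_i y_i - \alpha x_i^2)}\bigr]\right) = \exp(-2\gamma_n).
$$

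Finally, applying Theorem \ref{th:principle} with $f(n,\alpha) = \exp(-2\gamma_n)$ and invoking the simplified sufficient condition \eqref{eq:MBound} yields
$$
M \leq \sqrt{\frac{\delta}{f(n,\alpha)}} = \sqrt{\delta}\exp(\gamma_n),
$$
which is precisely \eqref{eq:explicit}. The expected number of $(\alpha,\boldsymbol{ 0})$-inseparable pairs is then less than $\delta$, so $F$ is $(\alpha,\boldsymbol{ 0})$-Fisher separable with probability greater than $1-\delta$.

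The only subtlety I foresee is well-definedness of $\gamma_n$: the MGF ${\mathbb E}[e^{\lambda(x_i y_i - \alpha x_i^2)}]$ may fail to be finite for large $\lambda$ under heavy-tailed components, but this is not an obstacle. The supremum is taken over all $\lambda \geq 0$ for which every factor is finite, and it always includes $\lambda=0$ (where each factor equals $1$); consequently $\gamma_n \geq 0$, the bound is always valid, and it becomes informative precisely when the MGFs admit a non-trivial range. No further structural assumption (boundedness, log-concavity, identical components, etc.) is needed, which is why this theorem subsumes several of the earlier product-distribution results as special cases obtained by concrete choices of $\lambda$.
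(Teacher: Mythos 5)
Your proposal is correct and follows essentially the same route as the paper: the paper also applies Chernoff's inequality at $t=0$, factorises the moment generating function of $S=\sum_i(x_iy_i-\alpha x_i^2)$ using independence of the coordinate pairs, identifies $\psi^*_S(0)=2\gamma_n$, and concludes via Theorem~\ref{th:principle} with the simplified bound \eqref{eq:MBound}. The only cosmetic difference is that you derive the Chernoff bound directly from Markov's inequality rather than citing it.
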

\begin{proof}
With $\boldsymbol{ c}=\boldsymbol{ 0}$, \eqref{eq:zidef} simplifies to
$$
z_i = x_iy_i - \alpha x_i^2, \quad i=1,\dots,n,
$$
where $x_i$ and $y_i$ are independent and distributed as $i$-th component of $G$. 
Points $\boldsymbol{ x}$ and $\boldsymbol{ y}$ are not Fisher separable if $S \geq 0$, where $S=\sum_{i=1}^n z_i$.

Chernoff's inequality \cite[p. 21]{Boucheron2013} states that, for any random variable $S$, and any real number $t$,
$$
{\mathbb P}[S \geq t] \leq \exp[-\psi^*_S(t)],
$$
where 
$$
\psi^*_S(t) = \sup\limits_{\lambda \geq 0} (\lambda t - \psi_S(\lambda)),
$$
where $\psi_S(\lambda) = \log(E[e^{\lambda S}])$.
If $S=\sum_{i=1}^n z_i$ for independent random variables $z_i$,
$$
e^{\psi_S(\lambda)} = E[e^{\lambda \sum_{i=1}^n z_i}] = E\left[\prod_{i=1}^n e^{\lambda z_i}\right] = \prod_{i=1}^n E[e^{\lambda z_i}],
$$
hence
$$
\psi_S(\lambda) = \log\left(\prod_{i=1}^n E[e^{\lambda z_i}]\right) = \sum_{i=1}^n\log E\left[e^{\lambda(x_iy_i - \alpha x_i^2)}\right],
$$
and $\psi^*_S(0)=2\gamma_n$. Hence, Chernoff's inequality with $t=0$ implies that
$$
{\mathbb P}[S \geq 0] \leq  \exp[-\psi^*_S(0)] = \exp[-2\gamma_n],
$$
and \eqref{eq:explicit} follows from \eqref{eq:MBound}.
\end{proof}

\begin{corollary}\label{cor:identcomp}
If all components of $G$ in Theorem \ref{th:explicit} have the same distribution, then estimate \eqref{eq:explicit} simplifies to
\begin{equation}\label{eq:identcomp}
M < \sqrt{\delta}\exp\left(\gamma n\right),
\end{equation}
where
$$
\gamma = \frac{1}{2}\sup\limits_{\lambda \geq 0}\left( -\log E\left[e^{\lambda(x y - \alpha x^2)}\right]\right),
$$
where $x$ and $y$ are independent random variables distributed as a component of $G$. In particular, if the component distribution has density $f$, then
$$
\gamma = \frac{1}{2}\sup\limits_{\lambda \geq 0}\left( -\log\left[\int_{{\mathbb R}^2}e^{\lambda (xy-\alpha x^2)}f(x)f(y)dxdy\right]\right).
$$
\end{corollary}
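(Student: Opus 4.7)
The plan is to obtain Corollary \ref{cor:identcomp} as a direct specialization of Theorem \ref{th:explicit}. The only work is to simplify the expression for $\gamma_n$ under the hypothesis that all components of $G$ are identically distributed, and then to rewrite the resulting expectation as an integral in the density case.

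First I would observe that if $x_i, y_i$ are i.i.d.\ copies of a single random variable (call it $x$), independent across $i$, then for every $\lambda \geq 0$ each summand in the definition of $\gamma_n$ equals a common value $\log E[e^{\lambda(xy - \alpha x^2)}]$. Consequently
\[
\sum_{i=1}^n \log E\bigl[e^{\lambda(x_i y_i - \alpha x_i^2)}\bigr] = n \log E\bigl[e^{\lambda(x y - \alpha x^2)}\bigr],
\]
where $x,y$ are independent with the common component distribution. Pulling the constant factor $n$ out of the supremum (which is legitimate since $n>0$ and we are negating the argument) yields
\[
\gamma_n = \frac{n}{2}\sup_{\lambda \geq 0}\Bigl(-\log E\bigl[e^{\lambda(xy - \alpha x^2)}\bigr]\Bigr) = \gamma n,
\]
with $\gamma$ as defined in the statement. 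Substituting into the bound $M < \sqrt{\delta}\exp(\gamma_n)$ from Theorem \ref{th:explicit} gives \eqref{eq:identcomp}.

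For the density version, independence of $x$ and $y$ implies that the joint density of $(x,y)$ factorises as $f(x)f(y)$, so by the change-of-variables formula for expectation
\[
E\bigl[e^{\lambda(xy - \alpha x^2)}\bigr] = \int_{{\mathbb R}^2} e^{\lambda(xy - \alpha x^2)} f(x) f(y)\, dx\, dy,
\]
and inserting this identity into the formula for $\gamma$ gives the last displayed expression. No obstacle is expected: the argument is a one-line unpacking of Theorem \ref{th:explicit} together with an application of Fubini for the density case.
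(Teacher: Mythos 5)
Your proposal is correct and matches the paper's (implicit) reasoning exactly: the paper presents this corollary as an immediate specialization of Theorem \ref{th:explicit}, obtained by noting that identical component distributions make all $n$ summands in $\gamma_n$ equal, so that $\gamma_n = \gamma n$, with the integral form following from independence in the density case. Nothing is missing.
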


It follows from the proof of Theorem \ref{th:explicit} and Cramer’s theorem \cite[Theorem 2.1]{Pham} that the exponent $\gamma$ in \eqref{eq:identcomp} is the best possible.
However, estimate \eqref{eq:identcomp} maybe non-optimal in lower order terms. Below we give a formula for the asymptotically best possible upper bound for $M$ in Corollary \ref{cor:identcomp}.

Let $\lambda^*$ be the (unique) minimizer of $E\left[e^{\lambda(x y - \alpha x^2)}\right]$, and let
$$
c^* := \left.\frac{d}{d\lambda}\left(\frac{E\left[(x y - \alpha x^2)e^{\lambda(x y - \alpha x^2)}\right]}{E\left[e^{\lambda(x y - \alpha x^2)}\right]}\right)\right|_{\lambda=\lambda^*}
$$
The exact asymptotic growth of the probability ${\mathbb P}[S \geq 0]$ in Theorem \ref{th:explicit} is given by \cite[Theorem 1]{Petrov}
$$
{\mathbb P}[S \geq 0] = \frac{\exp[-2\gamma n]}{\lambda^*\sqrt{c^*}\sqrt{2 \pi n}}(1+o(1)),
$$
hence the exact asymptotic estimate for $M$ in Corollary \ref{cor:identcomp} is
$$
M < \sqrt{\delta}\sqrt{\lambda^*}\sqrt[4]{2\pi c^* n}\exp\left(\gamma n\right)(1+o(1)).
$$  

We can see that estimate \eqref{eq:identcomp} differs from the optimal one by $\sqrt{\lambda^*}\sqrt[4]{2\pi c^* n}$ term. However, the advantages of estimate \eqref{eq:identcomp} is simplicity and the absence of $(1+o(1))$ term.

We now apply Corollary \ref{cor:identcomp} to some special cases. First, Corollary \ref{cor:identcomp} specialised to standard normal distribution implies Theorem \ref{th:normalknown}. As another example, we apply Corollary \ref{cor:identcomp} to the uniform distribution in a cube. 

\begin{corollary}\label{cor:uniform}
Let $\alpha=1$ and points $\boldsymbol{ x}_1, \dots, \boldsymbol{ x}_M$ are i.i.d points from uniform distribution in a cube with center $\boldsymbol{ \mu}$. For any $\delta>0$, if
\begin{equation}\label{eq:uniform2}
M < \sqrt{\delta}\exp\left(\gamma n\right),
\end{equation}
where
$$
\gamma = \frac{1}{2}\sup\limits_{\lambda \geq 0}\left( -\log\left[\int_{-\frac{1}{2}}^{\frac{1}{2}} \int_{-\frac{1}{2}}^{\frac{1}{2}} e^{\lambda (xy-x^2)}dxdy\right]\right)= 0.23319...
$$
then set $F=\{\boldsymbol{ x}_1, \dots, \boldsymbol{ x}_M\}$ is $(1, \boldsymbol{ \mu})$-Fisher separable with probability greater than $1-\delta$.
\end{corollary}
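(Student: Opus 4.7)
The plan is to deduce Corollary~\ref{cor:uniform} as a direct specialization of Corollary~\ref{cor:identcomp}. First I would reduce to the centered case: set $\tilde{\boldsymbol{x}}_i = \boldsymbol{x}_i - \boldsymbol{\mu}$, so that $\tilde{\boldsymbol{x}}_1,\ldots,\tilde{\boldsymbol{x}}_M$ are i.i.d.\ uniform on $[-1/2,1/2]^n$. Because Definition~\ref{def:Fisher} uses the differences $\boldsymbol{x}-\boldsymbol{c}$ and $\boldsymbol{y}-\boldsymbol{c}$, the set $F$ is $(1,\boldsymbol{\mu})$-Fisher separable if and only if $\{\tilde{\boldsymbol{x}}_1,\ldots,\tilde{\boldsymbol{x}}_M\}$ is $(1,\boldsymbol{0})$-Fisher separable. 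After this translation the underlying distribution is a product distribution whose each component has density $f(x)=\mathbf{1}_{[-1/2,1/2]}(x)$, so all hypotheses of Corollary~\ref{cor:identcomp} are met with $\alpha=1$.

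Next I would substitute this $f$ into the formula for $\gamma$ in Corollary~\ref{cor:identcomp}. This gives exactly
$$
\gamma=\frac{1}{2}\sup_{\lambda\geq 0}\left(-\log\left[\int_{-1/2}^{1/2}\int_{-1/2}^{1/2} e^{\lambda(xy-x^2)}\,dx\,dy\right]\right),
$$
which is the displayed expression in the statement. Corollary~\ref{cor:identcomp} then immediately yields the bound $M<\sqrt{\delta}\exp(\gamma n)$ for $(1,\boldsymbol{0})$-Fisher separability of the translated data, which by the first step translates back to $(1,\boldsymbol{\mu})$-Fisher separability of $F$.

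The only remaining task is to certify that the numerical value of $\gamma$ is $0.23319\ldots$. To do this, I would carry out the inner integration in $y$ analytically, using
$$
\int_{-1/2}^{1/2} e^{\lambda xy}\,dy=\frac{2\sinh(\lambda x/2)}{\lambda x},
$$
reducing the double integral to the one-dimensional integral
$$
I(\lambda)=\int_{-1/2}^{1/2}\frac{2\sinh(\lambda x/2)}{\lambda x}\,e^{-\lambda x^2}\,dx.
$$
Cramer's theorem (or a direct convexity argument) guarantees that $\log I(\lambda)$ is convex and attains a unique minimum at some $\lambda^\ast>0$, characterized by the first-order condition $\int\!\int(xy-x^2)e^{\lambda^\ast(xy-x^2)}\,dx\,dy=0$. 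This transcendental equation does not appear to admit a closed-form solution, so the value $\gamma=\tfrac{1}{2}(-\log I(\lambda^\ast))=0.23319\ldots$ is obtained by straightforward numerical evaluation (for instance in Mathematica).

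The main, and essentially only, technical obstacle is the numerical optimization; everything structural is supplied by Corollary~\ref{cor:identcomp}. Since the problem is one-dimensional with a smooth, strictly convex objective $\log I(\lambda)$ on a bounded interval of practical interest, any standard quadrature plus Newton's method produces the constant $0.23319\ldots$ to arbitrary precision, completing the proof.
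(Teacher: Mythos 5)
Your proposal is correct and follows essentially the same route as the paper: the paper presents Corollary~\ref{cor:uniform} as a direct specialization of Corollary~\ref{cor:identcomp} to the uniform component density on an interval (after centering at $\boldsymbol{\mu}$), with the constant $\gamma=0.23319\ldots$ obtained by numerical optimization of the one-dimensional convex function $\lambda\mapsto\log\int\!\!\int e^{\lambda(xy-x^2)}\,dx\,dy$. Your explicit reduction of the double integral via $\int_{-1/2}^{1/2}e^{\lambda xy}\,dy=2\sinh(\lambda x/2)/(\lambda x)$ is a harmless extra detail the paper omits.
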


For the unit cube, $\sigma^2_0 = \frac{1}{12}$, and Corollary \ref{cor:smallsigma} implies $(1, \boldsymbol{ \mu})$-Fisher separability with probability greater than $1-\delta$ provided that 
\begin{equation}\label{eq:uniform}
M < \sqrt{\delta}\exp\left(\frac{n}{25}\right)
\end{equation}
We can see that \eqref{eq:uniform2} is a substantial improvement over \eqref{eq:uniform}. This is because \eqref{eq:uniform2} works for uniform distribution only, while \eqref{eq:uniform} works for \emph{any} product distribution in the unit cube with $\sigma^2_0 = \frac{1}{12}$.


\begin{example}\label{ex:uniform}
Let $\alpha=1$ and $\delta=0.01$. Table~\ref{Table10} shows the upper bounds on $M$ in Corollary \ref{cor:uniform} in various dimensions $n$.
\begin{table}[h]
\begin{center}
\caption{\label{Table10}  The upper bounds on $M$ (\ref{eq:uniform}) in Corollary \ref{cor:uniform} in various dimensions $n$ for $\alpha=1$ and $\delta=0.01$.}
\begin{tabular}{ |c|c| } 
 \hline
 $n$ & $M\leq$\\ 
 \hline
 $10$ & $1.02$ \\ 
 $50$ & $11,578$\\ 
 $100$ & $1.3 \cdot 10^9$\\ 
 $200$ & $1.7 \cdot 10^{19}$\\ 
 $500$ & $4.3 \cdot 10^{49}$\\ 
 $1000$ & $1.8 \cdot 10^{100}$\\ 
 \hline
\end{tabular}
\end{center}
\end{table}
For example, for $n=100$, we see that over a billion points from the uniform distribution in the unit cube are $1$-Fisher-separable with probability greater than $99\%$ (compare to the earlier weaker estimates by \citet{GorbTyu2017} and  \citet{Sidorov2020}).
\end{example}

\section{Fisher separability for dependent data from product distribution}\label{sec:dependent}

The key assumption in Section \ref{sec:product} is that all points in set $F$ are chosen independently. This section establishes a sufficient condition for Fisher separability with high probability in a datasets with dependent data points, as soon as the corresponding conditional distributions are product distributions in the unit cube $U_n = [0,1]^n$.

Formally, we assume the following.
\begin{itemize}
\item[(*)] For any $\boldsymbol{ x}\in F$ and $\boldsymbol{ y} \in F$, and any $\boldsymbol{ y}_0 \in U_n$, the conditional distribution of $\boldsymbol{ x}$ given $\boldsymbol{ y}=\boldsymbol{ y}_0$ is a product distribution with support in $U_n$.
\end{itemize}

For every $\boldsymbol{ x}\in F, \boldsymbol{ y} \in F, \boldsymbol{ y}_0 \in U_n$ and index $i \in {1,2,\dots,n}$, let $\sigma_i^2(\boldsymbol{ x},\boldsymbol{ y}, \boldsymbol{ y}_0)$ be the variance of the conditional distribution of the $i$-th component of $\boldsymbol{ x}$ given $\boldsymbol{ y}=\boldsymbol{ y}_0$. 
Let
$$
\sigma_0^2 = \min_{\boldsymbol{ x} \in F, \boldsymbol{ y} \in F, \boldsymbol{ y}_0 \in U_n} \left( \frac{1}{n} \sum_{i=1}^n \sigma_i^2(\boldsymbol{ x},\boldsymbol{ y}, \boldsymbol{ y}_0) \right)
$$
be the minimal value of average variance of the components of such conditional distribution. Also, let $\boldsymbol{ c}^*=(1/2,\dots,1/2)$ be the center of $U_n$.

\begin{theorem}\label{th:dependent}
Assume that (*) holds. For any $\delta>0$, $\alpha \in (0,1]$, if
$$
\sigma^2_0 > \frac{1}{16 \alpha^2}
$$
and
$$
M < \sqrt{\delta}\exp\left(\frac{256\alpha^4}{(1+2\alpha)^4}\left(\sigma^2_0 - \frac{1}{16 \alpha^2}\right)^2 n\right)
$$
then set $F=\{\boldsymbol{ x}_1, \dots, \boldsymbol{ x}_M\}$ is $(\alpha, \boldsymbol{ c}^*)$-Fisher separable with probability greater than $1-\delta$.
\end{theorem}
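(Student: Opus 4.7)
My plan is to apply the general principle of Theorem \ref{th:principle} with $\boldsymbol{c} = \boldsymbol{c}^*$, which reduces the task to proving a uniform two-point bound $f(n,\alpha)$ on the probability that any ordered pair $\boldsymbol{x}, \boldsymbol{y} \in F$ is $(\alpha,\boldsymbol{c}^*)$-inseparable. Since assumption (*) tells us that conditionally on $\boldsymbol{y} = \boldsymbol{y}_0$ the distribution of $\boldsymbol{x}$ is a product distribution supported in $U_n$, any conditional bound that is uniform in $\boldsymbol{y}_0$ upgrades to the unconditional bound required by Theorem \ref{th:principle} via the tower property. Thus the dependence between points in $F$ is harmless as long as the conditional concentration bound is independent of $\boldsymbol{y}_0$.

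Fixing $\boldsymbol{y}_0 = (y_{0,1},\ldots,y_{0,n}) \in U_n$, the inseparability condition rewrites as $\sum_{i=1}^n z_i \geq 0$ with $z_i = (x_i - 1/2)(y_{0,i} - 1/2) - \alpha(x_i - 1/2)^2$, and by (*) the $z_i$ are conditionally independent. The first substep is a mean estimate: write $m_i = E[x_i - 1/2 \mid \boldsymbol{y}_0]$ and $s_i^2 = \mathrm{Var}[x_i \mid \boldsymbol{y}_0]$, so $E[z_i] = m_i(y_{0,i} - 1/2) - \alpha(s_i^2 + m_i^2)$. Completing the square in $m_i$ gives $m_i(y_{0,i}-1/2) - \alpha m_i^2 \leq (y_{0,i}-1/2)^2/(4\alpha) \leq 1/(16\alpha)$, and summing over $i$ together with the definition of $\sigma_0^2$ yields $\sum_i E[z_i] \leq n\bigl(\tfrac{1}{16\alpha} - \alpha \sigma_0^2\bigr) = -nt$, where $t = \alpha\bigl(\sigma_0^2 - \tfrac{1}{16\alpha^2}\bigr) > 0$ by hypothesis. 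The second substep is a range bound: Proposition \ref{prop:zibounds}(iii) supplies $-\tfrac{1}{2} + \tfrac{1-\alpha}{4} \leq z_i \leq \tfrac{1}{16\alpha}$, giving $b_i - a_i \leq \tfrac{1}{16\alpha} + \tfrac{1+\alpha}{4} = \tfrac{(1+2\alpha)^2}{16\alpha}$. Applying Hoeffding's inequality then delivers
$$
\mathbb{P}\bigl[\textstyle\sum_i z_i \geq 0 \,\big|\, \boldsymbol{y}=\boldsymbol{y}_0\bigr] \leq \exp\!\left(-\frac{2n^2 t^2}{n\,((1+2\alpha)^2/(16\alpha))^2}\right) = \exp\!\left(-\frac{512\,\alpha^4\,(\sigma_0^2 - 1/(16\alpha^2))^2\,n}{(1+2\alpha)^4}\right),
$$
uniformly in $\boldsymbol{y}_0$. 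Feeding this $f(n,\alpha)$ into the simplified criterion \eqref{eq:MBound} reproduces precisely the stated threshold for $M$ (the factor $512$ halves to $256$ under the square root).

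The main obstacle is the lack of an i.i.d.\ structure for $F$: one cannot directly invoke the Hoeffding argument of Section \ref{sec:gen}, which exploited independence across all of $F$. Assumption (*) was designed to bypass this by granting product structure after conditioning on any one point, so the real work is to check that the mean estimate and the support bounds from Section \ref{sec:meancen} survive the conditioning with the conservative substitution of $\sigma_0^2$ (the minimum over all conditional distributions), and that the resulting conditional bound is independent of $\boldsymbol{y}_0$. Once uniformity is established, the rest is routine application of Theorem \ref{th:principle}.
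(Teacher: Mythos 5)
Your proof is correct and lands on exactly the stated constant, but it takes a noticeably different route at the key step. The paper's proof first rewrites the inseparability event geometrically: conditional on $\boldsymbol{y}=\boldsymbol{y}_0$, the point $\boldsymbol{x}$ must fall in a ball centred at $\boldsymbol{c}=\boldsymbol{c}^*+(\boldsymbol{y}_0-\boldsymbol{c}^*)/2\alpha$ of squared radius $\|\boldsymbol{y}_0-\boldsymbol{c}^*\|^2/4\alpha^2\le n/16\alpha^2$, and then applies Hoeffding to the variables $z_i=-(x_i-c_i)^2$, using $E[(x_i-c_i)^2]\ge \sigma_i^2$ for the mean and $|x_i-c_i|\le 1/2+1/4\alpha$ for the range. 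You instead apply Hoeffding directly to the bilinear functional $z_i=(x_i-1/2)(y_{0,i}-1/2)-\alpha(x_i-1/2)^2$, handling the mean by completing the square in the conditional mean $m_i$ (which costs $(y_{0,i}-1/2)^2/4\alpha\le 1/16\alpha$ per coordinate) and taking the range from Proposition~\ref{prop:zibounds}(iii). The two relaxations are different — the paper loosens the radius to its worst case over $\boldsymbol{y}_0$, you loosen the drift term over $m_i$ — yet the deviation and the range both pick up the same factor relative to the paper's quantities, so the Hoeffding exponents coincide at $512\alpha^4(\sigma_0^2-1/16\alpha^2)^2 n/(1+2\alpha)^4$ and both yield the theorem via \eqref{eq:MBound}. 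Your version has the small advantage of reusing Proposition~\ref{prop:zibounds} rather than redoing the range computation; the paper's version makes the geometric picture (the excluded ball) explicit. One detail you should state: you invoke the $\alpha\ge 0.5$ branch of Proposition~\ref{prop:zibounds}(iii) without justification, but this is harmless because the support condition forces $\sigma_0^2\le 1/4$, so the hypothesis $\sigma_0^2>1/16\alpha^2$ can only hold when $\alpha>1/2$ (a point the paper makes in the remark following the theorem).
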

\begin{proof}
By \eqref{eq:MBound}, the statement of the theorem follows from \eqref{eq:2pointBound} with $\boldsymbol{ c} = \boldsymbol{ c}^*$ and  
$$
f(n,\alpha)=\exp\left(-2\left(\frac{4\alpha}{2\alpha+1}\right)^4\left(\sigma^2_0 - \frac{1}{16 \alpha^2}\right)^2 n\right).
$$
We will show that for any $\boldsymbol{ x}\in F, \boldsymbol{ y} \in F$, and $\boldsymbol{ y}_0 \in U_n$
\begin{equation}\label{eq:condBound}
{\mathbb P}[\alpha(\boldsymbol{ x}-\boldsymbol{ c}^*,\boldsymbol{ x}-\boldsymbol{ c}^*) \leq (\boldsymbol{ x}-\boldsymbol{ c}^*,\boldsymbol{ y}-\boldsymbol{ c}^*)\,|\,\boldsymbol{ y}=\boldsymbol{ y}_0 ] \leq f(n,\alpha), 
\end{equation}
which would imply \eqref{eq:2pointBound} and finish the proof. The set of all $\boldsymbol{ x} \in {\mathbb R}^n$ which does not satisfy the inequality $\alpha(\boldsymbol{ x}-\boldsymbol{ c}^*,\boldsymbol{ x}-\boldsymbol{ c}^*) \leq (\boldsymbol{ x}-\boldsymbol{ c}^*,\boldsymbol{ y}-\boldsymbol{ c}^*)$ is the ball with center $\boldsymbol{ c}=\boldsymbol{ c}^* + (\boldsymbol{ y}_0-\boldsymbol{ c}^*)/2\alpha$ and radius $r=||\boldsymbol{ y}_0-\boldsymbol{ c}^*||/2\alpha$, see \cite{Gorbetal2018}. Because $\boldsymbol{ y}_0 \in U_n$, $r^2 = ||\boldsymbol{ y}_0-\boldsymbol{ c}^*||^2/4\alpha^2 \leq n(1/2)^2/4\alpha^2 = n/16\alpha^2$, and \eqref{eq:condBound} would follow from
\begin{equation}\label{eq:condBound2}
{\mathbb P}\left[(\boldsymbol{ x}-\boldsymbol{ c},\boldsymbol{ x}-\boldsymbol{ c}) \leq \frac{n}{16\alpha^2}\,|\,\boldsymbol{ y}=\boldsymbol{ y}_0 \right] \leq f(n,\alpha).
\end{equation}

Let $x_i$ be the random variable whose distribution is the conditional distribution of the $i$-th component of $\boldsymbol{ x}$ given $\boldsymbol{ y}=\boldsymbol{ y}_0$. Let
$z_i=-(x_i - c_i)^2$, where $c_i$ is the $i$-th component of $\boldsymbol{ c}$. Then
$$
E[z_i] = -E[(x_i - c_i)^2] \leq -E[(x_i - E[x_i])^2] =-\sigma_i^2(\boldsymbol{ x},\boldsymbol{ y}, \boldsymbol{ y}_0),
$$
and
$$
\sum_{i=1}^n E[z_i] \leq -\sum_{i=1}^n \sigma_i^2(\boldsymbol{ x},\boldsymbol{ y}, \boldsymbol{ y}_0) \leq -n\sigma_0^2.
$$
Hence,
$$
{\mathbb P}\left[(\boldsymbol{ x}-\boldsymbol{ c},\boldsymbol{ x}-\boldsymbol{ c}) \leq \frac{n}{16\alpha^2}\,|\,\boldsymbol{ y}=\boldsymbol{ y}_0 \right] = {\mathbb P}\left[\sum_{i=1}^n z_i \geq -\frac{n}{16\alpha^2} \right] 
$$
$$
\leq {\mathbb P}\left[\sum_{i=1}^n (z_i-E[z_i]) \geq -\frac{n}{16\alpha^2}+n\sigma_0^2 \right].
$$

In fact, $c_i=1/2+(y^0_i-1/2)/2\alpha$, where $y^0_i$ is the $i$-th component of  $\boldsymbol{ y}_0$. Because $0 \leq y^0_i \leq 1$, we get $1/2-1/4\alpha \leq c_i \leq 1/2+1/4\alpha$, hence $-(1/2+1/4\alpha)^2 \leq z_i \leq 0$. 
By Hoeffding's inequality \cite[Theorem 2.8]{Boucheron2013}, 
$$
{\mathbb P}\left[\sum_{i=1}^n (z_i-E[z_i]) \geq t \right] \leq \exp\left(-\frac{2t^2}{n(1/2+1/4\alpha)^4}\right).
$$
With $t=-\frac{n}{16\alpha^2}+n\sigma_0^2$, this proves \eqref{eq:condBound2}.
\end{proof}

We remark that because $\sigma_0 \leq 0.5$, the bound $\sigma^2_0 > \frac{1}{16 \alpha^2}$ in Theorem \ref{th:dependent} may hold only if $\alpha>1/2$. 

For $\alpha=1$, we have the following corollary.

\begin{corollary}\label{cor:dependent}
Assume that (*) holds. For any $\delta>0$, if
$$
\sigma^2_0 > \frac{1}{16}
$$
and
\begin{equation}\label{eq:dependent}
M < \sqrt{\delta}\exp\left(\frac{256}{81}\left(\sigma^2_0 - \frac{1}{16}\right)^2 n\right)
\end{equation}
then set $F=\{\boldsymbol{ x}_1, \dots, \boldsymbol{ x}_M\}$ is $(1, \boldsymbol{ c}^*)$-Fisher separable with probability greater than $1-\delta$.
\end{corollary}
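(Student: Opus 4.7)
The plan is to obtain Corollary \ref{cor:dependent} as a direct specialization of Theorem \ref{th:dependent} at $\alpha=1$, since the theorem has already been established in the excerpt. First I would verify that the hypothesis transports correctly: substituting $\alpha=1$ into the condition $\sigma_0^2 > \frac{1}{16\alpha^2}$ of the theorem reduces to the stated hypothesis $\sigma_0^2 > \frac{1}{16}$. Second, I would evaluate the exponent coefficient $\frac{256\alpha^4}{(1+2\alpha)^4}$ at $\alpha=1$, obtaining $\frac{256}{3^4}=\frac{256}{81}$, which matches the exponent in \eqref{eq:dependent}. Since the definition of $\sigma_0^2$ (the minimum average conditional variance under assumption (*)) and the reference center $\boldsymbol{c}^*$ are unchanged, the conclusion of Theorem \ref{th:dependent} at $\alpha=1$ is verbatim the conclusion of the corollary.

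Given that the entire corollary is a one-line specialization, there is no genuine obstacle in the proof. The only thing worth remarking is that the upper bound $\alpha>1/2$ noted immediately after Theorem \ref{th:dependent} (needed so that $1/(16\alpha^2)<1/4$ is even attainable by $\sigma_0^2$) is automatically satisfied here, so the hypothesis $\sigma_0^2 > 1/16$ is the natural non-degeneracy condition in the $\alpha=1$ case.

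If instead one wished to re-derive the corollary from scratch without appealing to Theorem \ref{th:dependent}, the plan would be: (i) use Theorem \ref{th:principle} to reduce Fisher separability with respect to $\boldsymbol{c}^*$ to a two-point bound of the form \eqref{eq:2pointBound}; (ii) for a fixed $\boldsymbol{y}=\boldsymbol{y}_0\in U_n$, observe that the $\boldsymbol{x}$'s that violate the separation inequality lie in a ball of squared radius at most $n/16$ centered at $\boldsymbol{c}=\boldsymbol{c}^*+(\boldsymbol{y}_0-\boldsymbol{c}^*)/2$; (iii) write $z_i=-(x_i-c_i)^2$, noting that each $z_i$ has conditional expectation at most $-\sigma_i^2$ and is supported in $[-(3/4)^2,0]$; (iv) apply Hoeffding's inequality to $\sum_i z_i$ with threshold $-n/16+n\sigma_0^2>0$, producing the required exponential bound with rate $\frac{256}{81}(\sigma_0^2-1/16)^2 n$. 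The main technical step in this alternative route is controlling the support of $z_i$ uniformly in $\boldsymbol{y}_0$, which yields the factor $(3/4)^4=81/256$ in the Hoeffding denominator and accounts for the constant $256/81$ in the final bound.
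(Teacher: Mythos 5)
Your proposal is correct and matches the paper exactly: the paper introduces Corollary \ref{cor:dependent} with the words ``For $\alpha=1$, we have the following corollary,'' i.e.\ it is precisely the specialization of Theorem \ref{th:dependent} at $\alpha=1$, with $\frac{256\alpha^4}{(1+2\alpha)^4}$ evaluating to $\frac{256}{81}$ and $\frac{1}{16\alpha^2}$ to $\frac{1}{16}$. Your alternative from-scratch sketch also reproduces the paper's own proof of Theorem \ref{th:dependent} (ball of squared radius $n/16\alpha^2$, Hoeffding on $z_i=-(x_i-c_i)^2$ with support of length $(1/2+1/4\alpha)^2$), so nothing further is needed.
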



\begin{example}\label{ex:uniform}
Let $\alpha=1$ and $\delta=0.01$. Table~\ref{Table11} shows the upper bounds on $M$ in Corollary \ref{cor:dependent} for $\sigma_0=0.4, 0.45$, and $0.5$ in various dimensions $n$.
\begin{table}[h]
\begin{center}
\caption{\label{Table11} The upper bounds on $M$ in Corollary \ref{cor:dependent} for $\alpha=1$ and $\delta=0.01$, various   $\sigma_0$ and dimensions.}
\begin{tabular}{ |c|c|c|c| } 
 \hline
  & $\sigma_0=0.4$ & $\sigma_0=0.45$ & $\sigma_0=0.5$ \\ 
 \hline
 $n=10$ & $0.13$ & $0.18$ & $0.3$\\ 
 $n=50$ & $0.44$ & $2.21$ & $25$\\ 
 $n=100$ & $2$ & $49$ & $6,691$\\ 
 $n=200$ & $40$ & $24,017$ & $4.4 \cdot 10^8$\\ 
 $n=500$ & $334,248$ & $2.8 \cdot 10^{12}$ & $1.3 \cdot 10^{23}$\\ 
 $n=1000$ & $1.1 \cdot 10^{12}$ & $8 \cdot 10^{25}$ & $1.7 \cdot 10^{47}$\\ 
 \hline
\end{tabular}
\end{center}
\end{table}

For example, for $n=500$ and $\sigma_0=0.4$, we see that over $300,000$ points are Fisher-separable with probability greater than $99\%$.
\end{example}

Corollary \ref{cor:dependent} is not applicable if $\sigma^2_0 \leq 1/16$. However, this is unavoidable. Indeed, let set $F$ contain points $\boldsymbol{ x}$ and $\boldsymbol{ y}$ such that $\boldsymbol{ y}$ is uniformly distributed among the vertices of the unit cube, and $\boldsymbol{ x}$ is uniformly distributed among the vertices of the (twice smaller) cube with main diagonal connecting $\boldsymbol{ c}^*$ and $\boldsymbol{ y}$. Then the variance of the components of $\boldsymbol{ x}$ is $1/16$, but $\boldsymbol{ x}$ and $\boldsymbol{ y}$ are not $(1, \boldsymbol{ c}^*)$-Fisher separable with probability $1$.

\section{Summary: a short guide on  proven theorems \label{Sec:Summ}}

We established new stochastic separation theorems for a broad class of log-concave and product distributions. All the theorems state that if the number of points $M$ does not exceed some bound $M_0$, then the points are Fisher separable with high probability. In all theorems, the bound $M_0$ grows exponentially in dimension $n$. The exact rate of growth of $M_0$ depends on the distribution assumptions we impose. If we make stronger assumptions, we can prove theorems with faster-growing upper bound $M_0$, and can ensure separation of more points. 

We can get the strongest bound separation theorems if we assume that the data are i.i.d. and are taken from a fixed given distribution such as the standard normal distribution
 (Theorems \ref{th:normalknown} and \ref{th:normalnew}), uniform distributions in a ball (Theorem \ref{th:ballnew}) or in a unit cube (Corollary \ref{cor:uniform}), or multivariate exponential distribution (Theorem \ref{th:expon}).

More generally, we have established new separation theorems for i.i.d. data from any fixed given distribution $f$, assuming that $f$ is either spherically invariant (Theorem \ref{th:sepfixed}) or a product distribution (Theorem \ref{th:explicit} and Corollary \ref{cor:identcomp}).

In the Theorems listed above, the distribution $f$ is assumed to be known and the bound $M_0$ explicitly depend on $f$. More generally, we may assume that distribution $f$ is unknown but is known to belong to some family ${\cal F}$ of distributions. In this case, the bound $M_0$ should depend on ${\cal F}$ but not on $f$. We have proved such separation theorems for i.i.d. data from (unknown) product distribution (Theorems \ref{th:proddistr} and \ref{th:meancentered}), rotation invariant distribution (Theorems \ref{th:rotsimple} and \ref{th:rotgeneral}), isotropic strongly log-concave distribution (Theorems \ref{th:explstrong} and \ref{th:explstrong2}), and, more generally, any mixture of strongly log-concave distributions (Theorem \ref{th:mixture}). This last theorem is very general, because any distribution with exponentially decaying tails may be approximated by a mixture of log-concave ones.

Finally, we have Theorems with i.i.d. assumption relaxed. In particular, in Theorem~\ref{Th:prototype} the probability of separability of a random point from a finite set was estimated  without any assumption about the randomness and distributions of this finite set.  Theorem \ref{th:indbound} treats the case when the data are independent but not identically distributed, and their distributions are strongly log-concave but not isotropic. Theorem \ref{th:dependent} treats the case when the data may be dependent but the conditional distributions are product distributions.

The results are illustrated on Figures \ref{Fig:rotM} - \ref{Fig:prodprob}.

\begin{figure} 
\centering
\includegraphics[width=0.5\textwidth]{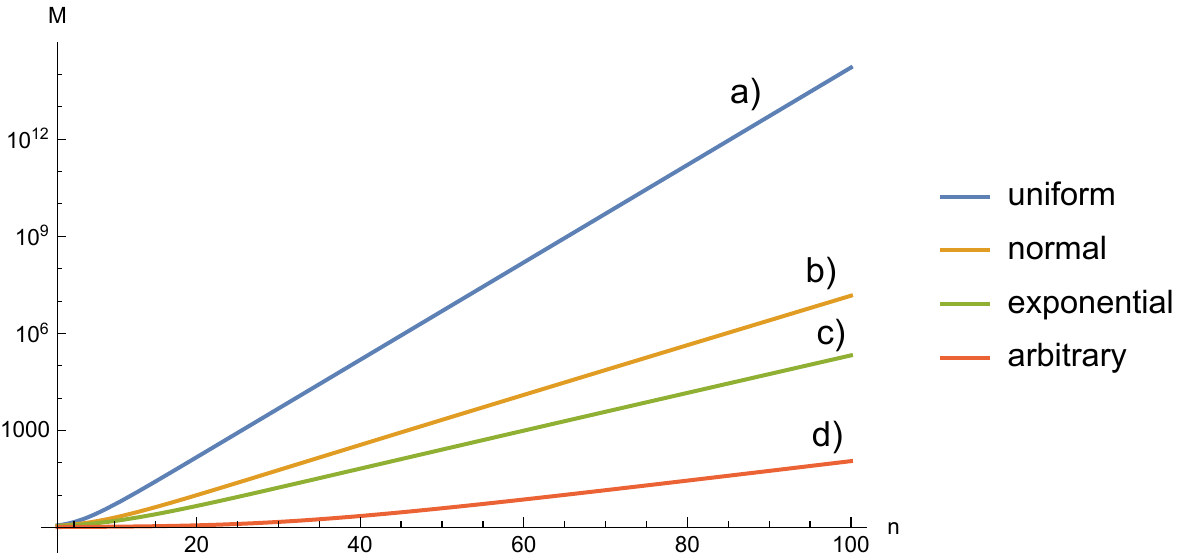}
\caption{The number $M$ of points which are guaranteed to be $1$-Fisher separable with probability $99\%$ as a function of dimension $n$ for (a) the uniform distribution in a ball (Corollary \ref{cor:ball1}), (b) the standard normal distribution (Theorem \ref{th:normalnew}), (c) multivariate exponential distribution (Theorem \ref{th:expon}), and (d) lower bound for $M$ which works for an arbitrary log-concave rotation-invariant distribution (Theorem \ref{th:rotgeneral}).
\label{Fig:rotM}}
\end{figure}

\begin{figure} 
\centering
\includegraphics[width=0.5\textwidth]{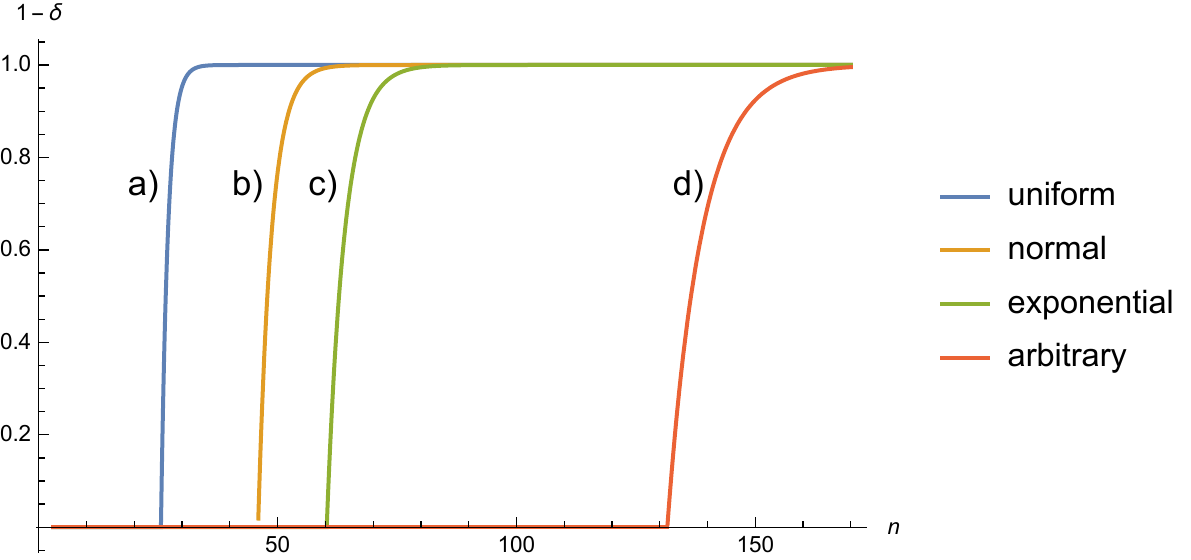}
\caption{The lower bound $1-\delta$ for the probability that the set of $M=10,000$ points is $1$-Fisher separable as a function of dimension $n$ for (a) the uniform distribution in a ball (Corollary \ref{cor:ball1}), (b) the standard normal distribution (Theorem \ref{th:normalnew}), (c) multivariate exponential distribution (Theorem \ref{th:expon}), and (d) an arbitrary log-concave rotation-invariant distribution (Theorem \ref{th:rotgeneral}).
\label{Fig:rotprob}}
\end{figure}

\begin{figure} 
\centering
\includegraphics[width=0.5\textwidth]{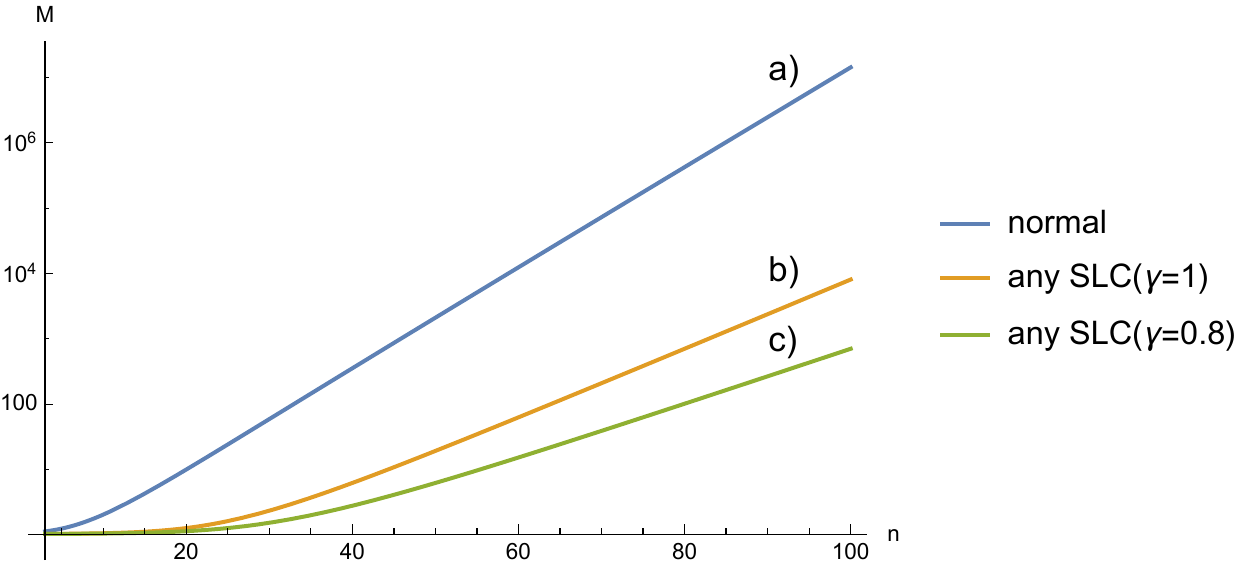}
\caption{The number $M$ of points which are guaranteed to be $1$-Fisher separable with probability $99\%$ as a function of dimension $n$ for (a) the standard normal distribution
(Theorem \ref{th:normalnew}), (b) an arbitrary strictly log-concave distribution with $\gamma=1$ (Theorem \ref{th:explstrong2}), and (c) an arbitrary strictly log-concave distribution with $\gamma=0.8$ (Theorem \ref{th:explstrong2}). Recall that the standard normal distribution is strictly log-concave with $\gamma=1$.
\label{Fig:slcM}}
\end{figure}

\begin{figure} 
\centering
\includegraphics[width=0.5\textwidth]{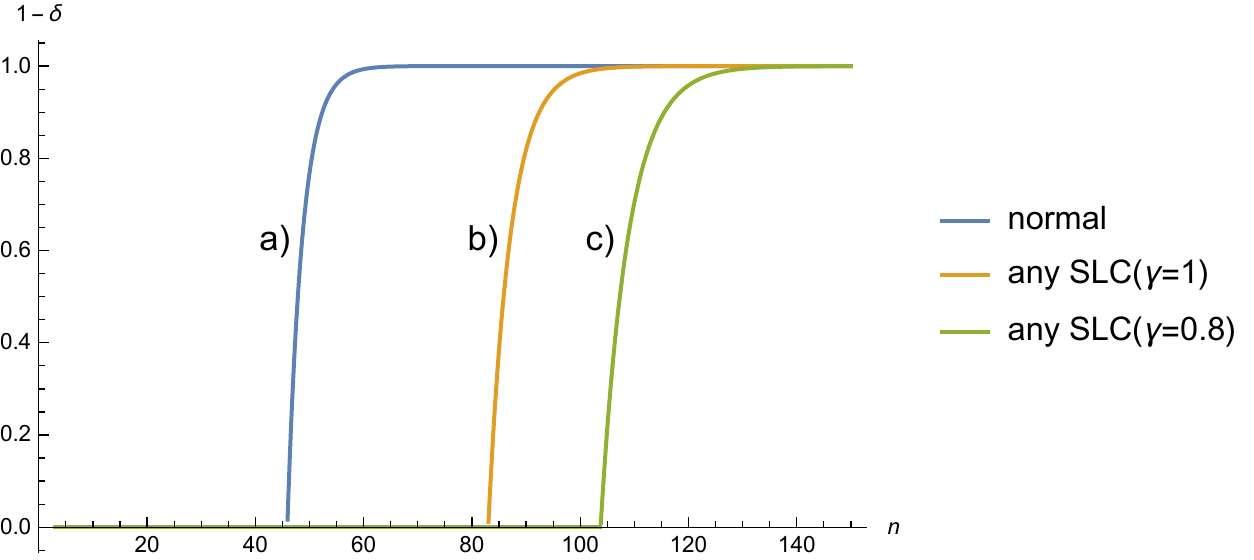}
\caption{The lower bound $1-\delta$ for the probability that the set of $M=10,000$ points is $1$-Fisher separable as a function of dimension $n$ for (a) the standard normal distribution (Theorem \ref{th:explstrong2}), (b) arbitrary strictly log-concave distribution with $\gamma=1$ (Theorem \ref{th:explstrong2}), and (c) arbitrary strictly log-concave distribution with $\gamma=0.8$ (Theorem \ref{th:explstrong2}).
\label{Fig:slcprob}}
\end{figure}

\begin{figure} 
\centering
\includegraphics[width=0.5\textwidth]{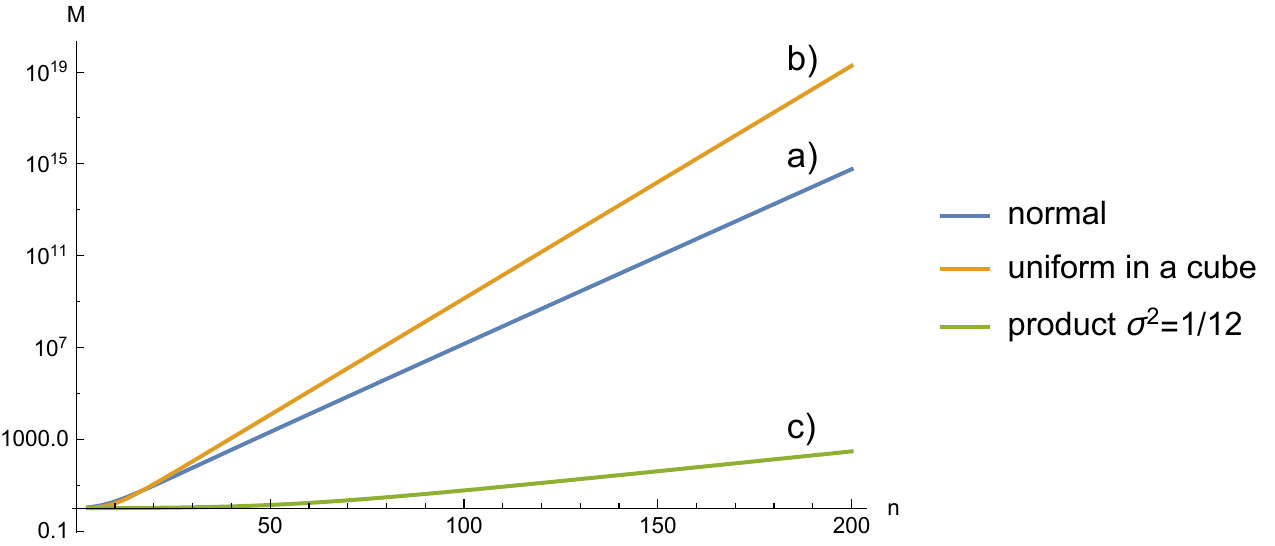}
\caption{The number $M$ of points which are guaranteed to be $1$-Fisher separable with probability $99\%$ as a function of dimension $n$ for (a) the standard normal distribution (Theorem \ref{th:explstrong2}), (b) the uniform distribution in a cube (Corollary \ref{cor:uniform}) and (c) an arbitrary mean-centered product distribution with $\sigma^2=\frac{1}{12}$ (Corollary \ref{cor:smallsigma}). Recall that the uniform distribution in a cube has $\sigma^2=\frac{1}{12}$. 
\label{Fig:prodM}}
\end{figure}

\begin{figure} 
\centering
\includegraphics[width=0.5\textwidth]{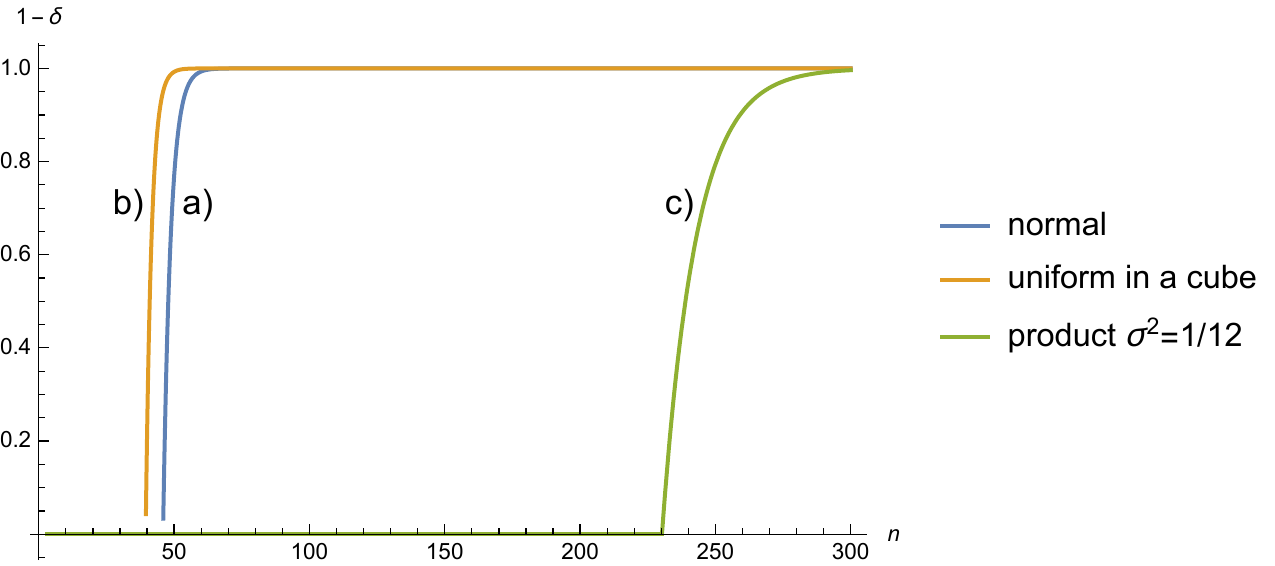}
\caption{The lower bound $1-\delta$ for the probability that the set of  $M=10,000$ points is $1$-Fisher separable as a function of dimension $n$ for (a) the standard normal distribution (Theorem \ref{th:explstrong2}), (b) the uniform distribution in a cube (Corollary \ref{cor:uniform}) and (c) an arbitrary product distribution with $\sigma^2=\frac{1}{12}$ (Corollary \ref{cor:smallsigma}).
\label{Fig:prodprob}}
\end{figure}

\section{Conclusion: what are these estimates for? \label{sec:concl}}

The theorems presented in the paper have, roughly speaking, the following structure: for a given class of distributions, a random set of $M$ vectors in  ${\mathbb R}^n$  is $\alpha$-Fisher separable with probability $\geq p$ if $M \leq M_0$, where $M_0$ depends on $n$, $p$, and $\alpha$ and this dependence is specific for the selected class of probability distributions.
For the distributions without heavy tails and ``clumps'' (sets with relatively low volume but high probability) $M_0$ grows fast with $n$: exponentially for strictly log-concave distributions (tails that decay as $\exp(-a\|\boldsymbol{ x} \|^2)$  or faster) and as exponent of $\sqrt{n}$ (exponential tails that decay as    $\exp(-a\|\boldsymbol{ x} \|$). The main problem solved in the work was to find the best (optimal and explicit) estimates.

Stochastic separation theorems form a relatively new chapter of the measure concentration theory (for the collection of the classical results about concentration of measure we refer to  \citet{GianMilman2000,Ledoux2001,Vershynin2018}). Concentration of random sets in thin shells is well-known: equivalence of microcanonical and canonical ensembles in statistical physics due to concentration near the level sets of energy \citep{Gibbs1902}, concentration of the volume of a ball near its border, the sphere, and concentration of the sphere near its equators \citep{Levy1951,Bal1997} (and general `waist concentration' \citep{Gromov2003}),  etc. Stochastic separation theorems describe the fine structure of this thin layer.

The first theorems of this class were considered as the manifestation of the blessing of dimensionality \citep{GorbanTyuRom2016,GorTyukPhil2018}. Indeed, the fast and non-iterative correction of the AI errors is based on the phenomenon of stochastic separation in high dimensions. The legacy AI systems are supplemented by {\em correctors}. These simple smart devices separate recognized errors and their surroundings from situations with correct functioning and replace the legacy AI solution with the corrected one. One of the possible structures of correcting system is presented in Fig.~\ref{Fig:Correctors}. The correcting system receives a vector of signals that represents the situation in maximal detail. It consist of input vectors of the legacy AI system, vector of internal signals  of that and the output vector (Fig.~\ref{Fig:Correctors}). There are several elementary correctors (Corrector 1, Corrector 2, ... Corrector $n$ in Fig.~\ref{Fig:Correctors}). Each elementary corrector includes a classifier, which separates a cluster of recognized errors from all other situations, and keeps the modified decision rule for this cluster. Dispatcher selects for each situation the closest cluster and sends the vector that represents the situation to the corresponding elementary corrector for further decision. The elementary corrector takes the decision ``an error or not an error'' and acts according to this decision.  Stochastic separation theorems are necessary to evaluate the probability of accurate work of such a system. Of course, its accuracy increases with dimensionality of data. Correctors can be used for solution of the classical problem of sensitivity and specificity improvement (removing  false-positive and false-negative results of classification), for knowledge transfer between artificial intelligence systems \citep{TyukinSofeikov2018}, for training of multiagent systems and other purposes.

\begin{figure} 
\centering
\includegraphics[width=0.4\textwidth]{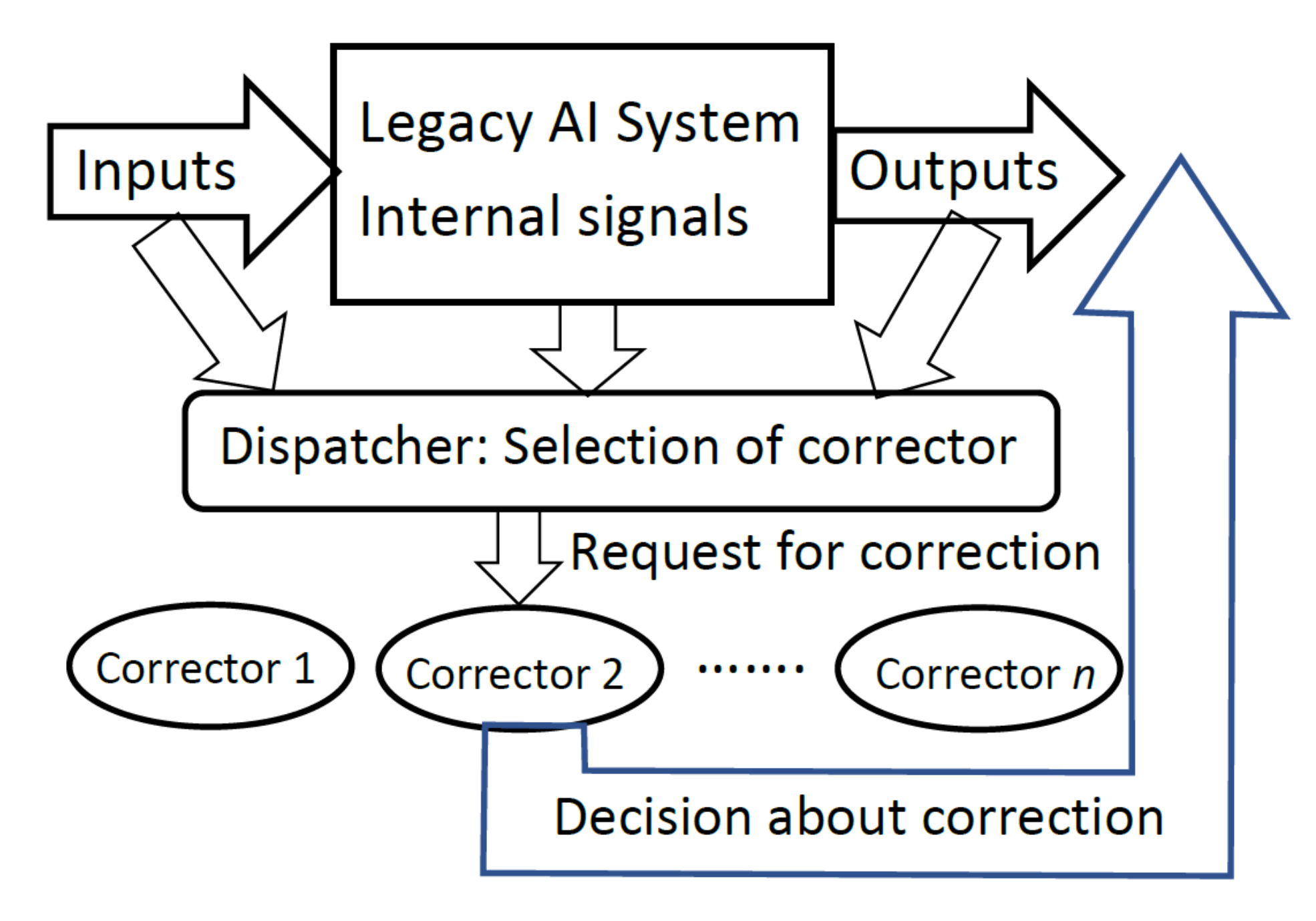}
\caption{Corrector of  a legacy AI system. 
\label{Fig:Correctors}}
\end{figure}

If the AI system works for a long time, then errors and their correctors accumulate.  The `technical debt' increases, and flexibility drops down \citep{Sculley2015}. In this situation, the Interiorization of the accumulated knowledge is necessary. This is incorporation of knowledge into system's inner structure. Interiorization can be organized as  supervised learning that uses the system with correctors as the supervisor. The AI system, equipped with correctors (`teacher'), labels randomly generated examples (proposes the answers or actions) and the AI system without correctors (`student') learns to give the proper answer. At the beginning, the student is the same legacy AI system, as the teacher, but without correctors. During the learning process, the student's skills change. The random generation of examples can be improved by selection of the more realistic examples and by elements of adversarial learning (selection of the examples with higher probability of errors). This play of the system with itself is a realization of the famous selfplay technology of DeepMind (for discussion of the selfplay principle  and DeepMind  Alpha Go Zero technology we refer to \citet{Holcomb2018}).

Stochastic separation theorems have three critical applications. One of them is one-shot correction of errors in intellectual systems. Recently, it was realized that the possibility to correct an AI system opens also the possibility to attack it.  The dimensionality of the AI's decision-making space is a major contributor to the AI's vulnerability \citep{TyukinEtAl2020}. So, the stochastic separation theorems demonstrate also the new version of the curse of dimensionality. As we said, the blessing and curse of dimensionality are two sides of the same coin. Thus, the second application is vulnerability analysis of high-dimensional AI systems in high-dimensional world.
  
 The third application is to explain the ``unreasonable effectiveness'' of small neural ensembles in the multidimensional brain and the emergence of static and associative memories   in the ensembles of single neurons \citep{GorbanMakTyu2019}. A simple enough functional neuronal model is capable of explaining: i) the extreme selectivity of single neurons to the information content of high-dimensional data, ii) simultaneous separation of several uncorrelated informational items from a large set of stimuli, and iii) dynamic learning of new items by associating them with already ``known'' ones \citep{TyukinBMB2019}. These results constitute a basis for organization of complex memories in ensembles of single neurons. The stochastic separation theorems give the theoretical background of existence and efficiency of `concept cells' and sparse coding in a brain \citep{GorbanMakTyu2019, QuianQuiroga(2019),Tapia2020}. (These  `hardware components of thought and memory' are presented in detail by \citet{QuianQuiroga2005, QuianQuiroga2013,Viskontas2009}.)
 
There are also many technical applications of stochastic separation theorems with optimal bounds in various areas of data analysis and machine learning, for example, for estimation of dimensionality of data. The estimated dimension depends linearly on the exponents from these bounds for the methods based on the data separability properties  \citep{BacZinovyev2020, Mirkesetal2020}. Therefore, if we use bound with exponent  twice far from the optimal one, then we  misestimate the data dimension twice.
 
 In recent review by \citet{BacZinovyev2020} the typology of these methods is proposed and a new family of methods based on the data separability properties is presented.
  
Stochastic separation theorems shed light on the fundamental {\em problem of learning from few examples in high dimensions}. This problem is central for understanding when and why modern large-scale systems can learn from post-classic data and generalize so well in practice. Classical generalization bounds stemming from the Vapnik-Chervonenkis theory \cite{vapnik1999overview} alone are too conservative to explain these successes. It has been demonstrated in \cite{zhang2016understanding} that absolutely identical deep neural networks are capable to exhibit both sides of the learning spectrum: to successfully generalize from meaningful training data and, at the same time,  `memorise' random assignments of labels without any generalization. Few-shot learning schemes such as matching \cite{vinyals2016matching} and prototypical networks \cite{snell2017prototypical}, and success of stochastic configuration networks in practice \cite{wang2017stochastic} are another manifestations of the same phenomenon.

These results suggest that neural networks' generalization capabilities are intrinsically linked with internal regularities in the data sets and also with representations of these regularities in the networks' latent spaces. Stochastic separation theorems reveal an important characteristic of this important regularity: if an object has a `compact' representation in the network's latent space  then such object can be learned from just few or even single example. The notion of `compactness' here should be specified. For various classes of problems it can be thought of as covering of data by bounded number of balls with limited radii for some bounds, depending on the dimension and variability of the data, or as a sufficiently fast decay of a  sequence of  dataset diameters.  Absence of such compact representations may require exponentially large training samples to learn from. In this respect, the theorems suggest that a successful learning process in modern networks with large VC dimension must include building an adequate data representation in the network's latent space.

The extreme rarefaction of data in the post-classical multidimensional world leads to many unexpected phenomena: applicability of simple discriminants to apparently complex problem of correcting AI, the possibility of stealth attacks on AI systems and the apparent simplicity of the concept cells and sparse coding in the brain. \citet{Kreinovich2019} characterized this bunch of phenomena as ``unheard-of simplicity'',  following Pasternak's famous verses. Stochastic separation theorems with optimal bounds provide a tool for dealing with these problems..


\begin{thebibliography}{99}

\bibitem[Bac \& Zinovyev(2020)Bac  \&  Zinovyev]{BacZinovyev2020}Bac, J., \&  Zinovyev, A. (2020). Lizard brain: tackling locally low-dimensional yet globally complex organization of multi-dimensional datasets. {\em Frontiers in Neurorobotics}, {\em 13}, 110. \url{https://doi.org/10.3389/fnbot.2019.00110}.

\bibitem[Ball(1997)]{Bal1997}
Ball, K. (1997).
\newblock An Elementary Introduction to Modern Convex Geometry.
  In {\em Flavors of Geometry} (pp. 1--58). Cambridge University Press: Cambridge,  UK.

\bibitem[{B{\'a}r{\'a}ny} \& {F{\"u}redi}(1988)]{convhull}
{B{\'a}r{\'a}ny}, I., \& {F{\"u}redi}, Z. (1988).
\newblock On the shape of the convex hull of random points.
\newblock {\em Probab. Theory Relat. Fields}, {\em 77},~231--240. \url{https://doi.org/10.1007/BF00334039}.

\bibitem[Bobkov(2010)]{Bobkov}Bobkov,  G.G. (2010). Gaussian concentration for a class of spherically invariant measures.  {\em Journal of Mathematical Sciences} {\em 167} (3), 326--339. \url{https://doi.org/10.1007/s10958-010-9922-0}.

\bibitem[Boucheron(2013)]{Boucheron2013}
Boucheron, G., Lugosi, G., \& Massart. P. (2013)
\newblock {\em Concentration inequalities: A nonasymptotic theory of independence.} Oxford university press.

\bibitem[Camastra(2003)]{Camastra2003}Camastra, F. (2003). Data dimensionality estimation methods: a survey. {\em Pattern Recognit.}, {\em 36} (12), 2945--2954. \url{https://doi.org/10.1016/S0031-3203(03)00176-6}.

\bibitem[Donoho(2000)]{Donoho2000}
Donoho, D.L. (2000).
\newblock {High-Dimensional Data Analysis: The Curses and Blessings of
  Dimensionality}.
\newblock  \emph{Invited lecture at Mathematical Challenges of the 21st Century}, AMS National Meeting, Los Angeles, CA, USA, August 6-12, 2000. CiteSeerX \url{http://citeseerx.ist.psu.edu/viewdoc/summary?doi=10.1.1.329.3392}{10.1.1.329.3392}.

\bibitem[Donoho \& Tanner(2009)]{DonohoTanner2009}
Donoho, D., \&  Tanner, J. (2009).
\newblock Observed universality of phase transitions in high-dimensional
  geometry, with implications for modern data analysis and signal processing.
\newblock {\em Phil. Trans. R. Soc. A}, {\em 367},~4273--4293. \url{https://doi.org/10.1098/rsta.2009.0152}.

\bibitem[Giannopoulos\& Milman(2000)]{GianMilman2000}
Giannopoulos, A.A., \&  Milman, V.D. (2000).
\newblock Concentration property on probability spaces.
\newblock {\em Adv. Math.}  {\em 156}, 77--106. \url{https://doi.org/10.1006/aima.2000.1949}.

\bibitem[Gibbs(1960)]{Gibbs1902}
Gibbs, J.W. (1960).
\newblock {\em Elementary Principles in Statistical Mechanics, Developed with
  Especial Reference to the Rational Foundation of Thermodynamics}. Dover
  Publications: New York,  NY, USA.

\bibitem[Gorban   et~al.(2018) Gorban,  Golubkov,   Grechuk,   Mirkes,  \& Tyukin]{Gorbetal2018} Gorban, A.N., Golubkov, A. , Grechuk, B.,  Mirkes, E.M., \& Tyukin I.Y. (2018). Correction of AI systems by linear discriminants: Probabilistic foundations. {\em  Information Sciences}, {\em 466},  303--322. \url{https://doi.org/10.1016/j.ins.2018.07.040}
 
\bibitem[Gorban et~al.(2008)Gorban, K\'egl, Wunsch, \&
  Zinovyev]{GorbanKegl2008}
Gorban, A.N., K\'egl, B., Wunsch, D., Zinovyev, A. (Eds.) (2008).
\newblock {\em Principal Manifolds for Data Visualisation and Dimension
  Reduction}; Springer: Berlin/Heidelberg, {Germany}.
  \url{https://doi.org/10.1007/978-3-540-73750-6}.

\bibitem[Gorban et~al.(2019)Gorban, Makarov, \& Tyukin]{GorbanMakTyu2019}Gorban, A.N., Makarov, V.A., \& Tyukin, I.Y. (2019). The unreasonable effectiveness of small neural ensembles in high-dimensional brain. {\em 	Phys. Life Rev.}, {\em 29}, 55--88. \url{https://doi.org/10.1016/j.plrev.2018.09.005}.

\bibitem[Gorban and Tyukin(2017)Gorban, Tyukin]{GorbTyu2017} Gorban,  A.N., \& Tyukin, I.Y. (2017).  Stochastic separation theorems.  {\em Neural Netw.} {\em 94},  255--259. \url{https://doi.org/10.1016/j.neunet.2017.07.014}.


\bibitem[Gorban and Tyukin(2018)Gorban  \& Tyukin]{GorTyukPhil2018}
Gorban, A.N., \& Tyukin, I.Y.  (2018).
\newblock Blessing of dimensionality: mathematical foundations of the
  statistical physics of data.
\newblock {\em Phil. Trans. R. Soc. A}, {\em 376},~20170237, \url{https://doi.org/10.1098/rsta.2017.0237}.

\bibitem[{Gorban et~al.(2016a)Gorban, Tyukin, Prokhorov, \&
  Sofeikov}]{GorbTyuProSof2016}
\bibinfo{author}{Gorban, A.N.}, \bibinfo{author}{Tyukin, I.},
  \bibinfo{author}{Prokhorov, D.},  \& \bibinfo{author}{Sofeikov, K.} 
  \bibinfo{year}({2016}a).
\newblock \bibinfo{title}{Approximation with random bases: Pro et contra}.
\newblock \bibinfo{journal}{\em Information Sciences} \bibinfo{volume}{\em 364--365},
  \bibinfo{pages}{129--145}. \url{https://doi.org/10.1016/j.ins.2015.09.021}

\bibitem[{Gorban et~al.(2016b)Gorban, Tyukin, \& Romanenko}]{GorbanTyuRom2016}
  \bibinfo{author}{Gorban, A.N.}, \bibinfo{author}{Tyukin, I.Y.}, \& \bibinfo{author}{Romanenko, I.} \bibinfo{year}({2016}b).
  \newblock \bibinfo{title}{The blessing of dimensionality: Separation theorems in the thermodynamic limit.}
  \newblock \bibinfo{journal}{\em IFAC-PapersOnLine}, \bibinfo{volume}{\em 49}, \bibinfo{number}(24), \bibinfo{pages}{64--69}.

\bibitem[Gorban \& Zinovyev(2010)]{GorZin2010}
Gorban, A.N., \& Zinovyev, A. (2010).
\newblock Principal manifolds and graphs in practice: from molecular biology to
  dynamical systems.
\newblock {\em Int. J. Neural Syst.} {\em 20},~219--232, \url{https://doi.org/10.1142/S0129065710002383}.

\bibitem[Grechuk(2019)]{Grechuk}Grechuk  B. (2019).  Practical stochastic separation theorems for product distributions, In {\em Proc. 2019 International Joint Conference on Neural Networks (IJCNN), Budapest, Hungary}, IEEE Press,  pp. 1-8, \url{https://doi.org/10.1109/IJCNN.2019.8851817}.

\bibitem[Gromov(2003)]{Gromov2003}
Gromov, M. (2003).
\newblock Isoperimetry of waists and concentration of maps.
\newblock {\em Geom. Funct. Anal.}, {\em 13},~178--215, \url{https://doi.org/10.1007/s00039-009-0703-1}.

\bibitem[Holcomb  {et~al.}(2018)Holcomb, Porter, Ault, Mao, \& Wang]{Holcomb2018}Holcomb, S.D., Porter,  W.K., Ault, S.V., Mao, G., \& Wang, J. (2018). Overview on DeepMind and its AlphaGo Zero AI. In: Proceedings of ICBDE '18, the 2018 international conference on big data and education. Association for Computing Machinery, New York, pp. 67-71. \url{https://doi.org/10.1145/3206157.3206174}

\bibitem[Hoeffding(1963)]{Hoeffding}
Hoeffding, W. (1963).
\newblock Probability inequalities for sums of bounded random variables.
\newblock {\em 	J. Am. Stat. Assoc.}, {\em 58},~13--30.
\url{https://doi.org/10.1080/01621459.1963.10500830}.

\bibitem[Jolliffe(1993)]{Joliffe2011}
Jolliffe, I. (1993).
\newblock {\em Principal Component Analysis}; Springer: Berlin/Heidelberg, {Germany}.

\bibitem[Kainen(1997)]{Kainen1997}
Kainen, P.C. (1997).
\newblock Utilizing geometric anomalies of high dimension: when complexity
  makes computation easier. In K. Warwick \&  M. K\'{a}rn\'{y},
  M.  (Eds.), {\em Computer-Intensive Methods in Control and
  Signal Processing: The Curse of Dimensionality} (pp. 283--294). Springer: New York, NY, USA. \url{https://doi.org/10.1007/978-1-4612-1996-5_18}.

\bibitem[{Kainen \& K{\r{u}}rkov{\'a}(1993)}]{Kurkova1993}
\bibinfo{author}{Kainen, P.}, \& \bibinfo{author}{K{\r{u}}rkov{\'a}, V.} 
  \bibinfo{year}(1993).
\newblock \bibinfo{title}{Quasiorthogonal dimension of {E}uclidian spaces}.
\newblock \bibinfo{journal}{\em Appl. Math. Lett.}, \bibinfo{volume}{\em 6},
  \bibinfo{pages}{7--10}. \url{https://doi.org/10.1016/0893-9659(93)90023-G}.

\bibitem[{Kainen \& K{\r{u}}rkov{\'a}(2020)}]{KainenKurkova2020}
\bibinfo{author}{Kainen, P.}, \& \bibinfo{author}{K{\r{u}}rkov{\'a}, V.} 
  \bibinfo{year}(2020).
\newblock \bibinfo{title}{Quasiorthogonal dimension}.
\newblock In  Kosheleva, O., Shary, S.P., Xiang, G., Zapatrin, R. (Eds.). {\em  Beyond Traditional Probabilistic Data Processing Techniques: Interval, Fuzzy etc. Methods and Their Applications}. Springer, Cham, 615--629.  \url{https://doi.org/10.1007/978-3-030-31041-7_35}.

\bibitem[Kreinovich(2019)]{Kreinovich2019}Kreinovich, V. (2019). The heresy of unheard-of simplicity: Comment on “The unreasonable effectiveness of small neural ensembles in high-dimensional brain” by AN Gorban, VA Makarov, and IY Tyukin. {\em 	Phys. Life Rev.}, 29, 93-95. \url{https://doi.org/10.1016/j.plrev.2019.04.006}.

\bibitem[K{\r{u}}rkov{\'a}(2019)]{KurkovaComm2019}
K{\r{u}}rkov{\'a}, V.  (2019).
\newblock Some insights from high-dimensional spheres: Comment on ``{T}he
  unreasonable effectiveness of small neural ensembles in high-dimensional
  brain'' by {A}lexander {N}. {G}orban et al.
\newblock {\em Phys. Life Rev.}  {\em 29},~98--100. \url{https://doi.org/10.1016/j.plrev.2019.03.014}.

\bibitem[K{\r{u}}rkov{\'a} \& Sanguineti(2019)]{KurkovaSang2019}
K{\r{u}}rkov{\'a}, \& V.; Sanguineti, M.  (2019).
\newblock Probabilistic Bounds for Binary Classification of Large Data Sets.
\newblock In~ L. Oneto, N. Navarin,
A. Sperduti, \& D. Anguita  (Eds.), {\em Proceedings of the International Neural Networks Society, {Genova, Italy, 16--18 April 2019},  Volume~1}  (pp. 309--319). Springer: {Berlin/Heidelberg, Germany.}   \url{https://doi.org/10.1007/978-3-030-16841-4_32}.

\bibitem[Ledoux(2001)]{Ledoux2001}Ledoux, M.  (2001).  {\em The Concentration of Measure Phenomenon}  (Mathematical Surveys \& Monographs No. 89), AMS.

\bibitem[L\'evy(1951)]{Levy1951}
L\'evy, P. (1951).
\newblock {\em Probl\`emes Concrets D'analyse Fonctionnelle}. Gauthier-Villars:
  Paris,  France.


\bibitem[Li(2011)]{Li} Li, S.  (2011). Concise formulas for the area and volume of a hyperspherical cap. {\em Asian Journal of Mathematics and Statistics} {\em 4} (1), 66--70. \url{https://doi.org/10.3923/ajms.2011.66.70}.



\bibitem[Lopez \& Sesma(1999)]{Lopez}Lopez, J.L., \&  Sesma  J. (1999). Asymptotic expansion of the incomplete beta function for large values of the first parameter. {\em Integral Transforms Spec. Funct.}, {\em 8} (3-4),  233--236. \url{https://doi.org/10.1080/10652469908819230}.

\bibitem[Mirkes et al.(2020)Mirkes,   Allohibi,  \& Gorban]{Mirkesetal2020}Mirkes, E.M., Allohibi, J., Gorban, A.N. (2020) Fractional Norms and Quasinorms Do Not Help to Overcome the Curse of Dimensionality. {\em Entropy},  {\em 22}, 1105.


\bibitem[Moczko \em{et~al.}(2016)Moczko, Mirkes, C\'{a}ceres, Gorban, and
Piletsky]{MoczkoMirkesGorPil2016}
Moczko, E., Mirkes, E.M., C\'{a}ceres, C., Gorban, A.N., \& Piletsky, S. (2016).
\newblock Fluorescence-based assay as a new screening tool for toxic chemicals.
\newblock {\em Sci. Rep.}, {\em 6},~33922. \url{https://doi.org/10.1038/srep33922}.

\bibitem[Petrov(1965)]{Petrov}
Petrov, V. (1965).
\newblock On the probabilities of large deviations for sums of independent random variables.
\newblock {\em Theory Probab. Appl., }, {\em 10} (2), 287--298. \url{https://doi.org/10.1137/1110033}.

\bibitem[Pham(2007)]{Pham}
Pham, H. (2007). Some applications and methods of large deviations in finance and insurance. In Paris-Princeton Lectures on Mathematical Finance 2004 (pp. 191-244). Lecture Notes in Mathematics, vol 1919. Springer, Berlin, Heidelberg. \url{https://doi.org/10.1007/978-3-540-73327-0_5}.

\bibitem[Quian Quiroga(2019)]{QuianQuiroga(2019)}Quian Quiroga, R.   (2019). Akakhievitch revisited Comment on ``The unreasonable effectiveness of small neural ensembles in high-dimensional brain'' by Alexander N. Gorban et al. {\em 	Phys. Life Rev.}, {\em 29}, 111-114.

\bibitem[Quian Quiroga \em{et~al.}(2013)Quian Quiroga, Fried,  \& Koch]{QuianQuiroga2013}Quian Quiroga, R., Fried, I., \& Koch, C. (2013). Brain cells for grandmother. {\em Scientific American}, {\em 308} (2), 30--35. \url{http://www.jstor.org/stable/26017950}.

\bibitem[Quian Quiroga \em{et~al.}(2005)Quian Quiroga, Reddy, Kreiman, Koch \& Fried]{QuianQuiroga2005}Quian Quiroga, R. Q., Reddy, L., Kreiman, G., Koch, C., \& Fried, I. (2005). Invariant visual representation by single neurons in the human brain. {\em Nature}, {\em 435} (7045), 1102--1107. \url{https://doi.org/10.1038/nature03687}.



\bibitem[{Rosenblatt(1962)}]{Rosenblatt1962}
\bibinfo{author}{Rosenblatt, F.}, \bibinfo{year}{(1962)}.
\newblock \bibinfo{title}{Principles of Neurodynamics: Perceptrons and the
  Theory of Brain Mechanisms}.
\newblock \bibinfo{publisher}{Spartan Books}.


\bibitem[Sculley {\em et~al.}(2015)Sculley, Holt, Golovin, Davydov, Phillips, Ebner, Chaudhary, Young, Crespo, \& Dennison]{Sculley2015}Sculley D., Holt G., Golovin D., Davydov E., Phillips  T., Ebner D., Chaudhary V., Young M., Crespo J.-F., \& Dennison D.   (2015).  In C.  Cortes, N.D. Lawrence, D.D. Lee, M. Sugiyama, \& R. Garnett (Eds.), {\em  Advances in Neural Information Processing Systems 28}, Proc. of 28th International Conference on Neural Information Processing Systems (NIPS 2015) (pp. 2503--2511). Curran Associates, Inc., N.Y. \url{http://papers.nips.cc/paper/5656-hidden-technical-debt-in-machine-learning-systems.pdf}.


\bibitem[Sidorov \& Zolotykh(2020)]{Sidorov2020}Sidorov, S., Zolotykh, N. (2020). Linear and {F}isher separability of random points in the d-dimensional spherical layer and inside the d-dimensional cube. {\em Entropy},  {\em 22}(11), 1281. \url{https://doi.org/10.3390/e22111281}

\bibitem[Snell \em{et~al.}(2017)]{snell2017prototypical}Snell, J., Swersky, \& K., Zemel, R. (2017). Prototypical networks for few-shot learning. In  I. Guyon, U.V. Luxburg, S. Bengio, H. Wallach, R. Fergus, S. Vishwanathan, R. Garnett (Eds.), {\em  Advances in Neural Information Processing Systems 30}, Proc. of 30th International Conference on Neural Information Processing Systems (NIPS 2017) (pp. 4077--4087). Curran Associates, Inc., N.Y. \url{https://proceedings.neurips.cc/paper/2017/hash/cb8da6767461f2812ae4290eac7cbc42-Abstract.html}

 
\bibitem[Tapia {\em et~al.}(2020)Tapia, Tyukin, \& Makarov]{Tapia2020}Tapia, C. C., Tyukin, I., \& Makarov, V. A. (2020). Universal principles justify the existence of concept cells. {\em Sci. Rep.}, {\em 10} (1), 1--9. \url{https://doi.org/10.1038/s41598-020-64466-7}.

\bibitem[Tyukin {\em et~al.}(2019)Tyukin, I., Gorban, A. N., Calvo, C., Makarova, J., \& Makarov]{TyukinBMB2019} Tyukin, I., Gorban, A. N., Calvo, C., Makarova, J., \& Makarov, V. A. (2019). High-dimensional brain: A tool for encoding and rapid learning of memories by single neurons. {\em Bull. Math. Biol.}, {\em 81}(11), 4856--4888. \url{https://doi.org/10.1007/s11538-018-0415-5}

\bibitem[Tyukin  {\em et~al.}(2018)Tyukin, Gorban, Sofeikov, and
Romanenko]{TyukinSofeikov2018}
Tyukin, I.Y., Gorban, A.N., Sofeikov, K., \& Romanenko, I.  (2018).
\newblock Knowledge transfer between artificial intelligence systems.
\newblock {\em Front. Neurorobot.}, {\em 12}, 49. \url{https://doi.org/10.3389/fnbot.2018.00049}.

\bibitem[Tyukin {\em et~al.}(2020)Tyukin, Higham, and Gorban]{TyukinEtAl2020} Tyukin, I.Y., Higham, D.J., \&   Gorban, A. N. (2020). On adversarial examples and stealth attacks in artificial intelligence systems. In {\em Proc. 2020 International Joint Conference on Neural Networks (IJCNN), Glasgow, United Kingdom, 2020} (pp. 1--6), IEEE.  \url{https://doi.org/10.1109/IJCNN48605.2020.9207472}.
 
 \bibitem[Vapnik(1999)]{vapnik1999overview}Vapnik, V.N. (1999). An overview of statistical learning theory. {\em IEEE Trans. Neural Netw.}, {\em 10}(5), 988--999. \url{https://doi.org/10.1109/72.788640}.
  
\bibitem[Vinyals \em{et~al.}(2016)]{vinyals2016matching}Vinyals, O.,  Blundell, C., Lillicrap, T., Kavukcuoglu, K., \& Wierstra, D. (2016). Matching networks for one shot learning. In   D.D. Lee, U. von Luxburg, R. Garnett, M. Sugiyama, I. Guyon  (Eds.) Advances in Neural Information Processing Systems 29, Proc. of 30th Annual Conference on Neural Information Processing Systems,  Barcelona, Spain (NIPS 2016), (pp. 3637--3646). Curran Associates, Inc., N.Y.  \url{http://papers.neurips.cc/paper/6385-matching-networks-for-one-shot-learning}.
  
  
\bibitem[Vershynin(2018)]{Vershynin2018}
Vershynin, R. (2018).
\newblock {\em High-Dimensional Probability: An Introduction with Applications
  in Data Science}. Cambridge Series in Statistical and Probabilistic
  Mathematics; Cambridge University Press: Cambridge, UK.
  
\bibitem[Viskontas et al.(2009)Viskontas, Quian Quiroga, \& Fried]{Viskontas2009}Viskontas, I. V., Quian Quiroga, R., \& Fried, I. (2009). Human medial temporal lobe neurons respond preferentially to personally relevant images. 	{\em Proc. Natl. Acad. Sci. U.S.A.}, {\em 106}(50), 21329--21334.  \url{https://doi.org/10.1073/pnas.0902319106}.

\bibitem[Wang \& Li(2017)]{wang2017stochastic}Wang, D. \& Li, M. (2017). Stochastic configuration networks: Fundamentals and algorithms. {\em IEEE Trans. Cybern.}, {\em 47}(10), 3466--3479.\url{https://doi.org/10.1109/TCYB.2017.2734043}.

\bibitem[Wendel(1948)]{Wendel}Wendel, J.G. (1948).  Note on the gamma function. {\em Amer. Math. Monthly},  {\em 55} (9), 563--564. \url{https://www.jstor.org/stable/i314786}.

\bibitem[Wong(2001)]{Wong}Wong,  R. (2001). Asymptotic Approximations of Integrals. Philadelphia, PA: SIAM.

\bibitem[Zhang et al.(2016)]{zhang2016understanding} Zhang, C., Bengio, S., Hardt, M., Recht, B., \& Vinyals, O. (2016). Understanding deep learning requires rethinking generalization. arXiv preprint arXiv:1611.03530. \url{https://arxiv.org/abs/1611.03530}.

\end{thebibliography}
\end{document}